\newcommand\numberthis{\addtocounter{equation}{1}\tag{\theequation}}
\theoremstyle{plain}
\newtheorem{thm}{\protect\theoremname}
\theoremstyle{plain}
\newtheorem{lem}[thm]{\protect\lemmaname}
\theoremstyle{remark}
\newtheorem{rem}[thm]{\protect\remarkname}
\theoremstyle{plain}
\theoremstyle{plain}
\providecommand{\corollaryname}{Corollary}
\providecommand{\lemmaname}{Lemma}
\providecommand{\remarkname}{Remark}
\providecommand{\theoremname}{Theorem}
\providecommand{\conjecturename}{Conjecture}
\date{}
\title{Gap-Free Clustering: Sensitivity and Robustness of SDP}
\author{Matthew Zurek}
\author{Yudong Chen}
\affil{Department of Computer Sciences, University of Wisconsin-Madison\\\texttt{\{matthew.zurek,yudong.chen\}@wisc.edu}}
\begin{document}

\maketitle

\begin{abstract}%
      We study graph clustering in the Stochastic Block Model (SBM) in the presence of both large clusters and small, unrecoverable clusters. Previous convex relaxation approaches achieving exact recovery do not allow any small clusters of size $o(\sqrt{n})$, or require a size gap between the smallest recovered cluster and the largest non-recovered cluster. We provide an algorithm based on semidefinite programming (SDP) which removes these requirements and provably recovers large clusters regardless of the remaining cluster sizes. Mid-sized clusters pose unique challenges to the analysis, since their proximity to the recovery threshold makes them highly sensitive to small noise perturbations and precludes a closed-form candidate solution. We develop novel techniques, including a leave-one-out-style argument which controls the correlation between SDP solutions and noise vectors even when the removal of one row of noise can drastically change the SDP solution. We also develop improved eigenvalue perturbation bounds of potential independent interest. Our results are robust to certain semirandom settings that are challenging for alternative algorithms. Using our gap-free clustering procedure, we obtain efficient algorithms for the problem of clustering with a faulty oracle with superior query complexities, notably achieving $o(n^2)$ sample complexity even in the presence of a large number of small clusters. Our gap-free clustering procedure also leads to improved algorithms for recursive clustering.
\end{abstract}


\global\long\def\E{\mathbb{E}}%
\global\long\def\EE{\mathbb{E}}%
\global\long\def\real{\mathbb{R}}%
\global\long\def\R{\mathbb{R}}%
\global\long\def\P{\mathbb{P}}%
\global\long\def\supp{\text{supp}}%
\global\long\def\indic{\operatorname{\mathds{1}}}%
\global\long\def\vct#1{\overrightarrow{#1}}%
\global\long\def\onevec{\mathbf{1}}%
\global\long\def\N{\mathbb{N}}%

\global\long\def\diag{\operatorname{diag}}%
\global\long\def\rank{\operatorname{rank}}%
\global\long\def\argmax{\operatorname*{argmax}}%
\global\long\def\argmin{\operatorname*{argmin}}%
\global\long\def\vect{\operatorname{vec}}%
\global\long\def\Var{\operatorname{Var}}%
\global\long\def\tr{\operatorname{Tr}}%
\global\long\def\ddup{\mathrm{d}}%
\global\long\def\proj{\operatorname{proj}}%
\global\long\def\col{\operatorname{Col}}%

\global\long\def\Ystar{Y^{*}}%
\global\long\def\Yb{\overline{Y}^{*}}%
\global\long\def\Ybb{\overline{\overline{Y}}^{*}}%
\global\long\def\Ys{\underline{Y}^{*}}%
\global\long\def\Db{\overline{D}}%
\global\long\def\Ds{\underline{D}}%
\global\long\def\Jb{\overline{J}}%
\global\long\def\Jbb{\overline{\overline{J}}}%
\global\long\def\Js{\underline{J}}%
\global\long\def\Jss{\underline{\underline{J}}}%
\global\long\def\smin{s_{\min}}%
\global\long\def\sb{\overline{s}}%
\global\long\def\eb{\overline{e}}%
\global\long\def\ss{\underline{s}}%
\global\long\def\kb{\overline{k}}%
\global\long\def\Yt{\widetilde{Y}^{*}}%
\global\long\def\Yhat{\widehat{Y}}%

\global\long\def\Yhats{\overline{Y}}%
\global\long\def\yhats{\overline{y}}%
\global\long\def\uhats{\overline{u}}%

\global\long\def\Wb{\overline{W}}%

\global\long\def\Ashift{\overleftrightarrow{A}}%

\global\long\def\Atilde{\widetilde{A}}%
\global\long\def\Wtilde{\widetilde{W}}%
\global\long\def\Ytilde{\widetilde{Y}}%
\global\long\def\ytilde{\widetilde{y}}%
\global\long\def\utilde{\widetilde{u}}%
\global\long\def\wtilde{\widetilde{w}}%
\global\long\def\vtilde{\widetilde{v}}%
\global\long\def\lambdatilde{\widetilde{\lambda}}%

\global\long\def\Vb{\overline{V}}%
\global\long\def\Vs{\underline{V}}%

\global\long\def\PT{\mathcal{P}_{\parallel}}%
\global\long\def\PP{\mathcal{P}_{\bot}}%
\global\long\def\PY{\mathcal{P}_{Y}}%
\global\long\def\PYP{\mathcal{P}_{Y\bot}}%

\global\long\def\twonorm#1{\left\Vert #1\right\Vert _{2}}%
\global\long\def\fronorm#1{\left\Vert #1\right\Vert _{\textrm{F}}}%
\global\long\def\opnorm#1{\left\Vert #1\right\Vert _{\textnormal{op}}}%
\global\long\def\infnorm#1{\left\Vert #1\right\Vert _{\infty}}%
\global\long\def\onenorm#1{\left\Vert #1\right\Vert _{1}}%
\global\long\def\nucnorm#1{\left\Vert #1\right\Vert _{\textnormal{nuc}}}%

\section{Introduction}
Graph clustering is a fundamental problem with various applications to the study of diverse real-world networks, such as social, biological, and information networks. A standard approach for benchmarking clustering algorithms is to consider probabilistic models for generating a graph with a planted set of unknown, ground-truth clusters. Beyond their modeling power of practical networks, the study of such models over the past decades has led to notable theoretical advancement in the interface of combinatorial optimization, statistical learning, information theory and applied probability, greatly enriching our understanding of computational and statistical tradeoffs. For a survey of the algorithmic and theoretical work in this area, see \cite{abbe_community_2018}.

In this paper, we consider the canonical model called Stochastic Block Model (SBM), also known as the Planted Partition Model \citep{holland83,condon2001algorithms}. We focus on the unbalanced clusters setting, formally defined below, where the planted clusters have different sizes. 

As we discuss in greater details below, most existing results, with a few notable exceptions, require all clusters to be sufficiently large or the existence of a gap between the large and small clusters. The goal of this paper is to eliminate such assumptions.

\section{Problem Setup and Prior Art}
\label{sec:problem_setup}

Let $ [n]:=\{1, \dots, n\}$ be a set of $n$ vertices partitioned into $K$ unknown clusters $V_1, \dots, V_K$. We observe a random graph on these vertices with adjacency matrix $A\in\{0,1\}^{n\times n}$, where for two numbers $1\ge p>q\ge0$, and independently across
all $i\leq j$, 
\[ A_{ij}\sim\begin{cases}
\text{Bernoulli}(p), & \text{if $i,j$ belong to the same cluster},\\
\text{Bernoulli}(q), & \text{if $i,j$ do not belong to the same cluster}.
\end{cases}
\]
and $A_{ij}=A_{ji}$ for all $i>j$. In other words, $p$ is the intra-cluster edge
probability and $q$ is the
inter-cluster edge probability. Note that $p,q$ and $K$ are allow to depend on $n$.

We let $(s_1, \dots, s_K) = (|V_1|, \dots, |V_K|)$ be the cluster sizes, where the sizes are ordered as $s_1 \geq s_2 \geq \dots s_K$. Since all our algorithms and analysis do not rely on the ordering of the vertices, we further assume, without loss of generality and for notational convenience, that the vertices are ordered by their cluster membership, so that $V_1 = \{1, \dots, s_1\}$, $V_2 = \{s_1+1, \dots, s_1 + s_2\}$, and so on. We encode the ground truth clustering using a matrix $\Ystar\in\{0,1\}^{n\times n}$,  defined by setting $\Ystar_{ij}=1$ if and only if $i$ and $j$ belong to the same cluster. By our ordering assumption $\Ystar$ is block-diagonal and has the form $\Ystar = \diag \left(J_{s_1 \times s_1}, \dots, J_{s_K \times s_K} \right)$, where $J_{m \times r}$ denotes an all-ones matrix of size $m \times r$. Note that the cluster sizes $\{s_i\}$  correspond to the eigenvalues of the  $\Ystar$. We  assume that $p \geq \log n /n$, a common assumption in the literature and known to be  necessary for exact recovery.

\subsection{Algorithms and Prior Results for Unbalanced SBM}
\label{sec:algorithms_for_unbalanced_sbm}

A plethora of algorithms have been developed for graph clustering, and many of them enjoy rigorous performance guarantees; see, for instance, \cite{mcsherry_spectral_2001,bollobas_max_2004,chen_clustering_2012,chaudhuri_spectral_2012,vu_simple_2018,abbe_community_2018}, and the references therein. Nearly all theoretical guarantees for the problem of exact recovery in the SBM require all clusters to be ``large'', typically of size $\widetilde{\Omega}(\sqrt{n})$. Under such an assumption, these algorithms recover all clusters. From theoretical and practical standpoints, it is desirable to have algorithms which, despite the presence of ``small'' clusters below this threshold, provably recover all sufficiently large clusters. While this is a natural goal, establishing such a result is surprisingly nontrivial. Only a few existing results allow the presence of small clusters, \citet{ailon2013breaking,ailon2015iterative}, \cite{mukherjee2022unbalanced}, and \cite{mukherjee_detecting_2022}. Postponing the discussion of the spectral-based methods \cite{mukherjee2022unbalanced} and \cite{mukherjee_detecting_2022} to after presenting our main theorem (in Section \ref{sec:our_contributions}), we first discuss the limitations of all existing convex-relaxation-based approaches which permit small clusters, including \citet{ailon2013breaking,ailon2015iterative}.

For simplicity of presentation, in order to illustrate the limitations of all existing convex-relaxation-based approaches, we present only the guarantees of own ``warm-up'' result which is of a very similar (but slightly improved and much simpler) form to \citet{ailon2013breaking,ailon2015iterative}, and defer discussion of the finer details of \citet{ailon2013breaking,ailon2015iterative} to Section \ref{sec:details_of_ailon}. All our results are based on solving the following trace-regularized semi-definite program (SDP):
\begin{subequations}
\label{eq:SDP_regularized_original}
\begin{align}
\Yhat=\argmax_{Y\in\real^{n\times n}}\quad & \Big\langle Y,A-\frac{p+q}{2}J_{n \times n}\Big\rangle -\lambda\tr(Y)\label{eq:SDP_regularized_original_1}\\
\text{s.t.}\quad & Y\succcurlyeq0,
  \quad 
  0\le Y_{ij}\le1,\forall i\in[n],j\in[n]\label{eq:SDP_regularized_original_3}
\end{align}
\end{subequations}
where $\lambda$ is a regularization parameter which will be specified later. We refer to \eqref{eq:SDP_regularized_original} as the \emph{recovery SDP}. We note that this SDP and its variants, usually without the trace regularizer, are considered in many prior works; we refer to the survey \cite{li2021convex} and the references therein, as well as \cite{ames_guaranteed_2014}, \cite{chen_statistical-computational_2014}, \cite{cai_robust_2015}, and \cite{amini_semidefinite_2018}. 

Using this SDP, we can obtain the following recovery guarantee. In the sequel, we write $a_n \lesssim b_n$ or $a_n = O(b_n)$ if $a_n \le C b_n,\forall n$ for a universal numerical constant $C>0$.

\begin{thm}[Informal version of Theorem \ref{thm:clustering_with_a_gap}]
\label{thm:informal_clustering_with_gap}
    If two consecutive cluster sizes $\sb > \ss$ satisfying
\begin{align}
    \left(1-\ss/\sb\right)^{2} {(p-q)^{2}\sb^{2}}/{pn}  \gtrsim\log n \label{eq:informal_clustering_with_gap_recovery_condition}
\end{align}
then with high probability the solution to the recovery SDP~\eqref{eq:SDP_regularized_original} with a suitable $\lambda$ is of the form \begin{equation}
\label{eq:ailon_old_solution}
\Yhat = \diag \left(J_{s_1\times s_1} , \dots, J_{\sb \times \sb}, 0_{\ss \times \ss}, \dots, 0_{s_K \times s_K} \right).
\end{equation}
That is, all big clusters are exactly recovered and all small clusters are completely ignored.
\end{thm}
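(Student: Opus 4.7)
The approach is the standard primal-dual certificate strategy. Let $k^{*} = \max\{k : s_{k} \geq \sb\}$ and take as candidate primal solution $\Yb = \diag(J_{s_{1}\times s_{1}},\dots,J_{s_{k^{*}}\times s_{k^{*}}},0,\dots,0)$, which is feasible and has the claimed form. I would then construct dual multipliers — a PSD matrix $Z$ for the semidefinite constraint and entrywise nonnegative matrices $\Lambda^{+},\Lambda^{-}$ for the box constraints — satisfying the KKT stationarity relation
\[
Z \;=\; A - \tfrac{p+q}{2}J_{n\times n} - \lambda I + \Lambda^{+} - \Lambda^{-},
\]
complementary slackness $Z\Yb=0$, and the correct support pattern ($\Lambda^{+}$ active only where $\Yb_{ij}=0$, $\Lambda^{-}$ active only where $\Yb_{ij}=1$). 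SDP duality then certifies that $\Yb$ is optimal, and strict versions of these conditions (strict positivity of $\Lambda^{\pm}$ off-support, and $Z \succ 0$ on $\col(\Yb)^{\perp}$) yield uniqueness.

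The key design choice is $\lambda$: I would place it inside the gap, for instance $\lambda \approx \tfrac{p-q}{4}(\sb+\ss)$, so that the effective ``inclusion reward'' $\tfrac{p-q}{2}s - \lambda$ has order $(p-q)(\sb-\ss)$ with opposite signs for clusters of size $\geq \sb$ versus $\leq \ss$. On each recovered block $V_{a}\times V_{a}$ the condition $Z\Yb=0$ forces the block restriction of $Z$ to annihilate the all-ones vector, which pins down $\Lambda^{-}$ on that block as an affine function of $A$. Off-support I set $\Lambda^{-}=0$ and use $\Lambda^{+}$ to enforce the correct sign of the residual.

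The heart of the argument — and the main obstacle — is verifying $Z\succeq 0$. I would split $Z = \bar{Z} + N$, where $\bar{Z}$ is the deterministic certificate built from $\E A$ and $N$ captures the fluctuation $A-\E A$ together with the induced perturbation of $\Lambda^{\pm}$. A direct block computation shows $\bar{Z}\succeq 0$ with null space exactly $\col(\Yb)$ and smallest nonzero eigenvalue of order $(p-q)(\sb-\ss)$. Matrix Bernstein for SBM noise gives $\opnorm{A-\E A}\lesssim\sqrt{pn}$ with high probability, while scalar Bernstein controls entrywise fluctuations by $\sqrt{p\log n}$. Weyl's inequality then certifies $Z\succeq 0$ with the required null space once the spectral gap dominates the noise, i.e. $(p-q)(\sb-\ss)\gtrsim\sqrt{pn\log n}$; squaring both sides gives exactly~\eqref{eq:informal_clustering_with_gap_recovery_condition}.

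The most delicate bookkeeping concerns the entrywise sign pattern of $\Lambda^{\pm}$: the block-wise correction forced by complementary slackness injects row-sum noise of order $\sqrt{pn\log n}$ per vertex which must be absorbed inside the recovery margin $(p-q)(\sb-\ss)$. When many clusters have sizes exceeding $\sb$, the certificate must handle them all simultaneously, and the entrywise estimates must be uniform across clusters, which is where the $\log n$ factor in the condition enters via a union bound.
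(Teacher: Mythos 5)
Your outline is a correct proof strategy for this statement, but it is a genuinely different route from the one the paper takes for its gap-dependent result. You construct an explicit dual certificate $(Z,\Lambda^{+},\Lambda^{-})$: pin down the diagonal blocks of the upper-bound multiplier via $Z\Yb=0$, zero out the off-support residual with the lower-bound multiplier, and verify positive semidefiniteness by splitting $Z$ into a deterministic part (whose restriction to $\col(\Yb)^{\perp}$ has smallest eigenvalue of order $(p-q)(\sb-\ss)$ with your mid-gap choice of $\lambda$) plus a fluctuation controlled by $\opnorm{A-\E A}\lesssim\sqrt{pn}$. The paper instead proves Theorem \ref{thm:clustering_with_a_gap} without ever constructing dual variables: it starts from the optimality inequality $0\ge\langle \Yb-\Yhat, A-\frac{p+q}{2}J\rangle-\lambda\tr(\Yb-\Yhat)$, decomposes the error through the projections $\PT,\PP$ onto and off the column space of $\Yb$, and bounds the signal, noise, and trace contributions using the dual-norm pairs $(\onenorm{\cdot},\infnorm{\cdot})$ and $(\nucnorm{\cdot},\opnorm{\cdot})$ (Lemmas \ref{lem:gap_exact_recovery_trace_term_bound}--\ref{lem:gap_exact_recovery_noise_term_bound}), concluding $\Jb\circ\Yhat=\Yb$ and then $\Js\circ\Yhat=0$ in two stages; it also places $\lambda=\kappa\sqrt{pn\log n}+\frac{p-q}{2}\ss$ just above the small-cluster signal rather than at the midpoint of the gap (both choices work, yours needs knowledge of $\sb$ as well as $\ss$). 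The two arguments consume the same probabilistic inputs --- $\opnorm{W}\lesssim\sqrt{pn}$ globally and entrywise/row-sum noise $\lesssim\sqrt{p\log n/\sb}$ after projection --- and arrive at the same condition $(p-q)(\sb-\ss)\gtrsim\sqrt{pn\log n}$; the paper's primal comparison is shorter and avoids the entrywise sign bookkeeping you correctly identify as the delicate part, while your certificate route is the one that generalizes to the paper's gap-free main theorem. Two small presentational cautions: as written your stationarity relation gives the negative of the certificate you want PSD (for a maximization one needs $\Ashift-\lambda I-\Lambda^{-}+\Lambda^{+}\preccurlyeq 0$), and ``Weyl'' must be applied to the restriction of $Z$ to $\col(\Yb)^{\perp}$ after using $Z\Yb=0$, since the unrestricted smallest eigenvalue of the deterministic part is zero; neither affects the substance of the argument.
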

Like \citet{ailon2013breaking,ailon2015iterative}, this result allows for small clusters, but additionally assumes there exists a constant \emph{multiplicative gap} between the sizes of large and small clusters.

We believe the essential limitation of both Theorem \ref{thm:informal_clustering_with_gap} and \citet{ailon2013breaking,ailon2015iterative} that necessitates a gap assumption is not fundamentally algorithmic, but rather the form of their guarantees. Without assuming a gap in cluster sizes, there may exist clusters on the boundary of recoverability. In different typical samples from the same SBM instance, the recovery SDP~\eqref{eq:SDP_regularized_original} solution may or may not recover such clusters. Furthermore, the corresponding block of the SDP solution may not be all-one nor all-zero, making it impossible to guarantee an SDP solution of the form~\eqref{eq:ailon_old_solution}. This is demonstrated experimentally in Figure \ref{fig:comparison_of_recovery_algorithms_c}, where the largest cluster is recovered with all entries equal to 1, the middle cluster is recovered but with a block which has entries between zero and one, and the smaller clusters are ignored. 
Nonetheless, such a block-diagonal solution, albeit having some non-binary values, suffices to recover the large and middle clusters. In our experiments, the SDP solution always has this block-diagonal form regardless of the values of the cluster sizes. One might then hope to rigorously prove this property, which is left as an open question in~\cite{ailon2015iterative}.

\section{Our Contributions}
\label{sec:our_contributions}

\begin{figure}[tp]
\centering
\subfigure[Ground truth $\Ystar$]{\includegraphics[width=.299\textwidth]{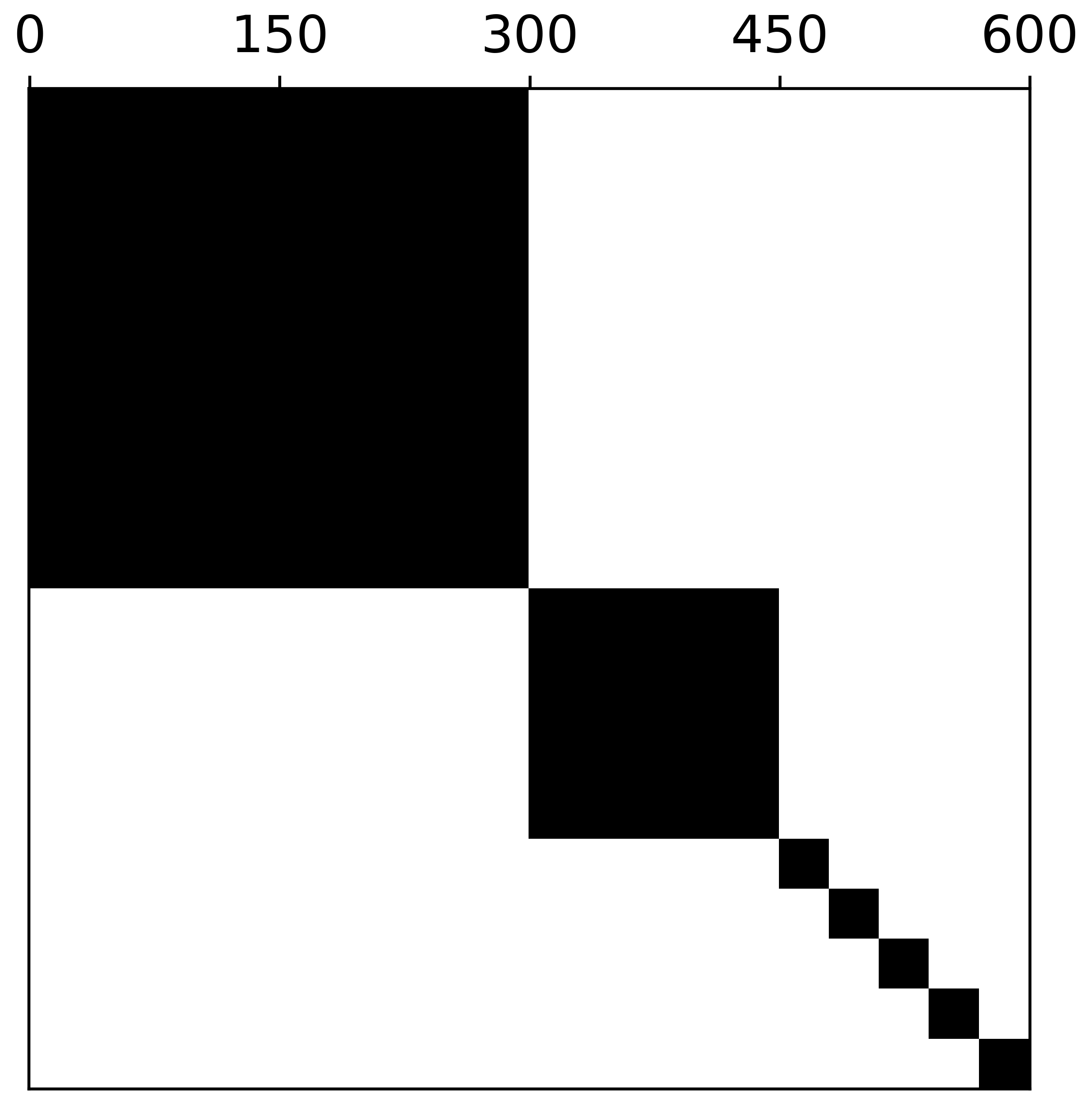}\label{fig:comparison_of_recovery_algorithms_a}}
\subfigure[$\Yhat$ with sufficient $\lambda$]{\includegraphics[width=.299\textwidth]{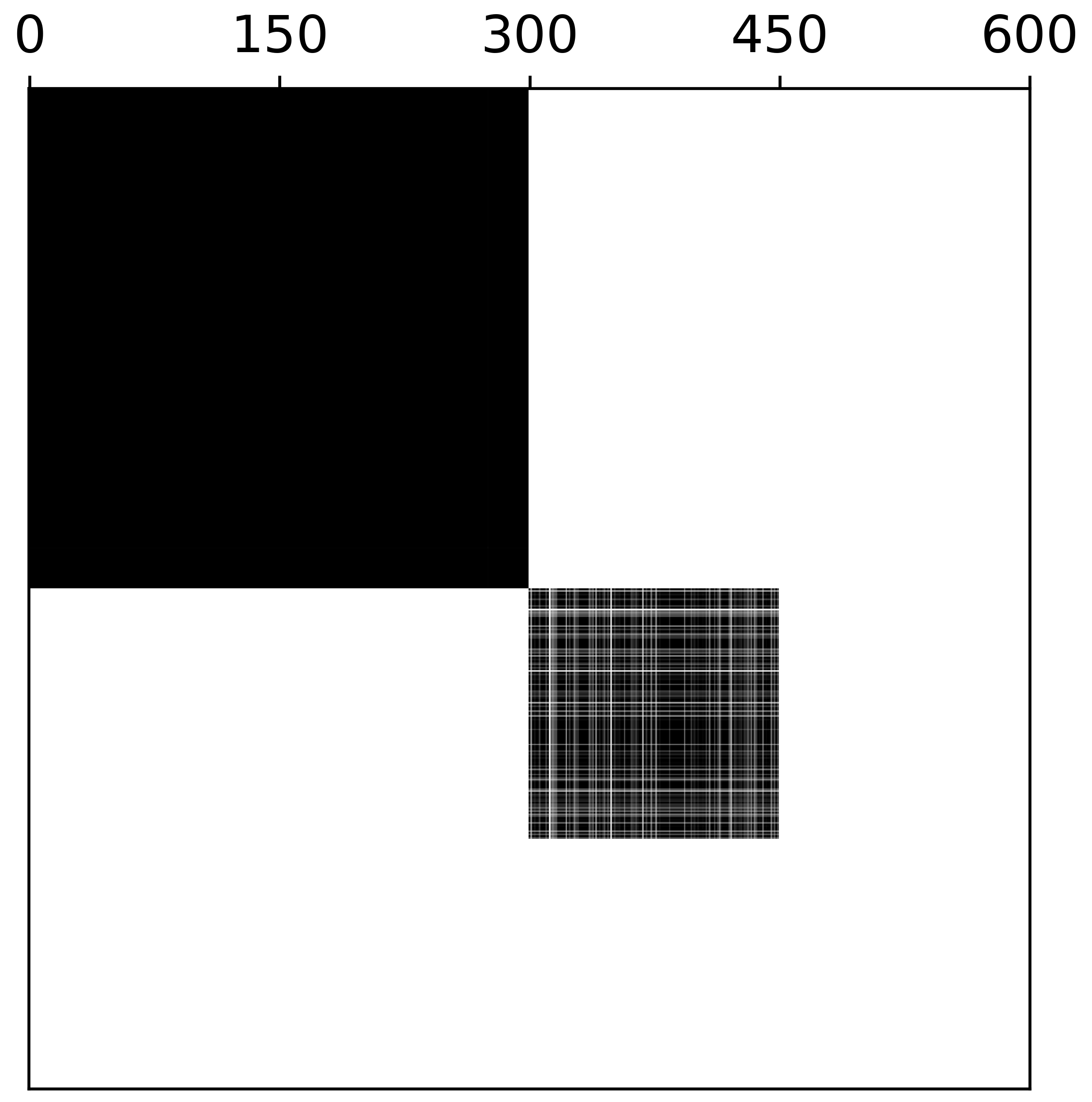}\label{fig:comparison_of_recovery_algorithms_c}}
\subfigure[$\Yhat$ with zero or small $\lambda$]{\includegraphics[width=.299\textwidth]{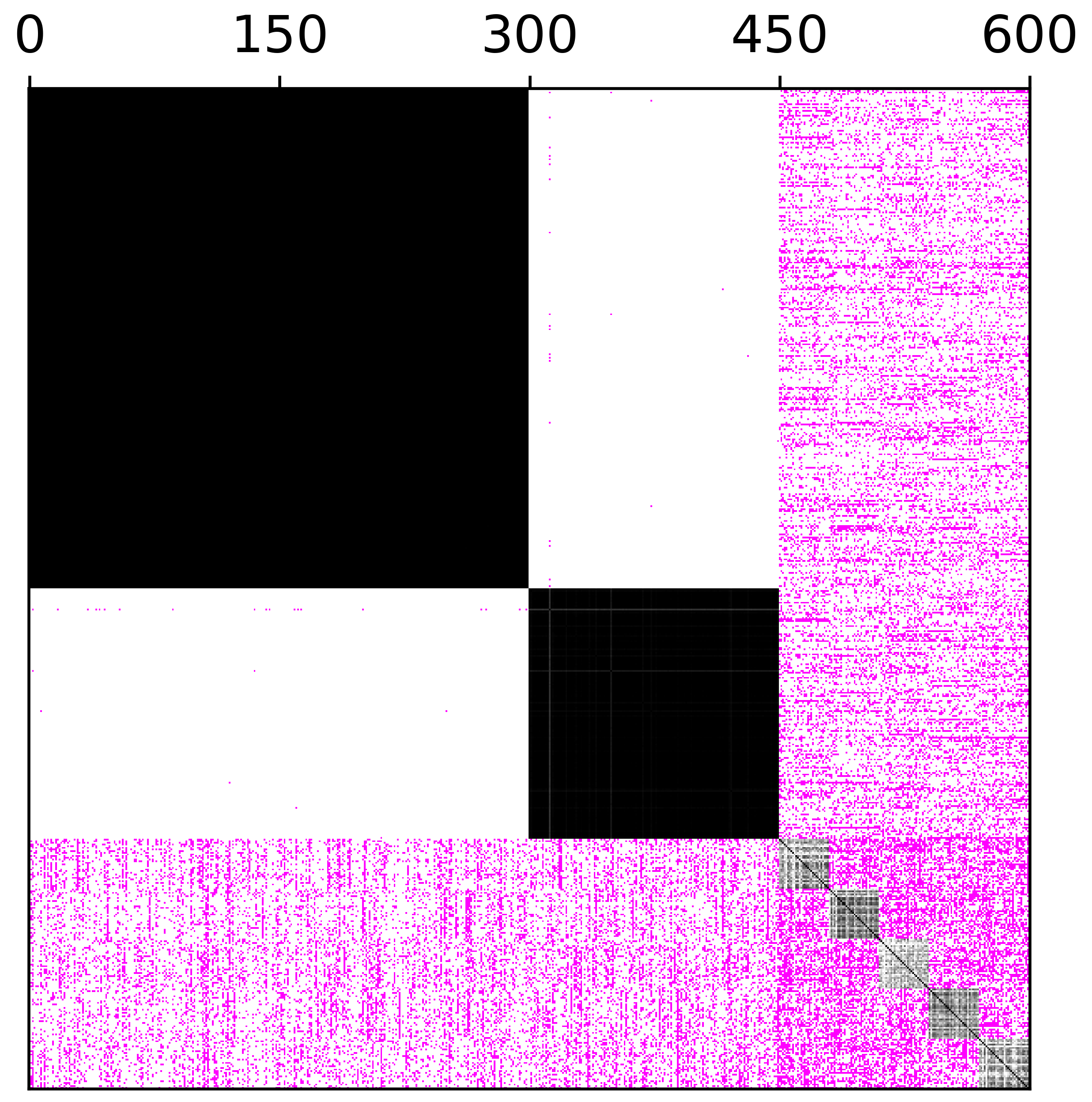}\label{fig:comparison_of_recovery_algorithms_b}}
\caption{SDP solutions and effects of regularization $\lambda$. (a): Ground truth clusters of sizes 300, 150, and 50 ($\times 5$). (b)\&(c): Solutions to SDP~\eqref{eq:SDP_regularized_original} with different $\lambda$. Nonzero off-block-diagonal entries are highlighted in pink; other  entries shown in grayscale ($\text{white}=0,\text{black}=1$). }
\label{fig:comparison_of_recovery_algorithms}
\end{figure}

We completely resolve the above question by showing the recovery SDP~\eqref{eq:SDP_regularized_original} returns a desired block-diagonal matrix regardless of the distribution of cluster sizes. Our result completely eliminates the gap assumption, guaranteeing recovery of large clusters in the presence of arbitrary middle and small clusters.
The middle, critically sized clusters greatly complicate the proof of this result. Not only do we no longer have a simple closed-form candidate solution for which we can verify optimality, but more significantly, threshold-sized clusters cause the SDP solution $\Yhat$ to be extremely sensitive to noise and thus unstable. In particular, as is seen both numerically and from our analysis, a small perturbation to the graph or cluster sizes may cause a zero block in $\Yhat$ to become near all-one, and vice versa. We overcome these challenges to give a very detailed description of the SDP solution. We believe our most important contributions are the ideas behind this analysis.

Algorithmically, the trace penalty term, which is ignored in most prior work, plays an important role in the recovery SDP~\eqref{eq:SDP_regularized_original}. This term is not needed when all clusters are large, in which case the hard, entrywise constraint in \eqref{eq:SDP_regularized_original_3} is sufficient for controlling the diagional entries of $\Yhat$. In the presence of small clusters, however, the trace penalty is essential for ensuring that the SDP solution only captures clusters with sufficient signal and ignores noise, thus yielding a block-diagonal solution. 
Figure \ref{fig:comparison_of_recovery_algorithms} provides an experimental demonstration: Figure \ref{fig:comparison_of_recovery_algorithms_b} uses an overly small $\lambda$ and leads to a solution with nonzero entries in the off-block-diagonals shown in pink, making the two largest clusters nontrivial to recover. In contrast, Figure \ref{fig:comparison_of_recovery_algorithms_c} shows a solution found with $\lambda$ set sufficiently large so that all off-block-diagonal entries are zero.

We now formally present our main result, which characterizes the solution to the recovery SDP~(\ref{eq:SDP_regularized_original}) in terms of the solutions to the following \textit{oracle SDPs}, which are $K$ smaller SDPs of the same form as the recovery SDP except that the full adjacency matrix is replaced by each of the $K$ submatrices  corresponding to only the within-cluster edges of each cluster: for $k = 1, \dots, K$, let $A^{(k)}=(A_{ij})_{i,j\in V_k} \in \real^{s_k \times s_k}$ and define the $k$th oracle SDP:
%
\begin{align}
\label{eq:SDP_regularized_oracle}
\Yhat^k=\argmax_{\substack{Y\in\real^{s_k\times s_k}\\ Y\succcurlyeq0, 0\le Y\le 1}}\quad & \Big\langle Y,A^{(k)}-\frac{p+q}{2}J_{s_k \times s_k}\Big\rangle -\lambda\tr(Y).
\end{align}



Of course the formation of the oracle SDPs requires knowledge of the ground-truth clustering $\Ystar$, so they are only usable for analysis. Now we can state our main theorem.
\begin{thm}
\label{thm:main_theorem_recovery_sdp}
The following holds for some absolute constants $B, \underline{B}, \overline{B}>2$: Fix $m$ such that $m \geq n$ and $p \geq \frac{\log m}{n}$. Set the regularization parameter $\lambda = \kappa \sqrt{pn \log m} + \delta$ for $\kappa \geq B$ and $\delta \sim \mathrm{Uniform}[0,0.1]$. With probability $1 - O(m^{-3})$, the unique solution $\Yhat$ to recovery SDP~\eqref{eq:SDP_regularized_original} is
\[
\Yhat = \diag \big( \Yhat^1 ,\dots, \Yhat^K \big),
\]
where $\Yhat^1 ,\dots, \Yhat^K$ are the unique solutions to the $K$ oracle SDPs~\eqref{eq:SDP_regularized_oracle}. Furthermore, for each $k \in \{1, \dots, K\}$, we have $\rank \Yhat^k \in \{0,1\}$, and
\begin{enumerate}[topsep=8pt, itemsep=1pt]
    \item \label{item:main_thm_all_ones_case} if $\frac{p-q}{2}s_k \geq \big(1 + \frac{1}{\overline{B}}\big) \kappa \sqrt{pn \log m} $, then $\Yhat^k = J_{s_k \times s_k} = \onevec \onevec^\top$, 
    \item if $\frac{p-q}{2}s_k \leq \big(1 - \frac{1}{\underline{B}}\big) \kappa \sqrt{pn \log m} $, then $\Yhat^{k} = 0_{s_k \times s_k} = 0 0^\top$,
    \item \label{item:main_thm_critical_case} if $\big(1 - \frac{1}{\underline{B}}\big) \kappa \sqrt{pn \log m} < \frac{p-q}{2}s_k < \big(1 + \frac{1}{\overline{B}}\big) \kappa \sqrt{pn \log m}$, then either $\Yhat^{k} = 0$ or $\Yhat^{k} = y^{k} y^{k\top}$ where $y^k \geq \frac{1}{2}$ entrywise.
\end{enumerate}
\end{thm}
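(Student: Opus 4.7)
The plan is to prove the theorem in two stages. First, establish the block-diagonal decomposition $\Yhat = \diag(\Yhat^1, \ldots, \Yhat^K)$ via a primal-dual certification; second, analyze each oracle SDP \eqref{eq:SDP_regularized_oracle} separately to classify $\Yhat^k$ into the three stated cases.

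For the block-diagonal decomposition, let $\tilde{Y} := \diag(\Yhat^1, \ldots, \Yhat^K)$, which is feasible for \eqref{eq:SDP_regularized_original}. The Lagrangian uses a PSD multiplier $Z \succeq 0$ for $Y \succeq 0$, an entrywise multiplier $B \ge 0$ for $Y \le 1$, and $D \ge 0$ for $Y \ge 0$; stationarity reads $Z = \lambda I - \bigl(A - \tfrac{p+q}{2}J\bigr) + B - D$. I would construct the duals as follows: on each diagonal block $(k,k)$, inherit the dual variables certifying optimality of $\Yhat^k$ in the oracle SDP; on each off-diagonal block $(k,\ell)$ with $k \neq \ell$, set $B^{(k\ell)} = 0$ (as $\tilde{Y}^{(k\ell)} = 0$) and choose $D^{(k\ell)} \ge 0$ to make the entrywise stationarity consistent. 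The key verification is $Z \succeq 0$ with $Z \tilde Y = 0$; this uses a high-probability bound $\opnorm{A^{(k\ell)} - qJ} \lesssim \sqrt{pn\log m}$ together with a bound on $\twonorm{(A^{(k\ell)} - qJ)\,y^\ell}$ for each oracle direction $y^\ell$, and the trace regularization $\lambda = \kappa\sqrt{pn \log m}$ with $\kappa \ge B$ large provides the spectral slack that dominates these off-diagonal noise terms. Uniqueness of $\Yhat$ (and hence of the block decomposition) follows from strict complementarity, which holds with high probability because the random perturbation $\delta \sim \mathrm{Uniform}[0, 0.1]$ of $\lambda$ breaks ties in the KKT system almost surely.

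For the oracle SDP on a single cluster $V_k$, write $A^{(k)} - \tfrac{p+q}{2}J = M$ with $M := \tfrac{p-q}{2}J + W^{(k)}$ and $W^{(k)} := A^{(k)} - pJ_{s_k \times s_k}$. In case 2 (small $s_k$), for any feasible $Y \succeq 0$ one has $\langle Y, M \rangle - \lambda \tr(Y) \le (\lambda_{\max}(M) - \lambda)\tr(Y)$, and $\lambda_{\max}(M) \le \tfrac{p-q}{2}s_k + \opnorm{W^{(k)}} \le (1 - 1/\underline{B})\lambda + O(\sqrt{pn})$ w.h.p., which is strictly less than $\lambda$ for $\kappa$ large; hence $\Yhat^k = 0$ is the unique maximizer. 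In case 1 (large $s_k$), I would construct a dual certificate for $\Yhat^k = \onevec \onevec^\top$: with $D^k = 0$ and $Z^k \onevec = 0$, stationarity reduces to $B^k = M - \lambda I + Z^k$, and we need $B^k \ge 0$ entrywise and $Z^k \succeq 0$. The deterministic signal $\tfrac{p-q}{2}s_k - \lambda > 0$ under the case 1 hypothesis dominates the noise $\twonorm{W^{(k)}\onevec} \lesssim \sqrt{s_k p \log m}$, yielding a valid $B^k$, while a matrix-concentration bound for $\opnorm{W^{(k)}}$ verifies $Z^k \succeq 0$.

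The \emph{main obstacle} is case 3: critically sized clusters for which neither $0$ nor $\onevec\onevec^\top$ is a viable candidate. Here I would first establish $\rank \Yhat^k \in \{0, 1\}$ via a spectral/KKT argument: the complementary-slackness relation $Z^k \Yhat^k = 0$ with $Z^k \succeq 0$ forces $\col(\Yhat^k) \subseteq \ker(Z^k)$, and the trace regularizer with the random shift $\delta$ makes $\dim\ker(Z^k) \le 1$ almost surely. Writing $\Yhat^k = y^k(y^k)^\top$ with $y^k \neq 0$, stationarity gives the nonlinear fixed-point equation $\lambda y^k = (M + B^k - D^k)\,y^k$, which couples $y^k$ to $W^{(k)}$. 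To show $y^k \ge \tfrac12$ entrywise, I would argue by contradiction: if $y^k_i < \tfrac12$ for some $i$, complementary slackness activates row $i$ of $D^k$, and analyzing the $i$th coordinate of the fixed-point equation should lead to a contradiction with the case 3 lower bound on $\tfrac{p-q}{2}s_k$. The delicate piece is that a naive concentration bound on $(W^{(k)} y^k)_i$ requires $y^k$ to be independent of the $i$th row of $W^{(k)}$, which fails because $y^k$ is a function of the entire matrix and, moreover, is extremely sensitive to single-row perturbations near the threshold; this is precisely where the leave-one-out-style argument advertised in the introduction is needed to decouple $y^k$ from row $i$ of $W^{(k)}$ before applying concentration.
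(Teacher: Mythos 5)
Your overall architecture (primal-dual witness for the block-diagonal form, three size regimes for the oracle SDPs, leave-one-out for the critical case) matches the paper's, and your treatments of cases 1 and 2 are essentially sound. But there is a genuine gap in your rank-one argument for case 3. You work with the unrelaxed oracle SDP, whose stationarity condition is $Z^k = \lambda I - M + B^k - D^k$ where $D^k \ge 0$ is the dual for the lower-bound constraints $Y \ge 0$. This $D^k$ is a general (nonnegative, symmetric) matrix with no a priori control on its rank or operator norm, so the eigenvalue structure of $Z^k$ is not that of ``rank-one signal plus small noise plus PSD diagonal,'' and nothing forces $\dim\ker(Z^k) \le 1$. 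In particular, the random shift $\delta$ in $\lambda$ cannot rescue this: $Z^k$ contains data-dependent dual variables that adjust with $\lambda$, so perturbing $\lambda$ generically does not make the kernel one-dimensional (the paper uses $\delta$ only to rule out the boundary event $\lambda_1(\Ashift^{(k)}) = \lambda$). The paper's resolution is to first pass to a \emph{relaxed} oracle SDP with the constraints $Y \ge 0$ deleted, so that the dual matrix becomes $\lambda I + \diag(u^k) - \Ashift^{(k)}$; since $\Ashift^{(k)}$ is a rank-one matrix plus noise of operator norm below $\lambda$ and $\diag(u^k) \succcurlyeq 0$, Weyl's inequality shows at most one nonnegative eigenvalue, which yields rank one. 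One then proves the relaxed solution is entrywise positive, so it is feasible (hence optimal) for the original oracle SDP. Your route needs this reordering: nonnegativity cannot be established ``after'' rank one in the unrelaxed problem, because rank one is not available there.

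Separately, your sketch defers rather than solves the two hardest technical points. For the leave-one-out step you correctly identify that $y^k$ must be decoupled from row $i$ of $W^{(k)}$, but the central obstruction near the threshold is that the naive leave-one-out problem (zero out one row of noise, keep $\lambda$) can have solution identically zero even when $y^k \ne 0$, because $\lambda_1$ of the leave-one-out matrix may drop below $\lambda$; the paper must lower the regularization to $\lambda^k = \lambda - \Theta(\sqrt{p\log m / s_k})$ and prove, via an improved eigenvalue perturbation bound (their Theorem \ref{thm:general_eval_pert_bound}), that this shift suffices. For the block-diagonal certificate, your off-diagonal duals must annihilate not only the directions $y^\ell$ of nonzero oracle blocks but also the top eigenvector of $\Ashift^{(\ell)}$ for clusters whose oracle block is zero yet sit just below threshold, since $\lambda_1(\Ashift^{(\ell)})$ can then be within $\varepsilon\lambda$ of $\lambda$ and would otherwise violate $Z \succcurlyeq 0$; your proposal does not account for these clusters.
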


Theorem \ref{thm:main_theorem_recovery_sdp} states that the recovery SDP solution is block-diagonal, and the blocks matches the oracle SDP solutions. We also precisely characterize the oracle SDP solutions, which all have rank zero or one, and furthermore are guaranteed to be all-one if the cluster size $s_k$ satisfies
\begin{align}
    (p-q)s_k  \gtrsim \kappa \sqrt{pn \log m}
    \qquad\text{or equivalently}\qquad 
    {(p-q)^2 s_k^2}/{pn} \gtrsim \log m. \label{eq:theorem_2_clustering_recovery_condition}
\end{align}
Comparing to the gap-dependent condition~\eqref{eq:informal_clustering_with_gap_recovery_condition}, there is no relative size gap term.
With respect to the critically-sized clusters, those satisfying case \ref{item:main_thm_critical_case}, their oracle solution may be zero, in which case they are not recovered, but if the oracle solution is non-zero, it is all-positive, hence recoverable. Even when the nodes are not ordered, one can easily postprocess the SDP solution $\Yhat$ to recover the nonzero diagonal blocks promised by the theorem as the off-diagonal blocks are all zero.

\begin{rem}
\label{rem:main_theorem_remark}
A few technical points about Theorem \ref{thm:main_theorem_recovery_sdp}:  Up to log factors and assuming $p/q=O(1)$, condition~\eqref{eq:theorem_2_clustering_recovery_condition} is needed in all known polynomial-time algorithms even when all clusters have the same size~\citep{abbe_community_2018,li2021convex}. 
We include an extra parameter $m$ for the failure probability to facilitate repeated application of the algorithm for recursive clustering and adaptive sampling.
The random perturbation $\delta$ is not practically necessary, but simplifies the theorem conclusions by ensuring the oracle and recovery SDPs have unique solutions; it avoids the boundary situation that $\lambda_1\big(\Ashift^{(k)} \big) = \lambda$ lies exactly on the recovery threshold. 
The number $(p+q)/2$ used in SDP~\eqref{eq:SDP_regularized_original} can be replaced by other values in the interval $(q,p)$ and thus be treated as a tuning parameter along with $\lambda$; similar observations were in prior work~\citep[e.g.,][]{CMM,amini_semidefinite_2018}.
\end{rem}

We believe our most significant contributions are the ideas behind the proof of Theorem~\ref{thm:main_theorem_recovery_sdp}. With a more extensive discussion given in the proof sketch in Section \ref{sec:proof_sketch_of_main_theorem}, here we briefly highlight some key steps. The challenges are mainly associated with the analysis of the oracle SDPs for clusters near the threshold of recovery (case \ref{item:main_thm_critical_case} of Theorem \ref{thm:main_theorem_recovery_sdp}). The crucial step is to show that the oracle SDP has a rank-one solution, which is the case as long as the non-negativity constraints in~\eqref{eq:SDP_regularized_oracle} are non-binding. Defining a relaxed oracle SDP by removing these constraints, we reduce the problem to showing that the relaxed oracle SDP solutions are weakly correlated with individual rows of the random noise, suggesting a leave-one-out-style (LOO) argument. However, the naive application of LOO ideas does not work due to the instability of clusters near the recovery threshold---the removal of one row of noise can drastically change the corresponding SDP solution. To overcome this challenge we define modified LOO versions of the oracle SDPs, using an improved eigenvalue perturbation bound which may be of independent interest. We hope that some proof techniques may be useful in more generality for problems featuring mixtures of strong and weak signals without a gap between them.

Now we discuss \cite{mukherjee2022unbalanced} and \cite{mukherjee_detecting_2022}, two recent works which also allows the presence of small clusters. Their algorithms, based on spectral clustering, also succeed without the assumption of a gap and recover clusters meeting similar to~\eqref{eq:theorem_2_clustering_recovery_condition}. In both cases, their algorithms and analysis are completely different to our SDP relaxation approach, making our work the first among a long line of convex-relaxation-based clustering methods to break the gap assumption. Beyond providing a different and arguably conceptually simpler clustering technique, our method inherits robustness properties generally unique to convex relaxation algorithms. We elaborate on this point in the following section.

\subsection{Semirandom Robustness}
\label{sec:semirandom}
To the best of our knowledge, no prior work has considered the semirandom model \citep{feige2001heuristics} in the presence of small clusters, but by virtue of our convex relaxation approach, we can tackle this challenging setting.
We first define a semirandom model for the small cluster regime. We generate an adjacency matrix $A'$ as before, but now allow a semirandom adversary to construct the observed adjacency matrix $A \in \{0,1\}^{n \times n}$ by changing $A'$ arbitrarily but restricted such that:
\begin{enumerate}[itemsep=1pt, topsep=6pt]
    \item For any $i, j \in [n]$ such that $i,j$ are in different clusters, $A_{ij} \leq A'_{ij}$.
    \item For any $i,j \in [n]$ such that $i,j$ are in the same cluster $k$, if $\frac{p-q}{2}s_k \geq \frac{3}{2}B\sqrt{p n \log n}$ then $A_{ij} \geq A'_{ij}$, otherwise $A_{ij} = A'_{ij}$.
\end{enumerate}
Note $B$ is the constant appearing in Theorem \ref{thm:main_theorem_recovery_sdp}. Thus the semirandom adversary is allowed to arbitrarily delete edges between vertices of different clusters, and allowed to arbitrarily add edges within large clusters. We refer to this setting as the \textit{large cluster semirandom model}. 

The above model includes a still challenging \emph{heterogeneous} setting as a special case: Suppose that for each $i,j \in [n]$ such that $i$ and $j$ are members of distinct clusters, there exist numbers $q_{ij}=q_{ji}$ such that $1 \geq p > q \geq q_{ij} \geq 0$, and that $A$ is sampled such that independently across all $i \leq j$,
\[ A_{ij}\sim\begin{cases}
\text{Bernoulli}(p), & \text{if $i,j$ belong to the same cluster},\\
\text{Bernoulli}(q_{ij}), & \text{if $i,j$ do not belong to the same cluster}.
\end{cases}
\]
By a coupling argument, there exists an adversary under which $A$ has this distribution \citep{chen2012sparseclustering}.

Despite seemingly easing the clustering task, the semirandom model disrupts many local structures of the SBM, foiling algorithms dependent on such structures. For instance, under the standard SBM, the expected degree of a node $i$ in cluster $k$ would satisfy the identity $\sum_{j\in[n]}\E A_{ij} = ps_k + q(n-s_k)$, in which case the observed degree of node $i$ can be used to estimate the cluster size $s_k$. \cite{mukherjee2022unbalanced} use such a subroutine for estimating the largest cluster size (and the algorithm of \cite{mukherjee_detecting_2022} also requires knowledge of the largest cluster size but they do not discuss its estimation). Under the large cluster semirandom SBM, such procedures would not directly work.
In contrast, the SDP \eqref{eq:SDP_regularized_original} and our same Theorem \ref{thm:main_theorem_recovery_sdp} continue to work:

\begin{thm}
\label{thm:semirandom_recovery}
    Under the large cluster semirandom SBM, Theorem~\ref{thm:main_theorem_recovery_sdp} holds with $\kappa = B$ and $m=n$.
\end{thm}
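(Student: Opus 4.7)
The plan is a monotonicity argument that exploits the one-sided nature of the adversary's modifications. First I would apply Theorem \ref{thm:main_theorem_recovery_sdp} with $\kappa=B$ and $m=n$ to the underlying random graph $A'$: on a $1-O(n^{-3})$ probability event the recovery SDP on $A'$ has a unique optimum $\Yhat' = \diag(\Yhat'^{1},\dots,\Yhat'^{K})$, with each block $\Yhat'^{k}$ falling into case 1, 2, or 3. Since $\overline{B}>2$ gives $1+1/\overline{B}<3/2$, every cluster $k$ satisfying the adversary's large-cluster threshold $\frac{p-q}{2}s_k \geq \frac{3}{2}B\sqrt{pn\log n}$ automatically meets the hypothesis of case 1, so $\Yhat'^{k} = J_{s_k\times s_k}$. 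I condition on this high-probability event for the remainder.

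Next I would decompose $A = A' + \Delta_+ - \Delta_-$, where $\Delta_+\geq 0$ entrywise is supported on within-cluster pairs inside large clusters (the added edges) and $\Delta_-\geq 0$ is supported on inter-cluster pairs (the deleted edges). Writing $F_M(Y) := \langle Y, M - \tfrac{p+q}{2}J_{n\times n}\rangle - \lambda\tr(Y)$, and using that the SDP's feasible set does not depend on $A$, for any feasible $Y$,
\begin{align*}
F_A(\Yhat') - F_A(Y) \;=\; \big[F_{A'}(\Yhat') - F_{A'}(Y)\big] + \langle \Yhat' - Y, \Delta_+\rangle - \langle \Yhat' - Y, \Delta_-\rangle.
\end{align*}
The bracketed term is $\geq 0$ by optimality of $\Yhat'$ on $A'$. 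On $\supp(\Delta_+)$ we have $\Yhat'_{ij}=1$ (case 1) and $Y_{ij}\leq 1$, so $\langle \Yhat' - Y, \Delta_+\rangle \geq 0$. On $\supp(\Delta_-)$ we have $\Yhat'_{ij}=0$ (block-diagonality) and $Y_{ij}\geq 0$, so $\langle \Yhat' - Y, \Delta_-\rangle \leq 0$. Hence $\Yhat'$ maximizes $F_A$, and any other optimum $Y$ must force the bracketed term to be $0$, which by uniqueness of $\Yhat'$ on $A'$ gives $Y = \Yhat'$.

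Finally I would reconcile with the oracle SDPs, which by definition use $A^{(k)}$ rather than $A'^{(k)}$. For every cluster below the large-cluster threshold the adversary leaves $A^{(k)}=A'^{(k)}$, so the oracle SDP is literally unchanged, and its unique solution is still $\Yhat'^{k}$. For each large cluster, where $\Yhat'^{k}=J$, an analogous monotonicity argument on the $s_k\times s_k$ oracle SDP confirms that $J$ remains the unique optimum on $A^{(k)}$: for any other feasible $Y$, $\langle J - Y, A^{(k)} - A'^{(k)}\rangle = \langle J-Y, \Delta_+^{(k)}\rangle \geq 0$ since both factors are entrywise nonnegative. All three cases of Theorem \ref{thm:main_theorem_recovery_sdp} then transfer to $A$ verbatim. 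I do not expect a serious technical obstacle: the argument is purely structural, and the only quantitative thing to verify is the constant slack $3/2 > 1 + 1/\overline{B}$, which guarantees that every semirandom-large cluster sits cleanly in case 1 on both $A$ and $A'$ so that the signs used in the inner-product bounds above are correct.
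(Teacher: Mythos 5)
Your proposal is correct and follows essentially the same route as the paper: both apply Theorem \ref{thm:main_theorem_recovery_sdp} to $A'$, observe that the adversary's threshold $\frac{3}{2}B\sqrt{pn\log n}$ forces $\Yhat'_{ij}=1$ on added edges and block-diagonality forces $\Yhat'_{ij}=0$ on deleted edges, and conclude entrywise that $\langle \Yhat'-Y,\,A-A'\rangle\ge 0$ so that $\Yhat'$ remains the unique optimum for $A$. Your extra step reconciling the oracle SDPs (which in the semirandom setting are defined with $A^{(k)}$ rather than $A'^{(k)}$) is a nice touch of care that the paper leaves implicit, but it is the same argument.
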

We prove Theorem \ref{thm:semirandom_recovery} in Section \ref{sec:semirandom_recovery_proofs}, but it follows easily from the observation that the semirandom adversary cannot change the optimal solution of the SDP \eqref{eq:SDP_regularized_original}. Similar phenomena, that proofs for the SDP relaxation approach generalize (often automatically) to heterogeneous or semi-random generative models, have been observed in the literature and recognized as a testament of the robustness of the SDP approach; see \cite{moitra2016robust,amini_semidefinite_2018} and the references therein. Although the above large cluster semirandom model excludes \textit{perturbations} to small clusters, all prior work on the semirandom model excluded the \textit{presence} of small clusters whatsoever. Thus the large cluster semirandom adversary strictly generalizes previous work, and so we believe Theorem \ref{thm:semirandom_recovery} is an important step in developing robust algorithms for unbalanced community detection.


\subsection{Recursive Clustering}
\label{sec:recursive_clustering_subsubsection}

Theorem \ref{thm:main_theorem_recovery_sdp} demonstrates that we can recover sufficiently large clusters even in the presence of arbitrarily-sized smaller clusters. As first recognized in \cite{ailon2015iterative}, this allows for the use of a recursive clustering procedure: whenever we recover some clusters, we can remove the corresponding nodes from the graph, thereby eliminating some noisy entries and lowering the detection threshold for another application of the recovery procedure. Specifically in our case, recovering more nodes enables us to lower the regularization parameter $\lambda$. We can thus repeatedly apply our clustering algorithm to uncover smaller and smaller clusters, until no more can be found. We emphasize that this recursive clustering procedure operates using only one sample $A$ from the SBM.

Here we informally discuss our recursive clustering results, and defer detailed presentation and comparison with \cite{ailon2015iterative} to Subsection \ref{sec:more_recursive_clustering}. \cite{ailon2015iterative} consider a setting where $n \to \infty$ and $p,q$ are constant. They recover all but $o(n)$ nodes via a recursive clustering procedure, but since their clustering subroutine requires a multiplicative gap between cluster sizes, they can only accommodate a total number of clusters $K \lesssim \log n$ to ensure that a sufficient gap remains after each round. By contrast, our gap free result recovers $n - o(n)$ nodes even if $K$ is polynomially large in $n$:

\begin{thm}[Informal version of Theorem \ref{thm:recursive_clsutering}]
    Suppose that $K \lesssim \frac{n^{0.5 - \alpha}}{\sqrt{\log n}}$ for some $\alpha < 0.5$. Then with high probability, our Algorithm \ref{alg:recursive_clustering} recovers all but $O(n^{1-2 \alpha})$ nodes.
\end{thm}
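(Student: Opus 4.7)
The plan is to iterate Theorem~\ref{thm:main_theorem_recovery_sdp}, peeling off large clusters one layer at a time. Let $S_t \subseteq [n]$ be the set of remaining vertices at round $t$, $n_t := |S_t|$, and let $A^{(t)}$ denote the induced submatrix. Since edges in the SBM are independent Bernoulli variables, $A^{(t)}$ is marginally distributed as an SBM on the remaining clusters with the same parameters $p, q$. Invoking Theorem~\ref{thm:main_theorem_recovery_sdp} on $A^{(t)}$ with regularization $\lambda_t = \kappa \sqrt{p n_t \log n}$ and $m = n$, with probability $1 - O(n^{-3})$ every remaining cluster whose residual size $s_{k,t}$ satisfies $s_{k,t} \geq \tau_t := C\sqrt{p n_t \log n}/(p-q)$ is recovered exactly (for some absolute constant $C$).

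First I would verify that the recursion makes progress. At every non-terminating round at least one cluster is peeled off, so the process halts after some round $T \leq K$ at which every remaining cluster has size below $\tau_T$. Since there are at most $K$ such clusters, the terminal count satisfies
\[
n_\infty := n_T \;\leq\; K \tau_T \;=\; \frac{C K \sqrt{p n_\infty \log n}}{p-q}.
\]
Solving this inequality by squaring and dividing gives
\[
n_\infty \;\leq\; \frac{C^2 K^2 \, p \log n}{(p-q)^2}.
\]
Plugging in $K^2 \lesssim n^{1-2\alpha}/\log n$ and treating $p$ and $p-q$ as constants, the $\log n$ terms cancel and $n_\infty = O(n^{1-2\alpha})$, as claimed.

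For the probability guarantee, the recursion completes in at most $K \leq n$ rounds, so a union bound over per-round failure probabilities $O(n^{-3})$ controls the total failure probability by $O(n^{-2})$.

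The main obstacle is the statistical dependence induced by reusing $A$ across rounds: the input $A^{(t)}$ is a deterministic function of $A$ and the adaptive history of recovered clusters, rather than a fresh SBM sample. The fix is to argue inductively: conditional on rounds $0, \dots, t-1$ succeeding, the vertex set $S_t$ is measurable and the entries of $A$ restricted to $S_t \times S_t$ have never been ``used up'' — they remain Bernoulli$(p)$ within clusters and Bernoulli$(q)$ across clusters because Theorem~\ref{thm:main_theorem_recovery_sdp} holds unconditionally for the SBM on $S_t$. Thus the per-round failure probability $O(n^{-3})$ remains valid under this conditioning, and the union bound closes. A secondary subtlety worth checking is that the threshold condition in Theorem~\ref{thm:main_theorem_recovery_sdp} is used with $m = n$ (rather than $m = n_t$) so that the high-probability guarantee remains uniform in $t$ without weakening as the subgraph shrinks; this is permitted since $m \geq n_t$ and only costs an inconsequential constant in $\lambda_t$.
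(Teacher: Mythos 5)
Your overall architecture matches the paper's: iterate Theorem~\ref{thm:main_theorem_recovery_sdp} with $m=n$, union bound over at most $K\le n$ rounds, and at termination bound the residual by $n_T \le K\cdot(\text{recovery threshold on } n_T \text{ nodes})$, which after squaring gives $n_T \lesssim K^2\log n \lesssim n^{1-2\alpha}$. The quantitative steps are correct and identical in substance to the paper's.

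However, your resolution of the adaptivity issue is not valid as stated, and this is precisely the subtlety the paper's proof is built around. You claim that ``conditional on rounds $0,\dots,t-1$ succeeding, the entries of $A$ restricted to $S_t\times S_t$ remain Bernoulli.'' This is false in general: the success event at round $t-1$ depends on noise entries inside $S_t\times S_t$ (e.g., the dual certificate for the round-$(t-1)$ SDP constrains the off-diagonal noise between clusters that survive into $S_t$), so conditioning on it biases exactly the entries you need to be fresh. Worse, which clusters are recovered at a given round is itself random: critically sized clusters (case \ref{item:main_thm_critical_case} of Theorem~\ref{thm:main_theorem_recovery_sdp}) may or may not be recovered depending on the noise realization, so if the algorithm removed every recovered cluster, the set $S_t$ would be a random subset and a union bound over all possible trajectories would blow up combinatorially. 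The paper's fix is algorithmic, not probabilistic: Algorithm~\ref{alg:recursive_clustering} only removes recovered clusters whose size exceeds the deterministic all-ones threshold $C'(p,q)\sqrt{n_\ell\log n}$ (case \ref{item:main_thm_all_ones_case}), which makes the sequence of subinstances $Z_1,\dots,Z_r$ computable a priori from the list of cluster sizes alone. One then applies Theorem~\ref{thm:main_theorem_recovery_sdp} \emph{unconditionally} to each fixed instance in this deterministic sequence and takes a union bound over the $r\le K$ good events; on their intersection the algorithm's actual trajectory coincides with the predetermined one. Your proof needs to invoke this feature of the algorithm (or supply an alternative decoupling argument) rather than the conditioning claim; once that is done, the rest of your argument goes through.
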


\subsection{Clustering With a Faulty Oracle}
\label{sec:clustering_with_faulty_oracle_subsubsection}
The problem of clustering with a faulty oracle was introduced by \cite{mazumdar_clustering_2017} and has since received significant attention. It can be viewed as an adaptive sampling version of the SBM, where we try to recover clusters by querying $o(n^2)$ individual entries of the adjacency matrix. Formally, we assume there exist $n$ vertices partitioned into $K$ unknown clusters $V_1, \dots, V_K$ of different sizes. For some bias parameter $\delta > 0$, we assume there exists a faulty clustering oracle $\mathcal{O}$ which, upon being queried with a pair of vertices $(i,j)$, returns an answer distributed as 
\[
\mathcal{O}(i,j)\sim\begin{cases}
\text{Bernoulli}(1/2 + \delta/2), & \text{if $i,j$ belong to the same cluster},\\
\text{Bernoulli}(1/2 - \delta/2), & \text{if $i,j$ do not belong to the same cluster}.
\end{cases}
\]
Answers are independent for different pairs, and repeating the same query produces the same answer.
Clustering with a faulty oracle has several important applications, such as crowdsourced entity resolution and edge prediction in social networks. We refer to \cite{mazumdar_clustering_2017} for more discussion, and refer to \cite{peng_towards_2021} and \cite{mukherjee2022unbalanced} for more in-depth comparison of prior work. We summarize existing algorithms which are computationally efficient and work for general $K$ in Table \ref{table:faulty_oracle_clustering_algorithms}, and defer more comments to Subsection \ref{sec:more_faulty_oracle_clustering}.

\begin{table}[t]
\centering
\begin{tabular}{|c|c|c|}
\hline
\multicolumn{1}{|c|}{Query Complexity} & Threshold & \multicolumn{1}{|c|}{Reference} \\ \hline
$O\Big( \frac{nK \log n}{\delta^2}$
$ + \frac{nK^2 \log n}{\delta^4} \wedge \frac{K^5\log^2 n}{\delta^8}\Big)$
& $\Omega \!\left(\frac{K^4 \log n}{\delta^2}\right)$ & \cite{mazumdar_clustering_2017}  \\ \hline
$O\!\left(\frac{nK \log n}{\delta^2}  + \frac{K^4\log^2 n}{\delta^4}\right)$ & $\Omega \!\left(\frac{n}{K} \right)$ & \cite{peng_towards_2021}  \\ \hline
$O\!\left(\frac{nK \log n}{\delta^2}  + \frac{K^{10}\log^2 n}{\delta^4}\right)$ & $\Omega \!\left(\frac{K^4 \log n}{\delta^2}\right)$ & \cite{peng_towards_2021}   \\ \hline
$O\!\left(\frac{n(K+ \log n)}{\delta^2}  + \frac{K^8\log^3 n}{\delta^4}\right)$ & $\Omega \!\left(\frac{K^4 \log n}{\delta^2}\right)$ & \cite{xia_optimal_2022}   \\ \hline
$O\!\left(\frac{nK \log n}{\delta^2}  \!+\! \frac{K^{9}\log \!K\log^2 n}{\delta^{12}}\right)$ & $\Omega \!\left(\frac{K^4 \log n}{\delta^6}\right)$ & \cite{pia_clustering_2022}  \\ \hline
$O\!\left(\frac{nK \log n}{\delta^2}  + \frac{K^5\log^2 n}{\delta^4}\right)$ & $\Omega \!\left(\frac{K^2 \log n}{\delta^2}\right)$ & Our Thm.~\ref{thm:faulty_oracle_clustering_improved_complexity_small_k}   \\ \hline
$O\!\left(\frac{n}{s}\frac{n\log n}{\delta^2} + \frac{n^4}{s^4}\frac{\log^2 n}{\delta^4} \right)$\footnotemark & $s$ & \cite{mukherjee2022unbalanced},
Our Thm.~\ref{thm:faulty_oracle_clustering_size_param}  \\ \hline
\end{tabular}
\caption{Comparison of algorithms for clustering with a faulty oracle which are computationally efficient and work for general $K$.}
\label{table:faulty_oracle_clustering_algorithms}
\end{table}

As identified by \cite{mukherjee2022unbalanced}, all prior algorithms (except that of \cite{mukherjee2022unbalanced}) fail when the number of clusters $K$ is large. In particular,  $K = \Omega(n^{1/4})$ causes all algorithms in the first 5 rows of Table \ref{table:faulty_oracle_clustering_algorithms} to have sample complexity $\Omega(n^2)$ and  a recovery threshold size of $\Omega(n)$, with smaller $K$ still presenting issues for some algorithms. 
With a gap-free clustering subroutine, we provide an algorithm with a different type of guarantee which circumvents the above issues under large $K$. Letting $s$ be an input parameter, our Algorithm \ref{alg:clustering_with_faulty_oracle_meta_algorithm} can recover all clusters of size at least $s$, and for an appropriate choice of $s$ this leads to an $o(n^2)$ sample complexity:

\begin{thm}
\label{thm:faulty_oracle_clustering_size_param}
    There exist absolute constants $C_1, C_2$ such that for any parameter $s \geq C_2\frac{\sqrt{n \log n}}{\delta}$, by setting $\gamma = C_2 \frac{n \log n}{s^2 \delta^2}$, with probability at least $1- O(n^{-3})$, Algorithm \ref{alg:clustering_with_faulty_oracle_meta_algorithm} recovers all clusters of size at least $s$ with query complexity $O\left(\frac{n^2}{s}\frac{\log n}{\delta^2} + \frac{n^4}{s^4}\frac{\log^2 n}{\delta^4} \right)$.
\end{thm}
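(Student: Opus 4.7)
The plan is to interpret Algorithm \ref{alg:clustering_with_faulty_oracle_meta_algorithm} as a two-phase procedure and to reduce each phase to a standard application of Theorem \ref{thm:main_theorem_recovery_sdp} (for Phase 1) and a Hoeffding-type concentration argument (for Phase 2). Phase 1 forms a random subset $S \subseteq [n]$ by including each vertex independently with probability $\gamma$, queries every pair in $S$, and runs the recovery SDP on the resulting $|S| \times |S|$ adjacency matrix. Phase 2 then classifies each remaining vertex $v \notin S$ by querying it against representatives of each cluster recovered in Phase 1 and applying a majority vote.

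For Phase 1, observe that the queries within $S$ are exactly an SBM on $|S|$ vertices with $p = \tfrac{1+\delta}{2}$, $q = \tfrac{1-\delta}{2}$, and induced cluster sizes $|V_k \cap S|$. Standard Chernoff bounds together with a union bound over $k$ give, with probability $1 - O(n^{-3})$, that $|S| = \Theta(\gamma n)$ and $|V_k \cap S| \geq \tfrac{1}{2}\gamma s_k$ simultaneously for all $k$ with $s_k \geq s$. For such $k$, the recoverability condition of Theorem \ref{thm:main_theorem_recovery_sdp}, namely $(p-q)^2 |V_k \cap S|^2 /(p \cdot |S|) \gtrsim \log |S|$, reduces to $\delta^2 \gamma s^2/n \gtrsim \log n$, which holds by the choice $\gamma = C_2 n \log n/(s^2 \delta^2)$ for sufficiently large $C_2$. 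Crucially, the gap-free nature of Theorem \ref{thm:main_theorem_recovery_sdp} ensures that the SDP output on $S$ has all large clusters as exact all-ones diagonal blocks, regardless of the uncontrolled mid-sized or small clusters of $S$. The query cost of Phase 1 is $|S|^2 = O(\gamma^2 n^2) = O(n^4 \log^2 n /(s^4 \delta^4))$, matching the second term in the theorem.

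For Phase 2, each recovered cluster has size $\geq \tfrac{1}{2}\gamma s$, so there are at most $O(|S|/(\gamma s)) = O(n/s)$ such clusters. For each $v \notin S$ and each recovered cluster $\widehat{C}$, query $v$ against $\Theta(\log n /\delta^2)$ representatives in $\widehat{C}$ and compute the fraction of ``same'' responses; declare $v \in \widehat{C}$ if this fraction exceeds $\tfrac{1}{2}$. A Hoeffding bound distinguishes Bernoulli$(\tfrac{1}{2}+\tfrac{\delta}{2})$ from Bernoulli$(\tfrac{1}{2}-\tfrac{\delta}{2})$ with error $n^{-\Omega(1)}$, and a union bound over all $v \notin S$ and all $\widehat{C}$ gives correct classification. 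The total Phase 2 cost is $n \cdot O(n/s) \cdot O(\log n/\delta^2) = O(n^2 \log n/(s \delta^2))$, matching the first term.

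The main obstacle, conceptually, is that Phase 1 involves an SBM with a mixture of large, mid-sized, and very small clusters (the latter two arising from both small original clusters and sampling fluctuations), and without a gap-free guarantee there would be no way to extract the large clusters cleanly. Once Theorem \ref{thm:main_theorem_recovery_sdp} is available, the argument is modular: the SDP simply produces block-diagonal output with the big blocks isolated, and Phase 2 completes the labeling. The only remaining technical care is to verify the Chernoff/Hoeffding events hold simultaneously at the desired $1-O(n^{-3})$ probability, which is routine via union bounds since the number of events is polynomial in $n$.
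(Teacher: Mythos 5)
Your proposal follows essentially the same route as the paper: subsample each vertex with probability $\gamma$, observe that the fully-queried subgraph is an SBM with $p=\tfrac{1+\delta}{2}$, $q=\tfrac{1-\delta}{2}$, invoke the gap-free Theorem \ref{thm:main_theorem_recovery_sdp} to recover the large subclusters despite the uncontrolled mid-size and small ones, and finish with Hoeffding-based majority voting; the query-complexity accounting is also identical. One small imprecision: you justify the $O(n/s)$ bound on the number of recovered subclusters by asserting that every recovered cluster has subsample intersection at least $\tfrac{1}{2}\gamma s$, but the gap-free SDP can also recover clusters of original size somewhat below $s$ --- anything whose subsample intersection exceeds the SDP's recovery threshold, which is of order $\sqrt{|S|\log n}/\delta$ and is smaller than $\tfrac{1}{2}\gamma s$ by a factor of roughly $\sqrt{C_2}$. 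The paper instead uses the complementary guarantee of Theorem \ref{thm:main_theorem_recovery_sdp} (its case 2) that no cluster with subsample intersection below a constant fraction of $\sqrt{|S|\log n}/\delta$ is recovered, so the number of recovered blocks is at most $\delta\sqrt{|S|/\log n} = O(n/s)$; your final bound is therefore unaffected, only the stated justification needs this adjustment.
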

We formally define Algorithm \ref{alg:clustering_with_faulty_oracle_meta_algorithm} and give further discussion in Section \ref{sec:more_faulty_oracle_clustering}. All proofs for this section are provided in Section \ref{sec:clustering_with_a_faulty_oracle_proofs}. Note the constant $C_1$ is used within the algorithm. Algorithm \ref{alg:clustering_with_faulty_oracle_meta_algorithm} requires a user-specified target cluster size $s$, but by using an adaptive resampling procedure, we can obtain the following instance-dependent query complexity without any prior knowledge of cluster sizes. Recall that $s_1$ is the largest cluster size.

\begin{thm}
 \label{thm:clustering_with_faulty_oracle_adaptive}
There exist absolute constants $C_1, C_2, C_3$ such that as long as $s_1 \geq C_2\frac{\sqrt{n \log n}}{\delta}$, with probability at least $1- O(n^{-2})$, Algorithm \ref{alg:clustering_with_faulty_oracle_adaptive} recovers a cluster of size at least $\frac{s_1}{C_3}$ with sample complexity $O\big(n\frac{\log n}{\delta^2} + \frac{n^4}{s_1^4}\frac{\log^2 n}{\delta^4} \big)$.
\end{thm}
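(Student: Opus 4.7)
The plan is to reduce to Theorem \ref{thm:faulty_oracle_clustering_size_param} via an adaptive doubling of a random vertex subsample, combined with a cheap extension step that controls the first-term query cost. Since $s_1$ is unknown and the first-term cost in Theorem \ref{thm:faulty_oracle_clustering_size_param} scales as $n^2/s$, invoking it directly on the full graph with the correct guess $s \approx s_1$ would only yield the suboptimal bound $O(n^2/s_1 \cdot \log n/\delta^2)$. Working instead on a geometrically growing subsample shifts this cost into the more benign second-term regime.

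Concretely, I would iterate over random subsamples $V_t \subseteq [n]$ with $|V_t| = 2^t$, each obtained by adding fresh random vertices (and the requisite pairwise queries) to $V_{t-1}$. At iteration $t$, run Theorem \ref{thm:faulty_oracle_clustering_size_param} on the induced subproblem on $V_t$ with parameter $s_t := C \sqrt{|V_t| \log n}/\delta$, and terminate as soon as the subroutine returns a nontrivial cluster $S$. Then choose a subset $S' \subseteq S$ of size $\Theta(\log n/\delta^2)$ and extend $S$ to $[n]$ by querying each vertex in $[n] \setminus V_t$ against every pivot in $S'$ once, assigning cluster membership by majority vote. For correctness, once $|V_t| = \Omega(n^2 \log n/(s_1^2 \delta^2))$, a Chernoff bound guarantees the largest cluster restricted to $V_t$ has size at least $s_t$ with high probability, so Theorem \ref{thm:faulty_oracle_clustering_size_param} recovers it; a Bernstein-type analysis then shows the extension correctly identifies the remaining members of the full-graph largest cluster, yielding a recovered cluster of size at least $s_1/C_3$. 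Union-bounding the $O(n^{-3})$ per-iteration failure over $O(\log n)$ iterations yields the claimed $O(n^{-2})$.

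For query complexity, at the critical iteration $t^*$ we have $|V_{t^*}| = \Theta(n^2 \log n/(s_1^2 \delta^2))$ and $s_{t^*} = \Theta(n \log n/(s_1 \delta^2))$, so both terms of Theorem \ref{thm:faulty_oracle_clustering_size_param} evaluate to $O(n^4/s_1^4 \cdot \log^2 n/\delta^4)$; summing geometrically over $t \leq t^*$ preserves this, while the extension adds exactly $|S'| \cdot |[n] \setminus V_{t^*}| = O(n \log n/\delta^2)$ queries. The main technical care goes to verifying that the accumulated first-term subroutine cost across all (mostly failing) iterations, which evaluates to $\Theta(n^3 \log^2 n/(s_1^3 \delta^4))$, is absorbed into the $n^4/s_1^4 \cdot \log^2 n/\delta^4$ budget; this reduces to the elementary inequality $n^3/s_1^3 \leq n^4/s_1^4$ via $s_1 \leq n$. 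Combining these contributions yields the claimed complexity $O(n \log n/\delta^2 + n^4/s_1^4 \cdot \log^2 n/\delta^4)$.
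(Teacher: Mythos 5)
Your algorithm is essentially the paper's Algorithm \ref{alg:clustering_with_faulty_oracle_adaptive}: geometrically grow a random subsample, run the gap-free clustering subroutine at each scale, stop at the first recovery, and extend by one round of majority voting. Your query-complexity accounting is also correct, including the observation that the accumulated first-term cost $\Theta(n^3\log^2 n/(s_1^3\delta^4))$ is absorbed into $n^4/s_1^4\cdot\log^2 n/\delta^4$ because $s_1\le n$.

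However, there is a genuine gap in the correctness argument. You only prove one direction: that by the critical iteration $t^*$ the largest cluster's intersection with $V_{t^*}$ exceeds the recovery threshold, so \emph{some} cluster is recovered by then. You do not rule out the algorithm terminating \emph{earlier} (or at round $t^*$) on a cluster of size far below $s_1/C_3$, in which case the output violates the theorem. This can happen in principle because the subroutine's recovery threshold is stated for the \emph{subsampled} cluster sizes $|V_t\cap V_k|$, and a small cluster could by chance be over-represented in $V_t$; moreover, for clusters in the critical regime of Theorem \ref{thm:main_theorem_recovery_sdp} recovery is genuinely random. The paper closes this with the second item of Lemma \ref{lem:T_subsample_good_event_adaptive}: an upper-tail Chernoff bound showing that, simultaneously for every round $r\le\underline{r}$ and every cluster with $s_k\le s_1/C_3$, the intersection $|T_r\cap V_k|$ stays below the ``guaranteed-zero-block'' threshold $\frac{C\sqrt{|T_r|\log n}}{3\delta}$ of Lemma \ref{lem:faulty_oracle_recovery_SDP_success} (which in turn relies on case 2 of Theorem \ref{thm:main_theorem_recovery_sdp}). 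The multiplicative deviation required there is $\Theta(\hat{s}_r/s_k)$, which is only bounded away from $1$ because $\hat{s}_r\gtrsim s_1\ge C_3 s_k$; this is exactly where the constant $C_3$ enters the theorem statement. Relatedly, your claim that the extension step identifies ``the remaining members of the full-graph largest cluster'' is too strong — the recovered cluster need not be $V_1$, only some cluster of size at least $s_1/C_3$ — and your nested subsamples $V_{t-1}\subseteq V_t$ introduce dependence across rounds that the fresh-resampling scheme of the paper avoids (harmless for the union bound over marginal events, but worth stating).
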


We present Algorithm \ref{alg:clustering_with_faulty_oracle_adaptive} in Section \ref{sec:more_faulty_oracle_clustering}. The strategy of obtaining instance-dependent query complexity using adadptive sampling has a crucial reliance on the gap-free recovery guarantee of Theorem \ref{thm:main_theorem_recovery_sdp}, since our objective is to sample minimally so that only the largest cluster is recoverable, meaning the recovery procedure must not be sensitive to the presence of slightly smaller but still large clusters. We therefore hope that Theorem \ref{thm:main_theorem_recovery_sdp} lays the foundation for future instance-dependent algorithms for clustering with a faulty oracle, particularly in unbalanced cluster size settings.

Even if the number of clusters $K$ is small, by using our Theorem \ref{thm:main_theorem_recovery_sdp} instead of gap-requiring clustering subroutines, we  obtain the following result which improves previous query complexities and recovery thresholds, while also having a substantially simpler algorithm and proof. The Algorithm~\ref{alg:clustering_with_faulty_oracle_small_K}, along with further discussion of this result, is presented in Section \ref{sec:more_faulty_oracle_clustering}.

\begin{thm}
\label{thm:faulty_oracle_clustering_improved_complexity_small_k}
    With probability at least $1-O(n^{-2})$, Algorithm \ref{alg:clustering_with_faulty_oracle_small_K} recovers all clusters of size at least $\Omega\left(\frac{K^2\log n}{\delta^2}\right)$ with query complexity $O\left(\frac{nK \log n}{\delta^2}  + \frac{K^5\log^2 n}{\delta^4}\right)$.
\end{thm}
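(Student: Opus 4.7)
Algorithm~\ref{alg:clustering_with_faulty_oracle_small_K} (as is standard in the faulty-oracle literature) will follow the classical ``cluster-a-subsample-then-extend'' template. The plan is to (i) draw a uniformly random subset $S \subseteq [n]$ with $|S| = \Theta(K^{5/2}\log n/\delta^2)$ and query the oracle on every pair inside $S$; (ii) solve the recovery SDP \eqref{eq:SDP_regularized_original} on this $|S|\times |S|$ subproblem with $p=\tfrac12+\tfrac{\delta}{2}$, $q=\tfrac12-\tfrac{\delta}{2}$, and $\lambda$ chosen per Theorem~\ref{thm:main_theorem_recovery_sdp}, yielding blocks $\hat C_1,\dots,\hat C_{K'}\subseteq S$; (iii) for each $v\in[n]\setminus S$ and each block $\hat C_k$, query $v$ against $T=\Theta(\log n/\delta^2)$ representatives drawn from $\hat C_k$ and assign $v$ by majority vote.

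\textbf{Subsample-clustering phase (Phase~1).} The induced subgraph on $S$ is itself an SBM with the same $p,q$ and cluster sizes $s_k' := |V_k \cap S|$. A Chernoff bound plus a union bound over the at most $K$ clusters gives $s_k'\in[\tfrac12 s_k|S|/n,\tfrac32 s_k|S|/n]$ with probability $1-O(n^{-3})$. Theorem~\ref{thm:main_theorem_recovery_sdp} applied to this subproblem (with $m=n$) guarantees that any cluster obeying the gap-free recovery condition $(p-q)^2(s_k')^2/(p|S|)\gtrsim \log n$, i.e.\ $\delta^2 s_k^2|S|/n^2\gtrsim \log n$, is returned as a rank-one all-ones block in the solution. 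Substituting $|S|=\Theta(K^{5/2}\log n/\delta^2)$, this condition is satisfied for every cluster with $s_k\ge C\,K^{2}\log n/\delta^2$ in the intended regime; crucially, because our main theorem requires no size gap, we do not need any additional separation between the recovered and unrecovered cluster sizes inside $S$, which is what allowed us to use such a small $|S|$.

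\textbf{Extension phase (Phase~2).} For a vertex $v\in[n]\setminus S$, each query against a representative of $\hat C_k$ is independently $\mathrm{Bernoulli}(\tfrac12+\tfrac{\delta}{2})$ if $v\in V_k$ and $\mathrm{Bernoulli}(\tfrac12-\tfrac{\delta}{2})$ otherwise. A Hoeffding bound shows the majority of $T=\Theta(\log n/\delta^2)$ such answers is correct with probability $1-O(n^{-4})$, so a union bound over all $v\in[n]\setminus S$ and all $K$ recovered blocks classifies every remaining vertex correctly into its true cluster whenever its true cluster was recovered in Phase~1. Vertices in clusters below the recovery threshold can be left unassigned without affecting the theorem. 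Summing the two phases, Phase~1 costs $\binom{|S|}{2}=O(K^{5}\log^2 n/\delta^{4})$ and Phase~2 costs at most $nKT=O(nK\log n/\delta^{2})$ queries, matching the claimed complexity.

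\textbf{Main obstacle.} The delicate point is Phase~1: all prior SDP-based clustering results would require a multiplicative size gap inside $S$ between the recoverable and non-recoverable blocks, which would force either a much larger $|S|$ (to guarantee the gap concentrates) or a much larger recovery threshold. Our gap-free Theorem~\ref{thm:main_theorem_recovery_sdp} is exactly what removes this bottleneck: once we set up the SBM on the subsample, the SDP automatically returns the correct block-diagonal solution regardless of how close the smaller cluster sizes in $S$ are to the recovery threshold, and the rest of the proof reduces to routine Chernoff/Hoeffding concentration and a union bound.
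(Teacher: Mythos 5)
There is a genuine gap in Phase~1: a single uniform subsample of size $|S|=\Theta(K^{5/2}\log n/\delta^2)$ cannot recover clusters of absolute size $\Theta(K^2\log n/\delta^2)$ in general. A cluster $V_k$ contributes only $s_k'\approx s_k|S|/n$ vertices to $S$, and the SDP on the subsample recovers it only when $s_k'\gtrsim \sqrt{|S|\log n}/\delta$, i.e.\ $s_k\gtrsim n\sqrt{\log n}/(\delta\sqrt{|S|})$. With your choice of $|S|$ this threshold is of order $n/K^{5/4}$, which is far larger than $K^2\log n/\delta^2$ unless $n\lesssim K^{3.25}\log n/\delta^2$ — a restriction nowhere in the theorem. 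Indeed, your own displayed condition $\delta^2 s_k^2|S|/n^2\gtrsim\log n$, evaluated at $s_k=K^2\log n/\delta^2$, reads $K^{6.5}\log^2 n\gtrsim \delta^4 n^2$ and fails for, say, $K=n^{1/4}$ with constant $\delta$. Conversely, to push the one-shot threshold down to $K^2\log n/\delta^2$ you would need $|S|\gtrsim n^2\delta^2/(K^4\log n)$, whose square blows past the claimed $K^5\log^2 n/\delta^4$ query budget. The point you attribute to gap-freeness (permitting a small $|S|$) is not what drives the small threshold; the $n$-dependence of the one-round threshold is the obstruction, and no gap-free guarantee removes it.

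The paper's Algorithm~\ref{alg:clustering_with_faulty_oracle_small_K} is instead iterative: in round $r$ it subsamples at rate $\gamma_r=C_2K_r^2\log n/(n_r\delta^2)$, so $|T_r|\approx C_2K_r^2\log n/\delta^2$ and the round's threshold is $n_r/K_r$ — at most the \emph{average} remaining cluster size, so by pigeonhole at least one cluster is recovered each round. Recovered clusters are removed, $n_r$ shrinks, and the loop terminates once $n_r<C_2K_r^2\log n/\delta^2$, at which point every surviving cluster has size below $O(K^2\log n/\delta^2)$; that termination condition, not a single SDP application, is where the stated threshold comes from. Each round costs $|T_r|^2=O(K^4\log^2 n/\delta^4)$ and there are at most $K$ rounds, giving the $K^5$ term. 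Gap-freeness is used within each round so that clusters just below $n_r/K_r$ do not spoil recovery of those above it. If you want to salvage a correct proof you should adopt this multi-round structure (and also address the union-bound subtlety the paper handles by removing only clusters of size at least $n_r/K_r$, so that the sequence of surviving clusters is deterministic under the good event). Your Phase~2 majority-voting analysis is fine and matches the paper's Lemma~\ref{lem:majority_voting_success}.
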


\subsection{Eigenvalue Perturbation Bounds}
\label{sec:eigenvalue_perturbation_simplified}

As elaborated in the proof sketch in Section \ref{sec:proof_sketch_of_main_theorem}, to overcome the instability of threshold-size cluster oracle SDP solutions, we modify standard leave-one-out techniques by also changing the regularization strength. This requires eigenvalue perturbation bounds which improve upon existing results and may be of independent interest. Here we compare our results to existing work \cite{eldridge2018unperturbed} in a simplified setting, and present the full versions, along with more background and detailed comparison in our clustering context, in Section \ref{sec:eigenvalue_perturbation_bounds_subsubsection}. Let $M, H \in \R^{n \times n}$ be symmetric matrices, viewing $H$ as a perturbation, and let $v_1$ be a top eigenvector of $M$. The classical Weyl's inequality gives that $\left|\lambda_1(M+H)-\lambda_1(M)\right| \leq \opnorm{H}.$ As argued by \cite{eldridge2018unperturbed}, one can sometimes obtain a bound of $|v_1^\top H v_1|$, which can be significantly smaller than $\opnorm{H}$ when $H$ is a random perturbation. Assuming an eigengap $\lambda_1(M) - \lambda_2(M) \geq 2\opnorm{H}$, \cite{eldridge2018unperturbed} show
\begin{align}
    \lambda_1(M+H) \leq \lambda_1(M) + v_1^\top H v_1 + \frac{2 \opnorm{H}^2}{\lambda_1(M) - \lambda_2(M)} \label{eq:eldridge_specialized}
\end{align}
Under this same assumption, our Theorem \ref{thm:general_eval_pert_bound} improves this to
\begin{align}
    \lambda_1(M+H) \leq \lambda_1(M) + v_1^\top H v_1 + \frac{2 \twonorm{Hv_1}^2}{\lambda_1(M) - \lambda_2(M)}. \label{eq:ours_specialized}
\end{align}
When $H$ is random, we often have $\twonorm{Hv_1} \ll \opnorm{H}$, making~\eqref{eq:ours_specialized} a substantial improvement.

\paragraph{Notation}
\label{sec:notation}

For an $n \times n$ matrix $M$, $M^{(ij)}$ is the $s_i \times s_j$ submatrix obtained by deleting all rows and columns not associated with cluster $i$ and all columns not associated with cluster $j$. Let $M^{(i)} = M^{(ii)}.$
For a vector $v \in \R^n$,  $v^{(i)}\in \R^{s_i}$ is obtained by deleting all entries not associated with cluster $i$. We use $M_{ij}$ and $v_i$ to denote the entries of $M$ and $v$. 
The convention is that $v^{(i)}_j=(v^{(i)})_j$.
For two matrices
$X$ and $Y$,  $\left\langle X,Y\right\rangle :=\sum_{ij}X_{ij}Y_{ij}$ denotes their inner product and $X\circ Y$  their element-wise product.
$\opnorm X$ is the largest singular value of $X$; $\infnorm X:=\max_{i,j}\left|X_{ij}\right|$. 
Denote by  $\diag\left( M_1, \dots, M_m\right)$ a block-diagonal matrix with diagonal blocks $M_1, \dots, M_m$.
We use $\onevec_m$ to denote a length-$m$ vector of all ones, occasionally dropping subscripts.  



\section{Proof Outline of Main Theorem}
\label{sec:proof_sketch_of_main_theorem}

In this section, we sketch proof of Theorem \ref{thm:main_theorem_recovery_sdp}. The general strategy is a primal-dual witness approach: we construct a candidate primal solution to the recovery SDP~\eqref{eq:SDP_regularized_original} as well as dual variables which certify its optimality. The construction relies on showing that the oracle SDPs~\eqref{eq:SDP_regularized_oracle} have rank-one and ``well-spread'' solutions, which is most challenging for the mid-size oracle blocks. As detailed below, we start by observing the rank-one property holds if the lower-bound constraints $\Yhat^k \geq 0$ are non-binding, which is in turn closely related to controlling the correlation between the oracle SDP solution and each row of the oracle noise matrix $W^{(k)}$.
This suggests a leave-one-out-style argument, where the correlation strength is quantified by the change of the solution when a row of $W^{(k)}$ is left out. However, due to the arbitrary proximity of the mid-size clusters to the recovery boundary, the standard argument of zeroing one row of noise (see, e.g., \cite{zhong_near-optimal_2018}) may have a drastically different solution. To remedy this issue, we also modify the regularization strength, and the correct modification requires improved eigenvalue perturbation bounds in order to ensure that the leave-one-out solution and the oracle SDP solution are suitably close.

In this sketch we simply set $m=n$.
Our regularization parameter $\lambda$ will be at least $\kappa \sqrt{pn \log n}$ for some $\kappa$ which is a fixed absolute constant which we can make arbitrarily large. We will write $\varepsilon$ in place of terms which are $\asymp \frac{1}{\kappa}$, which may be multiple (unequal) terms in the same expression.

\paragraph{Setting up primal-dual witness argument}

We seek to show the solution is block-diagonal with support contained within that of the ground truth $\Ystar$, but we do not have closed forms for each block. This motivates us to consider the $K$ oracle SDPs~\eqref{eq:SDP_regularized_oracle} associated with each cluster and combine their respective solutions $\Yhat^1, \dots, \Yhat^K$ to form our candidate solution $\Yhat = \diag(\Yhat^1, \dots, \Yhat^K )$.

To show optimality of $\Yhat$ to the recovery SDP, it suffices to find nonnegative dual variables $U, L \in \R^{n \times n}$ satisfying the KKT conditions:
\begin{equation}
\label{eq:sketch_recovery_KKT}
    U \circ \big( \Yhat - J_{n \times n} \big) = L \circ \Yhat = 0, 
    \quad
    \Ashift - \lambda I - U + L \preccurlyeq 0, 
    \quad 
    \big( \Ashift - \lambda I - U + L \big) \Yhat = 0.
\end{equation}
On the other hand, each oracle SDP solution $\Yhat^k$ must satisfy its own similar KKT conditions: there exist nonnegative dual variables $U^k, L^k \in \R^{s_k \times s_k}$ which, together with the oracle block $\Ashift^{(k)}$, satisfy similar equations as~\eqref{eq:sketch_recovery_KKT}.
To find a solution for the recovery conditions~\eqref{eq:sketch_recovery_KKT}, we can set $U = \diag(U^1, \dots, U^k)$, and it remains to choose the non-block-diagonal entries of $L$. This choice cannot be made without first gaining more detailed information about the oracle solutions $\Yhat^k$, particularly that they are rank-one and ``well-spread.'' The reasons will be made more clear once we fully construct $L$, but for now we can motivate these requirements with the following observations: Suppose for simplicity that $K=2$. 
The second equation in the oracle SDP version of~\eqref{eq:sketch_recovery_KKT} guarantees that each unit norm eigenvector $v \in \R^{s_1}$ of $\Yhat^1$ will satisfy $( \Ashift^{(1)} - U^1 + L^1 )v = \lambda v$. Now letting $\overline{v} \in \R^n$ be a zero-padded version of $v$ such that $ \overline{v}^{(1)} = v$, we will have
\begin{align}
    \Big(\Ashift - U + L\Big) \overline{v} = \begin{bmatrix}
        \Ashift^{(1)} -U^1 + L^1 & \Ashift^{(12)} + L^{(12)} \\
        \Ashift^{(21)} + L^{(21)} & \Ashift^{(2)} -U^2 + L^2
    \end{bmatrix} \begin{bmatrix}
        v \\ 0
    \end{bmatrix} = \begin{bmatrix}
        \lambda v \\ \big(\Ashift^{(21)} + L^{(21)}\big)v
    \end{bmatrix}. \label{eq:sketch_eigenvector_justification}
\end{align} 
The right hand side has norm at least $\lambda$, so the only way to satisfy the last equation in~\eqref{eq:sketch_recovery_KKT}
is for $\big(\Ashift^{(21)} + L^{(21)}\big)v = 0$. Therefore, each eigenvector (corresponding to a nonzero eigenvalue) of $\Yhat^1$ constrains $L^{(21)}$, and in order for the entries of $\Ashift^{(21)} v = \left(-\frac{p-q}{2}J_{s_2 \times s_1} + W^{(21)} \right)v$ to concentrate around $-\frac{p-q}{2}$ so that we may choose $L^{(21)}\geq0$, we will need $\infnorm{v}$ to be sufficiently small.

\paragraph{Analyzing oracle SDPs}
Now we can discuss how to establish these facts about the oracle SDPs. Fix a cluster $k$. If the signal $\frac{p-q}{2}s_k$ is sufficiently small or large, everything is relatively simple. If $\frac{p-q}{2}s_k \leq (1-\varepsilon) \lambda$, the top eigenvalue of $\Ashift - \lambda I$ will be negative, so $\Yhat^k$ will be zero. If $\frac{p-q}{2}s_k \geq (1+\varepsilon) \lambda$, we can apply the gap-dependent recovery Theorem \ref{thm:clustering_with_a_gap} to guarantee that $\Yhat^k = J_{s_k \times s_k}$ (since, with only one cluster, we need not worry about smaller clusters being sufficiently small).
This leaves the most interesting mid-size case, where $\frac{p-q}{2}s_k$ is within a small multiple of $\lambda$. 

\paragraph{Showing relaxed oracle SDP has rank-one solution}
In this case, there is no closed-form candidate for $\Yhat^k$ independent of the realization of $A$; entries may be neither 0 nor 1. Experimentally, when $\lambda$ is sufficiently large, the oracle solutions have no zero entries (unless the entire block is zero), meaning the non-negativity constraints are non-binding. We consider the oracle SDP~\eqref{eq:SDP_regularized_oracle} with these constraints removed, and refer to the resulting program the \emph{Relaxed Oracle SDP}.
A key simplification in the relaxed oracle SDP is that the last two conditions in the oracle SDP version of~\eqref{eq:sketch_recovery_KKT}
become
\begin{equation}
    \Ashift^{(k)} - \lambda I - \diag(u^k) \preccurlyeq 0_{s_k \times s_k}, \qquad
    \big( \Ashift^{(k)} - \lambda I - \diag(u^k)  \big) \Yhats^k = 0_{s_k \times s_k}. \label{eq:sketch_relaxed_oracle_KKT}
\end{equation}
Since $\Ashift^{(k)} = \frac{p-q}{2}J + W^{(k)}$, that is a noisy perturbation of a rank-one matrix, and also since $\diag(\overline{u}^k) \succcurlyeq 0$, if $\lambda > \opnorm{W^{(k)}}$ then we will be guaranteed that $\Ashift^{(k)} - \lambda I - \diag(\overline{u}^k)$ has all eigenvalues strictly negative except for at most one (which by first equation in~\eqref{eq:sketch_relaxed_oracle_KKT} must then be equal to zero). Thus we can combine this with the fact that $\Yhats^k \succcurlyeq 0_{s_k \times s_k}$ and second equation in~\eqref{eq:sketch_relaxed_oracle_KKT} to conclude that $\Yhats^k$ must have rank zero or one. Thus we can write $\Yhats^k = yy^\top$ (choosing the sign of $y$ so that $\onevec^\top y \geq 0$). We note that this step is somewhat related to \cite{sagnol_class_2011} which shows a similarly structured SDP has low-rank solution, but \cite{sagnol_class_2011} assumes that the objective matrix is low-rank, while we instead use the fact that it has a low number of positive eigenvalues.

\paragraph{Setting up leave-one-out technique to control noise correlation}
To justify using the relaxed oracle SDP, we must show that $\Yhats^k$ or equivalently $y$ are elementwise-non-negative, in which case the relaxed oracle SDP solution will also be feasible for the oracle SDP, and therefore the optimal solution for the oracle SDP will be $\Yhats^k = yy^\top$. This is easy to check when $\lambda_1 (\Ashift^{(k)} ) < \lambda$, since then $\Yhats^k = 0_{s_k \times s_k}$, so we can focus on the case where $\lambda_1 (\Ashift^{(k)} ) > \lambda$ and $\Yhats^k$ is nonzero. Now we argue why this reduces to bounding the correlation between $y$ and the rows of the noise matrix $W^{(k)}$.

Since the argument is identical for all entries, we focus on showing that $y_1 \geq 0$. If $y_1 = 1$ then we are done, and otherwise either $y_1 \in (-1,1)$ so $u_1^k = 0$, or $y_1 = -1$ and then $- u^k_1y_1 \geq 0$. By using the fact that $\Yhats^k$ is rank-one and rearranging the optimality condition~\eqref{eq:sketch_relaxed_oracle_KKT}, we obtain
$y_1 \ge \frac{1}{\lambda} \left(\frac{p-q}{2}\onevec^\top y + w^\top y \right),$
where $w^\top$ is the first row of $W^{(k)}$. 
Since in a mid-size cluster $(1-\varepsilon)\lambda \leq \frac{p-q}{2}s_k \leq (1+\varepsilon)\lambda$, and it is straightforward to show $\onevec^\top y \geq s(1-\varepsilon)$, the first term on the RHS is close to one. Therefore it suffices to show that $w$ and $y$ are uncorrelated, specifically that $\left| w^\top y \right| \lesssim \varepsilon \lambda$.

The entries of $w$ are bounded by $1$ and each have variance $\leq p$, while $y$ has $\twonorm{y} \leq \sqrt{s_k}$ and $\infnorm{y} \leq 1$, so if they were independent we could apply Bernstein's inequality to conclude $\left| w^\top y \right| \lesssim \sqrt{ps_k \log n} \leq \varepsilon \lambda$. The challenge is that they are not independent. Furthermore, naively bounding $\left|w^\top y\right| \leq \twonorm{w}\twonorm{y} \lesssim (\sqrt{ps_k})(\sqrt{s_k})$ does not work, because this bound can be much larger than $\lambda = \kappa \sqrt{ pn \log n}$. Still, the guiding intuition is that since $y$ needs to depend on all rows of $W^{(k)}$, it should not be strongly correlated with any single row. This is the standard motivation for the leave-one-out technique: to formalize this intuition, we can form a LOO relaxed oracle SDP where we set the noise in the first row and column of $\Ashift^{(k)}$ to zero, and let its solution (which is also rank-one) be $\ytilde \ytilde^\top$. Then $\ytilde$ will be independent of $w$, but we might hope it will also be very close to $y$, two facts which we could combine to show $|w^\top y| \approx |w^\top \ytilde| \lesssim \sqrt{ps_k \log n}$.

Due to the extreme noise sensitivity inherent to the mid-size clusters, this strategy will not work: after leaving out noise to get $\Atilde$, we may have $\lambda_1 ( \Atilde ) < \lambda$ (even though $\lambda_1 ( \Ashift^{(k)} ) > \lambda$), which means we will have solution $\ytilde \ytilde^\top = 0_{s_k \times s_k}$ for the LOO relaxed oracle SDP. In simpler terms, the usual LOO solution $\ytilde$ might be very different from $y$. To overcome this issue, we will \emph{slightly} reduce the regularization parameter $\lambda$ for the LOO problem, so that $\lambda_1 ( \Atilde ) > \lambdatilde$. Of course, if $\lambdatilde$ is too small then $\ytilde$ might be significantly different from $y$. Existing eigenvalue perturbation bounds from \cite[Theorem 6]{eldridge2018unperturbed} are too large to use in our remaining arguments, but our Theorem \ref{thm:general_eval_pert_bound} gives a much smaller bound which ensures $\lambdatilde$ is sufficiently close to $\lambda$. For the concrete comparison between these two bounds in our setting, see Section \ref{sec:eigenvalue_perturbation_bounds_subsubsection}.

\paragraph{Showing leave-one-out solution is close to relaxed oracle SDP solution}
With this resolved, we write $y = \alpha \ytilde + \beta t$ where $t $ is a unit vector orthogonal to $\ytilde$. It is easy to show $|\alpha - 1| \lesssim \varepsilon$. We use this decomposition is because there is a natural method to bound the norm of the rejection $\beta = \twonorm{y - \proj_{\ytilde}y}$. One interpretation of Lagrangian relaxation is that the optimal dual variables provide ``bonuses''  subtracted from the objective values of points which satisfy constraints, while preserving the optimal solution and the optimal value. Forming the Lagrangian for the LOO relaxed oracle SDP (after converting to a minimization problem for consistency with optimization conventions), the objective becomes
$
\mathcal{L}\big(Y, \utilde, \widetilde{P}\big) 
= -\big \langle \Atilde^{(k)} - \lambdatilde I, Y \big \rangle +\left \langle \diag(\utilde), Y - I \right \rangle + \big\langle -\widetilde{P}, Y\big \rangle,
$
where $\widetilde{P} = -(\Atilde - \lambdatilde I - \diag(\utilde))$ is the dual variable satisfying the KKT conditions corresponding to the PSD constraint $Y \succcurlyeq 0$ (which does not appear in other arguments because we eliminate it using the stationarity condition). 

Now using optimality of $\ytilde\ytilde^\top$ and feasibility of $yy^\top$, we can derive
$\langle \widetilde{P}, yy^\top \rangle 
    \leq
    - \langle \Atilde - \lambdatilde I, yy^\top - \ytilde\ytilde^\top \rangle.$
Similar to our earlier arguments, we can show that $\widetilde{P}\ytilde = 0$, while all other eigenvalues of $\widetilde{P}$ are positive and at least $(1-\varepsilon)\lambda$. Thus the bonus term directly measures $y - \proj_{\ytilde}y$ and is at least $(1-\varepsilon)\lambda \beta^2$. We would like the RHS 
to involve the difference between $\Atilde$ and $\Ashift^{(k)}$ (which we know is small), so we can add the term $- \langle \Ashift^{(k)} - \lambda I, \ytilde\ytilde^\top - yy^\top \rangle$ which is non-negative because it is the suboptimality of $\ytilde\ytilde^\top$ for the relaxed oracle SDP. This yields the final perturbation bound
\begin{align}
    \big\langle \widetilde{P}, yy^\top\big \rangle 
    \leq 
    \big \langle \big(\Atilde - \lambdatilde I\big) - \big(\Ashift^{(k)} - \lambda I\big), \ytilde\ytilde^\top - yy^\top \big \rangle. 
    \label{eq:sketch_pert_bound}
\end{align}
By bounding the RHS of~\eqref{eq:sketch_pert_bound} in terms of $\beta$ and combining with our earlier arguments, we obtain a relationship which implies that $\beta \lesssim \varepsilon$. We note that this step is the key location where we must use the improved eigenvalue perturbation bound on $| \lambda - \lambdatilde |$. Because this perturbation bound~\eqref{eq:sketch_pert_bound} is essential for controlling the difference between $y$ and $\ytilde$, we hope that our explanation of the simple underlying principles could be useful for other settings involving leave-one-out analysis.

Wrapping up, while we could use our bounds on $\alpha$ and $\beta$ to show $|w^\top y | \lesssim \varepsilon \lambda$, we simply use the fact $\ytilde_1 \geq 1 - \varepsilon$ since there is no noise in the first row, so
$
y_1 \geq \alpha \ytilde_1 - \left|\beta t_1\right| \geq (1-\varepsilon) (1-\varepsilon) - \varepsilon \geq 1- \varepsilon.
$
This proves that the oracle SDP solution $\Yhat^k$ is rank-one (when nonzero) for mid-size clusters. 

\paragraph{Constructing dual variables to complete primal-dual witness argument}
Now we can complete the proof by constructing $L$ which satisfies the KKT conditions of the recovery SDP~\eqref{eq:sketch_recovery_KKT} as outlined in the beginning of this proof sketch.
We discuss one final technical consideration. Consider again for simplicity the case that there are $K=2$ clusters. Above, we argued using~\eqref{eq:sketch_eigenvector_justification} why it was essential for the non-block-diagonal entries of $L$ to be set in such a way that whenever $yy^\top$ is a non-zero solution for the first oracle SDP, then the zero-padded vector $\overline{y} / \twonorm{y} \in \R^n$ (satisfying $\overline{y}^{(k)} = y$) is an eigenvector of $\Ashift + L - U$. Even when the first oracle SDP is zero and we have $U^1 = L^1 = 0$, if the first cluster is near the recovery threshold, it is possible to have $\lambda_1\big(\Ashift^{(k)} \big) = (1-\varepsilon)\lambda$. Now letting $v\in \R^{s_1}$ be the top eigenvector of $\Ashift^{(1)}-U^1 + L^1 = \Ashift^{(1)}$ (and $\overline{v} \in \R^n$ its zero-padded version), we have a nearly identical situation to that in~\eqref{eq:sketch_eigenvector_justification} because
$
    \big(\Ashift - U + L\big) \overline{v} 
    = \begin{bmatrix}
        (1-\varepsilon)\lambda v & \big(\Ashift^{(21)} + L^{(21)}\big)v
    \end{bmatrix}^\top.
$
The vector on the right hand side
has norm at least close to $\lambda$, which again places a constraint on our choice of $L^{(21)}$ as we will need to ensure that $\|(\Ashift^{(21)} + L^{(21)})v\|_2$ is very small. For this reason, we will treat clusters which are slightly below the recovery threshold identically to how we treat clusters which have nonzero oracle solutions (and thus are recovered), except we must use the top eigenvector of $\Ashift^{(k)}$ instead of the $y$ from the $k$th oracle solution $yy^\top$ (which, by the oracle SDP version of~\eqref{eq:sketch_recovery_KKT}, is a top eigenvector of $\Ashift^{(k)}-U^k + L^k$). 

\section*{Acknowledgement}
Y.\ Chen and M.\ Zurek were supported in part by National Science Foundation grants CCF-2233152 and DMS-2023239. We thank Lijun Ding and Yiqiao Zhong for inspiring discussion.

\bibliographystyle{plainnat}
\bibliography{refs}

\appendix

\clearpage

\section{Additional Notation}
\label{sec:additional-notation}

We introduce some additional notation used in the rest of the paper.
For a matrix $X$, 
$\nucnorm X$ is its nuclera norm (the sum of singular values), $\onenorm X:=\sum_{i,j}\left|X_{ij}\right|$ is its entrywise $\ell_1$ norm, and $\supp(X):=\left\{ (i,j):X_{ij}\neq0\right\} $ is its support.
We define $\eb=\onevec/\twonorm{\onevec}$ and we let $e_j$ denote a standard basis vector with entry $j$, both of which are only used when their size is clear from context. 
For any event $E$ we define $E^c$ to be its complement. For any natural number $h$ we define $[h]=\{1,\dots, h\}$.

\section{Additional Discussion of Results and Related Work}
\subsection{Further Details of Ailon et al. and Gap-Dependent Clustering}
\label{sec:details_of_ailon}

\cite{ailon2013breaking, ailon2015iterative} consider a convex relaxation approach based on semidefinite programming. As discussed in the introduction, their results allow for small clusters, but additionally assume there exists a constant \emph{multiplicative gap} between the sizes of large and small clusters. Again, letting $\sb >\ss$ be two consecutive cluster sizes (i.e., there are no clusters with sizes in $(\ss, \sb)$), they guarantee exact recovery of all clusters with sizes at least $\sb$ as long as the following conditions are satisfied:
\begin{equation}
\frac{(p-q)^{2}\sb^{2}}{pn}\gtrsim\log n 
\qquad\text{and}\qquad
\sb/\ss \gtrsim 1.
\label{eq:ailon_old}
\end{equation}
Under the above condition and after reordering nodes by cluster identity, their optimization program returns a block-diagonal solution of the form
\begin{equation}
\Yhat = \diag \left(J_{s_1\times s_1} , \dots, J_{\sb \times \sb}, 0_{\ss \times \ss}, \dots, 0_{s_K \times s_K} \right).
\end{equation}

Now we present the formal version of our gap-dependent recovery result in Theorem~\ref{thm:informal_clustering_with_gap}. While inferior to our main Theorem~\ref{thm:main_theorem_recovery_sdp}, this result has a simpler proof and is in fact used as a subroutine in the proof of  Theorem~\ref{thm:main_theorem_recovery_sdp}.\begin{thm}
\label{thm:clustering_with_a_gap}
There exists an absolute constant $C$ such that the following holds:
    If there exists some $k\in\{1,\ldots,K\}$ such that 
\begin{align}
\frac{p-q}{2}\left( s_k - s_{k+1}\right) \geq \kappa \sqrt{pn \log n} \quad \text{or equivalently} \quad \frac{(p-q)^{2}\left(s_{k}-s_{k+1}\right)^{2}}{pn}  \geq 4 \kappa^2 \log n \label{eq:exact_recovery_condition_with_gap}
\end{align}
and $\kappa \geq C$, then if we set $\lambda = \kappa \sqrt{pn \log n} + \frac{p-q}{2}s_{k+1}$ in~\eqref{eq:SDP_regularized_original_1}, with probability at least $1-O(n^{-3})$,
\[\Yhat=\diag \left(J_{s_1\times s_1}, \dots, J_{s_k \times s_k}, 0_{s_{k+1}\times s_{k+1}}, \dots, 0_{s_{K}\times s_{K}} \right)\]
is the unique optimal solution to the recovery SDP~\eqref{eq:SDP_regularized_original}.
\end{thm}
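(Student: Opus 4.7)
The plan is a primal-dual witness argument certifying optimality and uniqueness of the candidate
$\Yhat_{\star} = \diag(J_{s_1 \times s_1}, \ldots, J_{s_k \times s_k}, 0, \ldots, 0)$.
The SDP~\eqref{eq:SDP_regularized_original} has KKT conditions requiring entrywise-nonnegative duals $U$ (for $Y \leq J$) and $L$ (for $Y \geq 0$), together with $S \succcurlyeq 0$, such that
\[
S = \lambda I + U - L - \Ashift, \qquad U \circ (J - \Yhat_{\star}) = 0, \qquad L \circ \Yhat_{\star} = 0, \qquad S \Yhat_{\star} = 0.
\]
Complementary slackness forces $L$ to vanish on the first $k$ diagonal blocks and $U$ on everything else; I would take $U = \diag(U^1, \ldots, U^k, 0, \ldots, 0)$ with each $U^j$ diagonal.

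Next, I would use stationarity $S \Yhat_{\star} = 0$ to pin down the free entries. Block-by-block it yields $(\Ashift^{(jj)} - \lambda I - U^j)\onevec = 0$ for $j \leq k$ and $(\Ashift^{(ij)} + L^{(ij)})\onevec = 0$ for $(i,j)$ with $i \neq j$ and $j \leq k$. The former is solved by the row-sum choice $u^j_r = \sum_{l \in V_j} A_{rl} - \frac{p+q}{2}s_j - \lambda$, and the latter by the ansatz
\[
L^{(ij)}_{rs} = -\frac{1}{s_j}[\Ashift^{(ij)}\onevec]_r \;\; (i > k,\; j \leq k),
\]
\[
L^{(ij)}_{rs} = -\frac{[\Ashift^{(ij)}\onevec]_r}{s_j} - \frac{[\Ashift^{(ij)\top}\onevec]_s}{s_i} + \frac{\onevec^\top \Ashift^{(ij)}\onevec}{s_i s_j} \;\; (i, j \leq k, i \neq j),
\]
together with $L^{(ij)} = 0$ when both $i, j > k$ and $L^{(ji)} = L^{(ij)\top}$ by symmetry. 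Entrywise non-negativity $U, L \geq 0$ follows from Bernstein-type concentration of within- and cross-cluster row sums of $A$: each $L^{(ij)}$ entry is $\frac{p-q}{2} \pm O(\sqrt{p\log n / s_k})$, positive under~\eqref{eq:exact_recovery_condition_with_gap}, while $u^j_r \geq \frac{p-q}{2}(s_j - s_{k+1}) - O(\sqrt{pn\log n}) > 0$ for $j \leq k$ provided $\kappa$ is a sufficiently large absolute constant.

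The main obstacle is verifying the PSD constraint $M := \Ashift - \lambda I - U + L \preccurlyeq 0$. By construction, the normalized cluster indicators $\overline{\onevec}_j$ for $j \leq k$ lie in $\ker M$ and span a $k$-dimensional subspace $V$, so it suffices to show $M$ is strictly negative-definite with an $\Omega(\sqrt{pn\log n})$ margin on $V^\perp$. Writing $\Ashift = \E \Ashift + W$ and splitting $M = \E M + (M - \E M)$, one can compute $\E M$ in closed form: on cluster $j \leq k$ it has eigenvalues $0$ (on $\onevec$) and $-\frac{p-q}{2}s_j$ (on $\onevec^\perp$); on cluster $j > k$ it has eigenvalues $\frac{p-q}{2}s_j - \lambda \leq -\kappa\sqrt{pn\log n}$ and $-\lambda$; and the off-diagonal contributions vanish when at least one side is recovered. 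For the noise, the spectral bound $\opnorm{W} \lesssim \sqrt{pn}$ combined with per-block operator-norm control on $U - \E U$ and $L - \E L$ (each of which depends on $W$ through low-rank cluster-cluster sums bounded via Bernstein) gives $\opnorm{M - \E M} \lesssim \sqrt{pn\log n}$. Choosing $\kappa$ large enough makes the signal margin dominate and yields $M \preccurlyeq 0$. The same bounds, with a small extra slack, simultaneously give $\rank S = n - k = n - \rank \Yhat_{\star}$ together with strict positivity of $U$, $L$ on their supports, which is the standard SDP strict-complementarity certificate that upgrades optimality to uniqueness.
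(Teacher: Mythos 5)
Your route is a genuine departure from the paper's: the paper proves this theorem by a direct primal comparison, bounding $\langle \Yb - Y, \Ashift\rangle - \lambda\tr(\Yb - Y)$ from below via the projections $\PT,\PP$ and generalized H\"older (Lemmas \ref{lem:gap_exact_recovery_trace_term_bound}--\ref{lem:gap_exact_recovery_noise_term_bound}), with no dual variables at all; your dual-witness construction is instead close in spirit to what the paper does for the harder Theorem \ref{thm:main_theorem_recovery_sdp}. The strategy is viable, but as written it has a genuine gap in the PSD verification, located exactly where you wave it away: the small--small cross blocks. With $L^{(ij)}=0$ for $i,j>k$, the block $\E M^{(ij)} = \E\Ashift^{(ij)} = -\frac{p-q}{2}J_{s_i\times s_j}$ does \emph{not} vanish, so $\E M$ restricted to the unrecovered clusters is $\frac{p-q}{2}\bigl(2\Ys-\Js\bigr)-\lambda I$ (in the paper's notation), not the block-diagonal matrix you describe. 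Its top eigenvalue can reach $(p-q)s_{k+1}-\lambda = \frac{p-q}{2}s_{k+1}-\kappa\sqrt{pn\log n}$: take two unrecovered clusters of equal size $s_{k+1}$ and test with $(\onevec,-\onevec)$. Since the hypothesis~\eqref{eq:exact_recovery_condition_with_gap} constrains only the gap $s_k-s_{k+1}$ and not the absolute size of $s_{k+1}$, this quantity is positive whenever $\frac{p-q}{2}s_{k+1}>\kappa\sqrt{pn\log n}$, and your certificate fails. The fix is the one the paper uses in Section \ref{sec:L_blocks_both_small}: set $L^{(ij)}=\frac{p-q}{2}J_{s_i\times s_j}=-\E\Ashift^{(ij)}$ on both-small blocks (this is nonnegative and complementary slackness permits it since $\Yhat_\star$ vanishes there), after which the small part of $\E M$ really is block-diagonal with margin $\kappa\sqrt{pn\log n}$.

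A secondary point: your nonnegativity bound for $U$ silently drops the $\kappa\sqrt{pn\log n}$ part of $\lambda$. Correctly, $u^j_r = \frac{p-q}{2}(s_j-s_{k+1}) - \kappa\sqrt{pn\log n} + [W^{(jj)}\onevec]_r$, so if the gap condition holds with equality you are left with $u^j_r = [W^{(jj)}\onevec]_r$, which is negative with constant probability, and enlarging $\kappa$ does not help since $\lambda$ grows with it. Since nonnegative row sums of $U^{(jj)}$ are forced by stationarity, this is not an artifact of choosing $U$ diagonal. You need slack between the constant in the gap hypothesis and the constant multiplying $\sqrt{pn\log n}$ in $\lambda$ (the paper's own proof quietly runs with the smaller value $\lambda = B_1\sqrt{pn}+\frac{p-q}{2}s_{k+1}$). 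This is a constants issue rather than a conceptual one, but your argument as stated does not close it.
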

This theorem improves upon the result in \eqref{eq:ailon_old} because instead of requiring the size ratio $\sb/\ss$ between large and small clusters to be a large constant, it allows for the ratio to approach $1$ so long as the signal-to-noise ratio, $\frac{(p-q)^{2}\sb^{2}}{pn}$ is proportionally larger.

Note that Theorem \ref{thm:clustering_with_a_gap} also utilizes a value of $\lambda$ which depends on the size of the largest small cluster $s_{k+1}$, which may not be straightforward to find, while Theorem \ref{thm:main_theorem_recovery_sdp} can be set without this information.

Now we provide a simple instance where no gap-dependent algorithm can recover any cluster.
Suppose $p=1, q=1/2$, there are $n$ nodes, and $K = \Theta(n^{1/3})$ clusters with sizes $1, 2^2, 3^2, \dots, K^2$. (Note $n = \sum_{k=1}^K k^2 = \Theta(K^3)$.)  All known efficient algorithms have recovery thresholds requiring cluster sizes to be at least $\Theta(\sqrt{n}) = \Theta(K^{1.5})$. To recover any cluster, a gap dependent algorithm will require at least a constant multiplicative gap between some cluster of size at least $\Theta(\sqrt{n})$ and the next largest cluster. As $K$ and $n$ get large, the ratios between the sizes of all pairs of such consecutive clusters approaches $1$, hence the requisite gap does not exist. On the other hand, our gap-free algorithm can recover all clusters larger than $\Theta(K^{1.5} \log K)$. This example can be easily generalized to cluster sizes growing like $(k^\alpha)_k$ for any $\alpha > 1$, as well as to other settings of $p$ and $q$.

\subsection{Additional Related Work}
\label{sec:more_related_work}

In proceeding sections we have provided references to SBM algorithms focusing on SDP relaxation and spectral clustering methods. There are many other approaches, such as the EM algorithm \citep{snijders1997estimation} and local search methods \citep{carson2001hill, jerrum1998metropolis}, many of which may be coupled with an SDP-based or spectral algorithm with the latter serving as an initialization procedure~\citep{lu2016lloyd,vu_simple_2018,abbe_community_2018}.

Many works have considered the setting that all clusters are sufficiently large to be exactly recovered, leading to precise computational and information-theoretic thresholds in the case of two balanced clusters \citep{abbe2015exact, mossel2015consistency}. Much subsequent work has endeavored to obtain similar characterizations in more general settings, for instance in \cite{abbe2015community}, \cite{abbe2017entrywise}, \cite{perry2017semidefinite}, \cite{agarwal2017multisection}, and \cite{jog2015information}, where typical recovery conditions have the form $\frac{(p-q)^2}{pn}s_K \gtrsim \log n$ assuming $p,q \asymp \frac{\log n}{n}$, where $s_K$ is the size of the smallest cluster.
There is also a complementary weak or approximate recovery setting, where the objective is to achieve strictly better classification error than random guess, usually under the constant degree regime $p,q \asymp \frac{1}{n}$. Again precise phase transitions have been established for the two-balanced-clusters case \citep{massoulie2014community, lelarge2015reconstruction, mossel2015reconstruction}, and generalizations to larger $K$ or unbalanced settings, such as \cite{bordenave2015non}, \cite{coja2017information}, \cite{caltagirone2017recovering}, and \cite{montanari2016semidefinite}, develop recovery conditions of the form $\frac{(p-q)^2}{pn}s_K \gtrsim 1$. While the focus of the current paper is mostly orthogonal to the above work, our result, when specialized to the setting where all clusters are large, recovers the the condition $\frac{(p-q)^2}{pn}s_K \gtrsim \log n$ typical in existing work.

The semi-random SBM has been considered in many works without small clusters \citep{moitra2016robust,krivelevich2006coloring,feige2001heuristics}. The observation that any recovery result for semi-random SBM implies a result for SBM with heterogeneous edge probabilities is a standard reduction~\citep{chen2012sparseclustering}. 
For more on the leave-one-out approach applied to clustering and related matrix estimation problems, see \cite{zhong_near-optimal_2018,abbe_entrywise_2020,ding2018loo}, as well as the survey in \cite{chen2021spectral}. The work \cite{green_larsen_clustering_2020} studies faulty oracle clustering in the setting of $K=2$ clusters.

\subsection{Discussion and Comparison of Recursive Clustering Results}
\label{sec:more_recursive_clustering}

To analyze the performance of recursive clustering procedures, we follow \cite[Theorems 6 and 7]{ailon2015iterative} and consider the following setting: $p,q$ are considered fixed and independent of $n$, and $C(p,q)$ is used to denote constants which depend on $p,q$ but not $n$. (Generalization to other settings of $p,q$ is straightforward.) We examine what assumptions on the number of clusters $K$ suffice to ensure that the recursive clustering procedure recovers all but a vanishing fraction of nodes.

\citet[Theorem 7]{ailon2015iterative} roughly states there exists some $C(p,q)$ such that if $K \leq \beta C(p,q) \log n$ for some $\beta \leq 1$, then a recursive clustering procedure will recover clusters containing all but $O(n^\beta)+O(1)$ nodes. Since, as discussed in Subsection \ref{sec:algorithms_for_unbalanced_sbm}, their core recovery algorithm requires a certain-sized gap, they require the bound on $K$ to grow at most logarithmically to ensure that a sufficient gap remains after each round. 

If we instead apply our gap-free clustering guarantee from Theorem \ref{thm:main_theorem_recovery_sdp}, we obtain a significant improvement, allowing $K$ to be polynomially large. In particular, when specialized to the setting above, Theorem \ref{thm:main_theorem_recovery_sdp} guarantees that there exists some $C'(p,q)$ such that (with probability $1-O(m^{-3})$) all clusters $k$ satisfying $s_k \geq C'(p,q) \sqrt{n \log m}$ will be recovered. Using this improved guarantee, we follow the aforementioned recursive clustering strategy to define Algorithm~\ref{alg:recursive_clustering}. For a square matrix $X$, we use $X_I$ to denote the principal submatrix of $X$ indexed by $I$. 
When the SDP solution $\Yhat$ has the form stated in Theorem~\ref{thm:main_theorem_recovery_sdp}, each cluster $V_k$ corresponding an all-positive block $\Yhat_{V_k}$ is said to be recovered.

\renewcommand{\algorithmicrequire}{\textbf{Input:}}
\begin{algorithm}[H]
\caption{Recursive clustering using Theorem \ref{thm:main_theorem_recovery_sdp}} \label{alg:recursive_clustering}
\begin{algorithmic}[1]
\Require $n \times n$ adjacency matrix $A$ sampled from unbalanced SBM
\State Set $n_1 = n$, $I_1=\{1,\ldots,n\}$, round counter $\ell = 1$
\Repeat
\State Recover clusters by applying the SDP~\eqref{eq:SDP_regularized_original} to $A_{I_\ell}$, where $\lambda$ is set per Theorem~\ref{thm:main_theorem_recovery_sdp} with $m=n$
\State Remove all recovered clusters of size $\geq C'(p,q) \sqrt{n_\ell \log n}$ from $I_\ell$ to obtain $I_{\ell+1}$
\State Set $n_{\ell + 1}=|I_{\ell+1}|$; increase $\ell$ by 1
\Until{no clusters are removed}
\end{algorithmic}
\end{algorithm}

We can prove the following guarantees:
\begin{thm}
    \label{thm:recursive_clsutering}
    Under the above setting, suppose the number of clusters $K$ satisfies $K \leq \frac{n^{\frac{1}{2}-\alpha}}{C'(p,q) \sqrt{\log n}}$ for some $0 < \alpha < \frac{1}{2}$. Then with probability at least $1-O(n^{-2})$:
    \begin{enumerate}
        \item The algorithm terminates after at most $K$ rounds.
        \item Whenever the algorithm terminates, there is at most $n^{1-2\alpha}$ unrecovered nodes remaining.
        \item After $\ell$ rounds, there is at most $n^{1-\alpha \sum_{i = 0}^{\ell-1}2^{- i}}$ unrecovered nodes remaining.
    \end{enumerate}
\end{thm}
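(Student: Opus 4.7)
My plan is a straightforward induction on rounds, combined with a union bound over applications of Theorem~\ref{thm:main_theorem_recovery_sdp}, arranged so that the union bound has only $K$ terms rather than $2^K$.

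The key preliminary observation is to choose $C'(p,q)$ so that the inequality $s_k \geq C'(p,q)\sqrt{n_\ell \log n}$ corresponds exactly to case~\ref{item:main_thm_all_ones_case} of Theorem~\ref{thm:main_theorem_recovery_sdp} (with $m = n$) when applied to an SBM on $n_\ell$ vertices. Then on the good event of round $\ell$: case~\ref{item:main_thm_all_ones_case} clusters appear in the SDP output as all-ones blocks and are removed, whereas critical (case~\ref{item:main_thm_critical_case}) clusters have sizes strictly below $C'(p,q)\sqrt{n_\ell \log n}$ and so are not removed by the size threshold even if their SDP block happens to be nonzero. Because case~\ref{item:main_thm_all_ones_case} membership is determined solely by $(s_1,\dots,s_K)$, the entire trajectory $I_1 \supseteq I_2 \supseteq \cdots$ becomes a deterministic function of the cluster sizes on the intersection of the per-round good events, and each $A_{I_\ell}$ is itself a fresh SBM on $I_\ell$ by independence of edges. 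Since at least one cluster is removed per non-terminating round, there are at most $K \leq n$ rounds, giving claim~1, and a union bound over the per-round $O(n^{-3})$ failure probabilities yields overall failure probability $O(n^{-2})$.

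Claim~3 then follows by induction on the good event: after round $\ell$, every remaining cluster has size $< C'(p,q)\sqrt{n_\ell \log n}$ and there are at most $K$ of them, so
\[
n_{\ell+1} \;\leq\; K \cdot C'(p,q)\sqrt{n_\ell \log n} \;\leq\; n^{1/2 - \alpha}\sqrt{n_\ell}
\]
by the assumed bound on $K$. Setting $b_\ell := \log_n n_\ell$, this is $b_{\ell+1} \leq \tfrac12 - \alpha + \tfrac12 b_\ell$ with $b_1 = 1$, which unrolls via induction to $b_{\ell+1} \leq 1 - \alpha \sum_{i=0}^{\ell-1} 2^{-i}$. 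For claim~2, at termination nothing is removed in the final round, so every surviving cluster has size $< C'(p,q)\sqrt{n_{\mathrm{final}}\log n}$; applying the same inequality with $n_\ell = n_{\mathrm{final}}$ and rearranging gives $\sqrt{n_{\mathrm{final}}} \leq n^{1/2 - \alpha}$, hence $n_{\mathrm{final}} \leq n^{1-2\alpha}$.

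The main obstacle is really the bookkeeping in the first step: calibrating $C'(p,q)$ to the cutoff of case~\ref{item:main_thm_all_ones_case} and confirming that critical clusters lie below this cutoff is exactly what pins the algorithm's trajectory down to a deterministic function of the cluster sizes, so that the union bound runs over only $K$ applications of Theorem~\ref{thm:main_theorem_recovery_sdp} rather than over all $2^K$ subsets of clusters. With this in hand, the three claims reduce to a one-line recursion and an elementary fixed-point computation.
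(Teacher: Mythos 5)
Your proposal is correct and follows essentially the same route as the paper: calibrate $C'(p,q)$ to the all-ones case of Theorem~\ref{thm:main_theorem_recovery_sdp} so that the removal trajectory is a deterministic function of the cluster sizes (mid-size clusters never meet the size cutoff), union bound over at most $K$ rounds, and then the same recursion $n_{\ell+1} \leq K\, C'(p,q)\sqrt{n_\ell \log n} \leq n^{1/2-\alpha}\sqrt{n_\ell}$ for parts 2 and 3. No gaps.
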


Note that we interpret an empty summation as zero, that is, $\sum_{\ell=0}^{-1} 2^{-\ell} =0$.
The algorithm will run for at most $K$ rounds, but it could also be terminated early to save on computation, as part 3 of Theorem~\ref{thm:recursive_clsutering} guarantees that a large number of nodes will be recovered after only a few rounds. The proof of Theorem \ref{thm:recursive_clsutering} is given in Section \ref{sec:recursive_clustering_proofs}.

\subsection{Discussion and Comparison of Faulty Oracle Clustering Results}
\label{sec:more_faulty_oracle_clustering}
First we make some comments regarding the algorithms in Table \ref{table:faulty_oracle_clustering_algorithms}. The listed algorithm of \cite{peng_towards_2021} assumes that all clusters are ``nearly balanced'': they have size at least $\frac{n}{bK}$ for some known constant $b>1$; the algorithm recovers all clusters under this assumption. The algorithm of \cite{pia_clustering_2022} is robust to a semirandom version of the faulty oracle clustering model (and we refer to their paper for its definition). 
Additionally, we note that the listed algorithm of \cite{mukherjee2022unbalanced} actually obtains a sample complexity of $O\left(\frac{n}{s}\frac{n\log^2 n}{\delta^2} + \frac{n^4}{s^4}\frac{\log^2 n}{\delta^4} \right)$, which is slightly worse by a $\log$ factor in one term compared to our Theorem \ref{thm:faulty_oracle_clustering_size_param}.

Now we formally define the Algorithm \ref{alg:clustering_with_faulty_oracle_meta_algorithm} behind our Theorem \ref{thm:faulty_oracle_clustering_size_param} and provide further discussion of Theorem \ref{thm:faulty_oracle_clustering_size_param}. 

\begin{algorithm}[H]
\caption{Clustering with a faulty oracle using Theorem \ref{thm:main_theorem_recovery_sdp}} \label{alg:clustering_with_faulty_oracle_meta_algorithm}
\begin{algorithmic}[1]
\Require Faulty clustering oracle $\mathcal{O}$, parameter $\gamma \in (0,1)$
\State Choose set $T \subseteq [n]$ by including vertex $i$ in $T$ with probability $\gamma$ (independently)
\State Query $(u,v)$ for all pairs $u,v \in T$ to form adjacency matrix $A \in \{0,1\}^{|T| \times |T|}$
\State Recover clusters from $A$ using the recovery SDP~\eqref{eq:SDP_regularized_original}, where $\lambda$ is set per Theorem~\ref{thm:main_theorem_recovery_sdp} with $m=n$
\For{each recovered cluster $S\subseteq T$ such that $|S| \geq \frac{C_1 \log n}{\delta^2}$}
\State Choose $S' \subseteq S$ with $|S'| = \frac{C_1 \log n}{\delta^2}$
\Statex $\quad\;\; \triangleright$ Recover remainder of cluster in $[n] \setminus T$ by majority voting using $S'$ as voters:
\For{each vertex $i \in [n] \setminus T$}
\State Query $(i, v)$ for all $v \in S'$ and add $i$ to $S$ if $\frac{1}{|S'|}\sum_{v\in S'} \mathcal{O}(i,v) \ge \frac{1}{2}$
\EndFor
\EndFor

\end{algorithmic}
\end{algorithm}

We prove Theorem \ref{thm:faulty_oracle_clustering_size_param} in Section \ref{sec:clustering_with_a_faulty_oracle_proofs}. We also note that by setting $s = \frac{n}{K}$ in this result, we recover identical query complexity and cluster recovery threshold to the algorithm of \cite{peng_towards_2021} for the nearly-balanced setting listed in Table \ref{table:faulty_oracle_clustering_algorithms}, except our algorithm has no restrictions on cluster sizes as opposed to requiring them to be nearly balanced. \cite{peng_towards_2021} conjecture that this sample complexity is optimal among efficient algorithms (up to $\log$ factors) for recovery of clusters of size $\Omega\left(\frac{n}{K}\right)$, based on statistical physics arguments.

Our Algorithm \ref{alg:clustering_with_faulty_oracle_meta_algorithm} used in Theorem \ref{thm:faulty_oracle_clustering_size_param} follows a common template for algorithms in the faulty oracle clustering problem: 
\begin{enumerate}[nolistsep]
    \item Choose a subset $T$ of the nodes and query all pairs between them. 
    \item Apply a clustering algorithm on the fully observed subgraph consisting of the nodes $T$ and the edges between them to recover subclusters. 
    \item Take a subset $S'$ from each recovered subcluster of minimal size such that it can be used to test all remaining nodes $i \in [n] \setminus T$ for membership in the cluster, by querying $(i, v)$ for all $v \in S'$ and using majority voting. 
\end{enumerate}
These essential steps are used in all algorithms in Table \ref{table:faulty_oracle_clustering_algorithms}, with one key difference being the clustering algorithm used in step 2 (for which we use our recovery SDP and Theorem \ref{thm:main_theorem_recovery_sdp}). \cite{mukherjee2022unbalanced} utilize their (gap-free) spectral clustering procedure and thus obtain a nearly identical result. This template also explains why all the sample complexities (for efficient algorithms) consist of two terms, with the first term roughly corresponding to the majority voting procedure and the second term roughly corresponding to the subsample of size $O(|T|^2)$, with possible additional steps for some algorithms. Notably, \cite{xia_optimal_2022} improve upon the majority voting procedure by making a connection to best arm identification, obtaining a first term of $\frac{n(K+ \log n)}{\delta^2}$ which they show is optimal.

Algorithm \ref{alg:clustering_with_faulty_oracle_meta_algorithm} requires a user-specified target cluster size $s$, which in turn determines the input parameter $\gamma$, as does \cite[Algorithm 5]{mukherjee2022unbalanced}. By growing the subsample $T$ geometrically over several rounds, stopping when we first recover a (sub)cluster, we can obtain our Theorem \ref{thm:clustering_with_faulty_oracle_adaptive} which obtains an instance-dependent query complexity.

\begin{algorithm}[H]
\caption{Instance-adaptive clustering with a faulty oracle} \label{alg:clustering_with_faulty_oracle_adaptive}
\begin{algorithmic}[1]
\Require Faulty clustering oracle $\mathcal{O}$
\State Set round counter $r = 1$
\Repeat
\State Set $\hat{s}_r = \frac{n}{2^{r-1}}$
\State Set $\gamma_r = C_2 \frac{n \log n}{\hat{s}_r^2 \delta^2} $
\State Choose set $T_r \subseteq [n]$ by including vertex $i$ in $T_r$ with probability $\gamma_r$ (independently)
\State Query $(u,v)$ for all pairs $u,v \in T_r$ to form adjacency matrix $A_r \in \{0,1\}^{|T_r| \times |T_r|}$
\State Attempt to recover subclusters from $A_r$ using the recovery SDP~\eqref{eq:SDP_regularized_original} (with $m = n$)
\Until{at least one subcluster is recovered}
\State Let $S$ be an arbitrary recovered subcluster of size at least $\frac{C_1 \log n}{\delta^2}$
\State Choose $S' \subseteq S$ with $|S'| = \frac{C_1 \log n}{\delta^2}$
\Statex $\triangleright$ Recover remainder of cluster in $[n] \setminus T_r$ by majority voting using $S'$ as voters:
\For{each vertex $i \in [n] \setminus T_r$}
\State Query $(i, v)$ for all $v \in S'$ and add $i$ to $S$ if the majority of queries are $1$
\EndFor
\end{algorithmic}
\end{algorithm}

Finally we discuss our algorithm for the setting with a small number of clusters $K$ which was the subject of our Theorem \ref{thm:faulty_oracle_clustering_improved_complexity_small_k}.

\begin{algorithm}[H]
\caption{Clustering with a faulty oracle for small $K$} \label{alg:clustering_with_faulty_oracle_small_K}
\begin{algorithmic}[1]
\Require Faulty clustering oracle $\mathcal{O}$
\State Set round counter $r = 1$
\State Set $n_1 = n$, $K_1 = K$
\While{$n_r \geq C_2 \frac{K_r^2 \log n}{ \delta^2}$}
\State Set $\gamma_r = C_2 \frac{K_r^2 \log n}{n_r \delta^2} $
\State Choose set $T_r \subseteq [n]$ by including vertex $i$ in $T_r$ with probability $\gamma_r$ (independently)
\State Query $(u,v)$ for all pairs $u,v \in T_r$ to form adjacency matrix $A_r \in \{0,1\}^{|T_r| \times |T_r|}$
\State Attempt to recover subclusters from $A_r$ using the recovery SDP~\eqref{eq:SDP_regularized_original} (with $m = n$)
\For{each recovered cluster $S\subseteq T$ such that $|S| \geq \frac{C_1 \log n}{\delta^2}$}
\State Choose $S' \subseteq S$ with $|S'| = \frac{C_1 \log n}{\delta^2}$
\State Recover remainder of cluster in $[n] \setminus T_r$ by majority voting using $S'$ as voters:
\For{each vertex $i \in [n] \setminus T_r$}
\State Query $(i, v)$ for all $v \in S'$ and add $i$ to $S$ if the majority of queries are $1$
\EndFor
\EndFor
\State Increase $r$ by $1$
\State Decrease $K_r$ by the number of clusters recovered in the previous round
\State Remove all nodes from clusters recovered in the previous round
\State Set $n_r$ to the remaining number of nodes
\EndWhile
\end{algorithmic}
\end{algorithm}

The Algorithm~\ref{alg:clustering_with_faulty_oracle_small_K} is nearly identical to Algorithm \ref{alg:clustering_with_faulty_oracle_meta_algorithm} except we use a different subsampling ratio $\gamma$ and repeat in at most $K$ rounds (choosing a new subsample $T$ each round). This use of $K$ rounds is the reason that the second term in the query complexity is $\frac{K^5\log^2 n}{\delta^4}$ rather than the $\frac{K^4\log^2 n}{\delta^4}$, which \cite{peng_towards_2021} conjecture is optimal for the nearly balanced setting. In the setting that all clusters are $\Omega\left(\frac{n}{K}\right)$, only one round of choosing $T \subseteq [n]$ and querying all pairs $u,v \in T $ would be needed to recover a subcluster from each of the $K$ clusters, while in general settings more rounds have the potential benefit of recovering smaller clusters after larger clusters are removed. (In fact, our Algorithm \ref{alg:clustering_with_faulty_oracle_small_K} would match this conjectured-optimal sample complexity in the nearly-balanced setting as it would terminate after a constant number of rounds.) In summary, our algorithm recovers smaller clusters and improves upon the second term in the sample complexity compared to all previous algorithms which work for the case of general size clusters. We also believe that the first term could be improved to $\frac{n(K+ \log n)}{\delta^2}$ using ideas from \cite{xia_optimal_2022}, but our focus was on the improvement available by using a gap-free clustering subroutine.

\subsection{Discussion of Eigenvalue Perturbation Bounds}
\label{sec:eigenvalue_perturbation_bounds_subsubsection}
As discussed in Subsection \ref{sec:eigenvalue_perturbation_simplified}, our analysis of the SDP solutions in the presence of mid-size clusters
requires sharp eigenvalue perturbation bounds which improve upon existing results.
We expand on our discussion in Subsection \ref{sec:eigenvalue_perturbation_simplified} and provide some brief background on related eigenvalue perturbation results. We refer to \cite{eldridge2018unperturbed} for more details. Again, let $M, H \in \R^{n \times n}$ be symmetric matrices, viewing $H$ as a perturbation, and let $v_1$ be a top eigenvector of $M$. If we would like to bound $\left|\lambda_1(M+H)-\lambda_1(M)\right|$, the classical Weyl's inequality gives that
\[
\left|\lambda_1(M+H)-\lambda_1(M)\right| \leq \opnorm{H}.
\]
As argued in \cite{eldridge2018unperturbed}, this bound can be overly pessimistic, especially when $H$ is a random perturbation whose projection onto $v_1$ is small relative to the eigengap $\lambda_1(M) - \lambda_2(M)$. In such a situation, intuitively the large eigengap should ensure that the new top eigenvector of $M+H$ is close to $v_1$, in which case the top eigenvalue of $M+H$ will be approximately $v_1^\top \left( M+H \right) v_1 = \lambda_1(M) + v_1^\top H v_1$, and then since $H$ is random, $\left|v_1^\top H v_1\right|$ should be much smaller than $\opnorm{H}$. The idea that $\left|\lambda_1(M+H)-\lambda_1(M)\right| \approx \left|v_1^\top H v_1\right|$ is stated as an informal guiding principle in \cite{eldridge2018unperturbed}, and they confirm the above intuition with both experiments and theoretical results. Let $\col Q$ denotes the span of the columns of a matrix $Q$. They prove the following upper bound:

\begin{thm}[\cite{eldridge2018unperturbed}, Theorem 6]
\label{thm:eldridge_eval_pert}
Let $T \in [n]$ and $h$ be such that $|x^\top H x| \leq h$ for all $x \in \col( U_{1:T})$ with $\twonorm{x}=1$, where $U_{1:T}$ are the top $T$ eigenvectors of $M$. Then for $t \leq T$, if $\lambda_t - \lambda_{T+1} > 2 \opnorm{H}-h$,
\begin{align*}
    \lambda_t(M+H) \leq \lambda_t(M) + h + \frac{\opnorm{H}^2}{\lambda_t(M) - \lambda_{T+1}(M) + h - \opnorm{H}}.
\end{align*}
\end{thm}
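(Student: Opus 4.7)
My plan is to combine the Courant--Fischer min-max principle with a carefully chosen test subspace and then reduce the estimate to a one-variable optimization. Let $U$ be an eigenbasis of $M$ with columns $u_1,\ldots,u_n$ satisfying $Mu_i=\lambda_i(M)u_i$, and write $U_{a:b}$ for the submatrix with columns $u_a,\ldots,u_b$. Since $\col U_{t:n}$ has dimension $n-t+1$, the min-max characterization gives
\[
\lambda_t(M+H) \;\leq\; \max_{v\in\col U_{t:n},\,\twonorm{v}=1} v^{\top}(M+H)v,
\]
so the whole task reduces to bounding the right-hand side. I would decompose each unit $v\in\col U_{t:n}$ as $v=U_{t:T}c+U_{T+1:n}d$ with $\twonorm{c}^2+\twonorm{d}^2=1$, which immediately yields $v^{\top}Mv\leq \lambda_t(M)-(\lambda_t(M)-\lambda_{T+1}(M))\twonorm{d}^2$.

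The next step is to bound $v^{\top}Hv$ by splitting it into three pieces according to this decomposition. The key observation is that $U_{t:T}c\in\col U_{1:T}$, so the hypothesis on the quadratic form of $H$ yields $|(U_{t:T}c)^{\top}H(U_{t:T}c)|\leq h\twonorm{c}^2$. The pure tail piece is at most $\opnorm{H}\twonorm{d}^2$, and the cross term is at most $2\opnorm{H}\twonorm{c}\twonorm{d}$ by Cauchy--Schwarz. Setting $\alpha=\twonorm{d}^2$ and $k=h+(\lambda_t(M)-\lambda_{T+1}(M))-\opnorm{H}$, combining everything gives
\[
v^{\top}(M+H)v \;\leq\; \lambda_t(M)+h - k\alpha + 2\opnorm{H}\sqrt{\alpha(1-\alpha)}.
\]

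The final step is a one-variable optimization of the right-hand side over $\alpha\in[0,1]$. The eigengap hypothesis $\lambda_t(M)-\lambda_{T+1}(M)>2\opnorm{H}-h$ is precisely what forces $k>\opnorm{H}$, making the linear penalty dominate the concave bonus and producing a finite maximum. A standard calculation (e.g., substituting $\tau=\sqrt{\alpha/(1-\alpha)}$ and solving a quadratic in $\tau$) shows the maximum of $-k\alpha+2\opnorm{H}\sqrt{\alpha(1-\alpha)}$ equals $2\opnorm{H}^2/\bigl(k+\sqrt{k^2+4\opnorm{H}^2}\bigr)\leq \opnorm{H}^2/k$, and substituting gives exactly the claimed bound. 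The main subtlety, and the reason the final bound mixes both $h$ and $\opnorm{H}$ rather than just $h$, lies in the cross term: the hypothesis only controls quadratic forms on vectors lying fully inside $\col U_{1:T}$, so there is no way to avoid paying the full $\opnorm{H}$ on the off-diagonal block $U_{t:T}^{\top}HU_{T+1:n}$; this is the essential source of the $\opnorm{H}^2$ in the correction term.
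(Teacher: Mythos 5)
Your proof is correct, and it is essentially the same argument the paper uses for its own generalization in Theorem \ref{thm:general_eval_pert_bound} (the paper only cites this statement, but its proof of the improved bound follows the identical Courant--Fischer plus two-block decomposition strategy): split a unit vector of $\col(U_{t:n})$ into its $U_{t:T}$ and $U_{T+1:n}$ components, bound the diagonal and cross terms of the quadratic form, and optimize in one variable. The only cosmetic difference is that you keep the exact cross term $2\opnorm{H}\sqrt{\alpha(1-\alpha)}$ and compute the exact maximum before relaxing to $\opnorm{H}^2/k$, whereas the paper first relaxes $2\alpha\beta\le 2\beta$ to reduce to a pure quadratic in $\beta$; both yield the stated bound, and your verification that the gap hypothesis forces $k>\opnorm{H}>0$ is exactly the needed concavity condition.
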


We modify their argument to prove the following improvement:

\begin{thm}
\label{thm:general_eval_pert_bound}
Let $T \in [n]$ and $h$ be such that $|x^\top H x| \leq h$ for all $x \in \textnormal{Col}( U_{t:T})$ with $\twonorm{x}=1$, where $U_{t:T}$ are eigenvectors $t$ through $T$ of $M$. Then for $t \leq T$, if $\lambda_t - \lambda_{T+1} > \opnorm{H} - h - 2\opnorm{HU_{t:T}}$,
\begin{align*}
    \lambda_t(M+H) \leq \lambda_t(M) + h + \frac{\opnorm{HU_{t:T}}^2}{\lambda_t(M) - \lambda_{T+1}(M) - \opnorm{H} + h + 2 \opnorm{HU_{t:T}}}.
\end{align*}
\end{thm}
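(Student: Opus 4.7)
The plan is to sharpen the Courant--Fischer argument behind \cite{eldridge2018unperturbed}'s Theorem~\ref{thm:eldridge_eval_pert} by replacing the coarse Weyl bound on the cross term with a subspace-aware estimate featuring $\opnorm{HU_{t:T}}$ in place of $\opnorm{H}$. First, I would invoke the min--max characterization with the test subspace $\col(U_{t:n})$ of dimension $n-t+1$ to get
\begin{equation*}
\lambda_t(M+H) \;\le\; \max_{x \in \col(U_{t:n}),\, \twonorm{x}=1} x^\top (M+H) x.
\end{equation*}

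For any such $x$, I would decompose orthogonally as $x = a+b$ with $a \in \col(U_{t:T})$ and $b \in \col(U_{T+1:n})$. The $M$-part is immediate from the $M$-orthogonality of the two eigenspaces: $x^\top M x \le \lambda_t(M) \twonorm{a}^2 + \lambda_{T+1}(M) \twonorm{b}^2$. The three pieces of $x^\top H x = a^\top H a + 2 a^\top H b + b^\top H b$ I would bound as follows: $|a^\top H a| \le h \twonorm{a}^2$ from the hypothesis on $h$; $|b^\top H b| \le \opnorm{H} \twonorm{b}^2$ from Weyl; and, exploiting $a = U_{t:T}(U_{t:T}^\top a)$,
\begin{equation*}
|a^\top H b| \;=\; \bigl|(U_{t:T}^\top a)^\top (U_{t:T}^\top H)\, b \bigr| \;\le\; \opnorm{U_{t:T}^\top H}\, \twonorm{a}\twonorm{b} \;=\; \opnorm{HU_{t:T}}\, \twonorm{a}\twonorm{b}.
\end{equation*}
This last step is the critical departure from \cite{eldridge2018unperturbed}: because one factor of the cross term is confined to $\col(U_{t:T})$, it is controlled by $\opnorm{HU_{t:T}}$, which can be much smaller than $\opnorm{H}$ when $H$ is a random perturbation.

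Combining everything and setting $p = \twonorm{a}$, $q = \twonorm{b}$ with $p^2+q^2 = 1$ gives
\begin{equation*}
x^\top (M+H) x \;\le\; (\lambda_t + h)\, p^2 + 2\opnorm{HU_{t:T}}\, pq + (\lambda_{T+1} + \opnorm{H})\, q^2,
\end{equation*}
whose maximum on the unit circle is the top eigenvalue of the symmetric $2\times 2$ matrix with diagonal $(\lambda_t + h,\ \lambda_{T+1} + \opnorm{H})$ and off-diagonal $\opnorm{HU_{t:T}}$. Writing $G = \lambda_t - \lambda_{T+1} + h - \opnorm{H}$ and $\eta = \opnorm{HU_{t:T}}$, the top eigenvalue is $\lambda_t + h + \tfrac{1}{2}(-G + \sqrt{G^2+4\eta^2})$. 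Rewriting $-G + \sqrt{G^2+4\eta^2} = 4\eta^2/(G + \sqrt{G^2+4\eta^2})$ and using $\sqrt{G^2+4\eta^2} \le G + 2\eta$ (valid for $G \ge 0$) recognizes the denominator $G + 2\eta$ that appears in the stated bound; the hypothesis $\lambda_t - \lambda_{T+1} > \opnorm{H} - h - 2 \opnorm{HU_{t:T}}$ is precisely the positivity of this denominator.

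I expect the main obstacle to be the final algebraic manipulation: the conceptual improvement is entirely contained in the cross-term estimate, but converting the exact $2\times 2$ eigenvalue expression into the specific form stated in the theorem requires careful bookkeeping. The remainder of the proof is a straightforward modification of \cite{eldridge2018unperturbed}'s argument.
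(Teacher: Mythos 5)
Your setup---Courant--Fischer over $\col(U_{t:n})$, the orthogonal split $x=a+b$, and the cross-term estimate $|a^\top Hb|\le\opnorm{HU_{t:T}}\twonorm{a}\twonorm{b}$---is exactly the paper's argument and is correct. The gap is in your last step. Writing $G=\lambda_t-\lambda_{T+1}+h-\opnorm H$ and $\eta=\opnorm{HU_{t:T}}$, your $2\times2$ reduction gives the exact upper bound $\lambda_t+h+\tfrac12\bigl(-G+\sqrt{G^2+4\eta^2}\bigr)=\lambda_t+h+\tfrac{2\eta^2}{G+\sqrt{G^2+4\eta^2}}$; to reach the theorem's $\lambda_t+h+\tfrac{\eta^2}{G+2\eta}$ you would need $G+\sqrt{G^2+4\eta^2}\ge 2(G+2\eta)$, i.e.\ $\sqrt{G^2+4\eta^2}\ge G+4\eta$, which is false whenever $G\ge0$ and $\eta>0$. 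Your cited inequality $\sqrt{G^2+4\eta^2}\le G+2\eta$ points the wrong way: enlarging the denominator yields a \emph{lower} bound on the excess. No bookkeeping can rescue the step, because the intermediate quantity genuinely exceeds the target (take $G=\eta=1$: excess $(\sqrt5-1)/2\approx0.62$ versus $1/3$). Indeed the theorem as stated fails under its very weak gap hypothesis: with $M=\diag(1,0)$, $H=e_1e_2^\top+e_2e_1^\top$, $t=T=1$, $h=0$, the hypothesis reads $1>-1$ but $\lambda_1(M+H)=(1+\sqrt5)/2>3/2$, the claimed bound.

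What you can salvage is still the result that matters. For $G>0$, using $\sqrt{G^2+4\eta^2}\ge G$ in your exact expression gives $\lambda_t(M+H)\le\lambda_t+h+\eta^2/G$, i.e.\ Theorem \ref{thm:eldridge_eval_pert} with $\opnorm{HU_{t:T}}^2$ replacing $\opnorm H^2$ in the numerator; this carries the entire improvement the paper uses (in its application $G\gg\eta$, so the extra $2\eta$ in the denominator is immaterial). For comparison, the paper's own derivation reaches the denominator $G+2\eta$ only by writing the cross term as $\alpha\beta\opnorm{HU_{t:T}}$ rather than $2\alpha\beta\opnorm{HU_{t:T}}$ and then substituting $\alpha\mapsto1-\beta$ (even though $\sqrt{1-\beta^2}\ge1-\beta$), so it does not validly establish the stated inequality either. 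Your exact $2\times2$ computation is the cleaner route; it simply lands on a slightly weaker---and correct---final expression, and you should state that version rather than force the algebra toward the printed one.
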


We prove Theorem \ref{thm:general_eval_pert_bound} in Section \ref{sec:eigenvalue_perturbation_bound_proofs}. Notice that compared to Theorem \ref{thm:eldridge_eval_pert}, Theorem \ref{thm:general_eval_pert_bound} has slightly weakened assumptions, and always provides a better bound since $\opnorm{HU_{t:T}} \leq \opnorm{H}$. Especially when $H$ is random and $T$ is small, $\col(U_{t:T})$ is a small subspace and so we might hope that $\opnorm{HU_{t:T}} \ll \opnorm{H}$.

Now we compare these results within the context of the proof of Theorem \ref{thm:main_theorem_recovery_sdp}, and show that the improvement from our Theorem \ref{thm:general_eval_pert_bound} is indeed substantial. In this subsection we also write $a_n \asymp b_n$ if $a_n \lesssim b_n$ and $b_n \lesssim a_n$. Fix a cluster $k$ so that $\frac{p-q}{2}s_k \asymp \kappa \sqrt{pn\log n}$. Let $w^\top$ be the first row of $W^{(k)}$ except with its first entry halved. This is chosen so that by defining $H = e_1w^\top + we_1^\top$, $H$ contains the noise from the first row and column of $W^{(k)}$. Now let $M = \Ashift^{(k)} - H$ (so $M+H = \Ashift^{(k)}$). Our objective will be to bound $\left|\lambda_1\Big(\Ashift^{(k)}\Big) - \lambda_1\left(\Ashift^{(k)}-H\right)\right|$.
By the assumption on the size of the cluster $k$, there is sufficient signal so that the top eigenvector $v_1$ of $M$ satisfies $\infnorm{v_1} \lesssim \frac{1}{\sqrt{s_k}}$ (by applying an $\infty$-norm eigenvector perturbation bound \citep{chen2021spectral}). Then
\[
\left| v_1^\top H v_1 \right| = 2 |v_1^\top e_1 | |w^\top v_1| \lesssim \frac{1}{\sqrt{s_k}} \sqrt{p \log n}
\]
where the bound $|w^\top v_1| \lesssim \sqrt{p \log n}$ holds with high probability using the fact that $w$ is independent from $v_1$ (since $v_1$ is the top eigenvector of a matrix where the first row and column of noise have been set to $0$). The informal guiding principle from \cite{eldridge2018unperturbed} would then give us hope that $\left|\lambda_1\big(\Ashift^{(k)}\big) - \lambda_1\left(\Ashift^{(k)}-H\right)\right| \lesssim \frac{\sqrt{p \log n}}{\sqrt{s_k}}$, which is the order needed within our proof of Theorem~\ref{thm:main_theorem_recovery_sdp}. Unfortunately, Theorem \ref{thm:eldridge_eval_pert} only yields that
\[\lambda_1\left(\Ashift^{(k)}\right) - \lambda_1\left(\Ashift^{(k)}-H\right) \lesssim \frac{\sqrt{p \log n}}{\sqrt{s_k}} + \frac{\opnorm{H}^2}{\kappa \sqrt{pn \log n} - \opnorm{H}} \lesssim  \frac{\left(\sqrt{ps_k}\right)^2}{\kappa \sqrt{pn \log n}} \lesssim \frac{\sqrt{p} s_k}{\kappa \sqrt{n \log n}}\]
since $\opnorm{H} \asymp \twonorm{w} \asymp \sqrt{ps_k}$. However, since
\[
\opnorm{Hv_1} = \twonorm{w e_1^\top v_1 + e_1 w^\top v_1} \leq \twonorm{w} \left|e_1^\top v_1\right| + \left|w^\top v_1\right| \lesssim \sqrt{ps_k} \frac{1}{\sqrt{s_k}} + \sqrt{p \log n} \lesssim \sqrt{p \log n}
\]
our Theorem \ref{thm:general_eval_pert_bound} gives that
\[\lambda_1\left(\Ashift^{(k)}\right) - \lambda_1\left(\Ashift^{(k)}-H\right) \lesssim\frac{\sqrt{p \log n}}{\sqrt{s_k}} + \frac{\left( \sqrt{p \log n}\right)^2}{\kappa \sqrt{pn \log n}} \lesssim \frac{\sqrt{p \log n}}{\sqrt{s_k}} + \frac{\sqrt{p \log n}}{\kappa \sqrt{n}} \lesssim \frac{\sqrt{p \log n}}{\sqrt{s_k}}.\]

\section{Proof of Main Theorem}
\label{sec:proof_of_main_theorem}
Our main Theorem \ref{thm:main_theorem_recovery_sdp} can be decomposed into two parts: first, that the oracle SDPs (\ref{eq:SDP_regularized_oracle}) have solutions which are either zero or rank-one and all-positive, and second, that the recovery SDP solution (\ref{eq:SDP_regularized_original}) has a block-diagonal form where the blocks are the solutions to the oracle SDPs. Thus, we prove the first part in Subsection \ref{sec:oracle_SDP_solns} and the second part in Subsection \ref{sec:recovery_SDP_soln}. The standalone result on clustering with a gap, Theorem \ref{thm:clustering_with_a_gap}, is proved in Subsection \ref{sec:clustering_with_gap}. So as not to interrupt the flow of the proofs, we place most concentration inequalities or their proofs within Subsection \ref{sec:concentration_results}.

We define the shifted adjacency matrix $\Ashift := A - \frac{p+q}{2}J$. We also define the noise matrix $W := A - \E A$.

The following proofs make use of dual variables for the optimization problems (\ref{eq:SDP_regularized_original}) and (\ref{eq:SDP_regularized_oracle}), so we stop to introduce the KKT conditions for these problems.
\begin{lem}
\label{lem:oracle_KKT_and_recovery_KKT}
$\Yhat$ is an optimal solution of the recovery SDP (\ref{eq:SDP_regularized_original}) if and only if there exist $U, L \in \real^{n \times n}$ such that
\begin{subequations}
    \label{eq:original_KKT}
    \begin{gather}
    0 \leq \Yhat \leq 1, \Yhat \succcurlyeq 0 \label{eq:recovery_KKT_primal_feasibility}\\
    \Ashift - \lambda I - U + L \preccurlyeq 0, L \geq 0, U \geq 0 \label{eq:recovery_KKT_dual_feasibility}\\
    L_{ij}\Yhat_{ij} = 0, U_{ij}(\Yhat_{ij}-1) = 0 ~\forall i,j \in [n] \label{eq:recovery_KKT_complementary_slackness_1} \\
    \left( \Ashift - \lambda I - U + L \right) \Yhat = 0_{n \times n}. \label{eq:recovery_KKT_complementary_slackness_2}
\end{gather}
\end{subequations}

Likewise, for any $k \in \{1, \dots, K\}$, $\Yhat^k$ is an optimal solution of the $k$th oracle SDP (\ref{eq:SDP_regularized_oracle}) if and only if there exist $U^k, L^k \in \real^{s_k \times s_k}$ such that
\begin{subequations}
\label{eq:oracle_KKT}
\begin{gather}
    0 \leq \Yhat^k \leq 1, \Yhat^k \succcurlyeq 0 \label{eq:oracle_KKT_primal_feasibility}\\
    \Ashift^{(k)} - \lambda I - U^k + L^k \preccurlyeq 0, L^k \geq 0, U^k \geq 0 \label{eq:oracle_KKT_dual_feasibility}\\
    L^k_{ij}\Yhat^k_{ij} = 0, U^k_{ij}(\Yhat^k_{ij}-1) = 0 ~\forall i,j \in [s_k] \label{eq:oracle_KKT_complementary_slackness_1} \\
    \left( \Ashift^{(k)} - \lambda I - U^k + L^k \right) \Yhat^k = 0_{s_k \times s_k} \label{eq:oracle_KKT_complementary_slackness_2}
\end{gather}
\end{subequations}
\end{lem}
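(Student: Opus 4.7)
The plan is to invoke standard Lagrangian duality for convex programs. Both the recovery SDP~\eqref{eq:SDP_regularized_original} and each oracle SDP~\eqref{eq:SDP_regularized_oracle} are convex problems (linear objective over a convex feasible set), and Slater's condition is trivially satisfied by taking, for instance, $Y = \tfrac{1}{2} I$ as a strictly feasible interior point (with $0 < \tfrac{1}{2} < 1$ entrywise and $\tfrac{1}{2} I \succ 0$). Thus the KKT conditions are both necessary and sufficient for optimality, and it only remains to check that the form of the KKT conditions derived from first principles matches the conditions stated in the lemma.

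First I would rewrite the recovery SDP as a minimization of $-\langle Y, \Ashift\rangle + \lambda \tr(Y)$ and form the Lagrangian
\[
\mathcal{L}(Y, S, L, U) = \big\langle Y,\, -\Ashift + \lambda I - S - L + U \big\rangle - \langle U, J_{n \times n}\rangle,
\]
where $S \succcurlyeq 0$ is the multiplier for the PSD constraint and $L, U \geq 0$ are entrywise multipliers for the constraints $-Y_{ij} \leq 0$ and $Y_{ij} - 1 \leq 0$ respectively. Stationarity in $Y$ yields $S = -\Ashift + \lambda I - L + U$, so the requirement $S \succcurlyeq 0$ becomes $\Ashift - \lambda I - U + L \preccurlyeq 0$, matching~\eqref{eq:recovery_KKT_dual_feasibility}. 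The entrywise complementary slackness conditions read off directly as~\eqref{eq:recovery_KKT_complementary_slackness_1}.

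The only step with any subtlety is converting the PSD-side complementary slackness $\langle S, \Yhat \rangle = 0$ into the matrix equation~\eqref{eq:recovery_KKT_complementary_slackness_2}. Since $S, \Yhat \succcurlyeq 0$, we have $\tr(\Yhat^{1/2} S \Yhat^{1/2}) = 0$ with $\Yhat^{1/2} S \Yhat^{1/2} \succcurlyeq 0$, which forces $S^{1/2} \Yhat^{1/2} = 0$ and hence $S \Yhat = 0$. Substituting the stationarity expression for $S$ then gives $(\Ashift - \lambda I - U + L)\Yhat = 0$, which is exactly~\eqref{eq:recovery_KKT_complementary_slackness_2}. Primal feasibility~\eqref{eq:recovery_KKT_primal_feasibility} is immediate from the constraint set.

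The identical derivation, performed on matrices in $\mathbb{R}^{s_k \times s_k}$ with $\Ashift$ replaced by $\Ashift^{(k)}$, gives the oracle KKT system~\eqref{eq:oracle_KKT}. No genuine obstacle is expected since this is entirely routine convex duality; the only points requiring attention are (i) tracking signs correctly when converting between the max and min formulations, and (ii) the PSD-trace-zero-implies-product-zero fact used above to pass from scalar to matrix complementary slackness.
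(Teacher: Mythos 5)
Your proposal follows essentially the same route as the paper: verify Slater's condition, invoke necessity and sufficiency of the KKT conditions for a convex program, and eliminate the PSD multiplier via stationarity; your explicit derivation of $S\Yhat=0$ from $\langle S,\Yhat\rangle=0$ is the standard argument and is correct. The one slip is your Slater point: $\tfrac{1}{2}I$ has off-diagonal entries equal to $0$, so it is \emph{not} entrywise strictly between $0$ and $1$ as your parenthetical claims. This is harmless — either replace it by a point such as $\tfrac{1}{4}(I+J_{n\times n})$, which is entrywise in $(0,1)$ and positive definite, or note that the entrywise bounds are affine constraints, for which the refined Slater condition requires only (non-strict) feasibility, so $\tfrac{1}{2}I \succ 0$ already suffices. (For what it is worth, the paper's own choice $\tfrac{1}{2}J_{n\times n}$ has the complementary defect of being only PSD, not PD, so some such repair is needed in either write-up.)
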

\begin{proof}
First we consider the recovery SDP~\eqref{eq:SDP_regularized_original}. First, we note that the feasible set is compact since $\{Y \in \R^{n \times n} : 0 \leq Y \leq 1\}$ is compact, $\{Y: Y \succcurlyeq 0 \}$ is closed, and the feasible set is the intersection of these two sets. Therefore there exists an optimal solution $\Yhat$.
    By the generalized Slater's condition within \cite[Section 5.9.1]{boyd_convex_2004}, since the SDP~\eqref{eq:SDP_regularized_original} is convex and the feasible point $\frac{1}{2}J_{n \times n}$ satisfies $0 < \frac{1}{2}J_{n \times n} < 1$ and $\frac{1}{2}J_{n \times n} \succ 0$, strong duality holds and thus from \cite[Section 5.9.2]{boyd_convex_2004} the following KKT conditions are necessary and sufficient for $\Yhat$ to be an optimal solution:
    \begin{gather}
    0 \leq \Yhat \leq 1, \Yhat \succcurlyeq 0 \nonumber\\
    \Ashift - \lambda I - U + L \preccurlyeq 0, L \geq 0, U \geq 0, P \succcurlyeq 0 \nonumber\\
    L_{ij}\Yhat_{ij} = 0, U_{ij}(\Yhat_{ij}-1) = 0 ~\forall i,j \in [n]  \nonumber\\
    P \Yhat = 0_{n \times n} \nonumber\\
    -\Ashift + \lambda I + U - L + P = 0\label{eq:recovery_SDP_KKT_stationarity}
\end{gather}
where $U, L, P \in \R^{n \times n}$. Now by eliminating $P$ using~\eqref{eq:recovery_SDP_KKT_stationarity}, we obtain the desired recovery SDP KKT conditions~\eqref{eq:original_KKT}. An identical argument suffices for the oracle SDP KKT conditions~\eqref{eq:oracle_KKT}.
\end{proof}

Lastly, we set the regularization parameter $\lambda$ which appears in both SDPs (\ref{eq:SDP_regularized_original}) and (\ref{eq:SDP_regularized_oracle}) as $\lambda = \kappa \sqrt{p n \log m}+\delta$ for $\kappa$ which is a constant which we will choose later and $\delta \sim \mathrm{Uniform}[0,0.1]$. Conceptually, we can make $\kappa$ arbitrarily large, and throughout the proofs we imagine any constant multiple of $1/\kappa$ as arbitrarily small. We do not use any particular properties of $\delta$ except that almost surely we will have $\kappa \sqrt{p n \log m} \leq \lambda \leq (\kappa+1) \sqrt{p n \log m}$ and $\lambda_1 \left( \Ashift^{(k)}\right) \neq \lambda$ (for all $k\in [K]$).

\subsection{Oracle SDP Solutions}
\label{sec:oracle_SDP_solns}
This subsection is devoted to the proof of the following lemma:
\begin{lem}
\label{lem:oracle_SDP_solns_main_lemma}
    There exist constants $B, \underline{B}, \overline{B}>2$ such that if $\kappa \geq B$, with probability at least $1-O(m^{-3})$, for each cluster $k$, the $k$th oracle SDP~\eqref{eq:SDP_regularized_oracle} has a unique solution $\Yhat^k$, and there exist dual variables $U^k, L^k$ which satisfy the oracle KKT equations~\eqref{eq:oracle_KKT} such that $L^k = 0$ and $U^k \succcurlyeq 0$. Furthermore,
    \begin{enumerate}
    \item if $\frac{p-q}{2}s_k \geq \left(1 + \frac{1}{\overline{B}}\right) \kappa \sqrt{pn \log m} $, then $\Yhat^k = J_{s_k \times s_k} = \onevec \onevec^\top$, 
    \item if $\frac{p-q}{2}s_k \leq \left(1 - \frac{1}{\underline{B}}\right) \kappa \sqrt{pn \log m} $, then $\Yhat^{k} = 0_{s_k \times s_k} = 0 0^\top$,
    \item if $\left(1 - \frac{1}{\underline{B}}\right) \kappa \sqrt{pn \log m} < \frac{p-q}{2}s_k < \left(1 + \frac{1}{\overline{B}}\right) \kappa \sqrt{pn \log m}$, then either $\Yhat^{k} = 0$ or $\Yhat^{k} = y^{k} y^{k\top}$ where $y^k \geq \frac{1}{2}$ entrywise.
\end{enumerate}
\end{lem}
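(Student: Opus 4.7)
The plan is to handle the three signal regimes separately. For strong signal $\frac{p-q}{2}s_k \geq (1+1/\overline{B})\kappa\sqrt{pn\log m}$, I would apply Theorem~\ref{thm:clustering_with_a_gap} to the single-cluster instance supported on $V_k$ alone (with the role of ``$s_{k+1}$'' effectively $0$), which immediately yields $\Yhat^k = J_{s_k \times s_k}$ together with dual variables $U^k \succcurlyeq 0$ and $L^k = 0$; the multiplicative slack $1/\overline{B}$ absorbs the difference between $\log m$ and $\log n$ and allows a union bound over clusters giving failure probability $O(m^{-3})$. For weak signal $\frac{p-q}{2}s_k \leq (1-1/\underline{B})\kappa\sqrt{pn\log m}$, a standard operator-norm concentration bound on $W^{(k)}$ (invoked from the concentration subsection) gives $\lambda_1(\Ashift^{(k)}) < \lambda$ with high probability, whence $\Ashift^{(k)} - \lambda I \prec 0$; the choices $\Yhat^k = 0$, $U^k = L^k = 0$ trivially satisfy~\eqref{eq:oracle_KKT}, and uniqueness is immediate from strict negative definiteness.

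The interesting case is the mid-size regime. Following the sketch in Section~\ref{sec:proof_sketch_of_main_theorem}, I would introduce the \emph{relaxed oracle SDP} obtained from~\eqref{eq:SDP_regularized_oracle} by dropping the constraint $Y \geq 0$, with optimal solution $\overline{Y}^k$. The only remaining inequality dual is the diagonal upper-bound multiplier $U^k = \diag(u^k)$, and KKT stationarity becomes $(\Ashift^{(k)} - \lambda I - \diag(u^k))\overline{Y}^k = 0$. Since $\Ashift^{(k)} = \frac{p-q}{2}\onevec\onevec^\top + W^{(k)}$ is rank-one plus noise and $\lambda > \opnorm{W^{(k)}}$ holds with high probability in this regime, the matrix $\Ashift^{(k)} - \lambda I - \diag(u^k)$ has at most one nonnegative eigenvalue; combined with $\overline{Y}^k \succcurlyeq 0$ and complementary slackness, this forces $\rank(\overline{Y}^k) \leq 1$. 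Writing $\overline{Y}^k = yy^\top$ with $\onevec^\top y \geq 0$, it now suffices to prove $y \geq 0$ entrywise, for then $yy^\top$ is also feasible in~\eqref{eq:SDP_regularized_oracle} and must equal $\Yhat^k$, with $L^k = 0$ and $U^k = \diag(u^k) \succcurlyeq 0$.

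To prove entrywise non-negativity, I would carry out the leave-one-out argument. Fixing an index $i$, let $\Atilde$ be $\Ashift^{(k)}$ with its $i$th row and column of noise zeroed, let $H = e_iw^\top + we_i^\top$ be the corresponding symmetric perturbation (with $w$ the $i$th row of $W^{(k)}$ and its diagonal entry halved), and let $\ytilde\ytilde^\top$ solve the LOO relaxed oracle SDP on $\Atilde$ with modified regularization $\lambdatilde := \lambda - C\sqrt{p\log m/s_k}$ for an appropriate constant $C$. The crucial ingredient is Theorem~\ref{thm:general_eval_pert_bound} applied with $T = 1$, which delivers the sharp bound $|\lambda_1(\Ashift^{(k)}) - \lambda_1(\Atilde)| \lesssim \sqrt{p\log m/s_k}$ and ensures that $\lambda_1(\Atilde) > \lambdatilde$ whenever $\lambda_1(\Ashift^{(k)}) > \lambda$, so $\ytilde$ remains nonzero. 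Decomposing $y = \alpha \ytilde + \beta t$ with $t \perp \ytilde$ a unit vector, the estimate $\onevec^\top y \approx s_k$ (which follows from the mid-size condition) gives $|\alpha - 1| \lesssim 1/\kappa$, while the Lagrangian comparison~\eqref{eq:sketch_pert_bound} together with $\widetilde{P}\ytilde = 0$ and all other eigenvalues of $\widetilde{P}$ being $\geq (1 - O(1/\kappa))\lambda$ yields $\beta \lesssim 1/\kappa$. Finally, since $\Atilde$ has no noise in its $i$th row, an entrywise eigenvector perturbation bound in the style of \cite{chen2021spectral} gives $\ytilde_i \geq 1 - O(1/\kappa)$, whence $y_i \geq \alpha \ytilde_i - |\beta t_i| \geq 1/2$ for $\kappa \geq B$ with $B$ a sufficiently large absolute constant.

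The main obstacle is the extreme noise sensitivity inherent to mid-size clusters. A naive LOO keeping the same regularizer may have $\lambda_1(\Atilde) < \lambda$ and therefore $\ytilde = 0$, making $y - \ytilde$ of order $1$ and voiding the comparison. Overcoming this requires both the reduction of $\lambda$ to $\lambdatilde$ and the sharpened Theorem~\ref{thm:general_eval_pert_bound}: the classical Weyl or Eldridge--Belkin--Wang bounds on $|\lambda_1(\Ashift^{(k)}) - \lambda_1(\Atilde)|$ are too loose, forcing $|\lambda - \lambdatilde|$ so large that the perturbation estimate on $\beta$ becomes vacuous. Uniqueness of $\Yhat^k$ in all three cases then follows from strict complementary slackness at the constructed dual, which holds almost surely because $\delta \sim \mathrm{Uniform}[0,0.1]$ places $\lambda_1(\Ashift^{(k)}) = \lambda$ on a measure-zero event.
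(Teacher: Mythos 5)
Your proposal is correct and follows essentially the same route as the paper: the three signal regimes, the relaxed oracle SDP with the eigenvalue-counting argument for rank one, the leave-one-out problem with a reduced regularizer controlled via Theorem~\ref{thm:general_eval_pert_bound}, and the $\alpha,\beta$ decomposition to conclude entrywise positivity. The only minor deviations are that the paper re-derives the all-ones case rather than invoking Theorem~\ref{thm:clustering_with_a_gap} verbatim (because of the different choice of $\lambda$, and because $U^k\succcurlyeq 0$ requires a small extra construction), and that $\ytilde_i\geq 1-O(1/\kappa)$ is obtained directly from the LOO KKT stationarity (the $i$th noise row being zero) rather than from an entrywise eigenvector bound.
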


First we establish an event upon which we will condition the rest of the proof, which will be the intersection of the events described in the following Lemmas \ref{lem:op_norm_concentration}, \ref{lem:infty_pert_bound_s5}, \ref{lem:LOO_evec_noise_inner_prod_concentration}, and \ref{lem:onevec_noise_inner_prod_concentration_all_ones_case}.

\begin{lem}
\label{lem:op_norm_concentration}
With probability at least $1-O(m^{-3})$, there exists an absolute constant $B_1$ such that:
\begin{enumerate}
    \item For all clusters $k$ such that $\frac{p-q}{2}s_k \geq \sqrt{pn \log m}$,
    \begin{align}
    \opnorm{W^{(k)}} \leq B_1 \sqrt{ps_k}. \label{eq:noise_op_norm_bound_cluster}
    \end{align}
    \item For all clusters $k$ such that $\frac{p-q}{2}s_k \geq \sqrt{pn \log m}$, for all columns $(W^{(k)}_{:,j})_{j=1}^{s_k}$,
    \begin{align}
        \twonorm{W^{(k)}_{:,j}} \leq B_1\sqrt{ps_k}. \label{eq:noise_two_norm_bound}
    \end{align}
    \item For all clusters $k$,
    \begin{align}
    \opnorm{W^{(k)}} \leq B_1 \sqrt{pn}. \label{eq:noise_op_norm_bound_all_oracle_blocks}
    \end{align}
\end{enumerate}
\end{lem}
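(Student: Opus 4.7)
The plan is to prove each of the three bounds via standard concentration, then combine them via a union bound over the (at most $n$) clusters and columns. The common setup is that $W^{(k)} \in \R^{s_k \times s_k}$ is symmetric with entries (modulo symmetry) that are independent, mean zero, bounded in $[-1,1]$, and have variance at most $p(1-p) \leq p$.

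For the operator-norm bounds in Parts 1 and 3, I would invoke a standard matrix concentration inequality for symmetric matrices with independent bounded entries (such as Bandeira--van Handel, or matrix Bernstein), giving with probability at least $1 - s_k^{-C}$,
\[ \opnorm{W^{(k)}} \leq C_1\sqrt{p s_k} + C_2\sqrt{\log s_k}. \]
Under Part 1's hypothesis $\frac{p-q}{2}s_k \geq \sqrt{pn\log m}$, combined with $p \geq \log m/n$, one obtains $p s_k \gtrsim \log m \geq \log s_k$, so the second term is absorbed and we conclude $\opnorm{W^{(k)}} \leq B_1 \sqrt{p s_k}$. For Part 3 we apply the same bound with $s_k \leq n$ to get $\opnorm{W^{(k)}} \lesssim \sqrt{pn} + \sqrt{\log n}$, which is $\lesssim \sqrt{pn}$ thanks to $p \geq \log m/n \geq \log n / n$; for clusters so small that the concentration inequality is vacuous, the deterministic bound $\opnorm{W^{(k)}} \leq s_k$ already suffices.

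For Part 2, the squared column norm $\twonorm{W^{(k)}_{:,j}}^2 = \sum_{i=1}^{s_k}(W^{(k)}_{ij})^2$ is a sum of independent $[0,1]$-valued random variables each with mean at most $p$ and variance at most $p$. Bernstein's inequality then yields
\[
\P\Big[\textstyle\sum_i (W^{(k)}_{ij})^2 \geq B_1^2\, p\, s_k\Big] \leq \exp(-c\, p\, s_k),
\]
which is at most $m^{-\Omega(1)}$ since $p s_k \gtrsim \log m$ under the same signal hypothesis. Taking a union bound over the $s_k \leq n$ columns in each relevant cluster, and then over the at most $n$ clusters, gives the claim.

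The only real obstacle is bookkeeping: one must choose the per-event failure probability small enough (say $m^{-C}$ for a sufficiently large absolute constant $C$) so that after the double union bound (over at most $n$ clusters in Parts 1 and 3, and over at most $n^2$ cluster-column pairs in Part 2) the total failure probability is still $O(m^{-3})$, while ensuring the resulting $B_1$ remains an absolute constant independent of $n,m,p,q$. The assumption $p \geq \log m / n$ is used throughout to guarantee that $\sqrt{p s_k}$ always dominates the lower-order $\sqrt{\log}$-type terms appearing in the concentration inequalities, so that no logarithmic factor leaks into the final bounds.
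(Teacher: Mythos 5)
Your argument is correct, and for Parts 1 and 3 it is essentially the paper's: both rest on a Bandeira--van Handel-type operator-norm bound for symmetric matrices with independent bounded entries, with the deviation parameter set at the scale $\sqrt{\log m}$ and the hypothesis $p \geq \log m / n$ (together with $\frac{p-q}{2}s_k \geq \sqrt{pn\log m}$, which forces $ps_k \geq \log m$) used to absorb the additive $\sqrt{\log}$ term into $\sqrt{ps_k}$ or $\sqrt{pn}$. Two differences are worth noting. For Part 3 the paper applies the inequality \emph{once} to the full $n\times n$ matrix $W$ and then uses the fact that every principal submatrix satisfies $\opnorm{W^{(k)}} \leq \opnorm{W}$; this removes both the per-cluster union bound and your fallback via the deterministic bound $\opnorm{W^{(k)}} \leq s_k$. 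That fallback is in any case unnecessary (and only valid when $s_k \lesssim \sqrt{pn}$): once the tail parameter is chosen as $t \asymp \sqrt{\log m}$ rather than $\sqrt{\log s_k}$, the concentration bound is never vacuous and its failure probability is already $O(m^{-C})$ uniformly in $s_k$. For Part 2 you run a scalar Bernstein argument on $\sum_i (W^{(k)}_{ij})^2$ with an extra union bound over columns, whereas the paper simply observes $\twonorm{W^{(k)}_{:,j}} \leq \opnorm{W^{(k)}}$ and recycles Part 1; your route is more elementary and self-contained but costs an additional union bound and a separate constant, while the paper's is a one-line reduction.
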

Lemma \ref{lem:op_norm_concentration} is proven in Subsection \ref{sec:concentration_results}.

Recall that  for any $n \times n$ matrix $M$ and any $k \in [K]$, we define $M^{(k)}$ to be the $s_k \times s_k$ principal submatrix of $M$ corresponding to the rows and columns associated with cluster $V_k$. We later make use of leave-one-out (LOO) versions of the noise for analysis so we define them now: For each $k \in [K]$ and each $j \in [s_k]$, let $w^{k,j}$ be equal to $W^{(k)}_{:, j}$ (the $j$th column of $W^{(k)}$) but with the $j$th entry divided by two (so that we do not subtract $W^{(k)}_{jj}$ twice). Now define the $j$th LOO noise matrix $W^{k,j}:= W^{(k)} - e_j {w^{k,j}}^\top - {w^{k,j}} e_j^\top$ and the $j$th LOO adjacency matrix 
\[A^{k,j}:= A^{(k)} - e_j {w^{k,j}}^\top - {w^{k,j}} e_j^\top = \E A^{(k)} + W^{k,j}.\]
We also define the $j$th LOO shifted adjacency matrix $\Ashift^{k,j} := A^{k,j} - \frac{p+q}{2}J_{s_k \times s_k}$.

Let $v^k$ be the top eigenvector of $\Ashift^{(k)}$ and let $v^{k,j}$ be the top eigenvector of $\Ashift^{k,j}$ (by top, we mean corresponding to the largest eigenvalue). Of course these eigenvectors are only defined up to a global rotation, but the following lemma resolves this ambiguity and also establishes a key property of $v^k$ and $v^{k,j}$.
\begin{lem}
\label{lem:infty_pert_bound_s5}
There exist absolute constants $B_2, B_3$ such that with probability at least $1 - O(m^{-3})$, 
for all clusters $k$ such that $\frac{p-q}{2}s_k \geq B_3 \sqrt{pn \log m}$,
there exists a top eigenvector $v^k$ of $\Ashift^{(k)}$ such that
\begin{align}
    \infnorm{v^k - \frac{1}{\sqrt{s_k}}\onevec} &\leq 2B_2 \frac{\sqrt{p \log m}}{(p-q)s_k}, \label{eq:infty_pert_bound_orig_s5}
\end{align}
and for each $j \in [s_k]$, there exists a top eigenvector $v^{k,j}$ of $\Ashift^{k,j}$ such that
\begin{align}
    \infnorm{v^{k,j} - \frac{1}{\sqrt{s_k}}\onevec} &\leq 2B_2 \frac{\sqrt{p \log m}}{(p-q)s_k}. \label{eq:infty_pert_bound_loo_s5}
\end{align}
\end{lem}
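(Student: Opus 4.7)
The plan is to follow the now-standard leave-one-out (LOO) strategy for entrywise eigenvector perturbation developed in works such as those surveyed by \cite{chen2021spectral}. Observe that $\Ashift^{(k)} = \frac{p-q}{2}J_{s_k \times s_k} - pI + W^{(k)}$, whose signal part has top eigenvector $\onevec/\sqrt{s_k}$ with eigenvalue $\frac{p-q}{2}s_k - p$ and all other eigenvalues equal to $-p$, so the eigengap is $\frac{p-q}{2}s_k$. Using Lemma \ref{lem:op_norm_concentration} to bound $\opnorm{W^{(k)}} \lesssim \sqrt{p s_k}$, Weyl's inequality yields $\lambda_1(\Ashift^{(k)}) + p \asymp \frac{p-q}{2}s_k$ under the assumed signal condition, and Davis--Kahan gives the preliminary $\ell_2$ bound $\twonorm{v^k - \onevec/\sqrt{s_k}} \lesssim \frac{\sqrt{p}}{(p-q)\sqrt{s_k}}$ after fixing the sign so that $\langle v^k, \onevec\rangle > 0$. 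In particular this shows $\langle \onevec, v^k\rangle = \sqrt{s_k}(1+o(1))$.

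For each fixed $j \in [s_k]$, the row-wise eigenvector equation gives
\[
v^k_j = \frac{1}{\lambda_1(\Ashift^{(k)})+p}\Big(\tfrac{p-q}{2}\langle\onevec, v^k\rangle + W^{(k)}_{j,:} v^k\Big) = \frac{1}{\sqrt{s_k}}(1+o(1)) + \frac{W^{(k)}_{j,:}v^k}{\Theta\bigl((p-q)s_k\bigr)},
\]
so it suffices to show $|W^{(k)}_{j,:} v^k| \lesssim \sqrt{p\log m}$ with high probability. I would decouple $v^k$ from row $j$ of the noise by introducing $v^{k,j}$, which depends only on entries of $W^{(k)}$ outside row/column $j$, and writing $W^{(k)}_{j,:} v^k = W^{(k)}_{j,:} v^{k,j} + W^{(k)}_{j,:}(v^k - v^{k,j})$. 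Conditionally on $v^{k,j}$, Bernstein's inequality combined with $\twonorm{v^{k,j}}=1$ and a bootstrap bound $\infnorm{v^{k,j}} \lesssim 1/\sqrt{s_k}$ yields $|W^{(k)}_{j,:}v^{k,j}| \lesssim \sqrt{p\log m}$ with probability $1 - O(m^{-5})$, which is enough to union bound over all $j \in [s_k]$ and $k \in [K]$ since $\sum_k s_k = n \le m$. For the remainder term, applying the sharp Davis--Kahan to the rank-two perturbation $W^{(k)}-W^{k,j} = e_j(w^{k,j})^\top + w^{k,j}e_j^\top$ and controlling $\opnorm{(W^{(k)}-W^{k,j})v^{k,j}}$ by $|(w^{k,j})^\top v^{k,j}| + \twonorm{w^{k,j}}\cdot|v^{k,j}_j|$ gives $\twonorm{v^k-v^{k,j}} \lesssim \frac{\sqrt{p\log m}}{(p-q)s_k}$; combined with $\twonorm{W^{(k)}_{j,:}} \lesssim \sqrt{p s_k}$ from Lemma \ref{lem:op_norm_concentration}, the second term is $\lesssim \frac{p\sqrt{\log m}}{(p-q)\sqrt{s_k}} \lesssim \sqrt{p\log m}$, where the last inequality uses $(p-q)^2 s_k \gtrsim p$, which itself follows from the hypothesis $(p-q)s_k \gtrsim \sqrt{pn\log m}$ together with $p \geq \log m / n$.

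The bound \eqref{eq:infty_pert_bound_loo_s5} for $v^{k,j}$ follows by repeating the same argument with $\Ashift^{k,j}$ playing the role of $\Ashift^{(k)}$, now using a \emph{doubly} leave-one-out matrix (leaving out some other row $i \neq j$) to provide the needed probabilistic independence. The bootstrap for $\infnorm{v^{k,j}}$ used above is obtained in the usual way by first deducing a coarser entrywise bound from the $\ell_2$ bound $\twonorm{v^{k,j} - \onevec/\sqrt{s_k}} \lesssim \frac{\sqrt{p}}{(p-q)\sqrt{s_k}}$, then iterating. The main obstacle is bookkeeping: consistently aligning the signs of $v^k$ and each $v^{k,j}$ so that differences really reflect perturbation rather than global rotation, running the Bernstein applications at the $m^{-5}$ level so the union bound over $k\in [K]$ and $j \in [s_k]$ still gives $O(m^{-3})$ total failure probability, and verifying that $(p-q)^2 s_k \gtrsim p$ under the signal assumption combined with $p \geq \log m/n$. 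All of these are routine given a careful bookkeeping of constants.
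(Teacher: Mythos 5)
Your proposal re-derives from scratch the entrywise eigenvector perturbation bound, whereas the paper's proof of this lemma is essentially a citation: it verifies the hypotheses of \cite[Theorem 4.2]{chen2021spectral} (the Abbe--Fan--Wang--Zhong-type $\ell_\infty$ perturbation bound, with $\log s_k$ replaced by $\log m$) for the rank-one signal $\frac{p-q}{2}J_{s_k\times s_k}$ plus noise $W^{(k)}$, and notes that the same proof simultaneously controls the leave-one-out eigenvectors $v^{k,j}$. Your leave-one-out outline is exactly the standard proof of that cited theorem, so at the level of ideas the two routes coincide; the difference is that the paper outsources the hard technical core to the reference while you attempt to reproduce it.

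The place where your sketch has a real gap is the ``bootstrap'' for $\infnorm{v^{k,j}}\lesssim 1/\sqrt{s_k}$, which you dismiss as ``obtained in the usual way \ldots then iterating.'' The coarse bound you propose to start from, namely $\infnorm{v^{k,j}}\le \frac{1}{\sqrt{s_k}}+\twonorm{v^{k,j}-\onevec/\sqrt{s_k}}\lesssim \frac{1}{\sqrt{s_k}}\bigl(1+\frac{\sqrt{p}}{p-q}\bigr)$, is not $O(1/\sqrt{s_k})$ under the lemma's hypotheses: the condition $(p-q)s_k\gtrsim\sqrt{pn\log m}$ only forces $p-q\gtrsim\sqrt{p\log m/n}$, so $\sqrt{p}/(p-q)$ can be as large as $\sqrt{n/\log m}$. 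Feeding this coarse bound into Bernstein produces a second-order term $\infnorm{v^{k,j}}\log m$ that does not contract to $\sqrt{p\log m}$ in one step, so the iteration does not obviously close; resolving this circularity (via the row-concentration/contraction argument) is precisely the content of \cite[Theorem 4.2]{chen2021spectral}. Two smaller points: your decomposition $\Ashift^{(k)}=\frac{p-q}{2}J-pI+W^{(k)}$ does not match the paper's convention, under which $\E A^{(k)}=pJ_{s_k\times s_k}$ (the diagonal is Bernoulli$(p)$) and hence $\Ashift^{(k)}=\frac{p-q}{2}J_{s_k\times s_k}+W^{(k)}$ exactly --- this is harmless since the $pI$ shift preserves eigenvectors --- and your doubly-left-out construction for \eqref{eq:infty_pert_bound_loo_s5} is workable but unnecessary if one simply observes, as the paper does, that the proof of the cited theorem already establishes the bound for every leave-one-out eigenvector on the same event.
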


The proof of Lemma \ref{lem:infty_pert_bound_s5} is in Subsection \ref{sec:concentration_results}, as are the proofs of the following Lemmas \ref{lem:LOO_evec_noise_inner_prod_concentration} and~\ref{lem:onevec_noise_inner_prod_concentration_all_ones_case}.

\begin{lem}
\label{lem:LOO_evec_noise_inner_prod_concentration}
There exist an absolute constants $B_4, B_5$ such that with probability at least $1-O(m^{-3})$, for every cluster $k$ such that $\frac{p-q}{2}s_k \geq B_4 \sqrt{pn \log m}$, for each $j \in [s_k]$,
\begin{align}
    \infnorm{v^{k,j}} \leq \frac{2}{\sqrt{s_k}} \label{eq:LOO_evec_inf_norm_simple_bound}
\end{align}
and
\begin{align}
    \left|\langle w^{k,j}, v^{k,j} \rangle \right| \leq B_5\sqrt{ p \log m}. \label{eq:LOO_evec_left_out_noise_ip_bound}
\end{align}
\end{lem}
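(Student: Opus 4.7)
The plan is to prove the two statements in order, using the leave-one-out construction of $\Ashift^{k,j}$ together with Lemma~\ref{lem:infty_pert_bound_s5} for the first part and Bernstein's inequality combined with an independence argument for the second part.

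For the first statement, I would apply Lemma~\ref{lem:infty_pert_bound_s5} directly: under the signal assumption $\frac{p-q}{2}s_k \geq B_4\sqrt{pn\log m}$, the bound~\eqref{eq:infty_pert_bound_loo_s5} tells us $\infnorm{v^{k,j}-\frac{1}{\sqrt{s_k}}\onevec}\leq 2B_2\frac{\sqrt{p\log m}}{(p-q)s_k}$. To conclude $\infnorm{v^{k,j}}\leq 2/\sqrt{s_k}$ by the triangle inequality, it suffices to show the perturbation is bounded by $1/\sqrt{s_k}$, which reduces to $(p-q)\sqrt{s_k}\geq 2B_2\sqrt{p\log m}$. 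Combining $s_k\leq n$ with the assumed signal condition gives $(p-q)\geq 2B_4 \sqrt{p\log m /n}$, and the desired inequality follows by choosing $B_4$ large enough in terms of $B_2$ (and taking $B_4\geq B_3$ so that Lemma~\ref{lem:infty_pert_bound_s5} applies).

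For the second statement, the key observation is that $W^{k,j}=W^{(k)}-e_j(w^{k,j})^\top-w^{k,j}e_j^\top$ equals $W^{(k)}$ with its $j$th row and column zeroed out, so $\Ashift^{k,j}=\frac{p-q}{2}J_{s_k\times s_k}+W^{k,j}$ depends only on entries $W^{(k)}_{i\ell}$ with $i,\ell\neq j$. Hence its top eigenvector $v^{k,j}$ is independent of $w^{k,j}$, whose entries are determined by the noise in row/column $j$ of $W^{(k)}$. Conditioning on $v^{k,j}$, the inner product $\langle w^{k,j},v^{k,j}\rangle=\sum_i w^{k,j}_i v^{k,j}_i$ is a sum of independent mean-zero random variables with $|w^{k,j}_i|\leq 1$ and $\mathrm{Var}(w^{k,j}_i)\leq p$. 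I would then invoke Bernstein's inequality with variance proxy $\sigma^2=p\twonorm{v^{k,j}}^2=p$ and uniform bound $R=\infnorm{v^{k,j}}\leq 2/\sqrt{s_k}$ from part~(1), giving
\[
\Pr\Big[|\langle w^{k,j},v^{k,j}\rangle|\geq t\,\big|\,v^{k,j}\Big]\leq 2\exp\!\Big(-\frac{t^2/2}{p+Rt/3}\Big).
\]
Setting $t=B_5\sqrt{p\log m}$, the linear term in the denominator is $O(\sqrt{p\log m/s_k})$, which is dominated by $p$ whenever $ps_k\gtrsim\log m$. The signal hypothesis $\frac{p-q}{2}s_k\geq B_4\sqrt{pn\log m}$ combined with $s_k\leq n$ and $p\geq\log m/n$ yields $ps_k\gtrsim\log m$, so the quadratic term dominates and we obtain failure probability at most $2\exp(-c B_5^2\log m)$ for some absolute constant $c$. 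Taking $B_5$ sufficiently large and union bounding over the at most $\sum_k s_k\leq n\leq m$ pairs $(k,j)$ yields the claimed bound with probability $1-O(m^{-3})$.

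The only delicate point, and the main thing to verify carefully, is that the parameter chase in step~1 (ensuring the perturbation in $\infty$-norm is at most $1/\sqrt{s_k}$) and the Bernstein exponent in step~2 (ensuring $ps_k\gtrsim\log m$ so that the subexponential term is absorbed) can be accomplished simultaneously by a single choice of constants $B_4, B_5$ depending only on the constants $B_2, B_3$ from Lemma~\ref{lem:infty_pert_bound_s5}. Both require only that $B_4$ be taken larger than absolute constants, so there is no circular dependence and no difficulty beyond bookkeeping.
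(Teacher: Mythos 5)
Your overall strategy is exactly the paper's: part (1) via Lemma~\ref{lem:infty_pert_bound_s5} plus the triangle inequality, and part (2) via the independence of $v^{k,j}$ from $w^{k,j}$ together with a conditional application of Bernstein's inequality (conditioning on the high-probability event where the $\infty$-norm bound holds, which is measurable with respect to the leave-one-out data), followed by a union bound over the at most $n\le m$ pairs $(k,j)$. Your part (2), including the check that $ps_k\gtrsim\log m$ so the variance term dominates the Bernstein denominator, is correct as written.

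There is one step in part (1) that does not close as stated. You reduce the claim to $(p-q)\sqrt{s_k}\geq 2B_2\sqrt{p\log m}$ and then argue from $(p-q)\geq 2B_4\sqrt{p\log m/n}$ that this ``follows by choosing $B_4$ large enough.'' It does not: multiplying your lower bound on $p-q$ by $\sqrt{s_k}$ leaves a factor $\sqrt{s_k/n}$, which is not bounded below by any constant under the hypotheses (clusters satisfying the signal condition can still have $s_k=o(n)$), so no choice of $B_4$ absorbs it. The correct manipulation, which is the one the paper uses, is to weaken the signal condition in the other direction: $\frac{p-q}{2}s_k\geq B_4\sqrt{pn\log m}\geq B_4\sqrt{ps_k\log m}$, which gives $(p-q)\sqrt{s_k}\geq 2B_4\sqrt{p\log m}$ directly, and then $B_4\geq B_2$ (together with $B_4\ge B_3$) suffices. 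With that one-line repair the proof is complete.
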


\begin{lem}
    \label{lem:onevec_noise_inner_prod_concentration_all_ones_case}
    With probability at least $1-O(m^{-3})$, for all clusters $k$ such that $\frac{p-q}{2}s_k \geq B_4\sqrt{pn \log m}$,
    \begin{align*}
       \infnorm{  W^{(k)} \frac{1}{\sqrt{s_k}} \onevec_{s_k} } \leq B_5 \sqrt{p \log m}.
    \end{align*}
\end{lem}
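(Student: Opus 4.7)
The plan is to establish this as a coordinate-wise concentration inequality followed by a union bound. Fix a cluster $k$ satisfying $\frac{p-q}{2} s_k \ge B_4 \sqrt{pn \log m}$ and fix an index $i \in [s_k]$. The $i$-th coordinate of the vector $W^{(k)} \frac{1}{\sqrt{s_k}} \onevec_{s_k}$ equals
\[
\xi_{k,i} := \frac{1}{\sqrt{s_k}} \sum_{j \in [s_k]} W^{(k)}_{ij}.
\]
The summands $W^{(k)}_{ij}$ for $j \ne i$ are independent, mean-zero, bounded in absolute value by $1$, and have variance at most $p(1-p) \le p$; the diagonal term $W^{(k)}_{ii}$ is bounded by $1$ and contributes at most $1/\sqrt{s_k} = O(\sqrt{p \log m})$ under the size condition (since it forces $s_k \gtrsim 1/p$). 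So it suffices to control the off-diagonal sum.

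Next I would apply Bernstein's inequality: for $t \ge 0$,
\[
\P\!\left(\,\Big|\sum_{j \ne i} W^{(k)}_{ij}\Big| \ge t \right) \;\le\; 2 \exp\!\left(- \frac{t^2/2}{p s_k + t/3}\right).
\]
Choosing $t = C \sqrt{p s_k \log m}$ for a sufficiently large absolute constant $C$, and noting that the assumption $\frac{p-q}{2}s_k \ge B_4 \sqrt{pn \log m}$ together with $p \ge \log m / n$ implies $p s_k \gtrsim \log m$ (so the variance term dominates the $t/3$ term in the Bernstein denominator), the right-hand side is bounded by $2 m^{-c}$ for some $c$ that grows with $C$. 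Dividing through by $\sqrt{s_k}$ yields $|\xi_{k,i}| \le C' \sqrt{p \log m}$ for an absolute constant $C'$ with the stated probability.

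Finally, I would take a union bound over all clusters $k$ and all coordinates $i \in [s_k]$. Since $\sum_k s_k = n \le m$, there are at most $m$ such pairs, so choosing $C$ large enough (and hence $c$ large enough) makes the union-bound failure probability $O(m^{-3})$. Setting $B_5$ to absorb $C'$ and the diagonal contribution completes the proof. There is no genuine obstacle here; the only mild subtlety is verifying that the cluster-size lower bound, together with the edge-density lower bound $p \ge \log m/n$, gives $p s_k \gtrsim \log m$ so that the sub-Gaussian regime of Bernstein is operative and so that the diagonal slack $1/\sqrt{s_k}$ is absorbed into $\sqrt{p \log m}$.
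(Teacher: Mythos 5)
Your proposal is correct and follows essentially the same route as the paper: the paper also applies Bernstein's inequality to each row of $W^{(k)}$ paired with $\frac{1}{\sqrt{s_k}}\onevec_{s_k}$ (using $\twonorm{\frac{1}{\sqrt{s_k}}\onevec}=1$, $\infnorm{\frac{1}{\sqrt{s_k}}\onevec}=\frac{1}{\sqrt{s_k}}$, and the size condition to ensure $ps_k \gtrsim \log m$ so the variance term dominates), then union bounds over the at most $n \le m$ pairs $(k,i)$. Your separate treatment of the diagonal entry is unnecessary but harmless, since $W^{(k)}_{ii}$ is itself mean-zero, bounded by $1$, and has variance at most $p$, so it can be folded directly into the Bernstein sum.
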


Define $E_1$ to be the intersection of the events in Lemmas \ref{lem:op_norm_concentration}, \ref{lem:infty_pert_bound_s5}, \ref{lem:LOO_evec_noise_inner_prod_concentration}, and \ref{lem:onevec_noise_inner_prod_concentration_all_ones_case}. By the union bound $\P(E_1) \geq 1 - O(m^{-3})$. For most of the rest of the proof we condition on the event $E_1$ or supersets of $E_1$. Also we assume that $\kappa > B_1$, which implies $\lambda > \opnorm{W^{(k)}}$ for each $k \in [K]$.

We split the proof into three cases related to the size of $s_k$ corresponding to those in Lemma \ref{lem:oracle_SDP_solns_main_lemma}. 
Before considering each case, we remind the reader that we avoid the situation that $\lambda_1\left(\Ashift^{(k)}\right) = \lambda$ with probability one by our addition of the small random continuous perturbation to $\lambda$.

\subsubsection{All-Zero Solution}
First we handle the easiest case where $\lambda_1\left(\Ashift^{(k)}\right) < \lambda$, which will cause the oracle solution to be zero.
\begin{lem}
    \label{lem:oracle_zero_soln_case}
    On the event $E_1 \cap \left\{\lambda_1\left(\Ashift^{(k)}\right) < \lambda\right\}$, we have that $\Yhat^k = 0_{s_k \times s_k}$ is the unique solution of the oracle SDP~\eqref{eq:SDP_regularized_oracle}, and furthermore by setting $L^k = 0_{s_k \times s_k}$ and $U^k = 0_{s_k \times s_k}$, then $\Yhat^k, L^k, U^k$ satisfy the oracle SDP KKT conditions~\eqref{eq:oracle_KKT}. Also, for any cluster $k$ such that $\frac{p-q}{2}s_k \leq \left(1-\frac{B_1}{\kappa}\right) \kappa \sqrt{pn \log m}$, $E_1 = E_1 \cap \left\{\lambda_1\left(\Ashift^{(k)}\right) < \lambda\right\}$.
\end{lem}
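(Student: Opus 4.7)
The plan is to verify the three claims in sequence. All three are essentially direct consequences of the fact that when $\lambda_1(\Ashift^{(k)}) < \lambda$, the matrix $\Ashift^{(k)} - \lambda I$ is strictly negative definite, and the oracle objective can be rewritten as $\langle Y, \Ashift^{(k)} - \lambda I\rangle$ after folding the trace penalty into the linear form.

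For the KKT verification, I would plug $(\Yhat^k, U^k, L^k) = (0,0,0)$ into the oracle KKT system~\eqref{eq:oracle_KKT}. Primal feasibility~\eqref{eq:oracle_KKT_primal_feasibility} is immediate since $0 \preccurlyeq 0 \preccurlyeq 1$ and $0 \succcurlyeq 0$. Dual feasibility~\eqref{eq:oracle_KKT_dual_feasibility} reduces to $\Ashift^{(k)} - \lambda I \preccurlyeq 0$, which is exactly equivalent to $\lambda_1(\Ashift^{(k)}) \leq \lambda$ and therefore holds on the event $\{\lambda_1(\Ashift^{(k)}) < \lambda\}$. The two complementary slackness conditions~\eqref{eq:oracle_KKT_complementary_slackness_1} and~\eqref{eq:oracle_KKT_complementary_slackness_2} are trivial since every factor contains either $\Yhat^k$, $U^k$, or $L^k$, each of which is zero. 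This establishes that $0$ is an optimal solution.

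For uniqueness, I would note that the oracle objective~\eqref{eq:SDP_regularized_oracle} can be rewritten as $\langle Y, \Ashift^{(k)} - \lambda I\rangle$ since $\lambda \tr(Y) = \langle Y, \lambda I \rangle$. Under the event that $\lambda_1(\Ashift^{(k)}) < \lambda$, we have $\Ashift^{(k)} - \lambda I \prec 0$. Now take any feasible $Y \succcurlyeq 0$ with $Y \neq 0$, and eigendecompose $Y = \sum_i \sigma_i u_i u_i^\top$ with $\sigma_i \geq 0$ and at least one $\sigma_i > 0$. Then
\[
\langle Y, \Ashift^{(k)} - \lambda I\rangle = \sum_i \sigma_i\, u_i^\top(\Ashift^{(k)} - \lambda I) u_i < 0,
\]
because each quadratic form is strictly negative and at least one coefficient is strictly positive. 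Since $Y=0$ achieves objective value $0$, it is the unique maximizer.

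For the final claim that $E_1 \subseteq \{\lambda_1(\Ashift^{(k)}) < \lambda\}$ whenever $\frac{p-q}{2}s_k \leq (1 - B_1/\kappa)\kappa\sqrt{pn\log m}$, I would decompose $\Ashift^{(k)} = \frac{p-q}{2}J_{s_k \times s_k} + W^{(k)}$ and apply Weyl's inequality:
\[
\lambda_1(\Ashift^{(k)}) \leq \lambda_1\Big(\tfrac{p-q}{2}J_{s_k \times s_k}\Big) + \opnorm{W^{(k)}} = \tfrac{p-q}{2}s_k + \opnorm{W^{(k)}}.
\]
On $E_1$, the bound~\eqref{eq:noise_op_norm_bound_all_oracle_blocks} in Lemma~\ref{lem:op_norm_concentration} gives $\opnorm{W^{(k)}} \leq B_1\sqrt{pn} \leq B_1\sqrt{pn\log m}$ (using $m \geq n$, so $\log m \geq 1$). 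Combining with the assumed upper bound on $\frac{p-q}{2}s_k$ yields
\[
\lambda_1(\Ashift^{(k)}) \leq \kappa\sqrt{pn\log m} - B_1\sqrt{pn\log m} + B_1\sqrt{pn\log m} = \kappa\sqrt{pn\log m} < \lambda,
\]
where the last strict inequality holds almost surely since $\delta > 0$ with probability one, as noted in the almost-sure convention fixed before the proof. The main obstacle, if any, is purely bookkeeping: ensuring that the bound on $\opnorm{W^{(k)}}$ available on $E_1$ matches the threshold $B_1/\kappa$ appearing in the hypothesis, which is exactly why the constant $B_1$ from Lemma~\ref{lem:op_norm_concentration} is recycled in the statement.
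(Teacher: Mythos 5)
Your proposal is correct and follows essentially the same route as the paper: verify the KKT system at $(0,0,0)$, show any nonzero feasible $Y$ has strictly negative objective because $\Ashift^{(k)} - \lambda I \prec 0$, and establish the eigenvalue bound via Weyl's inequality together with the operator-norm bound~\eqref{eq:noise_op_norm_bound_all_oracle_blocks} on $E_1$. The only cosmetic difference is in the uniqueness step, where you expand $\langle Y, \Ashift^{(k)} - \lambda I\rangle$ exactly via the eigendecomposition of $Y$ rather than invoking the Hermitian von Neumann trace inequality as the paper does; both yield the same conclusion.
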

\begin{proof}
    First we check $\Yhat^k = 0$ is the unique solution of the oracle SDP~\eqref{eq:SDP_regularized_oracle} when $\lambda_1\left(\Ashift^{(k)}\right)$$ < \lambda$. The oracle SDP objective~\eqref{eq:SDP_regularized_oracle} is
    \begin{align}
        \left\langle Y,A^{(k)}-\frac{p+q}{2}J_{s_k \times s_k}\right\rangle -\lambda\tr(Y) &= \left\langle Y, \Ashift^{(k)} - \lambda I \right\rangle \nonumber\\
        & \leq \sum_{i = 1}^{s_k} \lambda_i \left( Y \right) \lambda_i \left( \Ashift^{(k)} - \lambda I \right) \nonumber\\
        & \leq \lambda_1 \left( Y \right) \lambda_1 \left( \Ashift^{(k)} - \lambda I \right) \label{eq:oracle_zero_soln_obj_upper_bound}
    \end{align}
    where we obtain the first inequality using a Hermitian trace inequality \cite[Theorem 9.H.1.g and the discussion thereafter]{marshall1979inequalities}. We obtain the second inequality from the fact that all terms are non-positive since $\lambda_i \left( \Ashift^{(k)} - \lambda I \right) \leq \lambda_1\left(\Ashift^{(k)}\right) - \lambda < 0$ and $\lambda_i(Y) \geq 0$ since $Y \succcurlyeq 0$ by feasibility to the oracle SDP~\eqref{eq:SDP_regularized_oracle}. Now expression~\eqref{eq:oracle_zero_soln_obj_upper_bound} is strictly less than zero unless we have $\lambda_1 \left( Y \right) = 0$, and again since $Y \succcurlyeq 0$ this implies that $Y=0$. Therefore $\Yhat^k = 0$ is the unique solution of the oracle SDP.

    Next we check that $L^k = 0$ and $U^k = 0$ satisfy the oracle SDP KKT conditions~\eqref{eq:oracle_KKT}. We note that
    \[\Ashift^{(k)} - \lambda I - U^k + L^k = \Ashift^{(k)} - \lambda I \preccurlyeq 0\]
    where the PSD inequality follows from the assumption that we are in the event $\left\{\lambda_1\left(\Ashift^{(k)}\right) < \lambda\right\}$, satisfying the first part of~\eqref{eq:oracle_KKT_dual_feasibility}. All other conditions are trivial to check.
    
    Finally, note that the event $E_1$ contains the conclusions of Lemma \ref{lem:op_norm_concentration}, so by~\eqref{eq:noise_op_norm_bound_all_oracle_blocks} we have for any cluster $k$ that
    \[\lambda_1 \left( W^{(k)} \right) \leq \opnorm{W^{(k)}} \leq B_1 \sqrt{pn} \leq \frac{B_1}{\kappa} \kappa \sqrt{pn \log m}.\]
    Therefore if $\frac{p-q}{2}s_k \leq \left(1-\frac{B_1}{\kappa}\right) \kappa \sqrt{pn \log m}$, then by Weyl's inequality
    \[\lambda_1\left(\Ashift^{(k)}\right) \leq \lambda_1\left( \frac{p-q}{2}J_{s_k \times s_k}\right) + \lambda_1\left( W^{(k)} \right) \leq \frac{p-q}{2}s_k + \frac{B_1}{\kappa} \kappa \sqrt{pn \log m} < \lambda. \]
\end{proof}

\subsubsection{All-One Solution}
We also provide the following lemma summarizing the case that $\frac{p-q}{2}s_k$ is large-enough for the oracle solution to be all-ones.

\begin{lem}
    \label{lem:oracle_zero_ones_case}
    On the event $E_1$, for all clusters $k$ such that $\frac{p-q}{2}s_k \geq \left(1+\frac{B_6}{\kappa}\right)\kappa \sqrt{pn \log m}$, we have that $\Yhat^k = J_{s_k \times s_k}$ is the unique solution of the oracle SDP~\eqref{eq:SDP_regularized_oracle}, and furthermore there exist some $U^k, L^k$ such that $\Yhat^k, L^k, U^k$ satisfy the oracle SDP KKT conditions~\eqref{eq:oracle_KKT}, $L^k = 0$, and $U^k \succcurlyeq 0$.
\end{lem}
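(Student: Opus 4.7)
The plan is a primal-dual witness argument, which (via Lemma~\ref{lem:oracle_KKT_and_recovery_KKT}) amounts to constructing dual variables $U^k, L^k$ satisfying \eqref{eq:oracle_KKT} with primal $\Yhat^k = \onevec\onevec^\top$. Since this candidate has all entries equal to one, complementary slackness \eqref{eq:oracle_KKT_complementary_slackness_1} on the lower-bound constraint forces $L^k = 0$, so the only freedom is in $U^k$. Motivated by \eqref{eq:oracle_KKT_complementary_slackness_2}, which requires $\bigl(\Ashift^{(k)} - \lambda I - U^k\bigr)\onevec = 0$, I would take $U^k = \diag(u^k)$ with $u^k := \Ashift^{(k)}\onevec - \lambda \onevec$; this is the natural diagonal shift that makes $\onevec$ lie in the kernel of the certificate matrix and is essentially the single-cluster specialization of the construction used in Theorem~\ref{thm:clustering_with_a_gap}.

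The verifications split into (i) entrywise positivity $u^k \geq 0$, which handles both $U^k \geq 0$ and the stronger PSD claim $U^k \succcurlyeq 0$, and (ii) the semidefinite certificate $\Ashift^{(k)} - \lambda I - \diag(u^k) \preccurlyeq 0$. For (i), expanding $u^k = \bigl(\tfrac{p-q}{2}s_k - \lambda\bigr)\onevec + W^{(k)}\onevec$ and combining the hypothesis $\tfrac{p-q}{2}s_k \geq (1+B_6/\kappa)\kappa\sqrt{pn\log m}$ with the bound $\infnorm{W^{(k)}\onevec} \leq B_5\sqrt{ps_k\log m}$ from Lemma~\ref{lem:onevec_noise_inner_prod_concentration_all_ones_case} (and using $s_k \leq n$), any $B_6$ slightly larger than $B_5$ makes $u^k > 0$ strictly, with the $\delta \in [0,0.1]$ perturbation of $\lambda$ absorbed since $p \geq \log m / n$ implies $\sqrt{pn\log m} \geq \log m \geq 1$. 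For (ii), by construction $\onevec$ lies in the kernel, so it suffices to verify negative semidefiniteness on $\onevec^\perp$, where the rank-one signal $\tfrac{p-q}{2}J$ contributes zero. Choosing $\kappa > B_1$ makes $\opnorm{W^{(k)}} \leq B_1\sqrt{ps_k} < \lambda$ strictly by Lemma~\ref{lem:op_norm_concentration}, and subtracting $\diag(u^k) \succcurlyeq 0$ preserves the strict inequality. The remaining KKT conditions are immediate.

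For uniqueness, the strict negative definiteness on $\onevec^\perp$ established above means $\Ashift^{(k)} - \lambda I - \diag(u^k)$ has kernel exactly $\mathrm{span}(\onevec)$. For any feasible $Y$, the identity
\[ \bigl\langle Y, \Ashift^{(k)} - \lambda I \bigr\rangle = \bigl\langle Y, \Ashift^{(k)} - \lambda I - \diag(u^k) \bigr\rangle + \bigl\langle \diag(u^k), Y \bigr\rangle \]
combined with $Y \succcurlyeq 0$, $0 \le Y \le 1$, and the dual feasibility just established yields $\langle Y, \Ashift^{(k)} - \lambda I\rangle \leq \langle u^k, \onevec\rangle$. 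Equality requires both $\langle Y, \Ashift^{(k)} - \lambda I - \diag(u^k)\rangle = 0$, which by the standard PSD trace-product argument forces the range of $Y$ into $\mathrm{span}(\onevec)$ and hence (using symmetry) $Y = c\,\onevec\onevec^\top$, and $u^k_i(1 - Y_{ii}) = 0$ for all $i$, which given $u^k > 0$ forces $c = 1$. Thus $\onevec\onevec^\top$ is the unique maximizer. I do not anticipate a genuine obstacle in this lemma; all of the heavy lifting lives in the mid-size case treated separately, and the only real care needed here is pinning down $B_6$ as an absolute constant (depending only on $B_1$ and $B_5$) that makes each of the slack inequalities above hold simultaneously.
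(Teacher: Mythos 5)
Your proof is correct, and it takes a genuinely different route from the paper's for the optimality and uniqueness of the primal. The paper establishes $\Yhat^k = J_{s_k\times s_k}$ by a purely primal argument: it specializes the $\PT/\PP$ decomposition machinery of Theorem~\ref{thm:clustering_with_a_gap} (Lemmas~\ref{lem:gap_exact_recovery_trace_term_bound}--\ref{lem:gap_exact_recovery_noise_term_bound}) to the single-cluster setting and shows that for any feasible $Y$ the objective gap is bounded below by a strictly positive multiple of $\onenorm{Y - J_{s_k\times s_k}}$, forcing $\Yhat^k = J$; only afterwards does it construct the dual pair, by invoking existence of some $(U^k,L^k)$ from Lemma~\ref{lem:oracle_KKT_and_recovery_KKT}, killing $L^k$ via complementary slackness, and then replacing $U^k$ by the diagonal matrix $\diag(u)$ with $U^k\onevec = \diag(u)\onevec$ --- which is exactly your $u^k = \Ashift^{(k)}\onevec - \lambda\onevec$, verified PSD by the same split into $\onevec$ and $\onevec^\perp$. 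You instead build that witness first and derive both optimality and uniqueness from it, via $\langle Y, \Ashift^{(k)}-\lambda I\rangle \le \langle u^k,\onevec\rangle$ with equality forcing $NY=0$ and hence $Y = c\,\onevec\onevec^\top$ with $c=1$. Your route is more self-contained for this lemma (it needs only the two concentration inputs $\opnorm{W^{(k)}}\le B_1\sqrt{ps_k}$ and $\infnorm{W^{(k)}\onevec}\le B_5\sqrt{ps_k\log m}$, not the nuclear-norm/$\ell_1$ decomposition lemmas), while the paper's choice buys reuse of machinery it has already built for the gap-dependent theorem. The only bookkeeping point: since $\lambda = \kappa\sqrt{pn\log m}+\delta$, positivity of $u^k$ needs $B_6$ to exceed $B_5$ by enough to absorb both the noise term and the additive slack in $\lambda$ (the paper takes $B_6 = 2+3B_5$), but this is exactly the constant-pinning you flagged and not a gap.
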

Since this essentially follows as a special case of Theorem \ref{thm:clustering_with_a_gap} (but with a larger $\lambda$), we prove Lemma \ref{lem:oracle_zero_ones_case} in Subsection \ref{sec:clustering_with_gap}.

\subsubsection{Nonzero Critical Size Solution}
Finally we consider the most challenging case, where 
\begin{gather}
    \left(1-\frac{B_1}{\kappa}\right)\kappa \sqrt{pn \log m} < \frac{p-q}{2}s_k < \left(1+\frac{B_6}{\kappa}\right)\kappa \sqrt{pn \log m}\label{eq:critical_signal_lambda_relationship_s5} \\
    \text{and} \quad \lambda_1\left(\Ashift^{(k)}\right) > \lambda.
\end{gather}
To analyze the oracle SDP in this case, we consider a new \textit{relaxed oracle SDP} where all lower-bound constraints are removed, which enables us to prove that its unique solution has rank $1$. Then we will analyze the relaxed oracle SDP and show that its solution is non-negative, which then implies that its solution is also the solution of the corresponding oracle SDP, and therefore the original oracle SDP has a unique solution with rank $\leq 1$. First we define the $k$th relaxed oracle SDP:
\begin{subequations}
\label{eq:SDP_relaxed_oracle}
\begin{align}
\Yhats^k=\argmax_{Y\in\real^{s_k\times s_k}}\quad & \left\langle Y,A^{(k)}-\frac{p+q}{2}J_{s_k \times s_k}\right\rangle -\lambda\tr(Y)\label{eq:SDP_relaxed_oracle_1}\\
\text{s.t.}\quad & Y\succcurlyeq0,\label{eq:SDP_relaxed_oracle_2}\\
 &  Y_{ii}\le1,\forall i\in[s_i].\label{eq:SDP_relaxed_oracle_3}
\end{align}
\end{subequations}
Beyond removing the lower bound constraints, the relaxed oracle SDP also has fewer upper bound constraints (\ref{eq:SDP_relaxed_oracle_3}), which when combined with the PSD constraint (\ref{eq:SDP_relaxed_oracle_2}) are equivalent to upper-bounding each entry of the solution, but having the reduced constraints are useful for showing that the relaxed oracle SDP solution $\Yhats^k$ must have rank $1$.

To show that $\Yhats^k$ is entrywise positive despite the removal of the lower bound constraints, we make use of $s_k$ \textit{leave-one-out (LOO) relaxed oracle SDPs} wherein one row and column of the noise are zeroed out. The key property of the LOO relaxed oracle SDPs is that we will be able to show that (for each $k,j$) their solutions have low correlation with the corresponding left-out noise vectors $w^{k,j}$, due to their independence. We would like to define a second high-probability event $E_2$ under which this occurs, but we cannot simply work under the event $E_1$ to establish that $E_2$ has high probability, since $E_1$ is not independent of the left-out noise. To remedy this we define new high-probability events.

For each cluster $k$ such that $\frac{p-q}{2}s_k \geq B_4 \sqrt{pn \log m}$, for each $j \in [s_k]$, define the event 
\[
E_1^{k,j} = \left \{\begin{aligned}
    \opnorm{W^{k,j}} &\leq B_1 \sqrt{ps_k} \\
    \infnorm{v^{k,j} - \frac{1}{\sqrt{s_k}}\onevec} &\leq 2B_2 \frac{\sqrt{p \log m}}{(p-q)s_k} 
\end{aligned} \right\}.
\]
These conditions are~\eqref{eq:noise_op_norm_bound_cluster} and~\eqref{eq:infty_pert_bound_loo_s5} which are both included in $E_1$, so we have that $E_1^{k,j} \supseteq E_1$. Also none of these conditions involve the $(k,j)$th left-out noise vector $w^{k,j}$, so event $E_1^{k,j}$ is independent of $w^{k,j}$.

Before defining the LOO relaxed oracle SDP, a second issue is that even when $\lambda_1\left(\Ashift^{(k)}\right) > \lambda$, we may have $\lambda_1\left(\Ashift^{k,j}\right) < \lambda$ after removing noise, in which case if we used the same regularization parameter $\lambda$ in the LOO relaxed oracle SDP, it would have a zero solution which would not be useful.
To resolve this issue, we will later show that under the event $E_1$, for all clusters $k$ such that $\frac{p-q}{2}s_k \geq B_4 \sqrt{pn \log m}$ (which includes all clusters in the critical regime~\eqref{eq:critical_signal_lambda_relationship_s5} assuming $\kappa$ is sufficiently large), $\lambda_1\left(\Ashift^{k,j}\right)$ is very close to $\lambda_1\left(\Ashift^{(k)}\right)$. 
This bound is of course dependent on the left-out noise $w^{k,j}$ so we state it later, but for now we mention this consideration in order to motivate our choice to define the LOO relaxed oracle SDPs with reduced regularization parameters $\lambda^k$.

For each $k$ such that $\frac{p-q}{2}s_k \geq B_4 \sqrt{pn \log m}$, we define the modified regularization parameter $\lambda^{k} := \lambda - \frac{B_8 \sqrt{p \log m}}{\sqrt{s_k}}$. Also we add the requirement that $\kappa$ is large enough so that under $E_1^{k,j}$ we are ensured $\lambda^k > \opnorm{W^{k,j}}$. Finally we can define the leave-one-out SDPs. For all $k$ such that $\frac{p-q}{2}s_k \geq B_4 \sqrt{pn \log m}$ and each $j \in [s_k]$, we define the $(k,j)$th LOO relaxed oracle SDP 
\begin{subequations}
\label{eq:SDP_relaxed_oracle_LOO}
\begin{align}
\Yhats^{k,j}=\argmax_{Y\in\real^{s_k\times s_k}}\quad & \left\langle Y,A^{k,j}-\frac{p+q}{2}J_{s_k \times s_k}\right\rangle -\lambda^{k} \tr(Y)\label{eq:SDP_relaxed_oracle_LOO_1}\\
\text{s.t.}\quad & Y\succcurlyeq0,\label{eq:SDP_relaxed_oracle_LOO_2}\\
 &  Y_{ii}\le1,\forall i\in[s_k].\label{eq:SDP_relaxed_oracle_LOO_3}
\end{align}
\end{subequations}

Now we are prepared to show that the relaxed oracle SDPs (\ref{eq:SDP_relaxed_oracle}) and the LOO relaxed oracle SDPs (\ref{eq:SDP_relaxed_oracle_LOO}) have rank-one solutions.

\begin{lem}
\label{lem:SDP_relaxed_oracle_rank_one}
Under the event $E_1 \cap \left\{\lambda_1\left(\Ashift^{(k)}\right) > \lambda\right\}$, the $k$th relaxed oracle SDP~\eqref{eq:SDP_relaxed_oracle} has rank-one solution $\Yhats^k =\yhats^k {\yhats^k}^\top$ where $\onevec^\top \yhats^k \geq 0$, and there exists $u^k \in \R^{s_k}$ such that:
\begin{subequations}
\label{eq:rank_one_relaxed_oracle_KKT}
\begin{gather}
    -1 \leq \yhats^k \leq 1 \label{eq:rank_one_relaxed_oracle_KKT_primal_feasibility}\\
    \Ashift^{(k)} - \lambda I - \diag(u^k) \preccurlyeq 0, u^k \geq 0 \label{eq:rank_one_relaxed_oracle_KKT_dual_feasibility}\\
    \left(\left(\yhats_i^k\right)^2 - 1\right) u^k_i = 0 ~\forall i \in [s_k]  \label{eq:rank_one_relaxed_oracle_KKT_complementary_slackness_1}\\
    \left( \Ashift^{(k)} - \lambda I - \diag(u^k)  \right) \yhats^k = 0. \label{eq:rank_one_relaxed_oracle_KKT_complementary_slackness_2}
\end{gather}
\end{subequations}

Also, for each $k$ such that $\frac{p-q}{2}s_k \geq B_4 \sqrt{pn \log m}$, for each $j \in [s_k]$, under the event $E_1^{k,j} \cap \left\{\lambda_1\left(\Ashift^{k,j}\right) > \lambda^k\right\}$, the $(k,j)$th LOO relaxed oracle SDP~\eqref{eq:SDP_relaxed_oracle_LOO} has rank-one solution $\Yhats^{k,j} =\yhats^{k,j} {\yhats^{k,j}}^\top$ where $\onevec^\top \yhats^{k,j} \geq 0$, and there exists $u^{k,j} \in \R^{s_k}$ such that
\begin{subequations}
\label{eq:rank_one_LOO_relaxed_oracle_KKT}
\begin{gather}
    -1 \leq \yhats^{k,j} \leq 1 \\
    \Ashift^{k,j} - \lambda I - \diag(u^{k,j}) \preccurlyeq 0,  u^{k,j} \geq 0\label{eq:rank_one_LOO_relaxed_oracle_KKT_dual_feasibility} \\
    \left( \left(\yhats^{k,j}_i\right)^2 - 1 \right) u^{k,j}_i = 0 ~\forall i \in [s_k] \label{eq:rank_one_LOO_relaxed_oracle_KKT_complementary_slackness_1} \\
    \left( \Ashift^{k,j} - \lambda I - \diag(u^{k,j})  \right) \yhats^{k,j} = 0. \label{eq:rank_one_LOO_relaxed_oracle_KKT_complementary_slackness_2}
\end{gather}
\end{subequations}
\end{lem}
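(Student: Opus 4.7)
The plan is to apply Slater's condition to both relaxed SDPs to derive their KKT systems, eliminate the PSD dual variable via stationarity, and then exploit the fact that $\Ashift^{(k)}$ is a rank-one signal plus a noise matrix whose operator norm is controlled under $E_1$ in order to force the primal optimum to have rank at most one.

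First I would mirror the Slater-based argument of Lemma~\ref{lem:oracle_KKT_and_recovery_KKT} applied to~\eqref{eq:SDP_relaxed_oracle}: the matrix $\tfrac12 I_{s_k}$ is strictly feasible, so strong duality holds and KKT is both necessary and sufficient. Introducing a non-negative multiplier $u \in \R^{s_k}$ for the constraints $Y_{ii} \le 1$ and a PSD multiplier $P$ for $Y \succcurlyeq 0$, stationarity yields $P = \lambda I + \diag(u) - \Ashift^{(k)}$; the dual feasibility $P \succcurlyeq 0$ is precisely~\eqref{eq:rank_one_relaxed_oracle_KKT_dual_feasibility}, and complementary slackness becomes $u_i(Y_{ii}-1)=0$ together with $PY = 0$. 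The crucial structural step is to show $\dim(\ker P) \le 1$: write $P = Q - \tfrac{p-q}{2}\onevec\onevec^\top$ with $Q := \lambda I + \diag(u) - W^{(k)}$, and observe that the standing assumption $\lambda > \opnorm{W^{(k)}}$, combined with $u \ge 0$, forces $Q \succ 0$. Conjugating by $Q^{-1/2}$, the matrix $Q^{-1/2}PQ^{-1/2}$ differs from $I$ by a rank-one positive-semidefinite matrix, hence has at most one non-positive eigenvalue; by Sylvester's law of inertia the same holds for $P$, and together with $P \succcurlyeq 0$ this yields $\dim(\ker P) \le 1$.

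The complementary slackness $PY = 0$ then forces $\mathrm{col}(Y) \subseteq \ker P$, so every optimal $Y$ has rank at most one. To rule out $Y = 0$ under the hypothesis $\lambda_1(\Ashift^{(k)}) > \lambda$, I would exhibit a strictly better feasible point: letting $v$ be a unit top eigenvector of $\Ashift^{(k)}$, the matrix $\beta vv^\top$ with $\beta = 1/\infnorm{v}^2 > 0$ is feasible and achieves the positive objective value $\beta(\lambda_1(\Ashift^{(k)}) - \lambda) > 0$. Hence $\Yhats^k = \yhats^k (\yhats^k)^\top$ with $\yhats^k \neq 0$, and the sign of $\yhats^k$ can be chosen so that $\onevec^\top \yhats^k \ge 0$. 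Rewriting the KKT system in terms of $\yhats^k$: the constraint $Y_{ii} \le 1$ becomes $(\yhats^k_i)^2 \le 1$, i.e.\ $-1 \le \yhats^k \le 1$; the slackness $u^k_i(Y_{ii}-1)=0$ becomes~\eqref{eq:rank_one_relaxed_oracle_KKT_complementary_slackness_1}; and $PY = 0$ is equivalent, for rank-one $Y$, to $P\yhats^k = 0$, which is~\eqref{eq:rank_one_relaxed_oracle_KKT_complementary_slackness_2}.

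The LOO relaxed oracle SDP~\eqref{eq:SDP_relaxed_oracle_LOO} is handled by the verbatim argument with $\Ashift^{(k)}$ replaced by $\Ashift^{k,j}$ and $\lambda$ replaced by $\lambda^k$; the required inequality $\lambda^k > \opnorm{W^{k,j}}$ follows from the bound $\opnorm{W^{k,j}} \le B_1\sqrt{ps_k}$ built into the event $E_1^{k,j}$, together with the standing assumption that $\kappa$ is sufficiently large relative to $B_1$ and $B_8$. I do not anticipate a deep technical obstacle here; the only subtle point is the rank-one-subtraction argument producing $\dim(\ker P) \le 1$, which is a clean linear-algebraic consequence of the signal being rank one and of $\lambda$ dominating the noise operator norm.
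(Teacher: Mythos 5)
Your proposal is correct and follows essentially the same route as the paper: derive the KKT system via Slater's condition, use the fact that the dual slack matrix $\lambda I + \diag(u) - \Ashift^{(k)}$ is a positive definite matrix minus the rank-one signal $\tfrac{p-q}{2}\onevec\onevec^\top$ (relying on $\lambda > \opnorm{W^{(k)}}$) to conclude it has at most one zero eigenvalue, force rank $\le 1$ from complementary slackness, and rule out $Y=0$ by exhibiting a feasible rescaled top eigenvector with positive objective. The only differences are cosmetic choices of tool — you count non-positive eigenvalues via congruence and Sylvester's inertia where the paper uses Weyl's inequality, and you pass from $PY=0$ to rank one via kernel containment where the paper uses a von Neumann-type trace inequality — both of which are valid.
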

\begin{proof}
We start by writing the KKT conditions for the relaxed oracle SDP (\ref{eq:SDP_relaxed_oracle}). By a nearly identical argument as in Lemma \ref{lem:oracle_KKT_and_recovery_KKT}, for $\Yhats^i$ to be optimal there must exist $u^k \in \real^{s_k}$ such that
\begin{subequations}
\label{eq:relaxed_oracle_KKT_general_rank}
    \begin{gather}
    \Yhats^k \succcurlyeq 0, \Yhats^k_{ii} \leq 1 ~\forall i \in [s_k] \label{eq:relaxed_oracle_KKT_general_rank_primal_feasibility}\\
    u^k \geq 0,   \Ashift^{(k)} - \lambda I - \diag(u^k) \preccurlyeq 0 \\
    ((\Yhats^k)_{ii} - 1) u^k_i = 0 ~\forall i \in [s_k] \label{eq:relaxed_oracle_KKT_general_rank_complementary_slackness_1} \\
    \left( \Ashift^{(k)} - \lambda I - \diag(u^k) \right) \Yhats^k = 0_{s_k \times s_k}. \label{eq:relaxed_oracle_KKT_general_rank_complementary_slackness_2}
\end{gather}
\end{subequations}

Note $\diag(u^k) \succcurlyeq 0$ since $u^k \geq 0$. Recall that we are working under an event where $\lambda_1\left(\Ashift^{(k)}\right) > \lambda$. Notice \[\Ashift^{(k)} - \lambda I - \diag(u^k) = \frac{p-q}{2}J_{s_k \times s_k} +  W^{(k)}- \lambda I - \diag(u^k),\] and $\frac{p-q}{2}J_{s_k \times s_k}$ is rank $1$. Therefore by Weyl's inequality we have for any $i \in [s_k]$ that
\begin{align*}
    \lambda_i\left(\Ashift^{(k)} - \lambda I - \diag(u^k)\right) &= \lambda_i\left(\frac{p-q}{2}J_{s_k \times s_k} +  W^{(k)}- \lambda I - \diag(u^k)\right) \\
    & \leq \lambda_i\left(\frac{p-q}{2}J_{s_k \times s_k} \right) + \lambda_1\left(W^{(k)}- \lambda I - \diag(u^k)\right) \\
    & < \lambda_i\left(\frac{p-q}{2}J_{s_k \times s_k} \right)
\end{align*}
because $W^{(k)} \prec \lambda I$. Since $\lambda_i\left(\frac{p-q}{2}J_{s_k \times s_k} \right) = 0$ for $i \geq 2$, we have that $ \lambda_i\left(\Ashift^{(k)} - \lambda I - \diag(u^k)\right) < 0$ for $i \geq 2$. Now applying a Hermitian version of von Neumann's trace inequality \cite[Theorem 9.H.1.g and the discussion thereafter]{marshall1979inequalities}, we have
\begin{align}
    \tr\left( \Ashift^{(k)} - \lambda I - \diag(u^k) \right) \Yhats^k &\leq \sum_{i=1}^{s_k} \lambda_{i}\left(\Ashift^{(k)} - \lambda I - \diag(u^k)\right)\lambda_{i}(\Yhats^k) \nonumber \\
    & \leq \sum_{i=2}^{s_k} \lambda_{i}\left(\Ashift^{(k)} - \lambda I - \diag(u^k\right)\lambda_{i}(\Yhats^k) \label{eq:relaxed_oracle_complementarity_trace_bound}
\end{align}
where in the second line we use the fact that $\lambda_1\left(\Ashift^{(k)} - \lambda I - \diag(u^k)\right) \leq 0$ and $\lambda_1(\Yhats^k) \geq 0$. Now the only way for~\eqref{eq:relaxed_oracle_complementarity_trace_bound} to be $0$ and not $< 0$ is if $\lambda_{i}(\Yhats^k) = 0$ for all $i \geq 2$. Therefore $\Yhats^k$ has rank at most $1$ in this case. But also $\Yhats^k$ cannot be $0$, since $0$ has a lower objective value than $v^k {v^k}^\top$ (the top eigenvector of $\Ashift^{(k)}$) since \[\left \langle v^k {v^k}^\top , \Ashift^{(k)} \right \rangle - \lambda \tr(v^k {v^k}^\top  )= {v^k}^\top  \Ashift^{(k)} v^k -\lambda > 0.\] Thus $\Yhats^k$ must be rank $1$.

Now letting $\Yhats^k =\yhats^k {\yhats^k}^\top$, to resolve the sign ambiguity of $\yhats^k$ we define $\onevec^\top \yhats^k \geq 0$. Finally we establish the optimality conditions~\eqref{eq:rank_one_relaxed_oracle_KKT}. \eqref{eq:relaxed_oracle_KKT_general_rank_primal_feasibility} follows from~\eqref{eq:relaxed_oracle_KKT_general_rank_primal_feasibility} once we note that $\Yhats^k_{ii} = \left(\yhats^k_i\right)^2$. Similarly~\eqref{eq:rank_one_relaxed_oracle_KKT_complementary_slackness_1} follows from~\eqref{eq:relaxed_oracle_KKT_general_rank_complementary_slackness_1}. \eqref{eq:rank_one_relaxed_oracle_KKT_complementary_slackness_2} follows from~\eqref{eq:relaxed_oracle_KKT_general_rank_complementary_slackness_2} by right-multiplying by $\yhats^k$ and noting that $\twonorm{\yhats^k} \neq 0$ since $\yhats^k \neq 0$.

A completely analogous argument suffices to prove the analogous statements about the LOO relaxed oracle SDP solutions.
\end{proof}

Next we are able to show that the LOO relaxed oracle SDP solutions $\yhats^{k,j}$ have low correlation with the corresponding left-out noise vectors $w^{k,j}$.

\begin{lem}
\label{lem:noise_LOO_soln_inner_product_concentration}
With probability at least $1 - O(m^{-3})$, for all clusters $k$ satisfying $\frac{p-q}{2}s_k \geq B_4\sqrt{pn \log m}$, for each $j \in [s_k]$,
\begin{align*}
    \left| \langle w^{k,j}, \yhats^{k,j} \rangle \right| \leq B_5\sqrt{ p s_k\log m}.
\end{align*}
\end{lem}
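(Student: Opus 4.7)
The essential insight is that the LOO construction was specifically designed to achieve independence: since $A^{k,j}$ is obtained from $A^{(k)}$ by ``erasing'' the $j$-th row and column of noise, $A^{k,j}$ depends only on the entries of $W^{(k)}$ outside the $j$-th row and column, and therefore $\yhats^{k,j}$, being a deterministic function of $A^{k,j}$, is independent of the left-out noise vector $w^{k,j}$. After conditioning on these ``other'' noise entries, the target inner product becomes a weighted sum of independent bounded mean-zero variables against a fixed vector, which Bernstein's inequality handles directly.

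Concretely, first I would fix $k$ with $\frac{p-q}{2}s_k \geq B_4\sqrt{pn\log m}$ and $j\in[s_k]$, and condition on the $\sigma$-algebra generated by $\{W^{(k)}_{i\ell} : i \neq j,\ \ell \neq j\}$. Under this conditioning, $\yhats^{k,j}$ is deterministic---on the event that the LOO relaxed oracle SDP has the trivial solution the claim is automatic, so I would take $\yhats^{k,j} = 0$ there, and otherwise use the rank-one representative from Lemma \ref{lem:SDP_relaxed_oracle_rank_one}. In the nontrivial case, that lemma gives $|\yhats^{k,j}_i| \le 1$, hence $\twonorm{\yhats^{k,j}}^2 \le s_k$. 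Meanwhile, the coordinates of $w^{k,j}$ remain independent, mean-zero, bounded by $1$ in magnitude, each with variance at most $p$.

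Next I would apply Bernstein's inequality to $\sum_i w^{k,j}_i \yhats^{k,j}_i$. The variance proxy is at most $p \twonorm{\yhats^{k,j}}^2 \le ps_k$ and each summand is bounded by $1$, so with threshold $t = B_5\sqrt{ps_k\log m}$ and using the standing assumption $p \geq \log m / n$ (which, combined with the cluster-size hypothesis, forces $ps_k \gtrsim \log m$ so that the quadratic branch of Bernstein controls the tail), the conditional failure probability is at most $m^{-C(B_5)}$ with $C(B_5)$ arbitrarily large for $B_5$ large. A union bound over the at most $n \le m$ valid pairs $(k,j)$ then delivers the stated $1 - O(m^{-3})$ probability.

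I do not expect any genuine obstacle: the real work has already been invested in setting up the LOO construction and proving its rank-one property in Lemma \ref{lem:SDP_relaxed_oracle_rank_one}. The only piece of bookkeeping to keep in mind is the halving of the $j$-th diagonal entry in the definition of $w^{k,j}$, but this preserves both the independence structure and the entrywise variance and boundedness bounds used in Bernstein, and it is the reason $A^{k,j}$ is truly free of $w^{k,j}$ in the first place.
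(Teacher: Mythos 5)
Your proposal is correct and follows essentially the same route as the paper: exploit the independence of $\yhats^{k,j}$ (a function of $A^{k,j}$, which contains no noise from row/column $j$) from $w^{k,j}$, set $\yhats^{k,j}=0$ on the event where the LOO SDP is not guaranteed a rank-one solution, use $\infnorm{\yhats^{k,j}}\le 1$ to control the Bernstein variance and range, and union bound over the at most $n\le m$ pairs $(k,j)$. The only cosmetic difference is that the paper first normalizes by $1/\sqrt{s_k}$ and rescales at the end, whereas you apply Bernstein to the unnormalized inner product directly.
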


We define the event in Lemma \ref{lem:noise_LOO_soln_inner_product_concentration} to be $E_2$. We let $E_3 = E_1 \cap E_2$, and note by the union bound that $\P(E_3) \geq 1- O(m^{-3})$. From here on the proof will be deterministic. First we check that the LOO relaxed oracle problems have nonzero solutions when the corresponding relaxed oracle problem has a nonzero solution.

\begin{lem}
\label{lem:eval_pert_bound_s5}
Under the event $E_1$, for all clusters $k$ such that $\frac{p-q}{2}s_k \geq B_4 \sqrt{pn \log m}$, 
for each $j \in [s_k]$,
    \[\left| \lambda_1\left(\Ashift^{(k)}\right) - \lambda_1\left(\Ashift^{k,j}\right) \right| \leq \frac{B_8 \sqrt{p \log m}}{\sqrt{s_k}},\]
and therefore by the choice of $\lambda^k$, for any $k$ such that $\lambda_1 \left( \Ashift^{(k)}\right) > \lambda$, we have $\lambda_1 \left( \Ashift^{k,j}\right) > \lambda^k$ for all $j \in [s_k]$.
\end{lem}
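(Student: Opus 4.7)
The plan is to bound the difference $\lambda_1(\Ashift^{(k)}) - \lambda_1(\Ashift^{k,j})$ separately from both sides, using two different tools. Let $H := e_j (w^{k,j})^\top + w^{k,j} e_j^\top$, so that $\Ashift^{(k)} = \Ashift^{k,j} + H$. Note $H$ is rank at most two and $\opnorm{H} \leq 2 \twonorm{w^{k,j}} \lesssim \sqrt{p s_k}$ by Lemma~\ref{lem:op_norm_concentration}.

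For the direction $\lambda_1(\Ashift^{(k)}) \leq \lambda_1(\Ashift^{k,j}) + O(\sqrt{p\log m}/\sqrt{s_k})$ I apply Theorem~\ref{thm:general_eval_pert_bound} with $t = T = 1$, taking $M = \Ashift^{k,j}$ as the base matrix and $H$ as the perturbation; the relevant top eigenvector is $v_1 = v^{k,j}$. The key point is that $v^{k,j}$ is independent of $w^{k,j}$ by construction, so Lemma~\ref{lem:LOO_evec_noise_inner_prod_concentration} applies. Using $|v^{k,j}_j| \leq 2/\sqrt{s_k}$ from~\eqref{eq:LOO_evec_inf_norm_simple_bound} and $|\langle w^{k,j}, v^{k,j}\rangle| \leq B_5 \sqrt{p\log m}$ from~\eqref{eq:LOO_evec_left_out_noise_ip_bound}, I obtain
\[
\bigl|(v^{k,j})^\top H v^{k,j}\bigr| = 2\bigl|v^{k,j}_j \langle w^{k,j}, v^{k,j}\rangle\bigr| \lesssim \frac{\sqrt{p \log m}}{\sqrt{s_k}},
\]
so this quantity can serve as $h$ in Theorem~\ref{thm:general_eval_pert_bound}. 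Similarly $\twonorm{Hv^{k,j}} \leq |\langle w^{k,j}, v^{k,j}\rangle| + \twonorm{w^{k,j}}\,|v^{k,j}_j| \lesssim \sqrt{p\log m}$, which is the crucial improvement over $\opnorm{H}$.

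For the eigengap, applying Weyl's inequality to $\Ashift^{k,j} = \frac{p-q}{2} J_{s_k \times s_k} + W^{k,j}$ and using the bound $\opnorm{W^{k,j}} \lesssim \sqrt{ps_k}$ from the event $E_1^{k,j}$ (contained in $E_1$), I get $\lambda_1(\Ashift^{k,j}) - \lambda_2(\Ashift^{k,j}) \geq \frac{p-q}{2}s_k - 2B_1\sqrt{ps_k} \gtrsim (p-q)s_k \gtrsim \sqrt{pn\log m}$ for $B_4$ sufficiently large. The hypothesis $\lambda_1 - \lambda_{T+1} > \opnorm{H} - h - 2\opnorm{Hv_1}$ is then easily verified. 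Theorem~\ref{thm:general_eval_pert_bound} now yields
\[
\lambda_1(\Ashift^{(k)}) - \lambda_1(\Ashift^{k,j}) \lesssim \frac{\sqrt{p\log m}}{\sqrt{s_k}} + \frac{p \log m}{\sqrt{pn \log m}} = \frac{\sqrt{p\log m}}{\sqrt{s_k}} + \sqrt{\frac{p\log m}{n}} \lesssim \frac{\sqrt{p\log m}}{\sqrt{s_k}},
\]
using $s_k \leq n$.

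The reverse direction is immediate from the Rayleigh quotient: using $v^{k,j}$ as a test vector for $\Ashift^{(k)}$ gives $\lambda_1(\Ashift^{(k)}) \geq (v^{k,j})^\top \Ashift^{(k)} v^{k,j} = \lambda_1(\Ashift^{k,j}) + (v^{k,j})^\top H v^{k,j}$, so $\lambda_1(\Ashift^{k,j}) - \lambda_1(\Ashift^{(k)}) \leq |(v^{k,j})^\top H v^{k,j}| \lesssim \sqrt{p\log m}/\sqrt{s_k}$ by the same computation as above. Combining both directions yields the claimed bound with a suitable constant $B_8$. The final conclusion about $\lambda_1(\Ashift^{k,j}) > \lambda^k$ follows directly from the definition $\lambda^k = \lambda - B_8\sqrt{p\log m}/\sqrt{s_k}$ together with the hypothesis $\lambda_1(\Ashift^{(k)}) > \lambda$.

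The main obstacle is guaranteeing that the forward direction produces the tight $\sqrt{p\log m}/\sqrt{s_k}$ rate rather than something like $\sqrt{p s_k^2/(n\log m)}$. This is precisely where the improved Theorem~\ref{thm:general_eval_pert_bound} is indispensable: the classical Weyl bound gives only $\opnorm{H} \lesssim \sqrt{p s_k}$, and the Eldridge--Belkin bound (Theorem~\ref{thm:eldridge_eval_pert}) has an $\opnorm{H}^2/\text{gap}$ term that is too large. Replacing $\opnorm{H}$ by $\opnorm{Hv^{k,j}}$ in the numerator is what allows the bound to inherit only the logarithmic factor $\sqrt{p\log m}$ rather than the full cluster-size-dependent factor, and this relies crucially on the independence of $v^{k,j}$ from $w^{k,j}$.
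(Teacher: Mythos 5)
Your proposal is correct and follows essentially the same route as the paper: the same rank-two decomposition $H = e_j (w^{k,j})^\top + w^{k,j} e_j^\top$, the same bounds on $|(v^{k,j})^\top H v^{k,j}|$ and $\twonorm{H v^{k,j}}$ via Lemma~\ref{lem:LOO_evec_noise_inner_prod_concentration}, the same Weyl-based eigengap estimate, and the same application of Theorem~\ref{thm:general_eval_pert_bound} for the forward direction. The only cosmetic difference is that you prove the reverse direction by an explicit Rayleigh-quotient computation, whereas the paper cites \cite[Theorem 5]{eldridge2018unperturbed}, which is the same bound.
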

\begin{proof} 
Fix a cluster $k$ such that $\frac{p-q}{2}s_k \geq B_4 \sqrt{pn \log m}$.
We can show the desired bound for arbitrary $j$. Let $\wtilde = w^{k,j}$, $\vtilde = v^{k,j}$, and $\Wtilde = W^{k,j}$. Note $\opnorm{\Wtilde} \leq \opnorm{W^{(k)}} \leq B_1 \sqrt{ps_k}$. Let $\Delta = \wtilde e_j^\top + e_j \wtilde^\top$ and $\Atilde = \Ashift^{k,j} = \Ashift^{(k)} - \Delta$. First we bound $|\vtilde^\top \Delta \vtilde|$ and $\twonorm{\Delta \vtilde}$.

Using Lemma \ref{lem:LOO_evec_noise_inner_prod_concentration},
\begin{align*}
    \left|\vtilde^\top \Delta \vtilde\right| = 2|\wtilde^\top \vtilde| |\vtilde_j | \leq \frac{2B_5\sqrt{ p \log m}}{\sqrt{s_k}}.
\end{align*}
Next, again using Lemma \ref{lem:LOO_evec_noise_inner_prod_concentration},
\begin{align*}
    \twonorm{\Delta \vtilde} = \twonorm{\wtilde \vtilde_j + e_j \wtilde^\top \vtilde} \leq \twonorm{\wtilde} \infnorm{\vtilde} + |\wtilde^\top \vtilde| \leq B_1 \sqrt{ps_k} \frac{2}{\sqrt{s_k}} + B_5\sqrt{ p \log m} \leq B_7 \sqrt{p \log m}.
\end{align*}
Now we can use these bounds in the eigenvalue perturbation inequalities. By Theorem \ref{thm:general_eval_pert_bound} we have
\begin{align*}
    \lambda_1\left(\Ashift^{(k)}\right) &= \lambda_1\left(\Atilde + \Delta\right) \\
    &\leq \lambda_1\left(\Atilde\right) + |\vtilde^\top \Delta \vtilde| + \frac{\twonorm{\Delta \vtilde}^2}{\lambda_1\left(\Atilde\right) - \lambda_2\left(\Atilde\right) - \opnorm{\Delta}} \\
    & \leq \lambda_1\left(\Atilde\right) + \frac{2B_5\sqrt{ p \log m}}{\sqrt{s_k}} + \frac{B_7^2 p \log m}{\frac{p-q}{2}s_k - 4B_1\sqrt{ps_k}} \\
    & \leq \lambda_1\left(\Atilde\right) + \frac{2B_5\sqrt{ p \log m}}{\sqrt{s_k}} + \frac{B_7^2 p \log m}{ (\kappa-B_1) \sqrt{pn \log m} - 5B_1\sqrt{ps_k}} \\
    & \leq \lambda_1\left(\Atilde\right) + \frac{2B_5\sqrt{ p \log m}}{\sqrt{s_k}} + \frac{B_7^2 p \log m}{ (\kappa-6B_1) \sqrt{pn \log m}} \\
    & \leq \lambda_1\left(\Atilde\right) + \frac{2B_5\sqrt{ p \log m}}{\sqrt{s_k}} + \frac{B_7^2 p \log m}{\sqrt{pn \log m}} \\
    & \leq \lambda_1\left(\Atilde\right) + \frac{B_8 \sqrt{p \log m}}{\sqrt{s_k}}.
\end{align*}
In the second inequality we used our bounds for $|\vtilde^\top \Delta \vtilde|$ and $\twonorm{\Delta \vtilde}$, as well as the facts that $\opnorm{\Delta} \leq 2 \twonorm{\wtilde} \leq 2B_1 \sqrt{ps_k}$, $\lambda_1\left(\Atilde\right) \geq \frac{p-q}{2}s_k - \opnorm{\Wtilde} \geq \frac{p-q}{2}s_k - B_1\sqrt{ps_k}$, and $\lambda_2\left(\Atilde\right) \leq \opnorm{\Wtilde} \leq B_1 \sqrt{ps_k}$. In the third inequality we use the fact that $\frac{p-q}{2}s_k > \left(1-\frac{B_1}{\kappa}\right)\kappa \sqrt{pn \log m} = (\kappa - B_1)\sqrt{pn \log m}$ from~\eqref{eq:critical_signal_lambda_relationship_s5}. Finally we assume $\kappa$ is large enough so that $ \kappa > 6B_1 $ in the fourth inequality.

For the other direction, we can simply use \cite[Theorem 5]{eldridge2018unperturbed} and our bound on $|\vtilde^\top \Delta \vtilde|$ to conclude
\begin{align*}
    \lambda_1\left(\Ashift^{(k)}\right) &= \lambda_1\left(\Atilde + \Delta\right) 
    \geq \lambda_1\left(\Atilde\right) - |\vtilde^\top \Delta \vtilde| \geq \lambda_1\left(\Atilde\right) - \frac{2B_5\sqrt{ p \log m}}{\sqrt{s_k}} \geq \lambda_1\left(\Atilde\right) - \frac{B_8 \sqrt{p \log m}}{\sqrt{s_k}}.
\end{align*}
\end{proof}
Now we prove two preliminary facts about the relaxed oracle SDP and the LOO relaxed oracle SDPs. Note that Lemma \ref{lem:eval_pert_bound_s5} states that the event $E_1 \cap \left\{\lambda_1\left(\Ashift^{(k)}\right) > \lambda\right\} $ implies the event $E_1^{k,j} \cap \left\{\lambda_1\left(\Ashift^{k,j}\right) > \lambda^k\right\}$, which could slightly simplify the statements of Lemmas \ref{lem:relaxed_oracle_norm_lower_bound} and \ref{lem:relaxed_oracle_inner_prod_with_onevec_lower_bound} below.

\begin{lem}
\label{lem:relaxed_oracle_norm_lower_bound}
Suppose $k$ satisfies the condition~\eqref{eq:critical_signal_lambda_relationship_s5}. Under the event $E_1 \cap \left\{\lambda_1\left(\Ashift^{(k)}\right) > \lambda\right\}$, we have 
\begin{align*}
    \twonorm{\yhats^k} \geq \sqrt{s_k}  \left( 1 - \frac{B_2}{\kappa - B_1} \right).
\end{align*}
Also, under the event $E_1^{k,j} \cap \left\{\lambda_1\left(\Ashift^{k,j}\right) > \lambda^k\right\}$, we have
\begin{align*}
    \twonorm{\yhats^{k,j}} \geq \sqrt{s_k}  \left( 1 - \frac{B_2}{\kappa - B_1} \right).
\end{align*}
\end{lem}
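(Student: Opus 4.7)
}

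The plan is to lower bound $\twonorm{\yhats^k}$ by comparing the optimal value of the relaxed oracle SDP against a carefully rescaled rank-one test point built from the top eigenvector $v^k$ of $\Ashift^{(k)}$. The key observation is that the relaxed oracle SDP only constrains $Y_{ii} \le 1$ rather than $\twonorm{y} \le 1$ for a rank-one solution $Y = yy^\top$; this looseness is what will translate the small $\infty$-norm of $v^k$ (guaranteed by Lemma~\ref{lem:infty_pert_bound_s5}) into a large $2$-norm of $\yhats^k$.

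First I would define the test point $Z := \frac{1}{\infnorm{v^k}^2}\, v^k (v^k)^\top$. This matrix is PSD with diagonal entries $(v^k_i)^2/\infnorm{v^k}^2 \le 1$, so $Z$ is feasible for~\eqref{eq:SDP_relaxed_oracle}. Since $\twonorm{v^k}=1$, the relaxed oracle objective at $Z$ equals $(\lambda_1(\Ashift^{(k)}) - \lambda)/\infnorm{v^k}^2$, which is strictly positive under the assumed event. On the other hand, by optimality of $\Yhats^k = \yhats^k(\yhats^k)^\top$ together with the Rayleigh bound $(\yhats^k)^\top \Ashift^{(k)} \yhats^k \le \lambda_1(\Ashift^{(k)})\, \twonorm{\yhats^k}^2$, the optimal value is at most $(\lambda_1(\Ashift^{(k)}) - \lambda)\,\twonorm{\yhats^k}^2$. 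Chaining these inequalities and cancelling the positive factor $\lambda_1(\Ashift^{(k)}) - \lambda$ yields
\begin{equation*}
\twonorm{\yhats^k}^2 \;\ge\; \frac{1}{\infnorm{v^k}^2}.
\end{equation*}

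It remains to plug in the $\infty$-norm bound on $v^k$. Since $k$ satisfies~\eqref{eq:critical_signal_lambda_relationship_s5}, we have $(p-q)s_k > 2(\kappa - B_1)\sqrt{pn \log m}$, and choosing $\kappa$ large enough that $\kappa - B_1 \ge B_3$ ensures Lemma~\ref{lem:infty_pert_bound_s5} applies. The triangle inequality combined with~\eqref{eq:infty_pert_bound_orig_s5} and $s_k \le n$ then gives
\begin{equation*}
\infnorm{v^k} \;\le\; \frac{1}{\sqrt{s_k}} + \frac{B_2}{(\kappa - B_1)\sqrt{n}} \;\le\; \frac{1}{\sqrt{s_k}}\Bigl(1 + \frac{B_2}{\kappa - B_1}\Bigr).
\end{equation*}
Using $1/(1+x) \ge 1-x$ for $x \ge 0$ gives the desired $\twonorm{\yhats^k} \ge \sqrt{s_k}\bigl(1 - B_2/(\kappa - B_1)\bigr)$. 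The LOO statement follows from the identical argument with $\Ashift^{(k)}$, $\lambda$, $v^k$ replaced by $\Ashift^{k,j}$, $\lambda^k$, $v^{k,j}$; the test point $\frac{1}{\infnorm{v^{k,j}}^2} v^{k,j}(v^{k,j})^\top$ is still feasible for~\eqref{eq:SDP_relaxed_oracle_LOO}, the event $E_1^{k,j}$ contains~\eqref{eq:infty_pert_bound_loo_s5}, and $\lambda_1(\Ashift^{k,j}) > \lambda^k$ is assumed.

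I do not expect any serious obstacle; the only thing to watch is that one uses the correct $\infty$-norm eigenvector bound for each case and that the critical-regime condition~\eqref{eq:critical_signal_lambda_relationship_s5} is strong enough (for $\kappa$ large) to trigger Lemma~\ref{lem:infty_pert_bound_s5}. In particular, the argument crucially exploits the weaker constraint set of the \emph{relaxed} oracle SDP, which allows the rank-one solution to have $\twonorm{\yhats^k}^2$ close to $s_k$ rather than being forced down to $1$ as would happen under a $\twonorm{\cdot}$ constraint.
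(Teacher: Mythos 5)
Your proposal is correct and follows essentially the same route as the paper: the test point $\frac{1}{\infnorm{v^k}^2}v^k (v^k)^\top$ is exactly the paper's comparison point $v^k/\infnorm{v^k}$, and the paper likewise concludes $\twonorm{\yhats^k} \ge 1/\infnorm{v^k}$ before invoking the $\ell_\infty$ eigenvector bound of Lemma~\ref{lem:infty_pert_bound_s5} and the critical-regime condition~\eqref{eq:critical_signal_lambda_relationship_s5}. Your explicit objective-value chaining (cancelling the positive factor $\lambda_1(\Ashift^{(k)})-\lambda$) and the slightly different algebra via $s_k \le n$ are only cosmetic differences from the paper's argument.
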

\begin{proof}
Recalling $v^k$ as the top eigenvector of $\Ashift^{(k)}$, $\frac{v^k}{\infnorm{v^k}}$ is feasible as all its entries are in $[-1,1]$. Furthermore, for rank-one solutions $xx^\top$, setting $x = \frac{v^k}{\infnorm{v^k}}$ is optimal among all vectors $x$ with norm $\leq \frac{1}{\infnorm{v^k}}$ (even comparing to $x$ with $\infnorm{x} > 1$) since $\frac{v^k}{\infnorm{v^k}}$ is a rescaled top eigenvector of $\Ashift^{(k)}$ and $\left \langle \Ashift^{(k)}, xx^\top \right \rangle = x^\top \Ashift^{(k)} x$. So we must have $\twonorm{\yhats^k} \geq \frac{1}{\infnorm{v^k}}$, as the optimal solution must have larger $2$-norm than $\frac{v^k}{\infnorm{v^k}}$. Now we use Lemma \ref{lem:infty_pert_bound_s5} to upper-bound $\infnorm{v^k}$.
 
By (\ref{eq:infty_pert_bound_orig_s5}) we have
\begin{align*}
    \frac{1}{\infnorm{v^k}} &\geq \frac{1}{\frac{1}{\sqrt{s_k}} + 2B_2 \frac{\sqrt{p \log m}}{(p-q)s_k}} \\
    &\geq \frac{\sqrt{s_k}}{1 + 2B_2 \frac{\sqrt{ps_k \log m}}{(p-q)s_k}} \\
    &\geq \sqrt{s_k} \left( 1 - 2B_2 \frac{\sqrt{ps_k \log m}}{(p-q)s_k} \right) \\
    & \geq \sqrt{s_k} \left( 1 - \frac{B_2}{\kappa - B_1} \right)
\end{align*}
where we used the fact that $\frac{1}{1+x} \geq 1-x$ and then that
\[\left(1 - \frac{B_1}{\kappa} \right) \kappa \sqrt{ps_k \log m} \leq \left(1 - \frac{B_1}{\kappa}  \right) \kappa \sqrt{pn \log m} < \frac{p-q}{2}s_k\]
using~\eqref{eq:critical_signal_lambda_relationship_s5}, which implies 
\[ \frac{2 \sqrt{p s_k \log m}}{(p-q)s_k} \leq \frac{1}{\left(1 - \frac{B_1}{\kappa} \right) \kappa} = \frac{1}{\kappa-B_1}. \]
Therefore
\begin{align*}
    \twonorm{\yhats^k} \geq \sqrt{s_k}  \left( 1 - \frac{B_2}{\kappa - B_1} \right).
\end{align*}

By simply starting with (\ref{eq:infty_pert_bound_loo_s5}), which is part of $E_1^{k,j}$, instead of $(\ref{eq:infty_pert_bound_orig_s5})$, we have the same bound for $\yhats^{k,j}$.
\end{proof}

\begin{lem}
\label{lem:relaxed_oracle_inner_prod_with_onevec_lower_bound}
Suppose $k$ satisfies the condition~\eqref{eq:critical_signal_lambda_relationship_s5}. Under the event $E_1 \cap \left\{\lambda_1\left(\Ashift^{(k)}\right) > \lambda\right\}$, we have 
\begin{align*}
    \eb^\top \yhats^k &\geq \sqrt{s_k}\left(1 -   \frac{B_9}{\kappa-B_1} \right).
\end{align*}
Also, under the event $E_1^{k,j} \cap \left\{\lambda_1\left(\Ashift^{k,j}\right) > \lambda^k\right\}$, we have
\begin{align*}
    \eb^\top \yhats^{k,j} &\geq \sqrt{s_k}\left(1 -   \frac{B_9}{\kappa-B_1} \right).
\end{align*}
\end{lem}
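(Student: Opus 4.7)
The plan is a primal test-point comparison. Decompose $\yhats^k = \alpha\eb + \beta t$ with $t\perp\eb$, $\twonorm{t}=1$, so that $\eb^\top\yhats^k = \alpha$ and $\twonorm{\yhats^k}^2 = \alpha^2+\beta^2$; the sign convention $\onevec^\top\yhats^k \geq 0$ from Lemma~\ref{lem:SDP_relaxed_oracle_rank_one} ensures $\alpha\geq 0$. The conclusion will follow once I establish a bound of the form $\beta \leq (C/\kappa)\,\alpha$: combined with the norm lower bound $\alpha^2 + \beta^2 \geq s_k\bigl(1 - \tfrac{B_2}{\kappa-B_1}\bigr)^2$ from Lemma~\ref{lem:relaxed_oracle_norm_lower_bound}, this yields $\alpha \geq \sqrt{s_k}\bigl(1 - \tfrac{B_9}{\kappa-B_1}\bigr)$ for an appropriate $B_9$ absorbing the constants. (If $\beta=0$ then $\alpha=\twonorm{\yhats^k}$ and the conclusion is immediate from Lemma~\ref{lem:relaxed_oracle_norm_lower_bound}.)

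To bound $\beta$, consider the feasible test point $y^{\ast} := \twonorm{\yhats^k}\,\eb$; feasibility follows from $\twonorm{\yhats^k}^2 = \tr\Yhats^k \leq s_k$ (via the constraint $\Yhats^k_{ii}\leq 1$), which forces $|y^{\ast}_i| = \twonorm{\yhats^k}/\sqrt{s_k}\leq 1$. Writing the rank-one objective as $F(y) = \tfrac{p-q}{2}(\onevec^\top y)^2 + y^\top W^{(k)} y - \lambda\twonorm{y}^2$, the regularization term cancels in $F(\yhats^k) - F(y^{\ast})$ since $\twonorm{y^{\ast}} = \twonorm{\yhats^k}$, while the signal term contributes $-\tfrac{p-q}{2}s_k\beta^2$ (because $(\onevec^\top y^{\ast})^2 = s_k(\alpha^2+\beta^2)$ exceeds $(\onevec^\top\yhats^k)^2 = s_k\alpha^2$). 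Expanding $\yhats^{k\top}W^{(k)}\yhats^k = \alpha^2\,\eb^\top W^{(k)}\eb + 2\alpha\beta\,\eb^\top W^{(k)}t + \beta^2\,t^\top W^{(k)}t$ and subtracting $y^{\ast\top} W^{(k)} y^{\ast} = (\alpha^2+\beta^2)\,\eb^\top W^{(k)}\eb$, the optimality inequality $F(\yhats^k) \geq F(y^{\ast})$ becomes
\begin{align*}
\tfrac{p-q}{2}s_k\beta^2 \;\leq\; -\beta^2\,\eb^\top W^{(k)}\eb + 2\alpha\beta\,\eb^\top W^{(k)}t + \beta^2\,t^\top W^{(k)}t \;\leq\; 2(\alpha+\beta)\beta\,\opnorm{W^{(k)}}.
\end{align*}
Dividing by $\beta$, applying $\opnorm{W^{(k)}}\leq B_1\sqrt{ps_k}$ from Lemma~\ref{lem:op_norm_concentration}, and using $\tfrac{p-q}{2}s_k \geq (\kappa-B_1)\sqrt{pn\log m}\geq (\kappa-B_1)\sqrt{ps_k}$ from~\eqref{eq:critical_signal_lambda_relationship_s5} produces $(\kappa-3B_1)\beta\leq 2B_1\alpha$, which is the sought $\beta=O(\alpha/\kappa)$ bound once $\kappa$ is large enough.

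The LOO case is verbatim the same with $W^{(k)}$ replaced by $W^{k,j}$ (whose operator norm is still bounded by $B_1\sqrt{ps_k}$ under $E_1^{k,j}$) and $\lambda$ replaced by $\lambda^k$; crucially, because $\lambda$ enters $F$ only through the $-\lambda\twonorm{y}^2$ term that cancels between the two equinorm test points, the precise value of $\lambda^k$ plays no role in the argument. The main obstacle -- resolved exactly by the choice $y^{\ast} = \twonorm{\yhats^k}\,\eb$, aligned with $\eb$ and of matching $2$-norm -- is engineering the comparison so that both the regularization and the diagonal noise term $\eb^\top W^{(k)}\eb$ cancel cleanly, isolating the coercive $\tfrac{p-q}{2}s_k\beta^2$ against cross-terms that are controlled by $\opnorm{W^{(k)}}$; more natural candidates such as $\onevec$ or $v^k/\infnorm{v^k}$ would leave residual $\lambda$- or signal-contributions of the same order as the quantity we wish to extract in the mid-size regime.
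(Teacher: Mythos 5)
Your proof is correct, and it reaches the conclusion by a different comparison than the paper's. The paper simply uses that the optimal objective value is nonnegative (comparison against the zero solution): $\frac{p-q}{2}(\onevec^\top y)^2 + y^\top W^{(k)} y - \lambda\twonorm{y}^2 \ge 0$, then bounds the noise by $\opnorm{W^{(k)}}\twonorm{y}^2$ and uses the \emph{upper} end of the critical window~\eqref{eq:critical_signal_lambda_relationship_s5} to replace $\lambda$ by $(1-O(1/\kappa))\frac{p-q}{2}s_k$, yielding $(\eb^\top y)^2 \ge (1-O(1/\kappa))\twonorm{y}^2$ directly; it then finishes, exactly as you do, with the norm lower bound of Lemma~\ref{lem:relaxed_oracle_norm_lower_bound}. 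You instead compare against the equal-norm aligned competitor $\twonorm{\yhats^k}\,\eb$, so the $\lambda$-term cancels and the signal deficit $\frac{p-q}{2}s_k\beta^2$ must be paid for by noise cross-terms of order $\beta(\alpha+\beta)\opnorm{W^{(k)}}$; this needs only the \emph{lower} end of the window (to make $\frac{p-q}{2}s_k \gg \opnorm{W^{(k)}}$) and is agnostic to the exact value of the regularizer, which is why the LOO case with $\lambda^k$ is genuinely verbatim for you, whereas the paper has to track the small discrepancy $\lambda - \lambda^k$ (it absorbs an extra $B_8/\kappa$ into the constant). The two intermediate conclusions are equivalent — $\beta \lesssim \alpha/\kappa$ is the same as $(\eb^\top y)^2 \ge (1-O(1/\kappa))\twonorm{y}^2$ — so the approaches are interchangeable; your constant $B_9$ will differ numerically from the paper's $B_8+B_1+B_6+2B_2$, which is harmless since only its existence as an absolute constant is used downstream. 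Two small points to tidy: fix the sign of $\beta$ (take $\beta\ge 0$ by flipping $t$) before dividing by it, and note that $\onevec^\top t=0$ follows from $t\perp\eb$ so that $(\onevec^\top\yhats^k)^2=s_k\alpha^2$ as claimed.
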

\begin{proof}
For convenience we abbreviate $y := \yhats^k$.
Since $\lambda_1\left(\Ashift^{(k)} - \lambda I\right)>0$,
\begin{align*}
    y^\top \left(\Ashift^{(k)} - \lambda I\right) y &= \frac{p-q}{2}(\onevec^\top y)^2 +  y^\top W y - \lambda \twonorm{y}^2 \geq 0
\end{align*}
so starting by rearranging this inequality, we have
\begin{align*}
    \frac{p-q}{2}(\onevec^\top y)^2 & \geq \lambda \twonorm{y}^2 -y^\top W^{(k)} y \\
    & \geq \lambda \twonorm{y}^2 - \opnorm{W^{(k)}}\twonorm{y}^2 \\
    & \geq \kappa \sqrt{pn \log m} \twonorm{y}^2 - B_1 \sqrt{p s_k} \twonorm{y}^2 \\
    & = \left(1 - \frac{B_1\sqrt{ps_k}}{\kappa \sqrt{pn \log m}} \right)\kappa \sqrt{pn \log m} \twonorm{y}^2 \\
    &\geq \left(1- \frac{B_1}{\kappa }\right)\kappa \sqrt{pn \log m} \twonorm{y}^2 \\
    &\geq \left(1- \frac{B_1}{\kappa }\right)\left(1 - \frac{B_6}{\kappa} \right) \frac{p-q}{2}s_k \twonorm{y}^2 \\
    & \geq \left(1- \frac{B_1 + B_6}{\kappa }\right) \frac{p-q}{2}s_k \twonorm{y}^2
\end{align*}
where we used the fact that under $E_1$ we have $\opnorm{W^{(k)}} \leq B_1\sqrt{ps_k}$, and then finally that
\[\kappa \sqrt{pn \log m} \geq \frac{p-q}{2}s_k \left(1 + \frac{B_6}{\kappa} \right)^{-1}  \geq \frac{p-q}{2}s_k\left(1 - \frac{B_6}{\kappa} \right)  \]
which follows from condition~\eqref{eq:critical_signal_lambda_relationship_s5} and the fact that $\frac{1}{1+x} \geq 1-x$.
Dividing both sides by $\frac{p-q}{2}s_k$, noting $ \eb = \onevec / \sqrt{s_k} $, and using Lemma \ref{lem:relaxed_oracle_norm_lower_bound} we have
\begin{align*}
    (\eb^\top y)^2 &\geq \left(1- \frac{B_1 + B_6}{\kappa }\right) \twonorm{y}^2 \\
    &\geq \left(1- \frac{B_1 + B_6}{\kappa }\right) \left(\sqrt{s_k}  \left( 1 - \frac{B_2}{\kappa - B_1} \right) \right)^2 \\
    & \geq s_k \left(1-\frac{B_1 + B_6 + 2B_2}{\kappa - B_1}\right) .
\end{align*}
Taking square roots and using the fact that $\sqrt{1-x} \geq 1-x$ we obtain $\eb^\top y \geq \sqrt{s_k}\left(1 -  \frac{B_1 + B_6 + 2B_2}{\kappa-B_1} \right)$.

A nearly identical argument works for $\twonorm{\yhats^{k,j}}$ (note $\opnorm{W^{k,j}} \leq B_1 \sqrt{ps_k}$ is guaranteed under $E_1^{k,j}$) except that $\lambda^k$ is slightly smaller than $\lambda$. Here we can simply bound very loosely
\[\lambda^k = \lambda - \frac{B_8 \sqrt{p \log m}}{\sqrt{s_k}} \geq \kappa \sqrt{pn \log m} - B_8 \sqrt{pn \log m} = \left(1 - \frac{B_8}{\kappa} \right) \kappa \sqrt{pn \log m},\]
which will simply add a term of $B_8$ to the numerator above, leading to the bound
\[\eb^\top \yhats^{k,j} \geq \sqrt{s_k}\left(1 -  \frac{B_8 + B_1 + B_6 + 2B_2}{\kappa-B_1} \right).\]
Lastly we set $B_9 = B_8 + B_1 + B_6 + 2B_2$.
\end{proof}

Finally we are able to show that the relaxed oracle SDP solution (\ref{eq:SDP_relaxed_oracle}) is all-positive, and thus it is also a solution of the (un-relaxed) oracle SDP (\ref{eq:SDP_regularized_oracle}). For notational convenience, we will only show that the first entry of $\yhats^{k}$ is positive, since the same argument applies to all other entries. Thus we will abbreviate the rank-one relaxed oracle SDP solution by $y := \yhats^k$ and the rank-one 1st LOO relaxed oracle solution by $\ytilde := \yhats^{k,1}$, and similarly we let $Y := \Yhats^k$, $\Ytilde := \Yhats^{k,1}, u:=u^k$, and $\utilde:=u^{k,j}$. We also let $\lambdatilde = \lambda^{k}$, $\Wtilde := W^{k,1}$, and $\wtilde := w^{k,1}$. Since we are working with a fixed $k$ we will also abbreviate $s := s_k$.

Now we can show one of the key properties of the LOO relaxed oracle solution, which is that $\ytilde_1$ is close to $1$.
\begin{lem}
\label{lem:loo_close_to_1}
Supposing cluster $k$ satisfies the condition~\eqref{eq:critical_signal_lambda_relationship_s5}, under the event $E_3 \cap \left\{\lambda_1\left(\Ashift^{(k)}\right) > \lambda\right\}$,
\begin{align*}
    \ytilde_1 \geq 1 - B_{10} \frac{1}{\kappa - 1}.
\end{align*}
\end{lem}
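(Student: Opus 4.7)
The plan is to read off the first coordinate of the KKT stationarity condition for the $(k,1)$th LOO relaxed oracle SDP and exploit the fact that the first row of $W^{k,1}$ vanishes by construction, which reduces the question to the lower bound on $\onevec^\top \ytilde$ already supplied by Lemma~\ref{lem:relaxed_oracle_inner_prod_with_onevec_lower_bound}.

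First I would confirm the LOO problem's hypotheses are in force. Since $E_3 \supseteq E_1$ and we condition on $\lambda_1(\Ashift^{(k)}) > \lambda$, Lemma~\ref{lem:eval_pert_bound_s5} yields $\lambda_1(\Ashift^{k,1}) > \lambdatilde$; moreover $E_3 \supseteq E_1^{k,1}$ since each $E_1^{k,j}$ was defined as a superset of $E_1$. Thus Lemma~\ref{lem:SDP_relaxed_oracle_rank_one} supplies a rank-one solution $\ytilde \ytilde^\top$ and a dual variable $\utilde \geq 0$ satisfying~\eqref{eq:rank_one_LOO_relaxed_oracle_KKT}, and Lemma~\ref{lem:relaxed_oracle_inner_prod_with_onevec_lower_bound} gives $\onevec^\top \ytilde \geq s_k\bigl(1 - B_9/(\kappa - B_1)\bigr)$. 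Using the convention $\Ashift^{(k)} = \tfrac{p-q}{2}J + W^{(k)}$ and the vanishing of the first row of $W^{k,1}$, the first coordinate of the stationarity relation $(\Ashift^{k,1} - \lambdatilde I - \diag(\utilde))\ytilde = 0$ reads
\[
\tfrac{p-q}{2}\,\onevec^\top \ytilde \;=\; (\lambdatilde + \utilde_1)\,\ytilde_1.
\]

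Next I would case-split on $\ytilde_1$. If $\ytilde_1 = 1$ the conclusion is immediate. If $\ytilde_1 = -1$, the right-hand side above is $\leq 0$ while the left-hand side is strictly positive by the bound on $\onevec^\top \ytilde$, a contradiction for $\kappa$ large. So $|\ytilde_1| < 1$, and complementary slackness~\eqref{eq:rank_one_LOO_relaxed_oracle_KKT_complementary_slackness_1} forces $\utilde_1 = 0$, yielding the clean identity
\[
\ytilde_1 \;=\; \frac{(p-q)\,\onevec^\top \ytilde}{2\lambdatilde}.
\]

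Finally I would combine three bounds: the above on $\onevec^\top \ytilde$; the critical-regime inequality~\eqref{eq:critical_signal_lambda_relationship_s5} giving $\tfrac{p-q}{2}s_k \geq (\kappa - B_1)\sqrt{pn\log m}$; and $\lambdatilde \leq \lambda \leq (\kappa+1)\sqrt{pn\log m}$. Multiplying,
\[
\ytilde_1 \;\geq\; \frac{\kappa - B_1}{\kappa + 1}\!\left(1 - \frac{B_9}{\kappa - B_1}\right) \;=\; 1 - \frac{B_1 + B_9 + 1}{\kappa + 1} \;\geq\; 1 - \frac{B_{10}}{\kappa - 1}
\]
for $B_{10} := B_1 + B_9 + 1$. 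I do not anticipate a real analytical obstacle here: the lemma follows directly from the rank-one KKT structure together with the perturbation Lemma~\ref{lem:eval_pert_bound_s5} and the spread Lemma~\ref{lem:relaxed_oracle_inner_prod_with_onevec_lower_bound}. The only delicate bookkeeping is verifying that the drop from $\lambda$ to $\lambdatilde$ does not spoil the ratio, which it does not, since both quantities sit at the scale $\Theta(\sqrt{pn\log m})$ and their difference $B_8\sqrt{p\log m/s_k}$ is of lower order.
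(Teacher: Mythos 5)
Your proof is correct and follows essentially the same route as the paper: read off the first coordinate of the LOO stationarity condition, use that the first row of $W^{k,1}$ vanishes, dispose of the boundary cases of $\ytilde_1$ via complementary slackness, and then combine Lemma~\ref{lem:relaxed_oracle_inner_prod_with_onevec_lower_bound} with the critical-regime lower bound on $\frac{p-q}{2}s_k$ and $\lambdatilde\leq\lambda$. (The only blemish is the reversed inclusion ``$E_3\supseteq E_1^{k,1}$,'' which should read $E_3\subseteq E_1^{k,1}$; your justification makes clear you mean the correct direction.)
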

\begin{proof}
By the complementary slackness conditions~\eqref{eq:rank_one_LOO_relaxed_oracle_KKT_complementary_slackness_2} for the LOO relaxed oracle SDP,
\begin{align*}
    0 &= \utilde \circ \ytilde - \frac{p-q}{2} \onevec \onevec^\top \ytilde - \Wtilde \ytilde+\lambdatilde \ytilde
\end{align*}
so
\begin{align*}
    \ytilde_{1} & =\frac{1}{\lambdatilde}\left(\frac{p-q}{2}\onevec^{\top}\ytilde+ e_1^\top \Wtilde \ytilde -\utilde_{1}\ytilde_1\right)\\
 & =\frac{1}{\lambdatilde}\left(\frac{p-q}{2}\onevec^{\top}\ytilde-\utilde_{1}\ytilde_1\right).
\end{align*}
Now either $\ytilde_1 = 1$ and we are done, or otherwise $\utilde_{1}\ytilde_1 \leq 0$ by complementary slackness, in which case we have
\begin{align*}
    \ytilde_{1} & \geq \frac{1}{\lambdatilde}\frac{p-q}{2}\onevec^{\top}\ytilde \\
    &=  \frac{1}{\lambdatilde}\frac{p-q}{2}\sqrt{s}\eb^{\top}\ytilde\\
    &\geq \frac{1}{\kappa \sqrt{pn \log m}} \frac{p-q}{2}\sqrt{s} \sqrt{s}\left(1 -   \frac{B_9}{\kappa-B_1} \right) \\
    & \geq \left(1 -   \frac{B_1}{\kappa} \right) \left(1 -   \frac{B_9}{\kappa-B_1} \right) \\
    & \geq 1 - \frac{B_1 + B_9}{\kappa - B_1}
 \end{align*}
where we used Lemma \ref{lem:relaxed_oracle_inner_prod_with_onevec_lower_bound}, the fact that $\frac{1}{\lambdatilde} \geq \frac{1}{(\kappa+1) \sqrt{pn \log m}}$ since $\lambdatilde \leq \lambda$,
and then the fact that $\left(1 - \frac{B_1}{\kappa } \right) \kappa\sqrt{pn \log m} < \frac{p-q}{2}s$ from condition~\eqref{eq:critical_signal_lambda_relationship_s5}.
\end{proof}

Now let $y = \alpha \ytilde + \beta t$, where $t$ is a unit vector orthogonal to $\ytilde$. Also we may assume for convenience that $\beta \geq 0$ by possibly scaling $t$ by $-1$. If we can show that $\alpha$ is close to $1$ and $\beta$ is small, then in light of Lemma \ref{lem:loo_close_to_1}, we will succeed in showing that $y_1$ is also close to $1$ and thus positive.
First we show $\alpha$ is close to $1$.
\begin{lem}
\label{lem:two_sided_alpha_bound}
Supposing cluster $k$ satisfies the condition~\eqref{eq:critical_signal_lambda_relationship_s5}, under the event $E_3 \cap \left\{\lambda_1\left(\Ashift^{(k)}\right) > \lambda\right\}$,
\begin{align*}
    1 - \frac{B_{11}}{\kappa-B_1} \leq \alpha \leq 1 + \frac{B_{11}}{\kappa - B_1}.
\end{align*}
\end{lem}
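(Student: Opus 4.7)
}
The plan is to show that both $y = \yhats^k$ and $\ytilde = \yhats^{k,1}$ are simultaneously close to the all-ones vector $\onevec_{s_k}$, and then read off the bound on $\alpha = \langle y, \ytilde\rangle / \twonorm{\ytilde}^2$ by expanding the inner product around $\onevec$. Attempting to bound $\twonorm{y-\ytilde}$ directly (e.g.\ via $|\alpha - 1| \leq \twonorm{y-\ytilde}/\twonorm{\ytilde}$) would yield only $|\alpha - 1| = O(1/\sqrt{\kappa})$, whereas triangulating through $\onevec$ preserves the desired $O(1/\kappa)$ rate because the two error vectors $y - \onevec$ and $\ytilde - \onevec$ end up appearing only in pairwise products.

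Writing $\varepsilon := B_9/(\kappa - B_1)$ for brevity and $s = s_k$, the feasibility constraint $Y_{ii} \leq 1$ in the (LOO) relaxed oracle SDP forces $y_i, \ytilde_i \in [-1,1]$, so $\twonorm{y}, \twonorm{\ytilde} \leq \sqrt{s}$. Combined with Lemma~\ref{lem:relaxed_oracle_inner_prod_with_onevec_lower_bound}, which gives $\onevec^\top y = \sqrt{s}\,\eb^\top y \geq s(1-\varepsilon)$ (and the same for $\ytilde$), I would expand
\[
\twonorm{y - \onevec}^2 = \twonorm{y}^2 - 2\onevec^\top y + s \leq s - 2s(1-\varepsilon) + s = 2s\varepsilon,
\]
and similarly obtain $\twonorm{\ytilde - \onevec}^2 \leq 2s\varepsilon$.

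For the lower bound on $\alpha$, I then decompose
\[
\langle y, \ytilde\rangle = s + \onevec^\top(y-\onevec) + \onevec^\top(\ytilde - \onevec) + \langle y - \onevec,\, \ytilde - \onevec\rangle,
\]
bounding the two linear terms below by $-s\varepsilon$ each (using $\onevec^\top y \geq s(1-\varepsilon)$) and the cross term by $-\twonorm{y-\onevec}\twonorm{\ytilde - \onevec} \geq -2s\varepsilon$. This gives $\langle y, \ytilde\rangle \geq s(1 - 4\varepsilon) > 0$ for $\kappa$ large enough, so using $\twonorm{\ytilde}^2 \leq s$ I get $\alpha \geq s(1-4\varepsilon)/s = 1 - 4\varepsilon$.

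For the upper bound, since the argument above has already shown $\alpha > 0$, Cauchy--Schwarz combined with the lower bound from Lemma~\ref{lem:relaxed_oracle_norm_lower_bound} yields
\[
\alpha \;\leq\; \frac{\twonorm{y}}{\twonorm{\ytilde}} \;\leq\; \frac{\sqrt{s}}{\sqrt{s}\,\bigl(1 - B_2/(\kappa - B_1)\bigr)} \;\leq\; 1 + \frac{2B_2}{\kappa - B_1},
\]
where the last inequality uses $1/(1-x) \leq 1 + 2x$ for $x \leq 1/2$, valid for $\kappa$ sufficiently large. Choosing $B_{11}$ large enough to absorb $4B_9$ and $2B_2$ gives the stated two-sided bound. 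There is no serious obstacle here; the only conceptual subtlety is recognizing that one must pivot through $\onevec$ (the common signal direction of both SDP instances, inherited from the rank-one block $\frac{p-q}{2}J_{s\times s}$) rather than directly comparing $y$ and $\ytilde$, so that the error terms enter only multiplicatively.
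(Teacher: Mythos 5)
Your proposal is correct and follows essentially the same route as the paper: both bounds rest on Lemma~\ref{lem:relaxed_oracle_norm_lower_bound} and Lemma~\ref{lem:relaxed_oracle_inner_prod_with_onevec_lower_bound}, and the lower bound in both cases triangulates through $\onevec$ via $\twonorm{y-\onevec}^2, \twonorm{\ytilde-\onevec}^2 \lesssim s/\kappa$. Your direct bilinear expansion of $\langle y,\ytilde\rangle$ around $\onevec$ (versus the paper's law-of-cosines/AM-GM rearrangement) and your Cauchy--Schwarz upper bound (versus the paper's cruder $\langle y,\ytilde\rangle\leq s$) are only cosmetic variants of the same estimates.
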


\begin{proof}
By orthogonality of $t$ and $\ytilde$ we have
\begin{align*}
    \alpha &= \frac{\langle \ytilde, y \rangle}{\twonorm{\ytilde}^2}.
\end{align*}
Now we give upper and lower bounds on $\alpha$. First we prove the upper bound. Note $\langle y, \ytilde \rangle \leq s$ since each coordinate is bounded by $1$, and using Lemma \ref{lem:relaxed_oracle_norm_lower_bound}, we have
\begin{align*}
    \alpha &= \frac{\langle \ytilde, y \rangle}{\twonorm{\ytilde}^2} \leq \frac{s}{s\left(1 - \frac{B_2}{\kappa - B_1}\right)^2}  \leq \frac{1}{1 - \frac{2B_2}{\kappa - B_1} } \leq 1 + \frac{4B_2}{\kappa - B_1} 
\end{align*}
where we use the fact that $\frac{1}{1-a} \leq 1 + 2a$ for $a \in [0,0.5]$, and assume $\kappa-B_1 \geq 4 B_2$.

For the lower bound, write $\langle \ytilde, y \rangle = \frac{1}{2}\left(\twonorm{\ytilde}^2 + \twonorm{y}^2 - \twonorm{y - \ytilde}^2 \right)$. For the terms $\twonorm{\ytilde}^2$ and $\twonorm{y}^2$ we have lower bounds from Lemma \ref{lem:relaxed_oracle_norm_lower_bound}. Now we need to upper bound $\twonorm{y - \ytilde}$, which we do by showing that both $y$ and $\ytilde$ are close to $\onevec$ in $2$-norm and then using the triangle inequality. Using the Law of Cosines again we have
\begin{align*}
    \twonorm{y - \onevec}^2 &= \twonorm{y}^2 + \twonorm{\onevec}^2 - 2 \langle y, \onevec \rangle \\
    & \leq 2s - 2 \langle y, \onevec \rangle \\
    &= 2s - 2 \sqrt{s} \eb^\top y \\
    & \leq 2s - 2s\left(1 - \frac{B_9}{\kappa - B_1} \right) \\
    &= 2s  \frac{B_9}{\kappa - B_1} 
\end{align*}
where the inequality steps came from the fact that $\infnorm{y} \leq 1$ and Lemma \ref{lem:relaxed_oracle_inner_prod_with_onevec_lower_bound}. The same argument also shows that $\twonorm{\ytilde - \onevec}^2 \leq 2s  \frac{B_9}{\kappa - B_1}$. Then using the triangle inequality and AM-GM we have
\begin{align*}
    \twonorm{y - \ytilde}^2 &\leq 2 \left(\twonorm{y - \onevec}^2 + \twonorm{\ytilde - \onevec}^2 \right) \\
    &\leq 8s  \frac{B_9}{\kappa - B_1}.
\end{align*}

Combining with the above arguments we have that
\begin{align*}
    \langle \ytilde, y \rangle &= \frac{1}{2}\left(\twonorm{\ytilde}^2 + \twonorm{y}^2 - \twonorm{y - \ytilde}^2 \right) \\
    & \geq \frac{1}{2}\left(s\left(1 - \frac{B_2}{\kappa - B_1}\right)^2 + s\left(1 - \frac{B_2}{\kappa - B_1}\right)^2 - 8s  \frac{B_9}{\kappa - B_1} \right)\\
    & \geq \frac{1}{2}\left(s\left(1 - \frac{2B_2}{\kappa - B_1}\right) + s\left(1 - \frac{2B_2}{\kappa - B_1}\right) - 8s  \frac{B_9}{\kappa - B_1} \right)\\
    & s \left(1 - \frac{2B_2 + 4B_9}{\kappa - B_1}\right).
\end{align*}
Now using the bound $\twonorm{\ytilde}^2 \leq s$, we have
\begin{align*}
    \alpha &= \frac{\langle \ytilde, y \rangle}{\twonorm{\ytilde}^2} \\
    &\geq 1 - \frac{2B_2 + 4B_9}{\kappa - B_1}.
\end{align*}
\end{proof}

Now we derive a perturbation bound which will be key in showing that $\beta$ is small.
\begin{lem}
\label{lem:perturb_bound}
We have 
\begin{align}
\left\langle \diag(\utilde)-\left(\Atilde - \lambdatilde I \right), Y\right\rangle   & \le\left\langle \left(\Atilde - \lambdatilde I \right)-\left(\Ashift^{(k)} - \lambda I \right),\Ytilde-Y\right\rangle .\label{eq:perturb_bound2}
\end{align}
\end{lem}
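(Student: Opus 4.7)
The plan is to derive \eqref{eq:perturb_bound2} by combining the optimality/KKT information for the two rank-one solutions $\Ytilde$ and $Y$ from Lemma \ref{lem:SDP_relaxed_oracle_rank_one}, along the Lagrangian lines sketched in Section \ref{sec:proof_sketch_of_main_theorem}. The key observation I would exploit is that the relaxed oracle SDP \eqref{eq:SDP_relaxed_oracle} and the LOO relaxed oracle SDP \eqref{eq:SDP_relaxed_oracle_LOO} have identical feasible sets, so $Y$ is feasible for the LOO problem and $\Ytilde$ is feasible for the original one, which lets me deploy both optimality statements.

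I would first set $\widetilde{P} := \diag(\utilde) - (\Atilde - \lambdatilde I)$, so the left-hand side of \eqref{eq:perturb_bound2} is $\langle \widetilde{P}, Y\rangle$. By the LOO KKT conditions \eqref{eq:rank_one_LOO_relaxed_oracle_KKT_dual_feasibility} and \eqref{eq:rank_one_LOO_relaxed_oracle_KKT_complementary_slackness_2} (the latter right-multiplied by $\ytilde^\top$), $\widetilde{P} \succcurlyeq 0$ and $\widetilde{P}\,\Ytilde = 0$, giving $\langle \widetilde{P}, \Ytilde\rangle = 0$. Hence $\langle \widetilde{P}, Y\rangle = \langle \widetilde{P}, Y - \Ytilde\rangle = \langle \diag(\utilde), Y - \Ytilde\rangle - \langle \Atilde - \lambdatilde I, Y - \Ytilde\rangle$. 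For the first term, note that complementary slackness \eqref{eq:rank_one_LOO_relaxed_oracle_KKT_complementary_slackness_1} together with $\Ytilde_{ii} = \ytilde_i^2$ forces $\Ytilde_{ii} = 1$ on the support of $\utilde$, while feasibility gives $Y_{ii} \leq 1$; thus $\langle \diag(\utilde), Y - \Ytilde\rangle = \sum_i \utilde_i (Y_{ii} - 1) \leq 0$. This produces the intermediate bound $\langle \widetilde{P}, Y\rangle \leq \langle \Atilde - \lambdatilde I, \Ytilde - Y\rangle$.

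To finish, I would invoke optimality of $Y$ for the original relaxed oracle SDP together with feasibility of $\Ytilde$ for the same SDP, which yields $\langle \Ashift^{(k)} - \lambda I, \Ytilde - Y\rangle \leq 0$; adding the nonnegative quantity $-\langle \Ashift^{(k)} - \lambda I, \Ytilde - Y\rangle$ to the right-hand side of the intermediate bound gives precisely \eqref{eq:perturb_bound2}. The proof is essentially bookkeeping, and I do not anticipate any real obstacle: the only care point is recognizing that the two SDPs share the same feasible set so the two optimality inequalities can be chained. The genuinely delicate work associated with this perturbation bound is carried out later, when the right-hand side of \eqref{eq:perturb_bound2} must be controlled in terms of $\beta = \twonorm{y - \proj_{\ytilde}y}$ via the improved eigenvalue perturbation estimate on $|\lambda - \lambdatilde|$ from Lemma \ref{lem:eval_pert_bound_s5}.
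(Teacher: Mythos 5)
Your proof is correct and follows essentially the same route as the paper's: establish $\langle \widetilde{P},\Ytilde\rangle=0$ via the LOO complementary slackness, bound $\langle \diag(\utilde), Y-\Ytilde\rangle=\sum_i \utilde_i(Y_{ii}-1)\le 0$, and then add the nonnegative suboptimality term $-\langle \Ashift^{(k)}-\lambda I, \Ytilde-Y\rangle\ge 0$ coming from optimality of $Y$ and feasibility of $\Ytilde$ for the original relaxed oracle SDP. Your explicit remark that the two SDPs share the same feasible set is exactly the (implicit) justification the paper relies on.
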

\begin{proof}
    This essentially follows from the LOO relaxed oracle KKT conditions~\eqref{eq:rank_one_LOO_relaxed_oracle_KKT}.
    Condition~\eqref{eq:rank_one_LOO_relaxed_oracle_KKT_complementary_slackness_2} gives $\left( \diag(\utilde) - \left( \Atilde - \lambdatilde I \right)\right) \Ytilde = 0$ (since $\Ytilde = \yhats \yhats^\top$). Also by condition~\eqref{eq:rank_one_LOO_relaxed_oracle_KKT_complementary_slackness_1} and since $\utilde \geq 0$ from~\eqref{eq:rank_one_LOO_relaxed_oracle_KKT_dual_feasibility},
    \[
    \left \langle \diag(\utilde), Y - \Ytilde \right \rangle = \left \langle \diag(\utilde), Y - I \right \rangle = \sum_{i=1}^s \utilde_i \left(Y_{ii} -1 \right) \leq 0.
    \]
    Using these two facts we have
    \begin{align}
        \left\langle \diag(\utilde)-\left(\Atilde - \lambdatilde I \right), Y\right\rangle & = \left\langle \diag(\utilde)-\left(\Atilde - \lambdatilde I \right), Y - \Ytilde \right\rangle \nonumber\\
        &= \left\langle\Atilde - \lambdatilde I , \Ytilde - Y \right\rangle + \left \langle \diag(\utilde), Y - \Ytilde \right \rangle \nonumber\\
        & \leq \left\langle\Atilde - \lambdatilde I , \Ytilde - Y \right\rangle. \label{eq:pert_bound_halfway_step}
    \end{align}
    Now since $\Ytilde$ is feasible for the relaxed oracle SDP~\eqref{eq:SDP_relaxed_oracle} and $Y$ is optimal, we have
    \[\left\langle \Ashift^{(k)} - \lambda I ,  \Ytilde - Y \right\rangle \leq 0 \implies \left\langle - \left( \Ashift^{(k)} - \lambda I \right),  \Ytilde - Y \right\rangle \geq 0.\]
    Adding this to~\eqref{eq:pert_bound_halfway_step}, we obtain~\eqref{eq:perturb_bound2}.
\end{proof}

\begin{lem}
\label{lem:beta_bound}
Supposing cluster $k$ satisfies the condition~\eqref{eq:critical_signal_lambda_relationship_s5}, under the event $E_3 \cap \left\{\lambda_1\left(\Ashift^{(k)}\right) > \lambda\right\}$,
    \begin{align*}
        \beta \leq \frac{B_{14}}{\kappa-B_{15}}
    \end{align*}
\end{lem}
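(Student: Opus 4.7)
The strategy is to apply Lemma~\ref{lem:perturb_bound}, lower bounding its LHS by $(1 - O(1/\kappa))\lambda \beta^2$ and upper bounding its RHS by $O(\lambda/\kappa^2) + O(\lambda \beta/\kappa) + O(\lambda \beta^2/\kappa)$, and then solving the resulting quadratic inequality in $\beta$.

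First I rewrite the LHS as $y^{\top} \widetilde{P} y$, where $\widetilde{P} := \lambdatilde I + \diag(\utilde) - \Atilde$. By the KKT conditions~\eqref{eq:rank_one_LOO_relaxed_oracle_KKT_dual_feasibility} and~\eqref{eq:rank_one_LOO_relaxed_oracle_KKT_complementary_slackness_2}, $\widetilde{P} \succcurlyeq 0$ and $\widetilde{P} \ytilde = 0$. With $y = \alpha \ytilde + \beta t$ and $t \perp \ytilde$, this immediately simplifies to $y^{\top} \widetilde{P} y = \beta^2 \, t^{\top} \widetilde{P} t$. The matrix $\Atilde - \diag(\utilde) = \frac{p-q}{2} J + \Wtilde - \diag(\utilde)$ has $\ytilde$ as a top eigenvector with eigenvalue $\lambdatilde$, and Weyl's inequality combined with $\diag(\utilde) \succcurlyeq 0$ gives $\lambda_2(\Atilde - \diag(\utilde)) \le \opnorm{\Wtilde} \le B_1 \sqrt{p s}$. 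Since $t$ is a unit vector orthogonal to the simple top eigenvector $\ytilde$, we obtain $t^{\top} \widetilde{P} t \ge \lambdatilde - B_1 \sqrt{p s}$, and using $\lambdatilde \ge \lambda - B_8 \sqrt{p \log m / s}$ together with $\sqrt{p s} \le \sqrt{p n} \le \lambda / (\kappa \sqrt{\log m})$ this is at least $(1 - O(1/\kappa)) \lambda$.

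Next I expand the RHS via $(\Atilde - \lambdatilde I) - (\Ashift^{(k)} - \lambda I) = -\Delta + (\lambda - \lambdatilde) I$, where $\Delta := e_1 \wtilde^{\top} + \wtilde e_1^{\top}$. The diagonal contribution $(\lambda - \lambdatilde)(\twonorm{\ytilde}^2 - \twonorm{y}^2)$ is handled by substituting $\twonorm{y}^2 = \alpha^2 \twonorm{\ytilde}^2 + \beta^2$ and using $|1 - \alpha^2| = O(1/\kappa)$ from Lemma~\ref{lem:two_sided_alpha_bound}, $\twonorm{\ytilde}^2 \le s$, and $\lambda - \lambdatilde = B_8 \sqrt{p \log m / s}$; this produces $O(\sqrt{p s \log m}/\kappa) + O(\beta^2 \sqrt{p \log m / s}) = O(\lambda/\kappa^2) + O(\lambda \beta^2 / \kappa)$. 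The off-diagonal contribution $-\langle \Delta, \Ytilde - Y\rangle = 2(y_1 \wtilde^{\top} y - \ytilde_1 \wtilde^{\top} \ytilde)$ I expand via $y_1 = \alpha \ytilde_1 + \beta t_1$ and $\wtilde^{\top} y = \alpha \wtilde^{\top} \ytilde + \beta \wtilde^{\top} t$ into $(\alpha^2 - 1)\ytilde_1 \wtilde^{\top}\ytilde + \alpha \beta(\ytilde_1 \wtilde^{\top} t + t_1 \wtilde^{\top}\ytilde) + \beta^2 t_1 \wtilde^{\top} t$. I control $|\wtilde^{\top} \ytilde| \le B_5 \sqrt{p s \log m}$ via Lemma~\ref{lem:noise_LOO_soln_inner_product_concentration} (the crucial use of event $E_2$ and the independence between $\wtilde$ and $\ytilde$), and bound $|\wtilde^{\top} t| \le \twonorm{\wtilde} \le B_1 \sqrt{p s}$ by Cauchy-Schwarz and Lemma~\ref{lem:op_norm_concentration}, together with $|\ytilde_1|, |t_1| \le 1$. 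Using $\sqrt{p s \log m} \le \lambda / \kappa$ and $\sqrt{p s} \le \lambda / \kappa$ this piece is also bounded by $O(\lambda/\kappa^2) + O(\lambda \beta / \kappa) + O(\lambda \beta^2 / \kappa)$.

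Combining both sides of Lemma~\ref{lem:perturb_bound} and dividing by $\lambda$ yields an inequality of the form $(1 - D_1/\kappa)\beta^2 \le D_2/\kappa^2 + D_3 \beta/\kappa + D_4 \beta^2 /\kappa$ for absolute constants $D_i$ depending only on the $B_i$'s. For $\kappa$ sufficiently large the $\beta^2/\kappa$ term is absorbed into the left side, leaving $\beta^2 - (D_3'/\kappa)\beta - D_2'/\kappa^2 \le 0$, and the quadratic formula gives $\beta \le B_{14}/(\kappa - B_{15})$ for suitable constants. The main obstacle is controlling $|\wtilde^{\top} t|$: since $t$ is defined from the primal solution $y$, it depends on $\wtilde$, and no independence- or Bernstein-type argument is available as it was for $|\wtilde^{\top} \ytilde|$. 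One must settle for the crude Cauchy-Schwarz bound $\twonorm{\wtilde} = O(\sqrt{p s})$, and the argument only closes because $\sqrt{p s}$ is a factor $\kappa \sqrt{\log m}$ smaller than $\lambda$, just enough to place the resulting terms in the $\beta / \kappa$ and $\beta^2 / \kappa$ slots rather than as an uncontrolled $O(\lambda \beta)$ contribution.
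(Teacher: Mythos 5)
Your proposal is correct and follows essentially the same route as the paper: invoke Lemma~\ref{lem:perturb_bound}, lower-bound the left side by $\beta^2$ times the spectral gap of $\widetilde{P}$ via Weyl's inequality, upper-bound the right side by expanding $-\Delta+(\lambda-\lambdatilde)I$ against $\ytilde\ytilde^\top-yy^\top$ with $y=\alpha\ytilde+\beta t$ (using Lemma~\ref{lem:noise_LOO_soln_inner_product_concentration} for $\wtilde^\top\ytilde$ and Cauchy--Schwarz for $\wtilde^\top t$), and solve the resulting quadratic in $\beta$. Your bookkeeping is in fact slightly tidier than the paper's --- grouping the $(\alpha^2-1)\ytilde_1\wtilde^\top\ytilde$ and $(\lambda-\lambdatilde)(1-\alpha^2)\twonorm{\ytilde}^2$ terms puts the $\beta$-free part of the RHS at $O(\lambda/\kappa^2)$ rather than $O(\lambda/\kappa)$, which is what makes the quadratic deliver the stated $O(1/\kappa)$ rate rather than $O(1/\sqrt{\kappa})$.
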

\begin{proof}
    First we relate the LHS of~\eqref{eq:perturb_bound2} to $\beta$. The event $E_3 \cap \left\{\lambda_1\left(\Ashift^{(k)}\right) > \lambda\right\}$ implies the event $E_1^{k,j} \cap \left\{\lambda_1\left(\Ashift^{k,j}\right) > \lambda^k\right\}$ by Lemma \ref{lem:eval_pert_bound_s5}, so we may use~\eqref{eq:rank_one_LOO_relaxed_oracle_KKT_complementary_slackness_2} which states $\left(\lambdatilde I  + \diag(\utilde) - \Atilde \right)\ytilde = 0$. Since $\ytilde$ is nonzero this implies that $\ytilde$ is an eigenvector with eigenvalue $0$.
    
    Also we use a similar calculation to that within the proof of Lemma \ref{lem:SDP_relaxed_oracle_rank_one} to lower bound the remaining eigenvalues of $\lambdatilde I  + \diag(\utilde) - \Atilde$. By Weyl's inequality for any $i \in [s_k]$ with $i < s_k$ we have
\begin{align}
    \lambda_i\left(\lambdatilde I  + \diag(\utilde) - \Atilde  \right) &= \lambda_i\left(\lambdatilde I + \diag(\utilde) -\Wtilde -\frac{p-q}{2}J_{s_k \times s_k}\right) \nonumber\\
    & \geq \lambda_i\left( -\frac{p-q}{2}J_{s_k \times s_k}\right) + \lambda_{s_k}\left(\lambdatilde I  + \diag(\utilde) -\Wtilde \right)\nonumber\\
    & =  \lambda_{s_k}\left(\lambdatilde I  + \diag(\utilde) -\Wtilde \right)\nonumber\\
    & \geq  \lambda_{s_k}\left(\lambdatilde I  -\Wtilde \right) \nonumber\\
    & \geq  \lambdatilde  -  \opnorm{\Wtilde} \nonumber\\
    & \geq \kappa \sqrt{pn \log m} - \frac{B_8 \sqrt{p \log m}}{\sqrt{s_k}} - B_1 \sqrt{ps_k}\nonumber\\
    & \geq \left(\kappa - B_8 - B_1 \right) \sqrt{pn \log m} \nonumber
\end{align}
where we use the fact that $\lambda_i\left( -\frac{p-q}{2}J_{s_k \times s_k}\right) = 0$ for $i < s_k$.

Therefore using $y = \alpha \ytilde + \beta t$ in the LHS of~\eqref{eq:perturb_bound2} we obtain
\begin{align}
    \left\langle \diag(\utilde)-\left(\Atilde - \lambdatilde I \right), Y\right\rangle &= y^\top\left(\lambdatilde I  + \diag(\utilde) - \Atilde  \right) y \nonumber\\
    &= \beta^2 t^\top \left(\lambdatilde I  + \diag(\utilde) - \Atilde  \right) t \nonumber\\
    & \geq \beta^2 \left(\kappa - B_8 - B_1 \right) \sqrt{pn \log m} \label{eq:pert_bound_inequality_LHS_lower_bound}
\end{align} 
using that $t$ is a unit vector orthogonal to $\ytilde$ as well as the above facts.

Next expanding the RHS of~\eqref{eq:perturb_bound2} we have
\begin{align*}
    \left\langle \left(\Atilde - \lambdatilde I \right)-\left(\Ashift^{(k)} - \lambda I \right),\Ytilde-Y\right\rangle &= \left\langle -\wtilde e_1^\top - e_1 \wtilde^\top + (\lambdatilde-\lambda)I,\ytilde \ytilde^\top-yy^\top\right\rangle \\
    &= 2y_1 \wtilde^\top y - 2\ytilde_1 \wtilde^\top \ytilde + \left\langle(\lambdatilde-\lambda)I,\ytilde \ytilde^\top-yy^\top\right\rangle \\
    & \leq 2y_1 \wtilde^\top y + 2 \left|\ytilde_1 \right| \left|\wtilde^\top \ytilde  \right| + \left|\lambdatilde - \lambda \right|\sum_{j=1}^s \left|y_j^2 - \ytilde_j^2 \right| \\
    & \leq 2y_1 \wtilde^\top y  + 2 \left|\wtilde^\top \ytilde  \right| + s\left|\lambdatilde - \lambda \right| \\
    & \leq 2y_1 \wtilde^\top y + 2 B_5\sqrt{ p s\log m} + s\frac{B_8 \sqrt{p \log m}}{\sqrt{s}} \\
    & = 2y_1 \wtilde^\top y + \left( 2 B_5 + B_8\right)\sqrt{ p s\log m}
\end{align*}
using Lemma \ref{lem:noise_LOO_soln_inner_product_concentration} to bound $\left|\wtilde^\top \ytilde  \right|$ and Lemma \ref{lem:eval_pert_bound_s5} to bound $\left|\lambdatilde - \lambda \right|$.

Now we bound the final term $2y_1 \wtilde^\top y$. Using $y = \alpha \ytilde + \beta t$ we split
\begin{align*}
    2y_1 \wtilde^\top y &= 2\langle \wtilde e_1^\top , yy^\top \rangle \\
    &= 2\langle \wtilde e_1^\top , (\alpha \ytilde + \beta t)(\alpha \ytilde + \beta t)^\top \rangle \\
    &= \underbrace{2\alpha^2 \ytilde_1 \wtilde^\top \ytilde}_{T_1}  + \underbrace{2 \alpha \beta t_1 \wtilde^\top \ytilde}_{T_2} + \underbrace{2 \alpha \beta \ytilde_1 \wtilde^\top t}_{T_3} + \underbrace{2\beta^2 t_1 \wtilde^\top t}_{T_4}.
\end{align*}
Now we bound each term. Note that from Lemma \ref{lem:two_sided_alpha_bound} we have that $|\alpha| \leq 2$ for sufficiently large $\kappa$. Also $\beta t = y - \alpha \ytilde$ and $\infnorm{y} \leq 1$, $\infnorm{\alpha \ytilde} \leq |\alpha|\infnorm{\ytilde} \leq 2$, so $\infnorm{\beta t} \leq 3$. Using these as well as the aforementioned bounds on $\left|\wtilde^\top \ytilde  \right|$ and the fact that $\infnorm{t} \leq 1$ because it is a unit vector, we have
\begin{align*}
    T_1 & \leq 2 (2)^2 (1) B_5\sqrt{ p s_k\log m} \leq 8B_5\sqrt{ p s_k\log m} \\
    T_2 & \leq 2 (2) (3) B_5\sqrt{ p s_k\log m} \leq 12B_5\sqrt{ p s_k\log m} \\
    T_3 & \leq 2 |\beta| (1) \twonorm{\wtilde} \leq 2B_1 |\beta| \sqrt{p s} \\
    T_4 & \leq 2 |\beta| \infnorm{\beta t} \twonorm{\wtilde} \leq 6B_1 |\beta| \sqrt{ps}.
\end{align*}
Therefore
\[2y_1 \wtilde^\top y \leq 20 B_5\sqrt{ p s\log m} + 8 B_1 |\beta| \sqrt{ps}\]
and so combining with the earlier bound we have
\begin{align*}
    \left\langle \left(\Atilde - \lambdatilde I \right)-\left(\Ashift^{(k)} - \lambda I \right),\Ytilde-Y\right\rangle 
    & \leq \left(22 B_5 + B_8 \right) \sqrt{ p s\log m} + 8 B_1 |\beta| \sqrt{ps}.
\end{align*}
Combining this with~\eqref{eq:perturb_bound2} and~\eqref{eq:pert_bound_inequality_LHS_lower_bound} we have
\begin{align*}
    \beta^2 \left(\kappa - B_8 - B_1 \right) \sqrt{pn \log m} &\leq \left\langle \diag(\utilde)-\left(\Atilde - \lambdatilde I \right), Y\right\rangle   \\
    &\leq\left\langle \left(\Atilde - \lambdatilde I \right)-\left(\Ashift^{(k)} - \lambda I \right),\Ytilde-Y\right\rangle \\
    & \leq \left(22 B_5 + B_8 \right) \sqrt{ p s\log m} + 8 B_1 |\beta| \sqrt{ps} \\
    & \leq \left(22 B_5 + B_8 + 8 B_1 |\beta|\right) \sqrt{pn \log m} \\
    &= (B_{12} + B_{13}|\beta|)\sqrt{pn \log m}
\end{align*}
which implies after rearranging and dividing by $\sqrt{pn \log m}$ that
\begin{align*}
    \beta^2 \left(\kappa - B_8 - B_1 \right) - \frac{B_{13}}{\kappa} |\beta| + \frac{B_{12}}{\kappa} \leq 0.
\end{align*}
By solving for the positive root in this quadratic of $|\beta|$ we obtain that
\begin{align*}
    |\beta| & \leq \frac{\frac{B_{13}}{\kappa} + \sqrt{\left( \frac{B_{13}}{\kappa} \right)^2 + \frac{4B_{12}}{\kappa}\left(\kappa - B_8 - B_1 \right)}}{2\left(\kappa - B_8 - B_1 \right)} \\
    & \leq  \frac{\frac{B_{13}}{\kappa} +  \frac{B_{13}}{\kappa} + \sqrt{\frac{4B_{12}}{\kappa}\left(\kappa - B_8 - B_1 \right)}}{2\left(\kappa - B_8 - B_1 \right)} \\
    & \leq \frac{\frac{2B_{13}}{\kappa}  + \sqrt{\frac{4B_{12}}{\kappa}\kappa }}{2\left(\kappa - B_8 - B_1 \right)} \\
    & \leq \frac{B_{13} + B_{12}}{\kappa - B_8 - B_1 }
\end{align*}
as desired.
\end{proof}

Now combining Lemma \ref{lem:loo_close_to_1} (which guaranteed $\ytilde_1$ is close to $1$), Lemma \ref{lem:two_sided_alpha_bound} which bounded $\alpha$, and the above Lemma \ref{lem:beta_bound} which bounds $\beta$, we have
\begin{align*}
    y_1 &= \alpha \ytilde_1 + \beta t_1 \\
    & \geq \left( 1 - \frac{B_{11}}{\kappa-B_1} \right) \left( 1 - \frac{B_{10}}{\kappa - 1}\right) - \frac{B_{14}}{\kappa - B_{15}} \\
    & \geq 1 - \frac{B_{11}}{\kappa-B_1} - \frac{B_{10}}{\kappa - 1}- \frac{B_{14}}{\kappa - B_{15}}
\end{align*}
where for the second line we used the triangle inequality, the fact that $\beta$ is defined to be positive, and the fact that $t$ is a unit vector so $|t_1| \leq 1$.
Now by choosing $\kappa$ sufficiently large, we are guaranteed that $y_1 \geq \frac{1}{2}$. Again, since the identical argument works for all entries of $y$, we conclude $y \geq \frac{1}{2}$ elementwise.

Now that we have shown the $k$th relaxed oracle SDP~\eqref{eq:SDP_relaxed_oracle} has rank one and nonnegative solution $y^k {y^k}^\top$, we can easily obtain the desired conclusions about the $k$th oracle SDP~\eqref{eq:SDP_regularized_oracle}. First, since all feasible points for the $k$th oracle SDP are also feasible for the $k$th relaxed oracle SDP, and also we have shown that the $k$th relaxed oracle SDP has an optimal solution $y^k {y^k}^\top$ which is feasible for the $k$th oracle SDP, we have that $y^k {y^k}^\top$ must also be optimal for the $k$th oracle SDP. Furthermore, we will now show that $y^k {y^k}^\top$ is the unique optimal solution of the $k$th relaxed oracle SDP, which by this same reasoning implies it must be the unique optimal solution of the $k$th oracle SDP.

We know from Lemma \ref{lem:SDP_relaxed_oracle_rank_one} that any optimal solution to the relaxed oracle SDP~\eqref{eq:SDP_relaxed_oracle} must be rank-one, so let one be $y' {y'}^\top$. 
Now by repeating the proof of the perturbation bound Lemma \ref{lem:perturb_bound} up until~\eqref{eq:pert_bound_halfway_step}, but replacing the LOO quantities $\utilde, \lambdatilde, \Atilde, \Ytilde$ with $u^k, \lambda, \Ashift^{(k)}, \Yhat^k$ (which satisfy the analogous conditions~\eqref{eq:rank_one_relaxed_oracle_KKT}), and setting $Y = y' {y'}^\top$, we obtain
\[
\left\langle \diag(u^k)-\left(\Ashift^{(k)} - \lambda I \right), y' {y'}^\top \right\rangle \leq \left\langle\Ashift^{(k)} - \lambda I , \Yhat^k - y' {y'}^\top \right\rangle.
\]
Since $y' {y'}^\top$ is optimal, we have $\left\langle\Ashift^{(k)} - \lambda I , \Yhat^k - y' {y'}^\top \right\rangle = 0$. Now write $y' = \gamma y + z$, where $z$ is orthogonal to $y$. Combining the previous display with the fact that $\left(\lambda I + \diag(u)-\Ashift^{(k)}  \right)y=0$ from~\eqref{eq:rank_one_LOO_relaxed_oracle_KKT_complementary_slackness_2}, we have that
\begin{align*}
    z^\top \left(\lambda I + \diag(u)-\Ashift^{(k)}  \right) z \leq 0
\end{align*}
but we already know from the proof of Lemma \ref{lem:SDP_relaxed_oracle_rank_one} that $\lambda_i\left(\lambda I + \diag(u)-\Ashift^{(k)}  \right) > 0$ for all $i < s_k$ and that the smallest eigenvalue has eigenvector $y$, so we must have $z = 0$. Thus $y' = \gamma y^k$, and $|\gamma| > 1$ would cause $y' {y'}^\top$ to be infeasible while $|\gamma| < 1$ would cause $y' {y'}^\top$ to be suboptimal (since $\lambda_1 \left(\Ashift^{(k)} - \lambda I\right) > 0$), so we must have $\gamma \in \{-1,1\}$ and thus $y' {y'}^\top = \Yhat^k$. Thus $\Yhat^k = y^k {y^k}^\top$ is the unique optimal solution to the $k$th relaxed oracle SDP, and thus by the reasoning in the previous paragraph it is also the unique optimal solution to the $k$th oracle SDP.

Finally, we set $L^k = 0_{s_k \times s_k}$ and $U^k = \diag(u^k)$. We have $U^k \succcurlyeq 0$ as desired since $u^k \geq 0$ from~\eqref{eq:rank_one_relaxed_oracle_KKT_dual_feasibility}, and from the conditions~\eqref{eq:rank_one_relaxed_oracle_KKT} as well as the fact that $y^k \geq 0$ elementwise, we immediately have that these choices of $L^k, U^k$ (and $\Yhat^k = y^k {y^k}^\top$) satisfy the oracle SDP KKT conditions~\eqref{eq:oracle_KKT}.

In summary, all conclusions of all cases of Lemma \ref{lem:oracle_SDP_solns_main_lemma} hold under the event $E_3$ (which is contained in the event $E_1$ which sufficed for the non-mid-size cases as shown in Lemmas \ref{lem:oracle_zero_soln_case} and \ref{lem:oracle_zero_ones_case}), and we checked that $\P(E_3) \geq 1 - O(m^{-3})$ as desired.
To complete our proof of Lemma \ref{lem:oracle_SDP_solns_main_lemma}, note we can ensure that the constants $\underline{B}, \overline{B}$ can be taken to be $> 2$ by requiring $\kappa$ sufficiently large.

\subsection{Recovery SDP Solution}
\label{sec:recovery_SDP_soln}
Now we complete the proof of the main Theorem \ref{thm:main_theorem_recovery_sdp}.
First we assume we are in the event $E_3$ defined in the previous subsection, under which the main Lemma \ref{lem:oracle_SDP_solns_main_lemma} from the previous subsection holds. Now we define a further subset of this event, $E_4$, which will be the intersection of $E_3$ with the events described in the following lemmas.

For each cluster $k$ such that $\frac{p-q}{2}s_k \geq B_4\sqrt{pn \log m}$ if the $k$th oracle solution $\Yhat^k = y^k {y^k}^\top$ is nonzero we define $x^k = \frac{y^k}{\twonorm{y^k}}$, and otherwise if $y^k = 0$ we define $x^k = v^k$ (a top eigenvector of $\Ashift^{(k)}$, with properties described in Lemma \ref{lem:infty_pert_bound_s5}). Our event $E_4$ will involve these $x^k$ so we establish a basic fact about them.
\begin{lem}
\label{lem:x_infnorm_bound}
    Under the event $E_3$, for all clusters $k$ such that $\frac{p-q}{2}s_k \geq B_4\sqrt{pn \log m}$, we have
    \begin{align*}
        \infnorm{x^k} \leq \frac{2}{\sqrt{s_k}}.
    \end{align*}
\end{lem}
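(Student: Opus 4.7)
The plan is to split into two cases according to the definition of $x^k$, using Lemma~\ref{lem:oracle_SDP_solns_main_lemma} (which holds under $E_3 \supseteq E_1$) together with the entrywise eigenvector estimate from Lemma~\ref{lem:infty_pert_bound_s5}. We may assume without loss of generality that the absolute constants satisfy $B_4 \ge \max(B_2, B_3)$, so that for any cluster $k$ with $\frac{p-q}{2}s_k \ge B_4\sqrt{pn \log m}$ the conclusion of Lemma~\ref{lem:infty_pert_bound_s5} is available.

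\textbf{Case 1: $y^k \neq 0$.} Here $x^k = y^k/\twonorm{y^k}$. By Lemma~\ref{lem:oracle_SDP_solns_main_lemma} (specifically cases~\ref{item:main_thm_all_ones_case} and~\ref{item:main_thm_critical_case}, the only ones compatible with $y^k \neq 0$), every entry of $y^k$ lies in $[\tfrac{1}{2},1]$. Therefore $\infnorm{y^k} \le 1$ and $\twonorm{y^k} \ge \sqrt{s_k}/2$, which immediately yields $\infnorm{x^k} \le 2/\sqrt{s_k}$.

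\textbf{Case 2: $y^k = 0$.} Here $x^k = v^k$, a top eigenvector of $\Ashift^{(k)}$. By the triangle inequality and~\eqref{eq:infty_pert_bound_orig_s5} from Lemma~\ref{lem:infty_pert_bound_s5},
\[
\infnorm{v^k} \;\le\; \frac{1}{\sqrt{s_k}} + 2B_2\,\frac{\sqrt{p \log m}}{(p-q)s_k}.
\]
The assumption $\frac{p-q}{2}s_k \ge B_4 \sqrt{pn\log m}$ together with $s_k \le n$ gives $(p-q)s_k \ge 2B_4\sqrt{p s_k \log m}$, so the second term is at most $B_2/(B_4\sqrt{s_k}) \le 1/\sqrt{s_k}$. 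Combining yields $\infnorm{x^k} \le 2/\sqrt{s_k}$.

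There is no substantive obstacle: the lemma is a bookkeeping consequence of two facts already established, namely the shape of the oracle solution (positive, roughly constant entries) in Lemma~\ref{lem:oracle_SDP_solns_main_lemma} and the $\ell_\infty$ eigenvector perturbation in Lemma~\ref{lem:infty_pert_bound_s5}. The only minor care needed is to choose the numerical constants so that $B_4 \ge \max(B_2,B_3)$, which is cost-free since all constants in the paper are allowed to be adjusted upward.
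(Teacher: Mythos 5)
Your proof is correct and takes essentially the same approach as the paper: when $y^k \neq 0$ bound $\infnorm{x^k} \le \infnorm{y^k}/\twonorm{y^k} \le 1/\twonorm{y^k}$ via a lower bound on $\twonorm{y^k}$, and when $y^k = 0$ apply the triangle inequality to the entrywise eigenvector estimate \eqref{eq:infty_pert_bound_orig_s5}. The only cosmetic difference is that you obtain $\twonorm{y^k} \ge \sqrt{s_k}/2$ directly from the entrywise bound $y^k \ge \tfrac12$ in Lemma~\ref{lem:oracle_SDP_solns_main_lemma}, whereas the paper splits into the mid-size case (invoking Lemma~\ref{lem:relaxed_oracle_norm_lower_bound}) and the all-ones case; both are legitimate since Lemma~\ref{lem:oracle_SDP_solns_main_lemma} is fully established before this point.
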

\begin{proof}
    The case that $y^k = 0$ and $x^k = v^k$ is established by Lemma \ref{lem:LOO_evec_noise_inner_prod_concentration}. 
    By Lemma \ref{lem:relaxed_oracle_norm_lower_bound} if $\frac{p-q}{2}s_k < \left(1+\frac{B_6}{\kappa}\right)\kappa \sqrt{pn \log m}$ then we will have
    \begin{align*}
        \twonorm{y^k} \geq \sqrt{s_k}  \left( 1 - \frac{B_2}{\kappa - B_1} \right) \geq \frac{\sqrt{s_k} }{2}
    \end{align*}
    (by our assumptions on the size of $\kappa$)
    and otherwise when $\frac{p-q}{2}s_k \geq \left(1+\frac{B_6}{\kappa}\right)\kappa \sqrt{pn \log m}$ by Lemma \ref{lem:oracle_zero_ones_case} we have $y^k = \onevec$ so $x^k = \frac{1}{\sqrt{s_k}}\onevec$ which clearly satisfies the conclusion.
\end{proof}

\begin{lem}
\label{lem:offdiagonal_noise_concentration}
With probability at least $1 - O(m^{-3})$, for all clusters $k$ such that $\frac{p-q}{2}s_k \geq B_4\sqrt{pn \log m}$, for all clusters $j \neq k$, we have
\begin{align}
    \infnorm{W^{(jk)}x^k} \leq B_5 \sqrt{p \log m}.
\end{align}
\end{lem}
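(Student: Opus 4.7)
The plan is to use the independence of $x^k$ from $W^{(jk)}$ combined with Bernstein's inequality for each entry, followed by a union bound. The key observation is that for each cluster $k$ with $\frac{p-q}{2}s_k \geq B_4\sqrt{pn\log m}$, the vector $x^k$ is defined entirely in terms of $\Ashift^{(k)}$ (either as the normalized oracle solution $y^k/\twonorm{y^k}$ or as the top eigenvector $v^k$ of $\Ashift^{(k)}$), which is a function only of entries $A_{ab}$ with $a,b \in V_k$. Since $j \neq k$, the entries of $W^{(jk)}$ involve a disjoint set of Bernoulli random variables and are therefore independent of $x^k$.

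First I would fix $k$, $j\neq k$, and a row index $i\in V_j$, and analyze the scalar
\[
Z_{ijk} \;=\; (W^{(jk)} x^k)_i \;=\; \sum_{b\in V_k} W_{ib}\, x^k_b.
\]
Conditional on $x^k$, this is a sum of independent mean-zero random variables with $|W_{ib} x^k_b|\le \infnorm{x^k}\le 2/\sqrt{s_k}$ (by Lemma~\ref{lem:x_infnorm_bound}) and total variance $\sum_{b}(x^k_b)^2\Var(W_{ib})\le p\twonorm{x^k}^2 = p$. Bernstein's inequality then yields
\[
\P\!\left(|Z_{ijk}| > t \,\Big|\, x^k \right) \;\le\; 2\exp\!\left(-\frac{t^2/2}{\,p + \tfrac{2t}{3\sqrt{s_k}}\,}\right),
\]
and averaging over $x^k$ gives the same unconditional bound. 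Setting $t = B_5\sqrt{p\log m}$, one of the two denominator terms dominates: if the variance term dominates, the exponent is $\gtrsim B_5^2 \log m$; if the bounded-difference term dominates, the exponent is $\gtrsim B_5 \sqrt{p s_k \log m}$. In either case we need the exponent to exceed $C\log m$ for some large absolute constant $C$, and this holds for the second case because the hypothesis $\tfrac{p-q}{2}s_k \ge B_4\sqrt{pn\log m}$ together with $p\ge \tfrac{\log m}{n}$ gives $p s_k \gtrsim B_4 \log m$, making $\sqrt{p s_k \log m}\gtrsim \sqrt{B_4}\log m$. Thus for $B_5$ sufficiently large (relative to $B_4$), $\P(|Z_{ijk}|>t) \le m^{-10}$, say.

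Finally I would take a union bound. The number of triples $(i,j,k)$ under consideration is at most $n\cdot K \cdot K \le n^3 \le m^3$. Therefore the probability that the desired inequality fails for some valid $(i,j,k)$ is at most $m^3 \cdot m^{-10} = O(m^{-7}) = O(m^{-3})$, completing the proof. The main (minor) obstacle is simply confirming that the Bernstein bound remains of order $\sqrt{p\log m}$ even in the sparse regime where $\infnorm{x^k}/\sqrt{s_k}$ is not negligible compared to $p$; this is handled by the signal-size assumption on $s_k$, which forces $ps_k \gtrsim \log m$.
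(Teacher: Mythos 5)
Your proposal is correct and follows essentially the same route as the paper: condition on the within-block randomness that determines $x^k$ (which is independent of $W^{(jk)}$ since $j\neq k$), apply Bernstein entrywise with $\twonorm{x^k}\le 1$ and $\infnorm{x^k}\le 2/\sqrt{s_k}$, and union bound over all rows and cluster pairs. The only slight imprecision is the step "averaging over $x^k$ gives the same unconditional bound": the bound $\infnorm{x^k}\le 2/\sqrt{s_k}$ from Lemma~\ref{lem:x_infnorm_bound} holds only on the high-probability event $E_3$, so you should condition on $E_3$ (which is independent of $W^{(jk)}$) and add $\P(E_3^c)=O(m^{-3})$ to the failure probability, exactly as the paper does.
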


\begin{lem}
\label{lem:op_norm_concentration_entire_noise}
With probability at least $1-O(m^{-3})$,
\begin{align}
    \opnorm{W} &\leq B_1 \sqrt{pn}. \label{eq:noise_op_norm_bound_overall}
\end{align}
\end{lem}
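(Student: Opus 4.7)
The plan is to apply a standard concentration inequality for the operator norm of a symmetric random matrix with independent, centered, bounded entries to $W = A - \E A$. Note that the entries $W_{ij}$ for $i \leq j$ are independent, satisfy $|W_{ij}| \leq 1$ almost surely, and have $\Var(W_{ij}) \leq \max(p,q)(1-\min(p,q)) \leq p$, so the row-wise variance proxy obeys $\sigma^2 := \max_i \sum_j \Var(W_{ij}) \leq p n$.

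My first step would be to invoke a matrix Bernstein inequality by writing $W = \sum_{i \leq j} X_{ij}$, where $X_{ij} = W_{ij}(e_i e_j^\top + e_j e_i^\top)$ for $i < j$ and $X_{ii} = W_{ii} e_i e_i^\top$. Each $X_{ij}$ is a mean-zero symmetric random matrix with $\opnorm{X_{ij}} \leq 1$, and one checks directly that $\opnorm{\sum_{i\leq j} \E X_{ij}^2} \lesssim pn$. Matrix Bernstein then yields a tail bound of the form
\begin{equation*}
\P\left(\opnorm{W} \geq t\right) \leq 2n \exp\!\left(-\frac{c t^2}{p n + t}\right)
\end{equation*}
for an absolute constant $c > 0$.

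The next step is to choose $t = B_1 \sqrt{p n}$ for a sufficiently large absolute constant $B_1$. Using the standing assumption $p \geq (\log m)/n$ (and $m \geq n$), we have $\sqrt{p n} \geq \sqrt{\log m}$, so the quadratic term in the exponent dominates, giving
\begin{equation*}
\P\left(\opnorm{W} \geq B_1 \sqrt{pn}\right) \leq 2n \exp(-c' B_1^2 \log m) \leq O(m^{-3})
\end{equation*}
provided $B_1$ is chosen large enough. Since the same constant $B_1$ was used in Lemma \ref{lem:op_norm_concentration} (which bounded $\opnorm{W^{(k)}}$ for each block), we can in fact take the same $B_1$ here, or enlarge it if needed without affecting earlier results.

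The argument is essentially routine and directly parallels the proof of part 3 of Lemma \ref{lem:op_norm_concentration}, since $W^{(k)}$ is just a principal submatrix of $W$ with entries of the same structure. The only difference is that here we apply the concentration inequality to the $n \times n$ matrix rather than an $s_k \times s_k$ submatrix, which at worst inflates the log-factor in the tail by replacing $\log s_k$ with $\log n \leq \log m$, well within our budget. There is no substantial obstacle; the main bookkeeping is ensuring that the constant $B_1$ used here is compatible with (i.e.\ at least as large as) the constant used in Lemma \ref{lem:op_norm_concentration}, which we can always guarantee by taking the max.
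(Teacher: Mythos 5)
There is a genuine gap: matrix Bernstein cannot deliver the log-free bound $\opnorm{W}\le B_1\sqrt{pn}$ with failure probability $O(m^{-3})$, and your tail computation is incorrect at the key step. With $t=B_1\sqrt{pn}$, the exponent in the matrix Bernstein tail is
\begin{equation*}
-\frac{c\,t^{2}}{pn+t}=-\frac{c\,B_1^{2}\,pn}{pn+B_1\sqrt{pn}}=-\frac{c\,B_1^{2}}{1+B_1/\sqrt{pn}},
\end{equation*}
which is an absolute constant (at most $cB_1^{2}$), not $c'B_1^{2}\log m$. The condition $\sqrt{pn}\ge\sqrt{\log m}$ only tells you which term dominates the \emph{denominator}; it does not insert a $\log m$ into the exponent. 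Consequently the bound you obtain is $2n\exp(-cB_1^{2})$, which is useless. To reach failure probability $m^{-3}$ via matrix Bernstein you would be forced to take $t\gtrsim\sqrt{pn\log m}$, i.e.\ the best conclusion available from your route is $\opnorm{W}\lesssim\sqrt{pn\log m}$ --- a $\sqrt{\log m}$ factor worse than the lemma claims. This is the well-known dimensional-factor limitation of matrix Bernstein; it is not a bookkeeping issue that enlarging $B_1$ can fix.

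The paper avoids this by using the sharper Bandeira--van Handel inequality (Theorem \ref{thm:bandeira_opnorm_bound}), whose tail has the form $\P(\opnorm{X}\ge 4\sqrt{\nu}+t)\le n\exp(-t^{2}/(cB^{2}))$: the variance contribution $4\sqrt{\nu}=4\sqrt{pn}$ appears \emph{additively and log-free}, and the deviation $t$ only needs to beat $B^{2}\log m=\log m$, so taking $t=3\sqrt{c\log m}\le 3\sqrt{c}\,\sqrt{pn}$ (using $p\ge\log m/n$) gives $\opnorm{W}\le(4+3\sqrt{c})\sqrt{pn}$ with failure probability $m^{-8}$. Indeed, the paper's proof of this lemma is a one-line pointer to exactly that calculation inside the proof of Lemma \ref{lem:op_norm_concentration}. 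Your instinct that the statement parallels part 3 of that lemma is right; you just need to invoke Theorem \ref{thm:bandeira_opnorm_bound} (or another log-free operator-norm bound such as a trace-method or Seginer-type estimate) rather than matrix Bernstein.
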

Lemmas \ref{lem:offdiagonal_noise_concentration} and \ref{lem:op_norm_concentration_entire_noise} are proven in Subsection \ref{sec:concentration_results}.

We let the event $E_4$ be the intersection of the event $E_3$ (under which Lemma \ref{lem:oracle_SDP_solns_main_lemma} on the oracle SDP solutions holds) and the events described in Lemmas \ref{lem:offdiagonal_noise_concentration} and \ref{lem:op_norm_concentration_entire_noise}. By the union bound (recalling from the previous subsection that $E_3$ holds with probability at least $1 - O(m^{-3})$) we have that $\P(E_4) \geq 1 - O(m^{-3})$.
Now we will construct a solution to the KKT equations~\eqref{eq:original_KKT} of the recovery SDP~\eqref{eq:SDP_regularized_original}. The rest of the proof will be deterministic, assuming the event $E_4$ holds.

We define
\begin{gather*}
    \Yhat = \diag \left(y^1{y^1}^\top, \dots, y_K {y_K}^\top  \right)\\
    U = \diag \left(U^1, \dots, U^K  \right).
\end{gather*}
Note that it is immediate that $\Yhat$ satisfies the conditions in~\eqref{eq:recovery_KKT_primal_feasibility}, since each block $y^i{y^i}^\top$ is feasible for the oracle SDP (\ref{eq:SDP_regularized_oracle}) and thus satisfies $0 \leq y^i{y^i}^\top \leq 1$ and $y^i{y^i}^\top \succcurlyeq 0$, and a block-diagonal matrix of PSD blocks is PSD. Also from the oracle KKT conditions~\eqref{eq:oracle_KKT} we have that $U^i \geq 0$ and $(U^i_{jk}-1)\Yhat^i_{jk} = 0$ $\forall j,k \in [n]$, which ensure $U \geq 0$ and the condition $(U_{jk}-1)\Yhat_{jk} = 0$ $\forall j,k \in [n]$ from~\eqref{eq:recovery_KKT_dual_feasibility} and~\eqref{eq:recovery_KKT_complementary_slackness_1}.

Finally we describe our choice of $L$. First we set the diagonal blocks $L^{(i)} = L^i =0$ for all $i$. This guarantees that the condition $L_{jk}\Yhat_{jk} =0 $ $\forall j,k \in [n]$ from line~\eqref{eq:recovery_KKT_complementary_slackness_1} is satisfied, since the diagonal blocks of $L$ are $0$ and the off-diagonal blocks of $\Yhat$ are 0.

Now we describe the off-diagonal blocks $L^{(ij)}$, which have different forms depending on the sizes of $s_i$ and $s_j$, so we split into 3 cases. We only describe the cases where $i > j$, because we set $L^{(ji)} = {L^{(ij)}}^\top$. We assume that $\kappa \geq 2 B_4$, so that the conclusions of Lemma \ref{lem:offdiagonal_noise_concentration} hold for clusters $k$ such that $\frac{p-q}{2}s_k \geq \frac{\kappa}{2}\sqrt{pn \log m}$.
\subsubsection{Both clusters large}
\label{sec:L_blocks_both_large}
If $s_i, s_j \geq \left(\frac{p-q}{2}\right)^{-1} \frac{\kappa \sqrt{pn \log m}}{2}$, we use 
\begin{align*}
    L^{(ij)} &= -\E \Ashift^{(ij)} - x_i x_i^\top W^{(ij)} - W^{(ij)} x_j x_j^\top + x_i x_i^\top W^{(ij)} x_j x_j^\top 
\end{align*}
This is chosen so that we have
\begin{align*}
    \Ashift^{(ij)} + L^{(ij)} &= W^{(ij)} + \E \Ashift^{(ij)} + L^{(ij)} \\
    &= W^{(ij)} -x_i x_i^\top W^{(ij)} - W^{(ij)} x_j x_j^\top + x_i x_i^\top W^{(ij)} x_j \\
    &= (I - x_i x_i^\top) W^{(ij)} (I - x_j x_j^\top).
\end{align*}

Now we check non-negativity of this choice of $L^{(ij)}$. Since $\infnorm{x_i} \leq \frac{2}{\sqrt{s_i}}$ and $\infnorm{x_j} \leq \frac{2}{\sqrt{s_j}}$ by Lemma \ref{lem:x_infnorm_bound}, and since $\infnorm{ W^{(ji)}x_i},\infnorm{ W^{(ij)} x_j} \leq B_5 \sqrt{p \log m}$ by Lemma \ref{lem:offdiagonal_noise_concentration},
we can verify
\begin{align*}
    &\infnorm{- x_i x_i^\top W^{(ij)} - W^{(ij)} x_j x_j^\top + x_i x_i^\top W^{(ij)} x_j x_j^\top} \\
    &\leq \infnorm{ x_i x_i^\top W^{(ij)}} + \infnorm{ W^{(ij)} x_j x_j^\top } + \infnorm{ x_i x_i^\top W^{(ij)} x_j x_j^\top } \\
    & \leq \infnorm{ x_i} \infnorm{ W^{(ji)}x_i} + \infnorm{ W^{(ij)} x_j} \infnorm{x_j} + \infnorm{x_i} \infnorm{x_j} \left|x_i^\top W^{(ij)} x_j\right|\\
    & \leq \frac{2}{\sqrt{s_i}} B_5\sqrt{ p \log m} + \frac{2}{\sqrt{s_j}} B_5\sqrt{ p \log m} + \infnorm{x_i} \infnorm{x_j} \onenorm{x_i}\infnorm{W^{(ij)} x_j} \\
    & \leq \frac{2}{\sqrt{s_i}} B_5\sqrt{ p \log m} + \frac{2}{\sqrt{s_j}} B_5\sqrt{p \log m} + s_i\infnorm{x_i}^2 \infnorm{x_j} \infnorm{W^{(ij)} x_j} \\
    & \leq \frac{2}{\sqrt{s_i}} B_5\sqrt{p \log m} + \frac{2}{\sqrt{s_j}} B_5\sqrt{ p \log m} + s_i\frac{4}{s_i} \frac{2}{\sqrt{s_j}} B_5\sqrt{p \log m} \\
    & \leq \left( \frac{2}{\sqrt{s_i}} + \frac{10}{\sqrt{s_j}}\right)B_5 \sqrt{p \log m}
\end{align*}
which is $\leq \frac{p-q}{2}$ as long as $\kappa$ is sufficiently large, since by assumption $s_i, s_j$ are each sufficiently large so that
\begin{align*}
    \frac{p-q}{2}s_i \geq \frac{\kappa \sqrt{pn \log m}}{2} \implies \frac{p-q}{2} \geq  \frac{\kappa \sqrt{pn \log m}}{2s_i}= \frac{\kappa}{2} \frac{\sqrt{p \log m}}{\sqrt{s_i}} \frac{\sqrt{n}}{\sqrt{s_i}} \geq \frac{\kappa}{2} \frac{\sqrt{p \log m}}{\sqrt{s_i}}
\end{align*}
and likewise for $s_j$. Now noting that $-\E \Ashift^{(ij)} = \frac{p-q}{2}J_{s_i \times s_j}$, we have that all entries of $L^{(ij)}$ are positive.

\subsubsection{One cluster large, one cluster small}
\label{sec:L_blocks_one_large_one_small}
When $\frac{p-q}{2}s_i \geq \frac{\kappa \sqrt{pn \log m}}{2} > \frac{p-q}{2}s_j$, we use 
\begin{align*}
    L^{(ij)} &= -\E \Ashift^{(ij)} - x_ix_i^\top W^{(ij)}
\end{align*}
which ensures that
\begin{align*}
     \Ashift^{(ij)} + L^{(ij)} &= W^{(ij)} +\E \Ashift^{(ij)} + L^{(ij)} \\
     &= (I - x_i x_i^\top) W^{(ij)}.
\end{align*}
Now we check non-negativity of this choice of $L^{(ij)}$. Again using the facts that $\infnorm{x_i} \leq \frac{2}{\sqrt{s_i}}$ and $\infnorm{W^{(ji)}x_i} \leq B_5\sqrt{ p \log m}$, we have
\begin{align*}
    \infnorm{x_ix_i^\top W^{(ij)}}  \leq \infnorm{x_i} \infnorm{x_i^\top W^{(ij)}} \leq \frac{2}{\sqrt{s_i}} B_5 \sqrt{p \log n}
\end{align*}
which identically to the previous case is $\leq \frac{p-q}{2}$ for sufficiently large $\kappa$ due to the assumption on the size of $s_i$.

\subsubsection{Both clusters small}
\label{sec:L_blocks_both_small}
This is the simplest case. If $s_i, s_j < \left(\frac{p-q}{2}\right)^{-1} \frac{\lambda}{2}$, we set $L^{(ij)} = -\E \Ashift^{(ij)} = \frac{p-q}{2}J_{s_i \times s_j}$ which is obviously all non-negative. Note in this case we have
\begin{align*}
    \Ashift^{(ij)} + L^{(ij)} = \E \Ashift^{(ij)} + W^{(ij)} -\E \Ashift^{(ij)} = W^{(ij)}.
\end{align*}

\subsubsection{Checking KKT conditions}
It remains to check that $\Ashift - \lambda I - U + L \preccurlyeq 0$ and $\left( \Ashift - \lambda I - U + L \right) \Yhat = 0_{n \times n}$.

First we check that $\left( \Ashift - \lambda I - U + L \right) \Yhat = 0_{n \times n}$ using blockwise matrix multiplication. We have
\begin{align*}
    \left( \left( \Ashift - \lambda I - U + L \right) \Yhat\right)^{(ij)} &= \sum_{k=1}^K \left( \Ashift - \lambda I - U + L \right)^{(ik)} \Yhat^{(kj)} \\
    &= \left( \Ashift - \lambda I - U + L \right)^{(ij)} \Yhat^{(jj)}
\end{align*}
using the fact that $\Yhat^{(kj)} = 0$ for $k \neq j$.

If $i \neq j$, then $\left( \Ashift - \lambda I - U + L \right)^{(ij)} = \Ashift^{(ij)} + L^{(ij)}$. If $\frac{p-q}{2}s_j < \frac{\kappa \sqrt{pn \log m}}{2}$, then since $\frac{\kappa \sqrt{pn \log m}}{2} \leq \left(1- \frac{B_1}{\kappa} \right)\kappa \sqrt{pn \log m}$, by Lemma \ref{lem:oracle_SDP_solns_main_lemma} $ \Yhat^{(jj)} = \Yhat^j = 0$ so 
$\left( \Ashift - \lambda I - U + L \right)^{(ij)} \Yhat^{(jj)} = 0$ as desired. If $\frac{p-q}{2}s_j \geq \frac{\kappa \sqrt{pn \log m}}{2}$, if still $\Yhat^{(jj)} = 0$ then again we are done, and otherwise in this case by construction of $L^{(ij)}$ we have
$\Ashift^{(ij)} + L^{(ij)} = M(I - x^j{x^j}^\top)$ 
for some $M$, so then 
\begin{align*}
    \left( \Ashift^{(ij)} + L^{(ij)}\right)\Yhat^{(jj)} = M (I - x^j{x^j}^\top) y^j {y^j}^\top = M (y^j {y^j}^\top - y^j {y^j}^\top) = 0.
\end{align*}

Next if $i = j$, we have $\left( \Ashift - \lambda I - U + L \right)^{(ij)} = \left(\Ashift^{(i)} - \lambda I - U^i +L^i \right)$. Again if $\Yhat^{(jj)} = \Yhat^j = 0$ then we have $\left( \Ashift - \lambda I - U + L \right)^{(ij)} \Yhat^{(jj)} = 0$ as desired. If $\Yhat^j \neq 0$, by the KKT conditions for the oracle SDP (\ref{eq:oracle_KKT_complementary_slackness_2}) we have $\left( \Ashift^{(i)} - \lambda I - U^i + L^i \right) \Yhat^i = 0$ so
\begin{align*}
    \left( \Ashift - \lambda I - U + L \right)^{(ij)} \Yhat^{(jj)} = \left(\Ashift^{(i)} - \lambda I - U^i +L^i\right)\Yhat^{j} =0
\end{align*}
as desired.

Finally we verify that $\Ashift - \lambda I - U + L \preccurlyeq 0$. Equivalently we check that $\Ashift  - U + L \preccurlyeq \lambda I$.
Let $\overline{x^i}$ be a zero-padded version of $x^i$ so that $\overline{x^i}^{(i)} = x^i$ (recall that for $z \in \R^n$, $z^{(i)}\in \R^{s_i}$ extracts the entries corresponding to cluster $i$). Let $Q\in \R^{n \times r} $ be a matrix with (orthonormal) columns equal to the $\overline{x^1}, \dots, \overline{x^r}$ corresponding to the clusters large enough so that $s_1, \dots, s_r \geq \left( \frac{p-q}{2}\right)^{-1}\frac{\kappa \sqrt{pn \log m}}{2}$. Let $P = QQ^\top = \diag\left(x^1{x^1}^\top , \dots, x^r {x^r}^\top, 0_{s_{r+1} \times s_{r+1} }, \dots, 0_{s_K \times s_K} \right)$.

\begin{lem}
\label{lem:eigenvectors_of_M}
For each $i = 1, \dots, r$, we have
\begin{align*}
    \left( \Ashift  - U + L \right) \overline{x^i} &= \alpha^i \overline{x^i}
\end{align*}
for some $\alpha^i \leq \lambda$.
\end{lem}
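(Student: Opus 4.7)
The plan is to compute $(\Ashift - U + L)\overline{x^i}$ one block at a time. Since $\overline{x^i}$ is supported on cluster $i$ and $U$ is block diagonal by construction, the $j$th block of $(\Ashift - U + L)\overline{x^i}$ equals $(\Ashift^{(ji)} + L^{(ji)}) x^i$ for $j \neq i$ and $(\Ashift^{(i)} - U^i + L^i) x^i = (\Ashift^{(i)} - U^i) x^i$ for $j = i$ (using $L^i = 0$). I would handle the diagonal block and off-diagonal blocks separately, aiming to show the diagonal block equals $\alpha^i x^i$ for some $\alpha^i \leq \lambda$ and every off-diagonal block vanishes.

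For the diagonal block, I would split into the two sub-cases defining $x^i$. If $y^i \neq 0$, then $x^i = y^i / \twonorm{y^i}$; the oracle KKT condition~\eqref{eq:oracle_KKT_complementary_slackness_2} with $L^i = 0$ and $\Yhat^i = y^i (y^i)^\top$ reads $(\Ashift^{(i)} - \lambda I - U^i)\, y^i (y^i)^\top = 0$, and right-multiplying by $y^i / \twonorm{y^i}^2$ yields $(\Ashift^{(i)} - U^i) x^i = \lambda x^i$, so $\alpha^i = \lambda$. If instead $y^i = 0$, then since $i \leq r$ we know $\frac{p-q}{2}s_i \geq \frac{\kappa \sqrt{pn \log m}}{2}$ is well above the zero-solution threshold region, and Lemma~\ref{lem:oracle_zero_soln_case} (which applies because $\lambda_1(\Ashift^{(i)}) < \lambda$ is exactly the condition forcing $y^i = 0$) gives $U^i = 0$; with $x^i = v^i$ chosen as the top eigenvector of $\Ashift^{(i)}$, this yields $(\Ashift^{(i)} - U^i) x^i = \lambda_1(\Ashift^{(i)}) x^i$ with $\alpha^i = \lambda_1(\Ashift^{(i)}) < \lambda$.

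For the off-diagonal blocks with $j \neq i$, the key observation is that the construction of $L^{(ji)}$ in Subsections~\ref{sec:L_blocks_both_large} and~\ref{sec:L_blocks_one_large_one_small} was tailored precisely so that $\Ashift^{(ji)} + L^{(ji)}$ carries a right factor of $(I - x^i (x^i)^\top)$ whenever $i \leq r$. Concretely, if $j \leq r$ as well, then using $\Ashift^{(ji)} = W^{(ji)} + \E \Ashift^{(ji)}$ and cancelling the $\E \Ashift^{(ji)}$ term against the one in $L^{(ji)}$ gives $\Ashift^{(ji)} + L^{(ji)} = (I - x^j (x^j)^\top) W^{(ji)} (I - x^i (x^i)^\top)$; if $j > r$, then taking the transpose of the large-small formula (with $s_i$ large, $s_j$ small) gives $\Ashift^{(ji)} + L^{(ji)} = W^{(ji)} (I - x^i (x^i)^\top)$. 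In both cases $(I - x^i (x^i)^\top) x^i = 0$, so the $j$th block vanishes.

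Combining the two analyses, $(\Ashift - U + L)\overline{x^i}$ is supported on cluster $i$ with $i$th block equal to $\alpha^i x^i$, establishing $(\Ashift - U + L)\overline{x^i} = \alpha^i \overline{x^i}$ with $\alpha^i \leq \lambda$. There is no serious obstacle here: the lemma is essentially a verification that the block-wise construction of $L$ in the preceding subsections was chosen so that each $\overline{x^i}$ is automatically an eigenvector of $\Ashift - U + L$. The only mild subtleties are recognizing which oracle KKT condition to right-multiply in the nonzero case, and invoking Lemma~\ref{lem:oracle_zero_soln_case} to obtain $\alpha^i = \lambda_1(\Ashift^{(i)}) < \lambda$ in the zero case.
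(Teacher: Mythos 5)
Your proposal is correct and follows essentially the same route as the paper: reduce to blocks, kill the off-diagonal blocks via the right factor $(I - x^i (x^i)^\top)$ built into $L^{(ji)}$, and handle the diagonal block by right-multiplying the oracle KKT complementarity condition in the $y^i \neq 0$ case and by using $U^i = L^i = 0$ with $x^i = v^i$ in the $y^i = 0$ case. The only cosmetic difference is that in the zero case the paper cites dual feasibility~\eqref{eq:oracle_KKT_dual_feasibility} to get $\alpha^i \leq \lambda$, whereas you identify $\alpha^i = \lambda_1(\Ashift^{(i)}) < \lambda$ directly; both are valid.
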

\begin{proof}
First, by the zero-padding of $\overline{x^i}$, we have
\begin{align*}
    \left(\left( \Ashift  - U + L \right) \overline{x^i}\right)^{(j)} &= \sum_{k=1}^K\left( \Ashift  - U + L \right)^{(jk)}\overline{x^i}^{(k)} \\
    &= \left( \Ashift  - U + L \right)^{(ji)}\overline{x^i}^{(i)} \\
    &= \left( \Ashift^{(ji)}  - U^{(ji)} + L^{(ji)} \right) x^i
\end{align*}
so it remains to show that the above is equal to $\alpha^i x^i$ for some $\alpha^i \leq \lambda$.
If $i \neq j$ then $U^{(ji)} = 0$ and $\Ashift^{(ji)}  + L^{(ji)}$ has the form $\Ashift^{(ji)}  + L^{(ji)} = M (I - x^i {x^i}^\top)$, so in this case $\left( \Ashift^{(ji)}  - U^{(ji)} + L^{(ji)} \right) x^i = 0$. If $i = j$, then we have to consider the cases that $y^i = 0$ and $y^i \neq 0$ separately. 

When $y^i = 0$ we define $x^i$ as a top eigenvector $v^i$ of $\Ashift^{(i)} = \Ashift^{(i)} - U^i + L^i$ (where we are using the fact from Lemma \ref{lem:oracle_SDP_solns_main_lemma} that $U^i = L^i = 0$ in this case). Thus there exists some $\alpha^i$ such that $\left( \Ashift^{(i)} - U^i + L^i\right)x^i = \alpha^i x^i$, and furthermore by condition~\eqref{eq:oracle_KKT_dual_feasibility} from the KKT conditions for the oracle SDP we have that $\alpha^i \leq \lambda$.

When $y^i \neq 0$, then by combining the condition~\eqref{eq:oracle_KKT_complementary_slackness_2} from the oracle SDP KKT conditions with the fact that $\Yhat^i = y^i {y^i}^\top$, we obtain (by right-multiplying by $y^i$ and dividing by $\twonorm{y^i}^2 \neq 0$) that
\[\left( \Ashift^{(i)} - \lambda I - U^i + L^i \right) y^i = 0,\]
so now dividing by $\twonorm{y^i}$ and finally recall that in this case we have defined $x^i = \frac{y^i}{\twonorm{y^i}}$.
\end{proof}

\begin{lem}
\label{lem:psd_bound_top_evecs}
We have
\begin{gather*}
    (I-P) \left( \Ashift  - U + L \right) P = 0,\\
    P \left( \Ashift  - U + L \right) P \preccurlyeq \lambda I.
\end{gather*}
\end{lem}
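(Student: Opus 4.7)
The plan is to leverage Lemma \ref{lem:eigenvectors_of_M} to show that the columns of $Q$ form an invariant subspace of $\Ashift - U + L$ on which the operator acts with eigenvalues all bounded by $\lambda$, and then translate this into the two PSD/vanishing identities.

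First I would verify that $Q$ has orthonormal columns. The vectors $\overline{x^1}, \dots, \overline{x^r}$ have pairwise disjoint supports since $\overline{x^i}$ is supported on the coordinates of cluster $i$; they are also each unit vectors, since by construction $x^i$ is either $y^i/\twonorm{y^i}$ or a unit top eigenvector $v^i$. Hence $Q^\top Q = I_r$ and $P = QQ^\top$ is an orthogonal projector.

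Next, Lemma \ref{lem:eigenvectors_of_M} gives $\big(\Ashift - U + L\big)\overline{x^i} = \alpha^i \overline{x^i}$ for $i=1,\dots,r$ with $\alpha^i \le \lambda$. Collecting columns, this reads $\big(\Ashift - U + L\big) Q = Q D$ where $D := \diag(\alpha^1,\dots,\alpha^r)$. Right-multiplying by $Q^\top$ yields
\[
\big(\Ashift - U + L\big) P = Q D Q^\top.
\]
For the first identity, using $Q^\top Q = I_r$,
\[
(I-P)\big(\Ashift - U + L\big) P = (I - QQ^\top) Q D Q^\top = (Q - Q) D Q^\top = 0.
\]

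For the second identity, left-multiplying $\big(\Ashift - U + L\big) P = Q D Q^\top$ by $P = QQ^\top$ and using $Q^\top Q = I_r$ again gives
\[
P\big(\Ashift - U + L\big) P = QQ^\top Q D Q^\top = Q D Q^\top.
\]
Since every $\alpha^i \le \lambda$, we have $D \preccurlyeq \lambda I_r$, so $Q D Q^\top \preccurlyeq \lambda Q Q^\top = \lambda P \preccurlyeq \lambda I$, where the last inequality uses that $P$ is an orthogonal projector (hence $P \preccurlyeq I$). This gives $P\big(\Ashift - U + L\big) P \preccurlyeq \lambda I$, as desired.

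There is no significant obstacle here; the lemma is essentially a bookkeeping consequence of Lemma \ref{lem:eigenvectors_of_M} plus the orthonormality of $Q$'s columns, which is why it is stated as a standalone intermediate step to feed into the final Weyl-type argument bounding $\lambda_{\max}(\Ashift - U + L)$.
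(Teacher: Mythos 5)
Your proof is correct and is exactly the argument the paper intends — the paper's proof of this lemma is simply ``Immediate from the form of $P$ and Lemma \ref{lem:eigenvectors_of_M},'' and you have spelled out that immediate argument (orthonormality of $Q$'s columns from disjoint supports, collecting the eigenvector equations into $(\Ashift - U + L)Q = QD$, and the resulting identities) correctly and completely.
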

\begin{proof}
Immediate from the form of $P$ and Lemma \ref{lem:eigenvectors_of_M}.
\end{proof}

\begin{lem}
\label{lem:psd_bound_bottom_evecs}
We have
\[(I-P) \left( \Ashift  - U + L \right) (I-P) \preccurlyeq \lambda I.\]
\end{lem}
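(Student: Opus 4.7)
The plan is to establish the bound by estimating the quadratic form $v^\top (\Ashift - U + L) v$ for $v$ in the range of $I-P$; since $P$ is an orthogonal projection (as $Q$ has orthonormal columns), this range is exactly the set of vectors $v$ whose blocks $v^{(i)}$ are orthogonal to $x^i$ for every ``large'' cluster $i \leq r$ (and unconstrained for other clusters), and an upper bound of $\lambda \twonorm{v}^2$ for all such $v$ is equivalent to $(I-P)(\Ashift - U + L)(I-P) \preccurlyeq \lambda I$. Let $\Pi_i := I - x^i x^{i\top}$ for $i \leq r$ and $\Pi_i := I$ otherwise, so that $I-P$ is block-diagonal with blocks $\Pi_i$. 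Decompose $\Ashift - U + L = D + M$, where $D$ is block-diagonal with blocks $\Ashift^{(i)} - U^i + L^i$ (with $L^i = 0$ and $U^i \succcurlyeq 0$ by Lemma~\ref{lem:oracle_SDP_solns_main_lemma}) and $M$ collects the off-block-diagonal parts constructed in Sections~\ref{sec:L_blocks_both_large}--\ref{sec:L_blocks_both_small}.

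For the block-diagonal contribution, I would show that each diagonal block satisfies $\Pi_i(\Ashift^{(i)} - U^i + L^i)\Pi_i \preccurlyeq \bigl(\tfrac12 + \tfrac{B_1}{\kappa}\bigr)\lambda\,\Pi_i$. In the ``small'' regime $\tfrac{p-q}{2}s_i < \tfrac{\kappa}{2}\sqrt{pn\log m}$, $\Pi_i = I$, and since Lemma~\ref{lem:oracle_zero_soln_case} forces $\Yhat^i = 0$ (hence $U^i = L^i = 0$), Weyl's inequality combined with Lemma~\ref{lem:op_norm_concentration} gives $\lambda_1(\Ashift^{(i)}) \leq \tfrac{p-q}{2}s_i + \opnorm{W^{(i)}} \leq \bigl(\tfrac12 + \tfrac{B_1}{\kappa}\bigr)\lambda$. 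In the ``large'' regime, $\Pi_i$ projects out the top eigenvector direction, reducing the bound to the second eigenvalue: when $y^i \neq 0$ this is $\lambda_2(\Ashift^{(i)} - \diag(u^i))$, and applying Weyl to $\lambda I - \Ashift^{(i)} + \diag(u^i) = \lambda I - \tfrac{p-q}{2}J - W^{(i)} + \diag(u^i)$ (as in the proof of Lemma~\ref{lem:beta_bound}) yields $\lambda_2 \leq \opnorm{W^{(i)}} \leq \tfrac{B_1}{\kappa}\lambda$; when $y^i = 0$ the same Weyl bound gives $\lambda_2(\Ashift^{(i)}) \leq \opnorm{W^{(i)}}$; when $y^i = \onevec$ (so $x^i = \onevec/\sqrt{s_i}$) the rank-one signal is annihilated by $\Pi_i$ directly, giving $\Pi_i(\Ashift^{(i)} - U^i)\Pi_i = \Pi_i W^{(i)} \Pi_i - \Pi_i U^i \Pi_i \preccurlyeq \opnorm{W^{(i)}} \Pi_i \preccurlyeq \tfrac{B_1}{\kappa}\lambda \Pi_i$ since $-\Pi_i U^i \Pi_i \preccurlyeq 0$. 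Summing blockwise yields $v^\top D v \leq \bigl(\tfrac12 + \tfrac{B_1}{\kappa}\bigr)\lambda \twonorm{v}^2$.

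For the off-block-diagonal contribution, a case-by-case inspection of the three $L^{(ij)}$ formulas in Sections~\ref{sec:L_blocks_both_large}--\ref{sec:L_blocks_both_small} shows that every off-diagonal block of $(I-P)M(I-P)$ simplifies to $\Pi_i W^{(ij)} \Pi_j$: in the both-large case the inner factors $(I - x^i x^{i\top}), (I - x^j x^{j\top})$ in $\Ashift^{(ij)} + L^{(ij)}$ are absorbed by the outer $\Pi_i, \Pi_j$; in the mixed case the missing inner factor is covered by $\Pi_j = I$ (or $\Pi_i = I$ on the transposed block); in the both-small case all projections are the identity and the block is simply $W^{(ij)}$. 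Therefore $(I-P)M(I-P) = \Pi W_{\mathrm{off}} \Pi$ where $W_{\mathrm{off}} := W - \diag(W^{(1)}, \ldots, W^{(K)})$, and by the triangle inequality with Lemmas~\ref{lem:op_norm_concentration_entire_noise} and~\ref{lem:op_norm_concentration}(item 3), $\opnorm{(I-P)M(I-P)} \leq \opnorm{W} + \max_k \opnorm{W^{(k)}} \leq 2B_1\sqrt{pn} \leq \tfrac{2B_1}{\kappa}\lambda$, so $v^\top M v \leq \tfrac{2B_1}{\kappa}\lambda \twonorm{v}^2$.

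Combining, for every $v$ in the range of $I - P$, $v^\top(\Ashift - U + L)v \leq \bigl(\tfrac12 + \tfrac{3B_1}{\kappa}\bigr)\lambda \twonorm{v}^2 \leq \lambda \twonorm{v}^2$ once $\kappa$ is chosen sufficiently large (e.g., $\kappa \geq 6B_1$, which is consistent with the earlier lower bounds on $\kappa$). Applying this to $v = (I-P)z$ for general $z \in \R^n$ (noting $\twonorm{v} \leq \twonorm{z}$) yields $z^\top (I-P)(\Ashift - U + L)(I-P) z \leq \lambda \twonorm{z}^2$, completing the proof. The main obstacle is the small-cluster diagonal estimate, which only leaves a $\tfrac12 \lambda$ margin below $\lambda$; this is precisely why the ``large'' threshold in the $L$-construction was set at $\tfrac{\kappa}{2}\sqrt{pn\log m}$ rather than a quantity closer to $\lambda$, so that both the $O(\sqrt{pn})$ noise contribution from the small diagonal blocks and the $O(\sqrt{pn})$ off-diagonal noise can both be absorbed by taking $\kappa$ large.
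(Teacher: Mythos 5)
Your proof is correct and follows essentially the same route as the paper's: split off the block-diagonal part, use that $\Pi_i = I - x^i x^{i\top}$ annihilates the top-eigenvector direction of each large block $\Ashift^{(i)} - U^{(i)}$ (leaving only eigenvalues at noise scale $\opnorm{W^{(i)}}$), use the threshold $\tfrac{\kappa}{2}\sqrt{pn\log m} \le \lambda/2$ for the small blocks, and absorb the remaining off-block-diagonal noise via $\opnorm{W} \lesssim \sqrt{pn} \ll \lambda$. The only difference is bookkeeping: the paper puts the small-block noise $W^{(i)}$ into the same remainder term as the off-diagonal noise (using $\tfrac{p-q}{2}J$ for the small diagonal blocks of $D$), whereas you keep it in the diagonal part — both give the same conclusion.
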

\begin{proof}
Let $D = \diag(\Ashift^{(1)} - U^{(1)}, \dots, \Ashift^{(r)} - U^{(r)}, \frac{p-q}{2}J_{s_{r+1}}, \dots, \frac{p-q}{2}J_{s_{K}}  )$. First we check that $(I - P)D (I-P) \preccurlyeq \frac{\lambda}{2}$. Notice that $(I - P)D (I-P)$ is also block-diagonal, with
\begin{align*}
    \left( (I - P)D (I-P) \right)^{(i)} &= \begin{cases}
        (I - x^i{x^i}^\top)\left(\Ashift^{(i)} - U^{(i)} \right)(I - x^i{x^i}^\top) & i \leq r \\
        \frac{p-q}{2}J_{s_{i} \times s_i} & i > r
    \end{cases}
\end{align*}
so $\lambda_1 \left( (I - P)D (I-P)\right)$ is bounded by the maximum eigenvalue of the blocks. We have chosen $r$ so that for all $i > r$, \[\lambda_1 \left(  \frac{p-q}{2}J_{s_{i} \times s_i} \right) = \frac{p-q}{2}s_i \leq \frac{\kappa \sqrt{pn \log m}}{2} \leq \frac{\lambda}{2}.\]

For all $i \leq r$, we know $x^i$ is an eigenvector of $\Ashift^{(i)} - U^{(i)} + L^{(i)} = \Ashift^{(i)} - U^{(i)}$ with eigenvalue $\lambda$ from the proof of Lemma \ref{lem:eigenvectors_of_M}. By Weyl's inequality we have for all $j > 1$ that
\begin{align*}
    \lambda_j \left( \Ashift^{(i)} - U^{(i)}\right) &= \lambda_j \left( \frac{p-q}{2}J_{s_i \times s_i} + W^{(i)} - U^{(i)}\right) \\
    &\leq \lambda_j \left( \frac{p-q}{2}J_{s_i \times s_i}\right) + \lambda_1 \left(  W^{(i)} - U^{(i)}\right) \\
    & \leq 0 + \opnorm{W^{(i)}} \\
    & \leq B_1 \sqrt{pn \log m} \\
    & \leq \frac{\kappa \sqrt{pn \log m}}{2} \\
    & < \lambda
\end{align*}
where we use the facts that $\lambda_j \left( \frac{p-q}{2}J_{s_i \times s_i}\right) = 0$ for $j > 1$, that $U^{(i)} = U^i \succcurlyeq 0$ from Lemma \ref{lem:oracle_SDP_solns_main_lemma}, and assume $\kappa \geq 2B_1$. Therefore $x^i$ is a top eigenvector of $\Ashift^{(i)} - U^{(i)}$ and all other eigenvalues are $\leq \frac{\kappa \sqrt{pn \log m}}{2}$. Now we can calculate that
\begin{align*}
    \lambda_1 \left( (I - x^i{x^i}^\top)\left(\Ashift^{(i)} - U^{(i)} \right)(I - x^i{x^i}^\top) \right) &= \sup_{z: \twonorm{z}\leq 1} z^\top \left( (I - x^i{x^i}^\top)\left(\Ashift^{(i)} - U^{(i)} \right)(I - x^i{x^i}^\top) \right) z \\
    &= \sup_{z: \twonorm{z}\leq 1, z \perp x^i} z^\top \left(\Ashift^{(i)} - U^{(i)} \right) z \\
    &= \lambda_2 \left( \Ashift^{(i)} - U^{(i)} \right) \\
    & \leq \frac{\kappa \sqrt{pn \log m}}{2}
\end{align*}
as desired, where we used the fact that $(I - x^i{x^i}^\top) z$ is perpendicular to $x^i$ for any $z$ and also always has norm $\leq \twonorm{z}$ since it is an orthogonal projection. Therefore we have checked that $(I - P)D (I-P) \preccurlyeq \frac{\lambda}{2}$.

Now we check that by our construction of $L$ we have
\begin{align*}
    &(I-P) \left(\Ashift + L - U - D\right)(I-P) \\
    &= (I-P) \left(W - \diag \left(W^{(1)}, \dots, W^{(r)}, 0_{s_r \times s_r}, \dots, 0_{s_K \times s_K} \right)\right)(I-P) \\
    & \preccurlyeq 2 \opnorm{W} I \\
    & \preccurlyeq 2 B_1 \sqrt{pn} I\\
    & \preccurlyeq \frac{\kappa \sqrt{pn \log m} }{2}I.
\end{align*}
This completes the proof of the lemma.
\end{proof}

Now we combine the results of Lemmas \ref{lem:psd_bound_top_evecs} and \ref{lem:psd_bound_bottom_evecs} with the conjugation rule and the fact that $P$ is an orthogonal projection to obtain that 
\begin{gather*}
    P \left( \Ashift  - U + L \right) P = P^2 \left( \Ashift  - U + L \right) P^2\preccurlyeq \lambda P^2 = \lambda P \\
    (I-P) \left( \Ashift  - U + L \right) (I-P) = (I-P)^2 \left( \Ashift  - U + L \right) (I-P)^2\preccurlyeq \lambda (I-P)^2 = \lambda (I-P).
\end{gather*}
Finally we add these two inequalities and use the fact that $(I-P) \left( \Ashift  - U + L \right) P = 0$ from Lemma \ref{lem:psd_bound_top_evecs} to obtain that $\Ashift  - U + L \preccurlyeq \lambda I$ as desired.

Thus we have checked all KKT conditions~\eqref{eq:original_KKT} so we conclude that $\Yhat$ is a solution to the recovery SDP~\eqref{eq:SDP_regularized_original}. Finally we need to show that $\Yhat$ is the unique solution. We use an argument similar to the uniqueness proof for the mid-size oracle SDP solutions. We start with bound similar to the perturbation bound Lemma \ref{lem:perturb_bound}.
\begin{lem}
\label{lem:perturb_bound_for_recovery_uniqueness}
    Suppose that $Y'$ is feasible for the recovery SDP~\eqref{eq:SDP_regularized_original} and $\left \langle \Ashift - \lambda I, Y' - Y \right \rangle = 0$. Then
    \begin{align}
        \left \langle L, Y' \right \rangle = 0.
    \end{align}
\end{lem}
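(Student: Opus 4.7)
The plan is a short primal-dual trace manipulation, exploiting that $\Yhat$ together with $U, L$ satisfies the recovery KKT conditions~\eqref{eq:original_KKT} and that $Y'$ attains the same objective value as $\Yhat$ (which is what the hypothesis $\langle \Ashift - \lambda I, Y' - \Yhat \rangle = 0$ encodes, since $\tr(Y') - \tr(\Yhat)$ contributions reshuffle via the $-\lambda I$ term). The goal $\langle L, Y' \rangle = 0$ is morally complementary slackness between the dual variable $L$ and the alternate optimal primal $Y'$.

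First I would compute $\langle \Ashift - \lambda I, \Yhat \rangle$ explicitly in terms of the dual variables. Taking the trace of the stationarity equation~\eqref{eq:recovery_KKT_complementary_slackness_2} and using the entrywise complementary slackness~\eqref{eq:recovery_KKT_complementary_slackness_1} gives
\begin{align*}
\langle \Ashift - \lambda I, \Yhat \rangle = \langle U - L, \Yhat \rangle = \langle U, \Yhat \rangle = \sum_{ij} U_{ij} = \langle U, J_{n\times n}\rangle,
\end{align*}
where I used $L \circ \Yhat = 0$ and that $U_{ij}>0$ forces $\Yhat_{ij}=1$. Combining with the hypothesis gives $\langle \Ashift - \lambda I, Y' \rangle = \langle U, J_{n\times n}\rangle$.

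Next I would use the dual feasibility $\Ashift - \lambda I - U + L \preccurlyeq 0$ together with $Y' \succcurlyeq 0$ (from primal feasibility~\eqref{eq:recovery_KKT_primal_feasibility}) to conclude $\langle \Ashift - \lambda I - U + L, Y' \rangle \leq 0$, i.e.
\begin{align*}
\langle L, Y' \rangle \;\leq\; \langle U, Y' \rangle - \langle \Ashift - \lambda I, Y' \rangle \;=\; \langle U, Y' \rangle - \langle U, J_{n\times n}\rangle.
\end{align*}
Since $U \geq 0$ entrywise and $Y' \leq J_{n\times n}$ entrywise (again from~\eqref{eq:recovery_KKT_primal_feasibility}), the right-hand side is $\leq 0$, so $\langle L, Y' \rangle \leq 0$. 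The opposite inequality $\langle L, Y' \rangle \geq 0$ is immediate from $L \geq 0$ and $Y' \geq 0$ entrywise, so both sides are zero.

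I do not anticipate a genuine obstacle here; this is a clean Lagrangian-complementarity argument and every ingredient is already part of the KKT system~\eqref{eq:original_KKT}. The only mild subtlety is remembering that the inner product of a negative semidefinite symmetric matrix with a PSD matrix is nonpositive (so that the step $\langle \Ashift - \lambda I - U + L, Y' \rangle \le 0$ really does follow from $Y' \succcurlyeq 0$, not just $Y' \geq 0$ entrywise), but this is standard.
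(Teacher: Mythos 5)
Your proof is correct and uses exactly the same ingredients as the paper's: the complementarity $\langle L,\Yhat\rangle=0$ and $U\circ(\Yhat-J)=0$, the identity $(\Ashift-\lambda I-U+L)\Yhat=0$, the pairing of $\Ashift-\lambda I-U+L\preccurlyeq 0$ with $Y'\succcurlyeq 0$, the bound $\langle U,Y'-J\rangle\le 0$, and finally entrywise nonnegativity of $L$ and $Y'$ for the reverse inequality. The only difference is cosmetic — you first evaluate $\langle \Ashift-\lambda I,\Yhat\rangle=\langle U,J\rangle$ and then bound $\langle \Ashift-\lambda I,Y'\rangle$, whereas the paper groups the same terms around $Y'-\Yhat$ — so this is essentially the paper's argument.
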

\begin{proof}
    The key ingredients will be the recovery SDP KKT conditions~\eqref{eq:original_KKT}. From~\eqref{eq:recovery_KKT_complementary_slackness_1}, we have the complementarity condition $\left \langle L, \Yhat \right \rangle = 0$. Also from~\eqref{eq:recovery_KKT_complementary_slackness_2} we have $\left( \Ashift - \lambda I - U + L \right) \Yhat = 0_{n \times n}$, from~\eqref{eq:recovery_KKT_dual_feasibility} we have $\Ashift - \lambda I - U + L \preccurlyeq 0$, and since $Y'$ is feasible we have $Y' \succcurlyeq 0$, so combining these we have
    \[\left \langle \Ashift - \lambda I - U + L, Y' - \Yhat \right \rangle = \left \langle \Ashift - \lambda I - U + L, Y'  \right \rangle \leq 0. \]
    Finally, from~\eqref{eq:recovery_KKT_complementary_slackness_2} we have $\left\langle U, \Yhat \right\rangle = \left\langle U, \Yhat \right\rangle$ and we know $U \geq 0$ (from~\eqref{eq:recovery_KKT_dual_feasibility}) and that $Y' \leq 1$ since it is feasible, so
    \[
    \left \langle U, Y'-\Yhat \right\rangle = \left \langle U, Y'-J \right\rangle \leq 0.
    \]

    Now combining all these facts, we can obtain that
    \begin{align*}
        \left \langle L, Y' \right \rangle &= \left \langle L, Y' - \Yhat \right \rangle \\
        &= \left \langle \Ashift - \lambda I - U + L, Y' - \Yhat \right \rangle + \left \langle U, Y'-\Yhat \right\rangle - \left \langle \Ashift - \lambda I, Y' - Y \right \rangle\\
        & \leq  -\left \langle \Ashift - \lambda I, Y' - Y \right \rangle \\
        &= 0.
    \end{align*}
    Now since $Y' \geq 0$ by feasibility and $L \geq 0$ from~\eqref{eq:recovery_KKT_dual_feasibility}, $\left \langle L, Y' \right \rangle \geq 0$, so we must have $\left \langle L, Y' \right \rangle = 0$.
\end{proof}

From the construction for $L$ given in this subsection, all non-block-diagonal entries of $L$ are strictly positive. Therefore if $Y'$ is optimal for the recovery SDP, since it must have $Y' \geq 0$, we can apply Lemma \ref{lem:perturb_bound_for_recovery_uniqueness} to immediately obtain that $Y'$ is zero for all non-diagonal blocks (that is, ${Y'}^{(ij)} = 0_{s_i \times s_j}$ whenever $i \neq j$). Now since Lemma \ref{lem:oracle_SDP_solns_main_lemma} states that the oracle diagonal blocks have unique solutions, we must have $Y' = \Yhat$, and thus $\Yhat$ is the unique solution to the recovery SDP. This completes the proof of Theorem \ref{thm:main_theorem_recovery_sdp}.

\subsection{Clustering With a Gap Proofs}
\label{sec:clustering_with_gap}
In this subsection we prove our gap-dependent clustering result Theorem \ref{thm:clustering_with_a_gap}, and we reuse the ideas to prove Lemma \ref{lem:oracle_zero_ones_case} which handles the case that the oracle SDP will have an all-one solution. Specifically, we first prove Lemmas \ref{lem:gap_exact_recovery_trace_term_bound}, \ref{lem:gap_exact_recovery_signal_term_bound}, and \ref{lem:gap_exact_recovery_noise_term_bound}, which can then be applied to easily prove Theorem \ref{thm:clustering_with_a_gap} and Lemma \ref{lem:oracle_zero_ones_case}.

Before beginning, we introduce some notation used in this subsection. We let $\sb > \ss$ be consecutive cluster sizes, that is there exists some $r$ such that $\sb = s_r, \ss = s_{r + 1}$). $\sb$ will be the smallest cluster recovered in Theorem \ref{thm:clustering_with_a_gap}. We define $\Yb$ to be a version of the ground truth $\Ystar$ which only contains clusters at least the size of $\sb$, that is,
\begin{align*}
    \Yb &= \diag \left(J_{s_1 \times s_1}, \dots, J_{\sb \times \sb}, 0_{\ss \times \ss}, \dots, 0_{s_K \times s_K} \right).
\end{align*}
We also define a version for only clusters at most the size of $\ss$,
\begin{align*}
    \Ys = \Ystar - \Yb = \diag \left(0_{s_1 \times s_1}, \dots, 0_{\sb \times \sb}, J_{\ss \times \ss}, \dots, J_{s_K \times s_K} \right).
\end{align*}
Letting $m = \sum_{k = 1}^r s_k$ be the number of nodes in the ``big'' clusters (at least size $\sb$), we define $\Js = \diag(0_{m \times m}, J_{(n-m) \times (n-m)})$ and $\Jb = J_{n \times n} - \Js$, that is,
\[
\Jb=\begin{bmatrix}J_{m\times m} & J_{m\times(n-m)}\\
J_{(n-m)\times m} & 0_{(n-m)\times(n-m)}
\end{bmatrix},\qquad\Js=\begin{bmatrix}0_{m\times m} & 0_{m\times(n-m)}\\
0_{(n-m)\times m} & J_{(n-m)\times(n-m)}
\end{bmatrix}.
\]
We let $Q \in \R^{n \times r}$ be a matrix whose (orthonormal) columns are the $r$ singular vectors of $\Yb$. Then $QQ^\top = \diag\left( \frac{1}{s_1}J_{s_1 \times s_1}, \dots, \frac{1}{\sb}J_{\sb \times \sb}, 0_{\ss \times \ss}, \dots, 0_{s_K \times s_K}\right)$. Finally we define the projection $\PT:\real^{n\times n}\to\real^{n\times n}$
and its complement $\PP$ by 
\begin{align*}
\PT(M) & =QQ^{\top}M+MQQ^{\top}-QQ^{\top}MQQ^{\top},\\
\PP(M) & =M-\PT(M)=(I-QQ^{\top})M(I-QQ^{\top}).
\end{align*}

\begin{lem}
\label{lem:gap_exact_recovery_trace_term_bound}
    For any $Y$ which is feasible to the recovery SDP~\eqref{eq:SDP_regularized_original},
    \begin{align*}
        &\nucnorm{\PP(Y - \Yb)} \leq \tr(Y - \Yb) + \frac{1}{\sb} \onenorm{\Jb \circ (Y - \Yb)} \\
        \text{or equivalently } & - \tr(\Yb - Y) \geq \nucnorm{\PP(Y - \Yb)} - \frac{1}{\sb} \onenorm{\Jb \circ (Y - \Yb)}.
    \end{align*}
\end{lem}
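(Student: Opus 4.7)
The plan is a short computation exploiting the PSD structure of $Y$ together with two identities: $\PP(\Yb)=0$ and $\tr(\Yb)=\langle \Yb,QQ^\top\rangle$. Write $\Delta := Y-\Yb$. Since the range of $\Yb$ lies in $\col Q$, we have $\PP(\Yb)=(I-QQ^\top)\Yb(I-QQ^\top)=0$, and hence $\PP(\Delta)=\PP(Y)$. Because $Y\succeq 0$ (by feasibility) the conjugation $\PP(Y)=(I-QQ^\top)Y(I-QQ^\top)$ is PSD, so its nuclear norm equals its trace:
\[
\nucnorm{\PP(Y-\Yb)}=\tr\bigl((I-QQ^\top)Y(I-QQ^\top)\bigr)=\tr(Y)-\tr(Q^\top Y Q),
\]
where the last step uses cyclicity together with $Q^\top Q=I_r$.

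Next I would observe that $\Yb$ itself satisfies $\tr(\Yb)=\tr(Q^\top \Yb Q)=\langle \Yb,QQ^\top\rangle$. This is an immediate block-by-block calculation: the $k$-th large-cluster block of $QQ^\top$ is $\frac{1}{s_k}J_{s_k\times s_k}$ and the corresponding block of $\Yb$ is $J_{s_k\times s_k}$, giving contribution $s_k$; outside these blocks both matrices vanish. Subtracting $\tr(\Yb)$ from both sides of the previous display yields the key identity
\[
\nucnorm{\PP(Y-\Yb)}-\tr(Y-\Yb)=-\langle \Delta,QQ^\top\rangle.
\]

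It remains to bound $-\langle \Delta,QQ^\top\rangle$ by $\frac{1}{\sb}\onenorm{\Jb\circ\Delta}$. The matrix $QQ^\top$ is supported on the union $S$ of the diagonal blocks indexed by the large clusters $V_1,\ldots,V_r$, with entries equal to $1/s_k\le 1/\sb$ on the $k$-th such block. Therefore
\[
-\langle \Delta,QQ^\top\rangle
=-\sum_{k=1}^{r}\frac{1}{s_k}\sum_{i,j\in V_k}\Delta_{ij}
\le \frac{1}{\sb}\sum_{k=1}^{r}\sum_{i,j\in V_k}|\Delta_{ij}|.
\]
Since $S\subseteq \supp(\Jb)$ and $\Jb$ is $0/1$-valued, the right-hand side is at most $\frac{1}{\sb}\onenorm{\Jb\circ\Delta}$, completing the proof.

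There is really no hard step here: the argument is purely algebraic once one recognizes the PSD$\to$(nuclear $=$ trace) reduction. The only place requiring a bit of care is verifying the identity $\tr(\Yb)=\langle\Yb,QQ^\top\rangle$ and confirming that the support of $QQ^\top$ is contained in that of $\Jb$, which is where the gap $s_k\ge\sb$ enters through the bound $1/s_k\le 1/\sb$.
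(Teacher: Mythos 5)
Your proof is correct and follows essentially the same route as the paper's: both reduce $\nucnorm{\PP(Y-\Yb)}$ to $\tr(\PP(Y-\Yb))$ via the PSD observation $\PP(Y-\Yb)=\PP(Y)\succeq 0$, simplify with cyclicity and idempotence of $I-QQ^\top$ to reach $\tr(Y-\Yb)-\langle QQ^\top, Y-\Yb\rangle$, and then bound the correction term using that $QQ^\top$ is supported inside $\supp(\Jb)$ with entries at most $1/\sb$. The only cosmetic difference is that you route the computation through the identity $\tr(\Yb)=\langle \Yb,QQ^\top\rangle$ rather than working with $D=Y-\Yb$ throughout, which changes nothing of substance.
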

\begin{proof}
    Let $D:=Y-\Yb$. We have the following chain of inequalities:
\begin{align*}
\nucnorm{\PP D} & =\tr(\PP D) &  & \PP(Y-\Yb)=\PP(Y)=(I-QQ^{\top})Y(I-QQ^{\top})\text{ is psd}\\
 & =\tr\left((I-QQ^{\top})D(I-QQ^{\top})\right)\\
 & =\tr\left((I-QQ^{\top})D\right) &  & \text{cyclic property of trace; }(I-QQ^{\top})^{2}=I-QQ^{\top}\\
 & =\tr(D)-\tr(QQ^{\top}D)\\
 & = \tr(D)-\left\langle QQ^{\top},D\right\rangle\\
 & = \tr(D)-\left\langle QQ^{\top},\Jb D\right\rangle \\
 & \le\tr(D) + \left\Vert QQ^{\top}\right\Vert _{\infty}\onenorm{\Jb D} \\
 & \le\tr(D) + (1/\sb)\onenorm{\Jb D}.
\end{align*}
\end{proof}

\begin{lem}
\label{lem:gap_exact_recovery_signal_term_bound}
    For any $Y$ which is feasible to the recovery SDP~\eqref{eq:SDP_regularized_original},
    \begin{align*}
        \left\langle \Yb - Y, \E A - \frac{p+q}{2}J \right\rangle \geq \frac{p-q}{2} \onenorm{\Jb \circ (Y - \Yb)} - \frac{p-q}{2}\ss \nucnorm{\PP(Y - \Yb)}.
    \end{align*}
\end{lem}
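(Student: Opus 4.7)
The plan is to rewrite the signal as $\E A - \frac{p+q}{2}J = \frac{p-q}{2}(2\Ystar - J)$, set $D := Y - \Yb$, and after dividing through by $\frac{p-q}{2}$ reduce the inequality to $-\langle D, 2\Ystar - J\rangle \geq \onenorm{\Jb \circ D} - \ss \nucnorm{\PP D}$. Using the decompositions $\Ystar = \Yb + \Ys$ and $J = \Jb + \Js$, I would split $2\Ystar - J = (2\Yb - \Jb) + (2\Ys - \Js)$ and bound each inner product separately. The first piece will yield exactly $\onenorm{\Jb \circ D}$, and the main work is to bound the second piece by $\ss \nucnorm{\PP D}$.

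For $2\Yb - \Jb$, note that this matrix equals $+1$ on $\supp(\Yb)$ (the big block-diagonal) and $-1$ on $\supp(\Jb) \setminus \supp(\Yb)$. Feasibility of $Y$ determines the sign of $D$ on these two regions: on the big block-diagonal $D = Y - \onevec\onevec^\top \leq 0$, while elsewhere in $\supp(\Jb)$ we have $\Yb = 0$ and so $D = Y \geq 0$. A direct case-by-case accounting then gives the identity $-\langle D, 2\Yb - \Jb\rangle = \onenorm{\Jb \circ D}$, cancelling the first term on the right-hand side of the target inequality.

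The crux is to prove $\langle D, 2\Ys - \Js\rangle \leq \ss \nucnorm{\PP D}$. Writing $D_{\mathrm{br}}$ for the bottom-right $(n-m) \times (n-m)$ principal submatrix of $D$, since $\Yb$ vanishes on this block, $D_{\mathrm{br}} = Y_{\mathrm{br}}$ is simultaneously PSD and entrywise nonnegative; moreover $(2\Ys - \Js)_{\mathrm{br}} = 2\Ys' - J$ with $\Ys' = \diag(J_{s_{r+1}\times s_{r+1}},\ldots,J_{s_K \times s_K})$. The anticipated difficulty is that $\opnorm{2\Ys' - J}$ can reach $2\ss$ (consider two equal-sized small clusters), so plain trace-operator duality would lose a factor of two; the fix is to exploit entrywise nonnegativity jointly with PSD-ness. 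Expanding $\langle D_{\mathrm{br}}, 2\Ys' - J\rangle = 2\sum_k \onevec_k^\top D_{\mathrm{br}} \onevec_k - \sum_{k,\ell}\onevec_k^\top D_{\mathrm{br}} \onevec_\ell$ and discarding the nonnegative off-diagonal terms $\onevec_k^\top D_{\mathrm{br}} \onevec_\ell \geq 0$ for $k \neq \ell$ yields $\langle D_{\mathrm{br}}, 2\Ys' - J\rangle \leq \langle D_{\mathrm{br}}, \Ys'\rangle$. Since $\opnorm{\Ys'} = \ss$ so that $\ss I - \Ys' \succcurlyeq 0$, pairing with $D_{\mathrm{br}} \succcurlyeq 0$ gives $\langle D_{\mathrm{br}}, \Ys'\rangle \leq \ss \tr(D_{\mathrm{br}})$. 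Finally, $\PP D = (I-QQ^\top) Y (I-QQ^\top) \succcurlyeq 0$ and its bottom-right block equals $D_{\mathrm{br}}$, so $\nucnorm{\PP D} = \tr(\PP D) \geq \tr(D_{\mathrm{br}}) = \nucnorm{D_{\mathrm{br}}}$, which closes the argument after multiplying back by $\frac{p-q}{2}$.
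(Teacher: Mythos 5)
Your proof is correct and follows essentially the same route as the paper's: your split $2\Ystar - J = (2\Yb-\Jb)+(2\Ys-\Js)$ is exactly the paper's $\PT/\PP$ decomposition of $\E A - \frac{p+q}{2}J$ (up to the common factor $\frac{p-q}{2}$), the first piece is handled by the same entrywise sign analysis giving equality with $\onenorm{\Jb\circ D}$, and the second piece is reduced to a pairing with $\Ys$ by discarding the same nonnegative cross-cluster terms. The only cosmetic difference is the final step, where you use the PSD pairing $\langle D_{\mathrm{br}},\ss I-\Ys'\rangle\ge 0$ together with $\tr(D_{\mathrm{br}})\le\tr(\PP D)$ in place of the paper's appeal to generalized H\"older's inequality with $\opnorm{\Ys}=\ss$.
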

\begin{proof}
    Let $Y$ be an arbitrary feasible solution of (\ref{eq:SDP_regularized_original}).
Let us write
\begin{align*}
\left\langle \Yb-Y,\EE A-\frac{p+q}{2}J\right\rangle  & =\left\langle (\PT+\PP)(\Yb-Y),\EE A-\frac{p+q}{2}J\right\rangle \\
 & =\underbrace{\left\langle \Yb-Y,\PT\left(\EE A-\frac{p+q}{2}J\right)\right\rangle }_{F_{1}}+\underbrace{\left\langle \PP(\Yb-Y),\PP\left(\EE A-\frac{p+q}{2}J\right)\right\rangle }_{F_{2}}.
\end{align*}
We calculate that
\begin{align*}
    \PT J &=QQ^\top J + JQQ^\top - QQ^\top J QQ^\top \\
    &= \begin{bmatrix}J_{m\times m} & J_{m\times(n-m)}\\ 0_{(n-m)\times m} & 0_{(n-m)\times(n-m)} \end{bmatrix}+
\begin{bmatrix}J_{m\times m} & 0_{m\times(n-m)}\\ J_{(n-m)\times m} & 0_{(n-m)\times(n-m)} \end{bmatrix} - 
\begin{bmatrix}J_{m\times m} & 0_{m\times(n-m)}\\0_{(n-m)\times m} & 0_{(n-m)\times(n-m)}\end{bmatrix} \\
    &=\Jb.
\end{align*}
It follows that $\PP J=J-\PT J=\Js$.
Also note that $\E A=(p-q)Y^{*}+qJ$. Hence 
\begin{equation}
\PT\left(\E A-\frac{p+q}{2}J\right)=\PT\left((p-q)Y^{*}-\frac{p-q}{2}J\right)=(p-q)\Yb-\frac{p-q}{2}\Jb.\label{eq:PT_coeff}
\end{equation}
and 
\begin{equation}
\PP\left(\E A-\frac{p+q}{2}J\right)=\PP\left((p-q)Y^{*}-\frac{p-q}{2}J\right)=(p-q)\Ys-\frac{p-q}{2}\Js.\label{eq:PP_coeff}
\end{equation}

Using (\ref{eq:PT_coeff}), the $F_{1}$ term can be written explicitly
as 
\begin{align*}
F_{1} & =\left\langle \Yb-Y,(p-q)\Yb-\frac{p-q}{2}\Jb\right\rangle \\
 & =\sum_{i,j}\left(\Yb_{ij}-Y_{ij}\right)\left((p-q)\Yb_{ij}-\frac{p-q}{2}\Jb_{ij}\right).
\end{align*}
For each $(i,j)\in[n]\times[n]$, observe that 
\begin{itemize}
\item If $\Yb_{ij}=1$, then $\Yb_{ij}-Y_{ij}\ge0$ since $Y_{ij}\le1$,
and $\Jb_{ij}=1$. Moreover, $(p-q)\Yb_{ij}-\frac{p-q}{2}\Jb_{ij}=(p-q)-\frac{p-q}{2}=\frac{p-q}{2}\Jb_{ij}$.
It follows that 
\[
\left(\Yb_{ij}-Y_{ij}\right)\left((p-q)\Yb_{ij}-\frac{p-q}{2}\Jb_{ij}\right)=\frac{p-q}{2}\Jb_{ij}\left|\Yb_{ij}-Y_{ij}\right|.
\]
\item If $\Yb_{ij}=0$, then $\Yb_{ij}-Y_{ij}\le0$ since $Y_{ij}\ge0$.
Moreover, $(p-q)\Yb_{ij}-\frac{p-q}{2}\Jb_{ij}=-\frac{p-q}{2}\Jb_{ij}$.
It follows that 
\[
\left(\Yb_{ij}-Y_{ij}\right)\left(\mathbb{E}A_{ij}-\frac{p+q}{2}\right)=\frac{p-q}{2}\Jb_{ij}\left|Y_{ij}^{*}-Y_{ij}\right|.
\]
\end{itemize}
Combining, we see that 
\[
F_{1}=\frac{p-q}{2}\sum_{i,j}\left|Y_{ij}^{*}-Y_{ij}\right|\cdot\Jb_{ij}=\frac{p-q}{2}\onenorm{\Jb\circ(\Yb-Y)}.
\]

Turning to the $F_{2}$ term, we note that $\PP\left(\EE A-\frac{p+q}{2}J\right)$
is supported on $\supp(\Js)$ thanks to (\ref{eq:PP_coeff}), and
$\PP\left(\Js\circ B\right)=\Js\circ B$ for any matrix $B$. Hence
\begin{align*}
F_{2} & =\left\langle \Yb-Y,\PP\left(\EE A-\frac{p+q}{2}J\right)\right\rangle \\
 & =\left\langle \Yb-Y,J\circ\PP\left(\EE A-\frac{p+q}{2}J\right)\right\rangle \\
 & =\left\langle J\circ(\Yb-Y),\PP\left(\EE A-\frac{p+q}{2}J\right)\right\rangle \\
 & =\left\langle \PP\left(J\circ(\Yb-Y)\right),\EE A-\frac{p+q}{2}J\right\rangle \\
 & =\left\langle J\circ(\Yb-Y),\EE A-\frac{p+q}{2}J\right\rangle \\
 & =\left\langle \Js\circ(\Yb-\Yhat),(p-q)\Ys-\frac{p-q}{2}\Js\right\rangle ,
\end{align*}
where the last step follows from (\ref{eq:PP_coeff}). Further observe
that $\Js\circ(\Yb-Y)$ is non-positive on all entries, and $(p-q)\Ys-\frac{p-q}{2}\Js$
is positive on $\supp(\Ys)$ and non-positive elsewhere. Ignoring
the non-negative contribution from outside $\supp(\Ys)$, we obtain
that
\begin{align*}
F_{2} & \ge\left\langle \Js\circ(\Yb-Y),\Ys\circ\left((p-q)\Ys-\frac{p-q}{2}\Js\right)\right\rangle \\
 & =\left\langle \Yb-Y,\Ys\circ\left((p-q)\Ys-\frac{p-q}{2}\Js\right)\right\rangle \\
 & =\left\langle \Yb-Y,\frac{p-q}{2}\Ys\right\rangle \\
 & =\left\langle \Yb-Y,\PP\left(\frac{p-q}{2}\Ys\right)\right\rangle \\
 & =\left\langle \PP(\Yb-Y),\frac{p-q}{2}\Ys\right\rangle \\
 & \ge-\nucnorm{\PP(\Yb-Y)}\opnorm{\frac{p-q}{2}\Ys} \\
 &= -\nucnorm{\PP(\Yb-Y)}\frac{p-q}{2}\ss.
\end{align*}
Combining our calculations for the $F_1$ and $F_2$ terms completes the proof.
\end{proof}

\begin{lem}
\label{lem:gap_exact_recovery_noise_term_bound}
For all $Y$ feasible to the recovery SDP~\eqref{eq:SDP_regularized_original},
\begin{align*}
        \left |   \left\langle \Yb - Y, W \right\rangle \right | \leq \infnorm{\PT W} \onenorm{\Jb \circ (Y - \Yb)} + \opnorm{W}\nucnorm{\PP(Y - \Yb)}.
\end{align*}
\end{lem}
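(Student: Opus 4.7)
The plan is to split the inner product using the orthogonal decomposition $I = \PT + \PP$ on matrices, handle each piece with a different Hölder-type inequality, and exploit the support structure of $\PT W$ to get an entrywise bound (rather than an operator-norm bound) on the tangent-space part. Concretely, since $\PT$ is self-adjoint (it is an orthogonal projector on $\real^{n\times n}$ equipped with the Frobenius inner product), I would write
\[
\left\langle \Yb - Y, W\right\rangle
= \left\langle \PT(\Yb-Y), W\right\rangle + \left\langle \PP(\Yb-Y), W\right\rangle
= \left\langle \Yb-Y, \PT W\right\rangle + \left\langle \PP(\Yb-Y), W\right\rangle,
\]
and then bound each term separately.

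For the first term, the key observation is that $\supp(\PT W) \subseteq \supp(\Jb)$. This follows directly from the definition $\PT(M) = QQ^\top M + M QQ^\top - QQ^\top M QQ^\top$, because $QQ^\top$ is block-diagonal with nonzero blocks only on the ``big'' clusters (those of size $\geq \sb$), so every term in $\PT W$ vanishes on entries $(i,j)$ with both $i$ and $j$ outside the big clusters---which is exactly the complement of $\supp(\Jb)$. Therefore $\langle \Yb-Y, \PT W\rangle = \langle \Jb\circ(\Yb-Y), \PT W\rangle$, and Hölder's inequality for the entrywise $\ell_1/\ell_\infty$ pairing yields
\[
\left|\left\langle \Yb-Y, \PT W\right\rangle\right| \leq \infnorm{\PT W}\,\onenorm{\Jb\circ(\Yb-Y)}.
\]

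For the second term, I would simply use the trace/nuclear duality $|\langle A, B\rangle| \leq \opnorm{A}\,\nucnorm{B}$ (the matrix Hölder inequality), which gives
\[
\left|\left\langle \PP(\Yb-Y), W\right\rangle\right| \leq \opnorm{W}\,\nucnorm{\PP(\Yb-Y)}.
\]
Adding the two bounds and applying the triangle inequality to $|\langle \Yb-Y, W\rangle|$ yields the claim.

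There is no real obstacle here; the only thing worth being careful about is verifying the support containment $\supp(\PT W)\subseteq\supp(\Jb)$ from the block structure of $Q$, which is what allows replacing the (potentially large) operator-norm bound by the much smaller $\infnorm{\PT W}$ factor on the tangent part. This sharper bound on the $\PT$-component is exactly what will later be combined with Lemmas~\ref{lem:gap_exact_recovery_trace_term_bound} and \ref{lem:gap_exact_recovery_signal_term_bound} to absorb the noise against the signal gap $(p-q)/2$ and the trace regularizer $\lambda$.
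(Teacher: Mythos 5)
Your proposal is correct and follows essentially the same route as the paper's proof: decompose via $\PT+\PP$, use self-adjointness of $\PT$ together with the support containment $\supp(\PT W)\subseteq\supp(\Jb)$ to restrict the tangent part to $\Jb\circ(Y-\Yb)$, and then apply the $\ell_1/\ell_\infty$ and nuclear/operator Hölder dualities to the two pieces. No gaps.
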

\begin{proof}
    Let $D:=Y-Y^{*}$. Since $\PT W$ is supported on $\supp(\Jb$), we
have 
\begin{align*}
\left\langle \PT D,W\right\rangle  & =\left\langle D,\PT W\right\rangle \\
 & =\left\langle D,\Jb\circ\left(\PT W\right)\right\rangle \\
 & =\left\langle \Jb\circ D,\PT W\right\rangle ,
\end{align*}
hence 
\begin{align}
\left|\left\langle Y-Y^{*},W\right\rangle \right| & =\left|\left\langle \PT D,W\right\rangle +\left\langle \PP D,W\right\rangle \right|\nonumber \\
 & \le\onenorm{\Jb\circ D}\infnorm{\PT W}+\nucnorm{\PP D}\opnorm W,\label{eq:dual_norm_bound}
\end{align}
where the last step follows from the generalized Holder's inequality
since $\onenorm{\cdot}$ ($\nucnorm{\cdot}$, resp.) is the dual norm
of $\infnorm{\cdot}$ ($\opnorm{\cdot}$, resp.).
\end{proof}

\begin{proof}[Proof of Theorem \ref{thm:clustering_with_a_gap}]
First we bound $\opnorm{W}$ and $\infnorm{\PT W}$. 
By Lemma \ref{lem:op_norm_concentration_entire_noise} we have that $\opnorm{W} \leq B_1 \sqrt{pn}$ with probability at least $1-O(n^{-3})$.

For $\infnorm{\PT W}$, first fix $j$ and $i$ and let $w$ be the $i$th row of $W$. First note by definition of $Q$ that if $j \in V_\ell$ for some $\ell$ such that $s_\ell < \sb$, then $\left(WQQ^\top \right)_{ij} = 0$. Otherwise, letting $j \in V_\ell$ where $s_\ell \geq \sb$,
\begin{align*}
    \left(WQQ^\top \right)_{ij} &= \left\langle w^{(\ell)}, \frac{1}{s_\ell} \onevec_{s_\ell}\right \rangle = \frac{1}{\sqrt{s_\ell}} \left\langle w^{(\ell)}, \frac{1}{\sqrt{s_\ell}} \onevec_{s_\ell}\right \rangle.
\end{align*}
By Bernstein's inequality, since all entries of $w$ have variance $\leq p$, mean zero, and are bounded in magnitude by $1$, we have
\begin{align}
    \P \left( \left|\left\langle w^{(\ell)}, \frac{1}{\sqrt{s_\ell}} \onevec_{s_\ell}\right \rangle \right| \geq t \right) & \leq 2 \exp \left( - \frac{t^2/2}{p \twonorm{\frac{1}{\sqrt{s_\ell}} \onevec_{s_\ell}}^2 + \frac{t}{3}\infnorm{\frac{1}{\sqrt{s_\ell}} \onevec_{s_\ell}}}\right) \nonumber\\
    & \leq 2 \exp \left( - \frac{t^2/2}{p  + \frac{t}{3\sqrt{s_\ell}}}\right) \nonumber\\
    & = 2 \exp \left( - \frac{pB_5^2 (\log n)/2}{p  + \frac{B_5 \sqrt{p \log n}}{3\sqrt{s_\ell}}}\right) \label{eq:onevec_ip_bernstein_failure_prob}
\end{align}
where in the last line we choose $t = B_5\sqrt{p \log n}$ 
(similarly to the proof of Lemma \ref{lem:LOO_evec_noise_inner_prod_concentration}).
We have $s_\ell \geq \sb$, and by assumption we have $\frac{p-q}{2}\left(\sb - \ss \right) \geq \sqrt{pn \log n}$, which implies that
\begin{align*}
    p s_\ell \geq p \sb \geq \frac{p-q}{2}\left(\sb - \ss \right) \geq \sqrt{pn \log n} \geq \sqrt{ps_\ell \log n}\implies \frac{ \sqrt{p \log n}}{\sqrt{s_\ell}} \leq p
\end{align*}
so the leading term of the denominator is $p$, and therefore by taking $B_5$ large enough (the same value as in Lemma \ref{lem:LOO_evec_noise_inner_prod_concentration}) we have that the expression~\eqref{eq:onevec_ip_bernstein_failure_prob} is $O(n^{-5})$. Now taking a union bound over all $i$ and $j$ ($O(n^2)$ pairs), we have that
\[\infnorm{WQQ^\top} \leq \sup_{s_\ell \geq \sb} \frac{1}{\sqrt{s_\ell}} B_5\sqrt{p \log n} \leq B_5 \sqrt{\frac{p \log n}{\sb}}\]
with probability at least $1 - O(n^{-3})$.

Notice that $\infnorm{QQ^\top W QQ^\top} \leq \infnorm{ W QQ^\top}$ since
\[QQ^\top = \diag \left(\frac{1}{s_1}J_{s_1 \times s_1}, \dots, \frac{1}{\sb}J_{\sb \times \sb},0_{\ss \times \ss},\dots,0_{s_K \times s_K}\right)\]
which implies that $(QQ^\top W QQ^\top)_{ij}$ is an average of some entries from column $j$ of $W QQ^\top$ (or $0$). Using triangle inequality and then this fact, we have
\[
\infnorm{\PT{W}} \leq \infnorm{ W QQ^\top} + \infnorm{  QQ^\top W} + \infnorm{QQ^\top W QQ^\top} \leq 3 \infnorm{WQQ^\top} \leq 3B_5 \sqrt{\frac{p \log n}{\sb}}.
\]

We now assume that we are in the event that the bounds $\opnorm{W} \leq B_1 \sqrt{pn}$ and $\infnorm{\PT{W}} \leq 3B_5 \sqrt{\frac{p \log n}{\sb}}$, which by the union bound occurs with probability at least $1-O(n^{-3})$.

Starting by using the optimality of the SDP solution $\Yhat$, we have
\begin{align}
    0 &\geq \left\langle \Yb - \Yhat,  A - \frac{p+q}{2}J \right\rangle - \lambda \tr \left( \Yb - \Yhat \right) \\
    & = \left\langle \Yb - \Yhat, \E A - \frac{p+q}{2}J \right\rangle +   \left\langle \Yb - \Yhat, W \right\rangle   - \lambda \tr(\Yb - \Yhat) \label{eq:gap_exact_recovery_basic_ineq}
\end{align}
where recall we define $W = A - \E A$. Our first objective will be to lower bound the final expression by a multiple of $\onenorm{\Jb \circ (\Yhat - \Yb)}$, which will then imply that $\onenorm{\Jb \circ (\Yhat - \Yb)} = 0$ and thus $\Jb \circ \Yhat = \Yb$. We accomplish this by using Lemmas \ref{lem:gap_exact_recovery_trace_term_bound}, \ref{lem:gap_exact_recovery_signal_term_bound}, and \ref{lem:gap_exact_recovery_noise_term_bound}. Using their bounds in (\ref{eq:gap_exact_recovery_basic_ineq}) and combining terms we have
\begin{align*}
    0 &\geq \left\langle \Yb - \Yhat, \E A - \frac{p+q}{2}J \right\rangle +   \left\langle \Yb - \Yhat, W \right\rangle   - \lambda \tr(\Yb - \Yhat)\\
    & \geq \left(\frac{p-q}{2} - \infnorm{\PT W} - \frac{\lambda}{\sb}\right)\onenorm{\Jb \circ (\Yhat - \Yb)} + \left( \lambda - \opnorm{W} - \frac{p-q}{2}\ss \right)\nucnorm{\PP(\Yhat - \Yb)} \\
    & \geq  \left(\frac{p-q}{2} -3B_5 \sqrt{\frac{p \log n}{\sb}} - \frac{\lambda}{\sb}\right)\onenorm{\Jb \circ (\Yhat - \Yb)} 
\end{align*}
using the fact that $\lambda = B_1 \sqrt{pn}+\frac{p-q}{2}\ss> \opnorm{W} + \frac{p-q}{2}\ss$. Therefore if
$\frac{p-q}{2}\sb > 3B_5\sqrt{p \sb \log n} + \lambda$
we can conclude that $\Jb \circ \Yhat = \Yb$. 

Now we show that also $\Js \circ \Yhat = 0$ or equivalently that $\Yhat = \Yb$. Since we know $\Jb \circ \Yhat = \Yb$, it suffices to show that for any feasible $Y$ with $\Jb \circ Y = \Yb$, we have $\left \langle Y - \Yb, A - \frac{p+q}{2}J \right \rangle - \lambda \tr(Y - \Yb) \leq 0$. Starting with the fact that $\Jb \circ Y = \Yb$,
\begin{align*}
    \left \langle Y - \Yb, A - \frac{p+q}{2}J \right \rangle - \lambda \tr(Y - \Yb) &= \left \langle \Js \circ Y , \Js \circ \left( A - \frac{p+q}{2}J \right)\right \rangle - \lambda \tr (\Js \circ Y) \\
    &=  \left \langle \Js \circ Y , \Js \circ  \frac{p-q}{2}\Js \right \rangle + \left \langle \Js \circ Y , \Js \circ  W \right \rangle - \lambda \tr (\Js \circ Y) \\
    & \leq \nucnorm{\Js \circ Y} \opnorm{\frac{p-q}{2}\Js} + \nucnorm{\Js \circ Y} \opnorm{W} - \lambda \tr (\Js \circ Y) \\
    &= \left(\opnorm{\frac{p-q}{2}\Js} + \opnorm{W} - \lambda \right) \tr (\Js \circ Y) \\
    &= \left(\frac{p-q}{2}\ss + \opnorm{W} - \lambda \right) \tr (\Js \circ Y) \numberthis \label{eq:gap_exact_recovery_lower_block_ineq}
\end{align*}
where we then used the generalized Holder's inequality and then fact that $\nucnorm{\Js \circ Y} = \tr (\Js \circ Y)$ since $\Js \circ Y \succcurlyeq 0$, which follows from Schur's Lemma since $Y, \Js \succcurlyeq 0$. Because $\lambda > \opnorm{W} - \frac{p-q}{2}\ss$, (\ref{eq:gap_exact_recovery_lower_block_ineq}) establishes that $\left \langle Y - \Yb, A - \frac{p+q}{2}J \right \rangle - \lambda \tr(Y - \Yb) \leq 0$, and furthermore if $\Js \circ \Yhat \neq 0$ then we will have $\left \langle Y - \Yb, A - \frac{p+q}{2}J \right \rangle - \lambda \tr(Y - \Yb) < 0$ (since then $\tr (\Js \circ Y) = \nucnorm{\Js \circ Y} > 0$), establishing uniqueness.

Finally we establish the claimed recovery condition by noting that if $\frac{p-q}{2}\left(\sb-\ss\right) \geq (3B_5 + B_1)\sqrt{pn \log n}$, then we will have
\begin{align*}
    \frac{p-q}{2}\sb \geq \frac{p-q}{2} \ss + (3B_5 + B_1)\sqrt{pn \log n} > 3B_5\sqrt{p \sb \log n} + \lambda
\end{align*}
which was shown above to be sufficient.
\end{proof}

\begin{proof}[Proof of Lemma \ref{lem:oracle_zero_ones_case}]
    Lemma \ref{lem:oracle_zero_ones_case} is essentially a special case of Theorem \ref{thm:clustering_with_a_gap}, but since we have a slightly different choice of $\lambda$, we must repeat the proof. 

    Fix a cluster $k$ such that $\frac{p-q}{2}s \geq \left(1+\frac{B_6}{\kappa}\right)\kappa \sqrt{pn \log m}$, where we set $B_6 = 2 + 3B_5$.
    Since we treat $A^{(k)}$ as the entire adjacency matrix for the purposes of this proof, the conclusions of Lemmas \ref{lem:gap_exact_recovery_trace_term_bound}, \ref{lem:gap_exact_recovery_signal_term_bound}, and \ref{lem:gap_exact_recovery_noise_term_bound} become that for all $Y$ feasible to the $k$th oracle SDP~\eqref{eq:SDP_regularized_oracle}, we have
    \begin{gather}
        - \tr(J_{s_k \times s_k} - Y) \geq \nucnorm{\PP\left(Y - J_{s_k \times s_k}\right)} - \frac{1}{s_k} \onenorm{Y - J_{s_k \times s_k}} \label{eq:all_ones_case_trace_term_bound}\\
        \left\langle J_{s_k \times s_k} - Y, \E A^{(k)} - \frac{p+q}{2}J_{s_k \times s_k} \right\rangle \geq \frac{p-q}{2} \onenorm{Y - J_{s_k \times s_k}} \label{eq:all_ones_case_signal_term_bound} \\
         \left |   \left\langle J_{s_k \times s_k} - Y, W^{(k)} \right\rangle \right | \leq \infnorm{\PT W^{(k)}} \onenorm{Y - J_{s_k \times s_k}} + \opnorm{W^{(k)}}\nucnorm{\PP\left(Y - J_{s_k \times s_k}\right)} \label{eq:all_ones_case_noise_term_bound}
    \end{gather}
    because now $\Yb = \Jb = J_{s_k \times s_k}$, $\sb = s_k$, and there are no other clusters (so $\ss = 0$).

    Since we still assume we are in the event $E_1$, we have $\opnorm{W^{(k)}} \leq B_1 \sqrt{ps_k} \leq \lambda$ (we assume $\kappa \geq 1$ which ensures that $s_k$ meets the condition in~\eqref{eq:noise_op_norm_bound_cluster}, and also that $\kappa \geq B_1$). We also have by Lemma \ref{lem:onevec_noise_inner_prod_concentration_all_ones_case} that $\infnorm{  W^{(k)} \frac{1}{\sqrt{s_k}} \onevec_{s_k} } \leq B_5 \sqrt{p \log m}$ (since we also assume $\kappa\geq B_4$). Then since
    \[W^{(k)}VV^\top = W^{(k)} \frac{1}{s_k}J_{s_k \times s_k} = \frac{1}{\sqrt{s_k}} W^{(k)} \frac{1}{\sqrt{s_k}}\onevec_{s_k}\onevec_{s_k}^\top\]
    we have $\infnorm{W^{(k)} \frac{1}{s_k} J_{s_k \times s_k}} \leq \frac{B_5\sqrt{p \log m}}{\sqrt{s_k}}$.
    Identically to the proof of \ref{thm:clustering_with_a_gap} we have $\infnorm{VV^\top W^{(k)} VV^\top} \leq \infnorm{W^{(k)} VV^\top}$, so we conclude that
    \[\infnorm{\PT W^{(k)}} \leq 3\infnorm{W^{(k)} \frac{1}{s_k} J_{s_k \times s_k}} \leq \frac{3B_5\sqrt{p \log m}}{\sqrt{s_k}}. \]
    
    Now like in the proof of Theorem \ref{thm:clustering_with_a_gap}, using the optimality of the $k$th oracle SDP solution $\Yhat^k$ and then plugging in the bounds~\eqref{eq:all_ones_case_trace_term_bound},~\eqref{eq:all_ones_case_signal_term_bound}, and~\eqref{eq:all_ones_case_noise_term_bound}, we have
    \begin{align}
    0 &\geq \left\langle J_{s_k \times s_k} - \Yhat^k,  A^{(k)} - \frac{p+q}{2}J_{s_k \times s_k} \right\rangle - \lambda \tr \left( J_{s_k \times s_k} - \Yhat^k \right) \nonumber \\
    & = \left\langle J_{s_k \times s_k} - \Yhat^k, \E A^{(k)} - \frac{p+q}{2}J_{s_k \times s_k} \right\rangle +   \left\langle J_{s_k \times s_k} - \Yhat^k, W^{(k)} \right\rangle   - \lambda \tr(J_{s_k \times s_k} - \Yhat^k)\nonumber \\
    & \geq \left( \lambda - \opnorm{W^{(k)}}\right) \nucnorm{\PP\left(\Yhat^k - J_{s_k \times s_k}\right)}  + \left( \frac{p+q}{2}-\infnorm{\PT W^{(k)}}- \frac{\lambda}{s_k}\right) \onenorm{\Yhat^k - J_{s_k \times s_k}} \nonumber\\
    & \geq \left( \frac{p+q}{2}-\frac{3B_5\sqrt{p \log m}}{\sqrt{s_k}}- \frac{\lambda}{s_k}\right) \onenorm{\Yhat^k - J_{s_k \times s_k}} \label{eq:key_inequality_oracle_all_ones_case}
    \end{align}
    using the fact that $\lambda \geq \opnorm{W^{(k)}}$. Now since $\frac{p+q}{2}s_k \geq \left(1+\frac{B_6}{\kappa}\right)\kappa \sqrt{pn \log m} = \kappa \sqrt{pn \log m}+ (2 + 3B_5)\sqrt{pn \log m}$, using the fact that we assumed $\lambda \leq (\kappa+1)\sqrt{pn \log m}$ almost surely, we have
    \begin{align*}
        \frac{p-q}{2}s_k &\geq (\kappa+1)\sqrt{pn \log m} + 3B_5\sqrt{pn \log m} + \sqrt{pn \log m} \\
        & > \lambda + 3B_5\sqrt{ps_k \log m}
    \end{align*}
    and plugging this back in to~\eqref{eq:key_inequality_oracle_all_ones_case} implies that we must have $\onenorm{\Yhat^k - J_{s_k \times s_k}} = 0$ and thus $\Yhat^k = J_{s_k \times s_k}$.

    The fact that there exist some $U^k, L^k$ which satisfy the oracle SDP KKT conditions~\eqref{eq:oracle_KKT} follows from Lemma \ref{lem:oracle_KKT_and_recovery_KKT} (as a solution of the oracle SDP KKT conditions~\eqref{eq:oracle_KKT} is necessary for the optimality of $\Yhat^k$). From the complementary slackness conditions~\eqref{eq:oracle_KKT_complementary_slackness_1} and the fact that $\Yhat^k_{ij} = 1$ for all $i,j$, we must have $L^k = 0$. Finally, it is not immediate that $U^k \succcurlyeq 0$, but we ensure this by finding $u \in \R^{s_k}$ such that $\diag(u) \succcurlyeq 0$ and such that $\diag(u)$ also satisfies the oracle SDP KKT conditions (still with $L^k = 0$). Define $u$ as the unique solution to $U^k \onevec_{s_k} = \diag(u)\onevec_{s_k}$. Clearly $u \geq 0$ since $U^k \geq 0$, so we have $\diag(u) \succcurlyeq 0$ and $\diag(u) \geq 0$. Also by definition of $u$ the complementarity condition~\eqref{eq:oracle_KKT_complementary_slackness_2} still holds with $\diag(u)$ in place of $U^k$, since $\Yhat^k = \onevec \onevec^\top$. This leaves us only to check the dual feasibility condition~\eqref{eq:oracle_KKT_dual_feasibility} that $\Ashift^{(k)} - \lambda I - \diag(u) + L^k \preccurlyeq 0$. Again since~\eqref{eq:oracle_KKT_complementary_slackness_2} holds with $\diag(u)$ in place of $U^k$, we have that $\onevec$ is an eigenvector of $\Ashift^{(k)} - \lambda I - \diag(u) + L^k$ (with eigenvalue $0$), and for all vectors $v$ orthogonal to $\onevec$, we have
    \begin{align*}
        v^\top \left( \Ashift^{(k)} - \lambda I - \diag(u) + L^k \right)v &= v^\top \left( \frac{p-q}{2}\onevec\onevec^\top + W^{(k)} - \lambda I - \diag(u) +L^k \right)v \\
        &= v^\top \left(W^{(k)} - \lambda I - \diag(u) \right)v\\
        & \leq v^\top \left(W^{(k)} - \lambda I \right)v\\
        & < 0
    \end{align*}
    using the facts that $v^\top \onevec = 0$, $L^k = 0$, and that $\lambda > \opnorm{W^{(k)}}$. Therefore $\Ashift^{(k)} - \lambda I - \diag(u) + L^k \preccurlyeq 0$ as desired.
\end{proof}

\subsection{Concentration Inequalities}
\label{sec:concentration_results}

\begin{thm}[Bernstein's Inequality \citep{vershynin_high-dimensional_2018}]
    Suppose $(X_i)_{i=1}^n$ are independent zero-mean random variables such that $|X_i|\leq M$ almost surely. Then for any $t \geq 0$,
    \begin{align*}
        \P\left( \sum_{i=1}^n X_i \geq t\right) \leq \exp \left( -\frac{ \frac{1}{2}t^2}{\sum_{i=1}^n \E[X_i^2] + \frac{1}{3}Mt}\right).
    \end{align*}
\end{thm}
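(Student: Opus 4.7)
The plan is to prove Bernstein's inequality by the standard Chernoff (moment generating function) method, optimizing over a free parameter at the end. First I would apply the exponential Markov inequality: for any $\lambda > 0$,
\[
\P\left(\sum_{i=1}^n X_i \geq t\right) = \P\left(e^{\lambda \sum_i X_i} \geq e^{\lambda t}\right) \leq e^{-\lambda t} \prod_{i=1}^n \E\left[e^{\lambda X_i}\right],
\]
where the factorization uses independence. The core analytic step is then a pointwise MGF bound: for each $i$ and each $\lambda \in [0, 3/M)$,
\[
\E\left[e^{\lambda X_i}\right] \leq \exp\left(\frac{\lambda^2 \E[X_i^2]/2}{1 - \lambda M/3}\right).
\]

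To prove the MGF bound, I would Taylor-expand $e^{\lambda X_i}$, use the zero-mean assumption to kill the linear term, and bound the moments $|\E[X_i^k]| \leq M^{k-2}\E[X_i^2]$ for $k \geq 2$ (using $|X_i|\leq M$ a.s.). This gives
\[
\E\left[e^{\lambda X_i}\right] \leq 1 + \E[X_i^2]\sum_{k=2}^{\infty}\frac{\lambda^k M^{k-2}}{k!}.
\]
The elementary inequality $k! \geq 2\cdot 3^{k-2}$ for $k \geq 2$ lets me geometrically sum the tail as $\frac{1}{2(1-\lambda M/3)}$, and finally $1 + x \leq e^x$ delivers the claimed MGF bound. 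Taking the product over $i$ and writing $\sigma^2 := \sum_i \E[X_i^2]$, I obtain
\[
\P\left(\sum_{i=1}^n X_i \geq t\right) \leq \exp\left(-\lambda t + \frac{\lambda^2 \sigma^2/2}{1 - \lambda M/3}\right).
\]

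To close the argument I would optimize (or rather, make the near-optimal choice) $\lambda = t/(\sigma^2 + Mt/3)$, which lies in $[0,3/M)$ and yields the desired bound $\exp(-\tfrac12 t^2/(\sigma^2 + Mt/3))$ after simplification. The only step that requires any real care is the MGF bound: the main obstacle is justifying the geometric-series tail estimate cleanly, since a naive application of $k! \geq 2^{k-1}$ gives the wrong constant $1/2$ in the sub-exponential term of the denominator. Using $k! \geq 2\cdot 3^{k-2}$ (verifiable by induction for $k \geq 2$) is what produces the precise factor $1/3$ matching the statement. All remaining manipulations are routine algebra.
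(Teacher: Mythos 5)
Your proof is correct and is the standard Chernoff/MGF argument for the bounded Bernstein inequality, with all the key steps (the moment bound $\E[|X_i|^k]\le M^{k-2}\E[X_i^2]$, the factorial bound $k!\ge 2\cdot 3^{k-2}$ yielding the $1/3$ constant, and the choice $\lambda = t/(\sigma^2+Mt/3)$) carried out correctly. The paper states this theorem as a cited result from Vershynin without proof, and your argument coincides with the standard proof given there.
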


\begin{thm}[\citep{bandeira2016sharp}]
\label{thm:bandeira_opnorm_bound}
Suppose $X$ is a symmetric $n \times n$ matrix with independent entries such that $\EE{X_{ij}}=0$ and $|X_{ij}|\leq B$ for all $1 \leq i,j \leq n$. Let $\nu = \max_i \sum_j \EE{X_{ij}^2}$. Then there exists an absolute constant $c > 0$ such that for any $t \geq 0$,
\begin{align*}
    \P\left(\opnorm{X} \geq 4 \sqrt{\nu }+t \right) \leq n\exp \left( -\frac{t^2}{cB^2}\right).
\end{align*}
\end{thm}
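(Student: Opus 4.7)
}
The plan is to decouple the statement into two independent pieces: a sharp bound on $\E \opnorm{X}$, and sub-Gaussian concentration of $\opnorm{X}$ around its mean. Combining them yields the stated tail.

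For the expectation bound, I would use the moment method. For any even integer $2p$, Jensen's inequality and the basic spectral estimate give
\begin{align*}
\E \opnorm{X} \;\le\; \bigl(\E \opnorm{X}^{2p}\bigr)^{1/(2p)} \;\le\; \bigl(\E \tr X^{2p}\bigr)^{1/(2p)} .
\end{align*}
The trace moment expands as $\E \tr X^{2p} = \sum_{i_0,\ldots,i_{2p-1}} \E[X_{i_0 i_1} X_{i_1 i_2} \cdots X_{i_{2p-1} i_0}]$, and by independence together with the zero-mean assumption only closed walks in which every edge is traversed at least twice contribute. A careful enumeration that groups walks by their shape (a combinatorial tree) produces $\E \tr X^{2p} \le n\,(2\sqrt{\nu})^{2p}(1+o(1))$ plus a $B$-dependent correction that is subdominant as long as $B^2 p \lesssim \nu$. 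Choosing $2p \asymp \log n$ and taking $2p$-th roots yields $\E \opnorm{X} \le (2+o(1))\sqrt{\nu} + C_1 B\sqrt{\log n}$ for an absolute constant $C_1$.

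For concentration, observe that $X \mapsto \opnorm{X}$ is convex and $\sqrt{2}$-Lipschitz in the Euclidean norm on the independent coordinates $(X_{ij})_{i \le j}$: changing one entry by $\delta$ perturbs $X$ by at most $\sqrt{2}\,|\delta|$ in operator norm. Since the entries lie in $[-B,B]$, Talagrand's convex-concentration inequality (or the bounded-differences inequality, with a slightly worse constant) gives
\begin{align*}
    \P\bigl(|\opnorm{X} - \E \opnorm{X}| \ge s\bigr) \;\le\; 2 \exp\!\left(-\tfrac{s^2}{c_1 B^2}\right)
\end{align*}
for some absolute $c_1>0$. To combine, split on the size of $t$: when $t \le C_2 B\sqrt{\log n}$ for a suitable $C_2$, one can choose $c$ small enough that $n\exp(-t^2/(cB^2)) \ge 1$ and the claim is vacuous. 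Otherwise, the expectation bound forces $\opnorm{X} - \E\opnorm{X} \ge 2\sqrt{\nu} - C_1 B\sqrt{\log n} + t \ge t/2$ on the event of interest, and the concentration bound closes the argument with a suitably enlarged $c$.

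The main obstacle is obtaining the sharp leading constant $(2+o(1))$ in front of $\sqrt{\nu}$. A black-box application of matrix Bernstein or the non-commutative Khintchine inequality only delivers $\E \opnorm{X} \lesssim \sqrt{\nu \log n}$, which carries an extra $\sqrt{\log n}$ factor that cannot be absorbed into the sub-Gaussian tail $\exp(-t^2/(cB^2))$ — one would be left with $\opnorm{X} \lesssim \sqrt{\nu \log n}+t$, strictly weaker than the claim. Stripping this logarithm is precisely the substantive combinatorial contribution of Bandeira–van Handel's walk enumeration (or the subsequent Lata\l{}a–van Handel–Youssef refinements); once that expectation estimate is in hand, the Talagrand step and the combination of the two pieces are essentially routine.
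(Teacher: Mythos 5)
The paper does not prove this statement at all: it is imported verbatim (up to constants) from \citet{bandeira2016sharp}, so there is no internal proof to compare against. Your outline is in fact the architecture of the proof in that reference --- a trace-moment bound giving $\E\opnorm{X}\le (2+o(1))\sqrt{\nu}+C_1B\sqrt{\log n}$, plus Talagrand-type concentration of the convex, $\sqrt{2}$-Lipschitz function $X\mapsto\opnorm{X}$ at scale $B$, combined by splitting on whether $t$ exceeds a multiple of $B\sqrt{\log n}$ --- and the combination step is arithmetically sound: on the event $\opnorm{X}\ge 4\sqrt{\nu}+t$ with $t\ge 2C_1B\sqrt{\log n}$ one indeed gets $\opnorm{X}-\E\opnorm{X}\ge t/2$, and the slack between the constants $4$ and $2+o(1)$ is what makes this work. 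You are also right that the substantive content is the walk enumeration behind the expectation bound, which you cite rather than carry out; that is acceptable for a result the paper itself treats as a black box.

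Two concrete slips to fix. First, in the small-$t$ case you say one can choose $c$ \emph{small} enough that $n\exp(-t^2/(cB^2))\ge 1$; the inequality runs the other way --- decreasing $c$ makes $\exp(-t^2/(cB^2))$ smaller. You need $c$ \emph{large}: for $t\le C_2B\sqrt{\log n}$ one has $t^2/(B^2\log n)\le C_2^2$, so taking $c\ge C_2^2$ gives $\exp(-t^2/(cB^2))\ge n^{-1}$ and the bound is vacuous (and enlarging the absolute constant $c$ only weakens the claimed tail, so this is legitimate). Second, the parenthetical that the bounded-differences inequality would give ``a slightly worse constant'' is false: there are order $n^2$ independent coordinates, each of which can move $\opnorm{X}$ by order $B$, so McDiarmid yields only $\exp(-s^2/(c\,n^2B^2))$, which is useless at the scale $s\asymp B\sqrt{\log n}$ needed here. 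The convexity of the operator norm and its \emph{Euclidean} Lipschitz constant are essential inputs; Talagrand's convex-distance inequality (or an equivalent) is not interchangeable with a crude martingale bound in this argument.
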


\begin{proof}[Proof of Lemma \ref{lem:op_norm_concentration}]
We simply check the conditions of Theorem \ref{thm:bandeira_opnorm_bound}. First we establish~\eqref{eq:noise_op_norm_bound_all_oracle_blocks}, which we will actually establish by showing that $\opnorm{W}\leq B_1 \sqrt{pn}$, which immediately implies~\eqref{eq:noise_op_norm_bound_all_oracle_blocks} (although we could simply state this bound on $\opnorm{W}$ in Lemma \ref{lem:op_norm_concentration}, we want the event in the lemma to be independent of the non-block-diagonal noise entries). We have $\EE{W_{ij}}=0$, $|W_{ij}| \leq \max\{p, 1-p\} = 1-p \leq 1$, and $\EE{W_{ij}^2} = p(1-p) \leq p$, so with $\nu = np$ and $B=1$ in Theorem \ref{thm:bandeira_opnorm_bound} we have 
\begin{align*}
    \P\left(\opnorm{W} \geq 4 \sqrt{pn }+t \right) \leq n\exp \left( -\frac{t^2}{c}\right).
\end{align*}
By our assumption that $p \geq \frac{\log m}{n}$, $\sqrt{pn} \geq \sqrt{\log m}$, so setting $t = 3\sqrt{c \log m}$ the above probability is $\leq n \exp(-9 \log m) = nm^{-9} \leq m^{-8}$ (since $n \leq m$). Now take $B_1 = 4+3\sqrt{c}$ to obtain~\eqref{eq:noise_op_norm_bound_all_oracle_blocks}.

To establish~\eqref{eq:noise_op_norm_bound_cluster}, following the same steps for a fixed cluster $k$ such that $\frac{p-q}{2}s_k \geq \sqrt{pn \log m}$ we will obtain
\begin{align*}
    \P\left(\opnorm{W^{(k)}} \geq 4 \sqrt{ps_k }+t \right) \leq s_k\exp \left( -\frac{t^2}{c}\right).
\end{align*}
Now we use the assumed size of the cluster to show that $\sqrt{ps_k }$ is the dominant term when we set $t = \Theta(\sqrt{\log m})$. We have \[ps_k \geq \frac{p-q}{2}s_k \geq \sqrt{pn \log m} \geq \log m\]
where the final inequality again uses that $p \geq \frac{\log m}{n}$, so taking square roots $\sqrt{ps_k} \geq \sqrt{\log m}$ and therefore we may again set $t = 3\sqrt{c \log m}$ and obtain a failure probability of $m^{-8}$.

To establish~\eqref{eq:noise_two_norm_bound} note $\twonorm{W^{(k)}_{:,j}} \leq \opnorm{W^{(k)}}$. Finally by taking a union bound, there are obviously fewer than $n\leq m$ clusters so the overall failure probability is $O(m^{-7})$.
\end{proof}

\begin{proof}[Proof of Lemma \ref{lem:infty_pert_bound_s5}]
A bound of this form first appeared in \cite{abbe2017entrywise}, but we use the version from \cite[Theorem 4.2]{chen2021spectral}. One technical issue is that \cite[Theorem 4.2]{chen2021spectral} uses a failure probability which is inverse polynomial in the size of the matrix in the theorem, which in our case will be $s_i$, but we seek a failure probability which is inverse polynomial in $m$. However, by inspecting the proof, thanks to the fact that $m \geq n \geq s_i$, the exact same argument goes through if we replace $\log s_i$ by $\log m$ in the conclusion as well as the conditions of \cite[Theorem 4.2]{chen2021spectral}.

Now fix a cluster $i$ such that $\frac{p-q}{2}s_i \geq \sqrt{pn \log m}$. 
As checked in the proof of Lemma \ref{lem:op_norm_concentration}, this implies that $\sqrt{ps_i} \geq \sqrt{\log m}$. Now we check the conditions of \cite[Theorem 4.2]{chen2021spectral}. Note $A^{(i)} - \frac{p+q}{2}J_{s_i \times s_i} = W^{(i)} + \frac{p-q}{2}J_{s_i \times s_i}$, and $\frac{p-q}{2}J_{s_i \times s_i}$ is rank $1$ and has top eigenvector $\frac{1}{\sqrt{s_i}} \onevec$, which has incoherence parameter $s_i \infnorm{\frac{1}{\sqrt{s_i}} \onevec}^2 = 1$ and top eigenvalue $\frac{p-q}{2}s_i$. Each entry of $W^{(i)}$ is independent with mean zero, variance $\leq p(1-p) \leq p$, and is almost surely bounded by $1$. \cite[Theorem 4.2]{chen2021spectral} requires that $\frac{1}{\sqrt{p s_i / \log m}} = O(1)$, which is satisfied due to our assumption on the size of $s_i$, since as mentioned this assumption implies $\sqrt{ps_i} \geq \sqrt{\log m}$.

Therefore by \cite[Theorem 4.2]{chen2021spectral} there exists an absolute constant $c>0$ such that if $\sqrt{p s_i \log m} \leq c \frac{p-q}{2}s_i$, there exists a constant $B_2$ such that with probability at least $1 - O(m^{-5})$ we have
\[\infnorm{v^i - \frac{1}{\sqrt{s_i}}\onevec} \leq 2B_2 \frac{\sqrt{p \log m}}{(p-q)s_i}.\]
We now let $B_3 = \frac{1}{c}$ and note that
\[\frac{p-q}{2}s_i \geq B_3 \sqrt{pn \log m} \implies \sqrt{p s_i \log m} \leq \sqrt{p n \log m} \leq \frac{1}{c} \frac{p-q}{2}s_i.\]
Furthermore by examining the proof of \cite[Theorem 4.2]{chen2021spectral} it is also established for all $j \in [s_i]$ that
\[\infnorm{v^{i,j} - \frac{1}{\sqrt{s_i}}\onevec} \leq 2B_2 \frac{\sqrt{p \log m}}{(p-q)s_i}\]
(in the same event where the bound on $v^i$ is shown to hold).

Now repeating this argument for each sufficiently large cluster and taking a union bound over all $O(n)\leq O(m)$ clusters, we obtain the desired conclusion.
\end{proof}

\begin{proof}[Proof of Lemma \ref{lem:LOO_evec_noise_inner_prod_concentration}]
We let $B_4 = \max\{B_2, B_3,1\}$.
Now we fix a cluster $k$ such that $\frac{p-q}{2}s_k\geq \max\{B_2, B_3\} \sqrt{pn \log m}$.
Under the event in Lemma \ref{lem:infty_pert_bound_s5}, for all $j \in [s_k]$ we have that
\begin{align*}
    \infnorm{v^{k,j} - \frac{1}{\sqrt{s_k}}\onevec} &\leq 2B_2 \frac{\sqrt{p \log m}}{(p-q)s_k} \\
    & = \frac{1}{\sqrt{s_k}} \frac{2}{(p-q)s_k} B_2 \sqrt{p s_k \log m} \\
    & \leq \frac{1}{\sqrt{s_k}}.
\end{align*}
Therefore we have that $\infnorm{v^{k,j}} \leq \frac{2}{\sqrt{s_k}}$. 

Now also fix $j$ and define $F^{k,j}$ to be the event where~\eqref{eq:LOO_evec_inf_norm_simple_bound} holds, and note that $F^{k,j}$ is independent of $w^{k,j}$. Therefore we can apply Bernstein's inequality conditionally on the event $F^{k,j}$ to obtain
\begin{align*}
    \P \left( \left| \left \langle w^{k,j}, v^{k,j}\right \rangle \right| \geq t \middle| F^{k,j} \right) \leq 2 \exp \left( \frac{-t^2 / 2}{p1 + \frac{t}{3} \frac{2}{\sqrt{s_k}} }\right)
\end{align*}
using the facts that $\twonorm{v^{k,j}}^2=1$, $\infnorm{v^{k,j}} \leq \frac{2}{\sqrt{s_k}} $ on $F^{k,j}$.
Now we can take $t = B_5 \sqrt{p \log m}$ and note that as we checked in the proof of Lemma \ref{lem:op_norm_concentration}, the assumption that $\frac{p-q}{2}s_k \geq B_4\sqrt{pn\log m} \geq \sqrt{pn \log m}$ implies $\sqrt{p s_k} \geq \sqrt{\log m}$ which ensures that
\[\frac{-t^2 / 2}{p + \frac{2t}{3 \sqrt{s_k}}} \leq \frac{-pB_5^2\log m /2 }{p + p\frac{2B_5\sqrt{\log m}}{3\sqrt{ps_k}}} \leq -\frac{B_5^2/2}{1 + \frac{2B_5}{3}}\log m\]
so we can make $\exp \left( \frac{-t^2 / 2}{p + \frac{2t}{3 \sqrt{s_k}} }\right) \leq m^{-5}$ by taking $B_5$ sufficiently large. Also, by inspecting the proof of Lemma \ref{lem:LOO_evec_noise_inner_prod_concentration}, the event where~\eqref{eq:infty_pert_bound_loo_s5} holds occurs with probability at least $1-O(m^{-4})$, and as shown above $F^{k,j}$ is guaranteed to hold when~\eqref{eq:infty_pert_bound_loo_s5} holds, so we have that $\P({F^{k,j}}^c) \leq O(m^{-4})$.

Therefore
\begin{align*}
    \P \left( \left| \left \langle w^{k,j}, v^{k,j}\right \rangle \right| \geq B_5 \sqrt{p \log m} \right) &\leq \P \left( \left| \left \langle w^{k,j}, v^{k,j}\right \rangle \right| \geq B_5 \sqrt{p \log m} \middle| F^{k,j} \right) + \P({F^{k,j}}^c) \\ 
    &\leq O(m^{-4})
\end{align*}
and now taking a union bound over at most $ n \leq m$ pairs $(k,j)$, we obtain the desired conclusion.
\end{proof}

\begin{proof}[Proof of Lemma \ref{lem:onevec_noise_inner_prod_concentration_all_ones_case}]
    Fix a cluster $k$ such that $\frac{p-q}{2}s_k \geq B_4\sqrt{pn \log m}$, and let $w$ be the $j$th row of $W^{(k)}$ for some fixed $j \in [s_k]$. Note that $\twonorm{\frac{1}{\sqrt{s_k}} \onevec_{s_k}} = 1$ and $\infnorm{\frac{1}{\sqrt{s_k}} \onevec_{s_k}} = \frac{1}{\sqrt{s_k}}$, so using Bernstein's inequality the same way as in the proof of Lemma \ref{lem:LOO_evec_noise_inner_prod_concentration} we obtain
    \begin{align*}
        \P \left( \left| \left \langle w, \frac{1}{\sqrt{s_k}} \onevec_{s_k}\right \rangle \right| \geq B_5 \sqrt{p \log m} \right) \leq O(m^{-5}).
    \end{align*}
    Now taking a union bound over at most $n \leq m$ pairs $(k,j)$ we obtain the desired conclusion.
\end{proof}

\begin{proof}[Proof of Lemma \ref{lem:noise_LOO_soln_inner_product_concentration}]
Fix a cluster $k$ satisfying $\frac{p-q}{2}s_k \geq B_4\sqrt{pn \log m}$ and fix $j \in [s_k]$.
Technically, we have not assumed that we are in the event $F^{k,j} := E_1^{k,j} \cap \left\{\lambda_1\left(\Ashift^{k,j}\right) > \lambda^k\right\}$, which is the event that the $(k,j)$th LOO relaxed oracle SDP (\ref{eq:SDP_relaxed_oracle_LOO}) is guaranteed to have rank-one solution $\Yhats^{k,j} =\yhats^{k,j} {\yhats^{k,j}}^\top$ by Lemma \ref{lem:SDP_relaxed_oracle_rank_one}. We simply extend the definition of $\yhats^{k,j}$ to be $0$ outside the event $F^{k,j}$. Then the desired bound trivially holds on ${F^{k,j}}^c$, so we have that
\begin{align*}
    \P \left( \left| \left \langle w^{k,j}, \frac{1}{\sqrt{s_k}}\yhats^{k,j}\right \rangle \right| \geq t \right) &= \P \left( \left| \left \langle w^{k,j}, \frac{1}{\sqrt{s_k}}\yhats^{k,j}\right \rangle \right| \geq t \middle| F^{k,j}\right) \P(F^{k,j}) \\
    &\leq \P \left( \left| \left \langle w^{k,j}, \frac{1}{\sqrt{s_k}}\yhats^{k,j}\right \rangle \right| \geq t \middle| F^{k,j}\right).
\end{align*}
Since $\infnorm{\yhats^{k,j}} \leq 1$ from Lemma \ref{lem:SDP_relaxed_oracle_rank_one}, we have $\infnorm{\frac{1}{\sqrt{s_k}}\yhats^{k,j}} \leq \frac{1}{\sqrt{s_k}}$ and $\twonorm{\frac{1}{\sqrt{s_k}}\yhats^{k,j}} \leq 1$. Since $\yhats^{k,j}$ and $F^{k,j}$ are independent of $w^{k,j}$, we can apply Bernstein's inequality to obtain that
\begin{align*}
    \P \left( \left| \left \langle w^{k,j}, \frac{1}{\sqrt{s_k}}\yhats^{k,j}\right \rangle \right| \geq t \right) & \leq \P \left( \left| \left \langle w^{k,j}, \frac{1}{\sqrt{s_k}}\yhats^{k,j}\right \rangle \right| \geq t \middle| F^{k,j}\right) \\
    &\leq 2 \exp \left( \frac{-t^2 / 2}{p1 + \frac{t}{3} \frac{1}{\sqrt{s_k}} }\right) \\
    & \leq O(m^{-5})
\end{align*}
by taking $t = B_5 \sqrt{p \log m}$ identically to the proof of Lemma \ref{lem:LOO_evec_noise_inner_prod_concentration}. Now taking a union bound over all $\leq n \leq m$ pairs $(k,j)$, we obtain that with probability at least $1-O(m^{-4})$, for all $k$ such that $\frac{p-q}{2}s_k \geq B_4\sqrt{pn \log m}$ and all $j \in [s_k]$, $\left| \left \langle w^{k,j}, \frac{1}{\sqrt{s_k}}\yhats^{k,j}\right \rangle \right| \leq B_5\sqrt{p \log m}$. Now multiplying both sides by $\sqrt{s_k}$, we obtain the desired conclusion.
\end{proof}

\begin{proof}[Proof of Lemma \ref{lem:offdiagonal_noise_concentration}]
    We will use Bernstein's inequality. Let $S$ be the set of all $k$ satisfying $\frac{p-q}{2}s_k \geq B_4\sqrt{pn \log m}$. Fix $k \in S$, fix another cluster $j \neq k$, and let $w$ be one row of $W^{(kj)}$. Note $w \in \R^{s_k}$, and all entries have mean zero, variance $q(1-q) \leq p$, and magnitude bounded by $\max\{q, 1-q\} \leq 1$. 

    By Lemma \ref{lem:x_infnorm_bound} that under the event $E_3$ we have $\infnorm{x^k} \leq \frac{2}{\sqrt{s_k}}$ and also note that $E_3$ is independent of $w$ (since $E_3$ only involves noise variables which are part of the diagonal blocks $(W^{(i)})_{i \in [K]}$). Therefore Bernstein's inequality gives that
    \begin{align*}
        \P \left( \left| w^\top x^k \right| \geq t \middle| E_3 \right) & \leq 2\exp \left( -\frac{t^2/2}{p + \frac{2t}{3\sqrt{s_k}}}  \right) \\
        & \leq O(m^{-5})
    \end{align*}
    by taking $t = B_5 \sqrt{p \log m}$ in the same way as in the proof of Lemma \ref{lem:LOO_evec_noise_inner_prod_concentration}. Taking a union bound over all clusters $j \neq k$ all rows ($\leq n\leq m$ in total), we have
    \begin{align*}
        \P \left( \sup_{j \neq k} \infnorm{W^{(jk)}x^k} \geq B_5 \sqrt{p \log m} \middle| E_3 \right) & \leq O(m^{-4})
    \end{align*}
    and taking a union bound over all $k \in S$
    \begin{align*}
        \P \left( \sup_{k \in S}\sup_{j \neq k} \infnorm{W^{(jk)}x^k} \geq B_5 \sqrt{p \log m} \middle| E_3 \right) & \leq O(m^{-3}).
    \end{align*}
    Therefore recalling that $\P(E_3^c) \leq O(m^{-3})$,
    \begin{align*}
        \P \left( \sup_{k \in S}\sup_{j \neq k} \infnorm{W^{(jk)}x^k} \geq B_5 \sqrt{p \log n}  \right) & \leq  \P \left( \sup_{k \in S}\sup_{j \neq k} \infnorm{W^{(jk)}x^k} \geq B_5 \sqrt{p \log n} \middle| E_3 \right) + \P(E_3^c) \\
        & \leq O(m^{-3}).
    \end{align*}
\end{proof}

\begin{proof}[Proof of Lemma \ref{lem:op_norm_concentration_entire_noise}]
During the proof of Lemma \ref{lem:op_norm_concentration} we already check that with probability at least $1-O(m^{-8})$, we have $\opnorm{W} \leq B_1 \sqrt{pn}$.
\end{proof}

\section{Proofs for Recursive Clustering}
\label{sec:recursive_clustering_proofs}
\begin{proof}[Proof of Theorem \ref{thm:recursive_clsutering}]
    First, we note that assuming the recovery SDP subroutine always succeeds, since in Algorithm \ref{alg:recursive_clustering} we only remove recovered clusters which are so large that they are guaranteed to be all-one, the sequence of clusters which will be recovered is actually known a priori. (If we removed clusters which were recovered but are mid-size, then since their recovery is highly noise-dependent, we would not be able to predict which mid-size clusters would be recovered in a given round. We believe a more careful analysis would have no trouble with this added complexity, but since the ultimate conclusions of Theorem \ref{thm:recursive_clsutering} would not change, we elect to only remove clusters which are guaranteed to be all-one.) 

    From an analysis standpoint, we have access to the list $(s_1, \dots, s_K)$ of all true cluster sizes. Let $Z_1 = (1, \dots, K)$ represent the original SBM instance. There is obviously an algorithm which, given $Z_\ell$, either recursively constructs $Z_{\ell+1}$ by removing all clusters from $Z_\ell$ which meet the guaranteed all-ones recovery threshold of $Z_\ell$ (case \ref{item:main_thm_all_ones_case} of Theorem \ref{thm:main_theorem_recovery_sdp}), or if there are no such clusters, simply terminates. This will produce a sequence $Z_1, \dots, Z_r$ as well as the sequence of instance sizes $n_1,\dots, n_r$. Now for each fixed $\ell \in [r]$, we apply Theorem \ref{thm:main_theorem_recovery_sdp} with $m = n$ to guarantee that all clusters of size large enough to meet the all-ones recovery threshold $C'(p,q) \sqrt{n_\ell \log n}$ will be successfully recovered by the recovery SDP with failure probability at most $O(n^{-3})$. Note that although their sizes are initially unknown to a user, after being recovered their sizes will be known, allowing us to verify that they meet the all-ones recovery threshold and then remove them as specified in Algorithm \ref{alg:recursive_clustering}. Taking a union bound over these $r \leq K \leq n$ rounds, we are guaranteed with probability at least $1-O(n^{-2})$ that Algorithm \ref{alg:recursive_clustering} will recover all clusters of size at least $C'(p,q) \sqrt{n_\ell \log n}$ at each round $\ell$.

    Now to prove Theorem \ref{thm:recursive_clsutering} we can simply analyze the sequence $Z_1, \dots, Z_r$. First, again the total number of rounds is $\leq K$ since we stop once no clusters are recovered in a given round.
    
    Next, considering the round $r$ where the algorithm terminates, since no clusters were recovered, this means that all clusters sizes in $Z_r$ must be below $C'(p,q) \sqrt{n_r \log n}$. Since there are at most $K$ total clusters in $Z_r$, this allows us to bound the total number of remaining nodes $n_r$ as
    \begin{align*}
        n_r &\leq K C'(p,q) \sqrt{n_r \log n} \\
        & \leq \frac{n^{\frac{1}{2}-\alpha}}{C'(p,q) \sqrt{\log n}} C'(p,q) \sqrt{n_r \log n} \\
        &= n^{\frac{1}{2}-\alpha} \sqrt{n_r}.
    \end{align*}
    Rearranging, we obtain that $n_r \leq n^{1-2\alpha}$.

    We prove the final part of the theorem by induction over rounds $\ell$. First, after $\ell = 0$ rounds, there will be all $n$ nodes remaining, matching the claim. Next, suppose inductively that after some fixed round $\ell$ we have $\leq n^{1-\alpha \sum_{i = 0}^{\ell-1}2^{- i}}$ unrecovered nodes remaining. 
    Then the recovery threshold for the next round will be $\leq T = C'(p,q) \sqrt{\log n} n^{\frac{1}{2}\left(1-\alpha \sum_{i = 0}^{\ell-1}2^{- i}\right)}$. The maximum number of nodes that can be below this threshold during the next round is less than or equal to
    \begin{align*}
        KT &\leq \frac{n^{\frac{1}{2}-\alpha}}{C'(p,q) \sqrt{\log n}} C'(p,q) \sqrt{\log n} n^{\frac{1}{2}\left(1-\alpha \sum_{i = 0}^{\ell-1}2^{- i}\right)} \\
        &= n^{\frac{1}{2}-\alpha} n^{\frac{1}{2}\left(1-\alpha \sum_{i = 0}^{\ell-1}2^{- i}\right)} \\
        &= n^{\frac{1}{2}-\alpha} n^{\frac{1}{2}-\alpha \sum_{i = 1}^{\ell}2^{- i}} \\
        &= n^{1-\alpha \sum_{i = 0}^{\ell}2^{- i}}
    \end{align*}
    proving the desired claim.
\end{proof}

\section{Proofs for Clustering With a Faulty Oracle}
\label{sec:clustering_with_a_faulty_oracle_proofs}

In this section we prove Theorem \ref{thm:faulty_oracle_clustering_size_param} on faulty oracle clustering with a target recovery size of $s$, we prove our instance-adaptive query complexity guarantees in Theorem \ref{thm:clustering_with_faulty_oracle_adaptive}, and then finally we prove our Theorem \ref{thm:faulty_oracle_clustering_improved_complexity_small_k} which achieves improved query complexity for the case that $K$ is bounded.

First, we state several concentration inequalities which will be used in this section.
\begin{thm}[Hoeffding's Inequality \citep{hoeffding1994probability}]
    Suppose $(X_i)_{i=1}^n$ are independent random variables such that $X_i \in [a_i, b_i]$ almost surely. Let $X = \sum_{i=1}^n X_i$ and $\mu = \E X$. Then for any $t \geq 0$,
    \begin{align*}
        \P\left( \left|X-\mu \right| \geq t\right) \leq 2\exp \left( -\frac{2 t^2}{\sum_{i=1}^n (b_i - a_i)^2}\right).
    \end{align*}
\end{thm}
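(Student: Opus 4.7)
The plan is to prove Hoeffding's inequality via the standard Chernoff-plus-MGF-bound recipe. First I would handle one tail, say $\P(X - \mu \geq t)$, and obtain the other tail by applying the same argument to $-X_i$ (which lies in $[-b_i, -a_i]$, with the same range $b_i - a_i$); the factor $2$ in the conclusion then comes from a union bound over the two tails. Centering, I can assume $\E X_i = 0$ by replacing $X_i$ with $X_i - \E X_i$, which lies in an interval of the same length $b_i - a_i$.

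The core step is the Chernoff bound: for any $\lambda > 0$,
\begin{align*}
\P(X \geq t) \;\leq\; e^{-\lambda t}\, \E[e^{\lambda X}] \;=\; e^{-\lambda t}\prod_{i=1}^n \E[e^{\lambda X_i}],
\end{align*}
where the factorization uses independence. The main obstacle, and the key ingredient, is \emph{Hoeffding's lemma}: if $Y \in [a,b]$ is a centered random variable, then $\E[e^{\lambda Y}] \leq \exp(\lambda^2 (b-a)^2 / 8)$. I would prove this by convexity of $y \mapsto e^{\lambda y}$, which gives the pointwise bound $e^{\lambda y} \leq \frac{b-y}{b-a} e^{\lambda a} + \frac{y-a}{b-a} e^{\lambda b}$ for $y \in [a,b]$; taking expectations and using $\E Y = 0$ reduces the problem to showing that $\varphi(\lambda) := \log\!\bigl(\frac{b}{b-a} e^{\lambda a} - \frac{a}{b-a} e^{\lambda b}\bigr)$ satisfies $\varphi(\lambda) \leq \lambda^2(b-a)^2/8$. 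This last bound follows from $\varphi(0) = \varphi'(0) = 0$ together with the uniform estimate $\varphi''(\lambda) \leq (b-a)^2/4$, which is just the observation that $\varphi''(\lambda)$ is the variance of a Bernoulli-type random variable supported on $\{a,b\}$ and hence bounded by $(b-a)^2/4$; Taylor's theorem with remainder then gives the claim.

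Applying Hoeffding's lemma to each centered $X_i$ and combining with the Chernoff bound yields
\begin{align*}
\P(X - \mu \geq t) \;\leq\; \exp\!\left(-\lambda t + \frac{\lambda^2}{8}\sum_{i=1}^n (b_i - a_i)^2\right).
\end{align*}
Finally, I would optimize over $\lambda > 0$ by setting $\lambda = 4t / \sum_{i=1}^n(b_i - a_i)^2$, which produces the exponent $-2t^2 / \sum_{i=1}^n (b_i - a_i)^2$. Combining this one-sided bound with the symmetric argument for the lower tail and a union bound gives the factor of $2$ and completes the proof. The only non-mechanical step is the proof of Hoeffding's lemma, and even there the calculus reduces to a one-variable Taylor estimate, so I do not anticipate any serious obstacles beyond bookkeeping.
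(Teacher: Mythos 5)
The paper does not prove this statement; it is quoted as a classical result with a citation to Hoeffding, so there is no in-paper argument to compare against. Your proposal is the standard and correct proof (Chernoff bound, Hoeffding's lemma via convexity and the second-derivative variance estimate, optimization at $\lambda = 4t/\sum_i (b_i-a_i)^2$, and a union bound over the two tails), and the details you sketch all check out.
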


\begin{thm}[Chernoff's Inequality for Bernoulli random variables \citep{mitzenmacher_probability_2005}]
    Suppose $(X_i)_{i=1}^n$ are independent random variables such that $X_i \in \{0,1\}$ almost surely. Let $X = \sum_{i=1}^n X_i$ and $\mu = \E X$. Then
    \begin{enumerate}
        \item For any $\delta \geq 0$, $\P\left( X \geq (1+\delta)\mu\right) \leq \exp \left( -\frac{\delta^2 \mu}{2 + \delta}\right)$.
        \item For any $0 < \delta < 1$, $\P\left( X \leq (1-\delta)\mu\right) \leq \exp \left( -\frac{\delta^2 \mu}{2}\right)$.
        \item For any $0 < \delta < 1$, $\P\left( \left|X-\mu \right| \geq \delta\mu\right) \leq 2\exp \left( -\frac{\delta^2 \mu}{3}\right)$.
    \end{enumerate}
\end{thm}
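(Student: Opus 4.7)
The plan is to use the classical Chernoff method: apply Markov's inequality to $e^{tX}$, exploit independence to factor the moment generating function, bound each factor, optimize the free parameter $t$, and then dispatch the remaining transcendental expression via elementary calculus.

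First I would establish the master MGF bound. For $X_i \sim \mathrm{Bernoulli}(p_i)$, $\E[e^{tX_i}] = 1 + p_i(e^t - 1) \leq \exp(p_i(e^t-1))$ using $1+x \leq e^x$, and by independence $\E[e^{tX}] \leq \exp(\mu(e^t - 1))$ where $\mu = \sum_i p_i$. For part~1, for $t>0$ Markov gives
\[
\P(X \geq (1+\delta)\mu) \leq e^{-t(1+\delta)\mu} \E[e^{tX}] \leq \exp\!\bigl(\mu(e^t - 1 - t(1+\delta))\bigr),
\]
and optimizing in $t$ via $t = \ln(1+\delta)$ yields $\bigl(e^\delta / (1+\delta)^{1+\delta}\bigr)^\mu$. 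For part~2, the symmetric argument with $t>0$ applied to $-X$ gives $\P(X \leq (1-\delta)\mu) \leq e^{t(1-\delta)\mu}\E[e^{-tX}]$, which optimized at $t = -\ln(1-\delta)$ gives $\bigl(e^{-\delta} / (1-\delta)^{1-\delta}\bigr)^\mu$.

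It remains to convert these into the clean Gaussian-type bounds. For part~1 this reduces to the elementary inequality $(1+\delta)\ln(1+\delta) - \delta \geq \delta^2/(2+\delta)$ for $\delta \geq 0$: define $f(\delta)$ as the difference, check $f(0) = 0$, and show $f'(\delta) \geq 0$ by differentiation (the cleanest route is to multiply through by $(2+\delta)^2$ and reduce to a polynomial/log inequality verifiable by checking $f''$ and its sign at $0$). For part~2 the corresponding inequality is $(1-\delta)\ln(1-\delta) + \delta \geq \delta^2/2$ for $\delta \in [0,1)$; setting $g$ equal to the difference, I have $g(0) = 0$, $g'(0) = 0$, and $g''(\delta) = \delta/(1-\delta) \geq 0$, so $g' \geq 0$ and hence $g \geq 0$. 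Part~3 then follows by a union bound on parts~1 and~2 together with the observations $\delta^2/(2+\delta) \geq \delta^2/3$ for $\delta \leq 1$ and $\delta^2/2 \geq \delta^2/3$.

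I expect the only mild obstacle to be verifying the two calculus inequalities converting the tight forms $e^\delta/(1+\delta)^{1+\delta}$ and $e^{-\delta}/(1-\delta)^{1-\delta}$ to the advertised exponential bounds; these are standard but non-trivial to write out cleanly. Everything else is mechanical: MGF computation, independence, Markov, and a one-variable optimization of a convex function of $t$.
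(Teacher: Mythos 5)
Your proposal is correct. The paper does not prove this statement at all---it is quoted verbatim as a cited textbook result from Mitzenmacher and Upfal---so there is no in-paper argument to compare against; your exponential-moment/Markov argument with $t=\ln(1+\delta)$ (resp.\ $t=-\ln(1-\delta)$) and the two calculus inequalities $(1+\delta)\ln(1+\delta)-\delta\ge \delta^2/(2+\delta)$ and $(1-\delta)\ln(1-\delta)+\delta\ge \delta^2/2$ is exactly the standard derivation the citation points to, and both inequalities check out (for the first, the second derivative reduces to $\frac{1}{1+\delta}-\frac{8}{(2+\delta)^3}\ge 0$, which follows from $(2+\delta)^3\ge 8(1+\delta)$ for $\delta\ge 0$).
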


Now before proving our theorems on faulty oracle clustering, we first start by fixing a subsample $T$ and analyzing the majority voting procedure and the performance of the recovery SDP~\eqref{eq:SDP_regularized_original} on the subgraph on nodes $T$ to prove two lemmas which will be useful for all subsequent proofs.

\begin{lem}
    \label{lem:majority_voting_success}
        Let $T \subseteq [n]$ be fixed. There exists $C_1$ such that the following holds: With probability at least $1-O(n^{-3})$, for each cluster $k \in [K]$ such that $|V_k \cap T| \geq \frac{C_1 \log n}{\delta^2}$, letting $S'_k$ be the first $\frac{C_1 \log n}{\delta^2}$ nodes in $V_k \cap T$, for each node $i \not \in T$, if we query $(i, j)$ for all $j \in S'_k$, then:
        \begin{enumerate}
            \item If $i \in V_k$, then the majority of queries will be $1$.
            \item If $i \not \in V_k$, then the majority of queries will be $0$.
        \end{enumerate}
    \end{lem}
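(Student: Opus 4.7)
The plan is a direct application of a Bernoulli concentration inequality followed by a union bound; there is no genuine obstacle here beyond bookkeeping of constants. Fix a cluster $k$ satisfying $|V_k \cap T| \geq C_1 \log n / \delta^2$ and a node $i \notin T$, and write $m := |S'_k| = C_1 \log n / \delta^2$. Because $S'_k$ depends only on $T$ and the cluster identities, the queries $\{\mathcal{O}(i,j)\}_{j \in S'_k}$ are mutually independent Bernoulli variables with common mean $\mu = 1/2 + \delta/2$ if $i \in V_k$ and $\mu = 1/2 - \delta/2$ if $i \notin V_k$. Let $X = \sum_{j \in S'_k} \mathcal{O}(i,j)$, so $\E X = \mu m$. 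The majority vote outputs the correct decision precisely unless $X$ crosses the threshold $m/2$, which in both cases corresponds to the event $|X - \E X| \geq \delta m / 2$.

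Next I would apply Hoeffding's inequality to $X$ (a sum of independent $[0,1]$-valued random variables) to obtain
\[
\Pr\left(|X - \E X| \geq \delta m / 2\right) \;\leq\; 2\exp\!\left(-\frac{2 (\delta m/2)^2}{m}\right) \;=\; 2\exp\!\left(-\frac{\delta^2 m}{2}\right) \;=\; 2\exp\!\left(-\frac{C_1 \log n}{2}\right).
\]
Choosing $C_1$ to be any sufficiently large absolute constant (e.g.\ $C_1 \geq 10$) makes the right-hand side at most $2 n^{-5}$, so for each such $(i,k)$ the majority vote gives the correct label with probability at least $1 - 2n^{-5}$.

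Finally, I would union bound over all relevant pairs $(i,k)$: at most $n$ nodes $i \notin T$ and at most $K \leq n$ clusters $k$ satisfying the size condition, for at most $n^2$ pairs in total. This yields a cumulative failure probability of at most $2 n^{-3}$, which matches the desired $O(n^{-3})$ bound. The only step that requires mild care is choosing $C_1$ large enough so the Hoeffding exponent dominates the $2\log n$ coming from the union bound; everything else is routine.
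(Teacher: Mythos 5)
Your proof is correct and follows essentially the same route as the paper: Hoeffding's inequality applied to the sum of $|S'_k| = C_1\log n/\delta^2$ independent Bernoulli queries, yielding a per-pair failure probability of $2\exp(-\delta^2|S'_k|/2) = O(n^{-5})$ for $C_1 \geq 10$, followed by a union bound over the at most $n^2$ pairs $(i,k)$. The only cosmetic difference is that you combine the two cases into a single two-sided deviation bound where the paper treats each case with a one-sided bound.
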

    \begin{proof}
        First we fix $S'_k$ and $i$. For each $j \in S'_k$ let $I_j$ be the outcome of the query $(i,j)$, and let $N = \sum_{j=1}^{|S'_k|}$ be the sum of the queries. First, if $i \in V_k$, then $I_j$ has distribution $\text{Bernoulli}(\frac{1}{2}+\frac{\delta}{2})$, so then by Hoeffding's inequality
        \begin{align*}
            \P \left( N \leq \frac{|S'_k|}{2}\right) &= \P \left( N - \E N \leq -\frac{\delta |S'_k|}{2} \right) \\
            &\leq \exp \left( \frac{-2 \left( \frac{\delta |S'_k|}{2}\right)^2}{|S'_k|}\right) \\
            &= \exp\left(-\frac{1}{2} \delta^2 |S'_k| \right)
        \end{align*}
        which is $O(n^{-5})$ if $|S'_k| = \frac{C_1 \log n}{\delta^2}$ for sufficiently large $C_1$ (in fact $C_1 = 10$). For the other case that $i \not \in V_k$, then $I_j$ has distribution $\text{Bernoulli}(\frac{1}{2}-\frac{\delta}{2})$, so again by Hoeffding's inequality
        \begin{align*}
            \P \left( N \geq \frac{|S'_k|}{2}\right) &= \P \left( N - \E N \geq \frac{\delta |S'_k|}{2} \right) \\
            &\leq \exp\left(-\frac{1}{2} \delta^2 |S'_k| \right)
        \end{align*}
        which is again $O(n^{-5})$.

        Now taking a union bound over all $\leq n$ nodes $i \not \in T$ and all $\leq n$ clusters $k$ satisfying $|V_k \cap T| \geq \frac{C_1 \log n}{\delta^2}$, we obtain the desired conclusion with a failure probability of at most $O(n^{-3})$.
    \end{proof}

\begin{lem}
\label{lem:faulty_oracle_recovery_SDP_success}
    Let $T \subseteq [n]$ be fixed. There exists $C$ such that with probability at least $1-O(n^{-3})$, if we query all $i,j \in T$ to form an adjacency matrix $A$ and apply the recovery SDP~\eqref{eq:SDP_regularized_original} with $m=n$, then
    \begin{enumerate}
        \item All clusters $k$ such that $|T \cap V_k| \geq \frac{C \sqrt{|T| \log n}}{\delta}$ will be recovered.
        \item No clusters $k$ such that $|T \cap V_k| \leq \frac{C \sqrt{|T| \log n}}{3\delta}$ will be recovered (and the blocks of the recovery SDP solution will be zero).
    \end{enumerate}
\end{lem}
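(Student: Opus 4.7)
The plan is to reduce the lemma to a direct application of Theorem~\ref{thm:main_theorem_recovery_sdp}. Conditional on the fixed subset $T$, the queried matrix $A \in \{0,1\}^{|T|\times |T|}$ has independent entries above the diagonal, with $A_{uv} \sim \mathrm{Bernoulli}(1/2 + \delta/2)$ when $u,v$ lie in the same ground-truth cluster and $A_{uv} \sim \mathrm{Bernoulli}(1/2 - \delta/2)$ otherwise. This is exactly an instance of the SBM on the vertex set $T$ with parameters $\tilde p := 1/2 + \delta/2$ and $\tilde q := 1/2 - \delta/2$, and with clusters $V_1 \cap T,\ldots, V_K \cap T$ of sizes $\tilde s_k := |V_k \cap T|$ (ignoring empty blocks). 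In particular the signal strength is $\tfrac{\tilde p - \tilde q}{2} = \delta/2$ and $\tilde p \in [1/2,1]$.

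I would then invoke Theorem~\ref{thm:main_theorem_recovery_sdp} on this SBM instance with ``$n$'' taken to be $|T|$ and $m = n$ (the outer problem size), which makes the failure probability $O(n^{-3})$ as required. The hypothesis $\tilde p \geq \log m / |T|$ of the theorem becomes $1/2 \geq \log n / |T|$, which is implied whenever the case~1 statement of the lemma is non-vacuous (i.e. $\tfrac{C\sqrt{|T|\log n}}{\delta} \leq |T|$, equivalently $|T| \gtrsim \log n / \delta^2$); the case $|T| < 2\log n$ can be handled separately by noting that the recovery SDP's solution is then trivially the all-zero matrix (from the proof of Lemma~\ref{lem:oracle_zero_soln_case}, since $\lambda_1(\Ashift^{(k)}) < \lambda$ for every block when $|T|$ is tiny). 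With the theorem's constants $B, \underline{B}, \overline{B}$ in hand, and fixing $\kappa = B$, the three cases of Theorem~\ref{thm:main_theorem_recovery_sdp} translate into statements about $\tilde s_k$ compared to thresholds of the form $\text{(const)} \cdot \kappa \sqrt{\tilde p|T|\log n}/\delta$.

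The remaining step is simply a threshold translation. For case~1 of the lemma, using $\tilde p \leq 1$,
\[
\tilde s_k \;\geq\; \tfrac{C}{\delta}\sqrt{|T|\log n}
\quad\Longrightarrow\quad
\tfrac{\delta}{2}\tilde s_k \;\geq\; \tfrac{C}{2}\sqrt{|T|\log n}\;\geq\;(1+\tfrac{1}{\overline{B}})\kappa\sqrt{\tilde p|T|\log n},
\]
provided $C \geq 2(1+1/\overline{B})\kappa$, placing the block in case~1 of the main theorem and hence giving $\Yhat^k = J$. For case~2 of the lemma, using $\tilde p \geq 1/2$,
\[
\tilde s_k \;\leq\; \tfrac{C}{3\delta}\sqrt{|T|\log n}
\quad\Longrightarrow\quad
\tfrac{\delta}{2}\tilde s_k \;\leq\; \tfrac{C}{6}\sqrt{|T|\log n}\;\leq\;(1-\tfrac{1}{\underline{B}})\kappa\sqrt{\tilde p|T|\log n},
\]
provided $C \leq 3\sqrt{2}(1-1/\underline{B})\kappa$, placing the block in case~2 of the main theorem and hence giving $\Yhat^k = 0$. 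Both constraints on $C$ are simultaneously satisfiable as long as $2(1+1/\overline{B}) \leq 3\sqrt{2}(1-1/\underline{B})$; this is comfortably true once $\underline{B},\overline{B}$ are large enough, which the proof of the main theorem already allows (the constants grow with $\kappa$, so we simply enlarge $\kappa$, and hence $C$, if needed).

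I do not anticipate a genuinely hard step here: the entire content is the SBM reduction plus bookkeeping on constants. The one place requiring mild care is ensuring that we invoke Theorem~\ref{thm:main_theorem_recovery_sdp} with $m$ set to the outer $n$ (not $|T|$) so that the failure probability is $O(n^{-3})$, which the theorem permits since $m \geq |T|$ is its only requirement. Taking a union bound is not needed because one invocation of Theorem~\ref{thm:main_theorem_recovery_sdp} characterizes all blocks simultaneously.
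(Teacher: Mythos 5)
Your proposal is correct and follows exactly the paper's route: the paper's own proof is the one-line observation that querying all pairs in $T$ yields an SBM instance with $p=\tfrac{1+\delta}{2}$, $q=\tfrac{1-\delta}{2}$ (so $p-q=\delta$), to which Theorem~\ref{thm:main_theorem_recovery_sdp} is applied directly. Your version just makes explicit the constant bookkeeping (translating the two thresholds of the main theorem into the $C$ and $C/3$ thresholds using $\tilde p\in[1/2,1]$) and the degenerate small-$|T|$ case, both of which check out.
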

\begin{proof}
    This follows immediately from Theorem \ref{thm:main_theorem_recovery_sdp}, once we note that by querying all $i,j \in T$, this is equivalent to an SBM instance with $p = \frac{1+\delta}{2} $ and $q = \frac{1-\delta}{2}$, and so $p-q = \delta$.
\end{proof}

With Lemmas \ref{lem:majority_voting_success} and \ref{lem:faulty_oracle_recovery_SDP_success} we can now easily prove Theorem \ref{thm:faulty_oracle_clustering_size_param}.
\begin{proof}[Proof of Theorem \ref{thm:faulty_oracle_clustering_size_param}]
    Now we can simply unfix $T$ and find a high-probability event under which Algorithm \ref{alg:clustering_with_faulty_oracle_meta_algorithm} succeeds.
    \begin{lem}
    \label{lem:T_subsample_good_event}
        There exists $C_2$ such that for any $s \geq C_2\frac{\sqrt{n \log n}}{\delta}$, if we take $\gamma = C_2 \frac{n \log n}{s^2 \delta^2}$, then with probability at least $1-O(n^{-3})$:
        \begin{enumerate}
            \item $\frac{7}{8}\gamma n \leq |T| \leq \frac{9}{8}\gamma n$.
            \item For all clusters $k$ such that $s_k \geq s$, $\frac{7}{8}\gamma s_k \leq |T \cap V_k| \leq \frac{9}{8}\gamma s_k$.
            \item For all clusters $k$ such that $s_k \geq s$, $|T \cap V_k| \geq \frac{C \sqrt{|T| \log n}}{\delta}$.
        \end{enumerate}
    \end{lem}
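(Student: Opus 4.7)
The plan is to derive parts 1 and 2 directly from Chernoff's inequality for sums of independent Bernoulli random variables, and then to obtain part 3 as a deterministic consequence of parts 1 and 2, provided $C_2$ is chosen sufficiently large relative to the constant $C$ appearing in Lemma~\ref{lem:faulty_oracle_recovery_SDP_success}. The argument is a routine concentration calculation; no new probabilistic machinery is needed beyond what is already employed in this section.

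For parts 1 and 2, I would first observe that $|T|$ is a sum of $n$ i.i.d.\ $\mathrm{Bernoulli}(\gamma)$ variables with mean $\gamma n$, and for each cluster $k$ with $s_k \geq s$, $|T \cap V_k|$ is a sum of $s_k$ i.i.d.\ $\mathrm{Bernoulli}(\gamma)$ variables with mean $\gamma s_k \geq \gamma s$. The key quantitative observation is that the hypotheses $s \geq C_2 \sqrt{n \log n}/\delta$ and $\gamma = C_2 n \log n / (s^2 \delta^2)$ together imply $\gamma s = C_2 n \log n / (s \delta^2) \geq C_2 \log n$ (using $s \leq n$ and $\delta \leq 1$) and likewise $\gamma n \geq C_2 \log n$. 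Applying Chernoff's inequality with relative deviation $1/8$ then yields a failure probability of at most $2 \exp(-c\, C_2 \log n)$ in each instance, for some absolute constant $c$. Taking $C_2$ large enough to make this $O(n^{-4})$ and union-bounding over the at-most-$n$ clusters of size $\geq s$ delivers an overall $O(n^{-3})$ failure probability for parts 1 and 2 combined.

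Part 3 then follows deterministically from parts 1 and 2. On the event that parts 1 and 2 hold, I would combine the lower bound $|T \cap V_k| \geq \frac{7}{8}\gamma s_k \geq \frac{7}{8}\gamma s$ with the upper bound $|T| \leq \frac{9}{8}\gamma n$ to reduce the desired inequality $|T \cap V_k| \geq C \sqrt{|T| \log n}/\delta$, after squaring, to $\gamma s^2 \delta^2 \gtrsim C^2 n \log n$. Substituting the definition of $\gamma$, the left-hand side becomes exactly $C_2 n \log n$, so the inequality holds whenever $C_2$ exceeds an absolute multiple of $C^2$.

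The only real obstacle here is bookkeeping on constants: $C_2$ must be chosen large enough to simultaneously (i) control the Chernoff failure probability so that the union bound over clusters gives $O(n^{-3})$, and (ii) dominate $C^2$ by the factor needed in the deterministic inequality of part 3. Both constraints are explicit and can be satisfied by a single sufficiently large choice of $C_2$.
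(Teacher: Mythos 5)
Your proposal is correct and follows essentially the same route as the paper: Chernoff bounds with relative deviation $1/8$ on $|T|$ and each $|T\cap V_k|$ (using $\gamma s \geq C_2\log n/\delta^2$), a union bound over the at-most-$n$ large clusters, and then part 3 as a deterministic consequence of parts 1 and 2 via the identity $\gamma s^2\delta^2 = C_2 n\log n$. The paper phrases the last step through $\gamma s = \sqrt{C_2\gamma n\log n}/\delta$ rather than by squaring, but this is the same computation.
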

    \begin{proof}
        We let $I_i$ be an indicator for the event that node $i$ is included in $T$. Then $I_i$ is distributed as $\text{Bernoulli}(\gamma)$ and $|T| = \sum_{i=1}^n I_i$. Then by Chernoff's inequality,
        \begin{align*}
            \P \left( \left| |T|-\gamma n \right| \geq  \left(1+\frac{1}{8}\right) \gamma n \right) & \leq \exp \left( -\frac{\gamma n}{8^2 \cdot 3} \right).
        \end{align*}
        Now since $\frac{\gamma n}{8^2 \cdot 3} \geq \frac{C_2}{8^2 \cdot 3} \frac{n^2}{s^2} \frac{\log n}{\delta^2} \geq \frac{C_2}{8^2 \cdot 3} \frac{\log n}{\delta^2} \geq 5 \log n$ for sufficiently large $C_2$, we have that $\frac{7}{8}\gamma n \leq |T| \leq \frac{9}{8}\gamma n$ with probability at least $1-O(n^{-5})$. An identical argument holds for all clusters $k$ such that $s_k \geq s$ (except we will use $\frac{s_k}{s} \geq 1$ in place of $\frac{n}{s}\geq 1$). Union bounding over all these events, we have a failure probability which is less than $O(n^{-3})$.

        Lastly, under these events, for each $k$ such that $s_k \geq s$ we have
        \begin{align*}
            |T \cap V_k| \geq \frac{7}{8} \gamma s_k \geq \frac{7}{8}  \frac{s_k}{s} \frac{\sqrt{C_2 \gamma n \log n}}{\delta} \geq \frac{7}{8} \frac{\sqrt{C_2 \gamma n \log n}}{\delta} \geq \frac{C \sqrt{\frac{9}{8}}\sqrt{\gamma n \log n}}{\delta} \geq \frac{C \sqrt{|T| \log n}}{\delta}
        \end{align*}
        where the second-last inequality holds as long as $C_2$ is sufficiently large.
    \end{proof}

Now we can complete the proof of Theorem \ref{thm:faulty_oracle_clustering_size_param}. For any fixed $T \subseteq [n]$ which satisfies the conclusions of Lemma \ref{lem:T_subsample_good_event}, from Lemma \ref{lem:faulty_oracle_recovery_SDP_success} we have that all clusters $k$ of size at least $s$ will have $V_k \cap T$ correctly recovered by the recovery SDP with failure probability $O(n^{-3})$. Then by taking the first $\frac{C_1 \log n}{\delta^2}$ nodes from each recovered (sub)clusters, Lemma \ref{lem:majority_voting_success} guarantees that with failure probability $O(n^{-3})$ we will successfully recover each entire cluster. Note that Lemma \ref{lem:T_subsample_good_event} guarantees all clusters $k$ with $s_k \geq s$ will have $|T \cap V_k| \geq \frac{C_1 \log n}{\delta^2}$, since we have
\[|T \cap V_k| \geq \frac{7}{8} \gamma n = \frac{7}{8}C_2 \frac{n^2}{s^2}\frac{\log n}{\delta^2} \geq \frac{7}{8}C_2  \frac{\log n}{\delta^2} \geq C_1 \frac{\log n}{\delta^2}.\]
Now unfixing $T$, the event in Lemma \ref{lem:T_subsample_good_event} occurs with probability at least $1 -O(n^{-3})$, giving an overall failure probability of $O(n^{-3})$.

Finally, to check the sample complexity, we first note that by Lemma \ref{lem:T_subsample_good_event}, we have
\[|T| \leq \frac{9}{8}\gamma n = \frac{9}{8}C_2 \frac{n^2 \log n}{s^2 \delta^2}\]
so querying all pairs in $T$ requires $O\left(|T|^2 \right) \leq O\left(\frac{n^4 \log^2 n}{s^4 \delta^4} \right)$ queries. Next, to bound the number of queries used for the majority voting phase, each subcluster of $T$ which gets recovered will lead to at most $n \cdot \frac{C_1 \log n}{\delta^2} \leq O\left( \frac{n \log n}{\delta^2}\right)$ queries, so it remains to bound the number of subclusters contained in $T$ that are recovered. From Lemma \ref{lem:faulty_oracle_recovery_SDP_success} we know that for the subcluster of cluster $k$ to be recovered we need $|T \cap V_k| \geq \frac{C \sqrt{|T| \log n}}{3 \delta}$. Therefore under the event from Lemma \ref{lem:T_subsample_good_event} the maximum number of such clusters is
\begin{align*}
    \frac{|T|}{\frac{C \sqrt{|T| \log n}}{3 \delta}}  \leq O\left( \frac{\delta \sqrt{\gamma n}}{\sqrt{\log n}}\right) \leq O\left( \frac{n}{s}\right).
\end{align*}
Therefore the majority voting procedure requires $O\left( \frac{n^2 \log n}{s\delta^2}\right)$ queries, leading to the claimed total query complexity.
\end{proof}

Next we present the proof of the guarantees for our instance-adaptive clustering algorithm.

\begin{proof}[Proof of Theorem \ref{thm:clustering_with_faulty_oracle_adaptive}]
   We reuse the same value of $C_1$ as from Theorem \ref{thm:faulty_oracle_clustering_size_param}, and we require $C_2$ to be at least as large as required in Theorem \ref{thm:faulty_oracle_clustering_size_param}. Let $\underline{r}$ be the smallest $r$ such that $\hat{s}_r = \frac{n}{2^{r-1}} \geq s_1$. Now we can outline our proof strategy. With this choice of $\underline{r}$, we are essentially guaranteed by Theorem \ref{thm:faulty_oracle_clustering_size_param} and the form of $\gamma_{\underline{r}}$ that cluster $1$ would be recovered on round $\underline{r}$ (if that round is reached). This immediately implies the claimed query complexity since the algorithm will terminate no later than round $\underline{r}$. Then we will ensure that no clusters which are significantly smaller than $s_1$ will be recovered during any of the rounds before $\underline{r}$ (inclusive).

    Our first step will be to analyze the subsampling procedure. We define $T_r$ for all $r = 1, \dots, \underline{r}$ as the subsample of $[n]$ which would be sampled if we reached round $r$.
    \begin{lem}
    \label{lem:T_subsample_good_event_adaptive}
        There exist absolute constants $C_2, C_3$ such that if we take $\gamma_r = C_2 \frac{n \log n}{\hat{s}_r^2 \delta^2}$ for each $r\in [\underline{r}]$, then with probability at least $1-O(n^{-2})$:
        \begin{enumerate}
            \item For all $r \in [\underline{r}]$, $\frac{7}{8}\gamma_r n \leq |T_r| \leq \frac{9}{8}\gamma_r n$.
            \item For all $r \in [\underline{r}]$ and for all $k$ such that $s_k \leq \frac{s_1}{C_3}$, we have $|T_r \cap V_k| \leq \frac{C \sqrt{|T_r| \log n}}{3 \delta}$.
            \item We have $|T_{\underline{r}} \cap V_1| \geq \frac{C \sqrt{|T_{\underline{r}}|\log n}}{\delta}$.
        \end{enumerate}
    \end{lem}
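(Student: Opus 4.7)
My plan is to mirror the argument of Lemma \ref{lem:T_subsample_good_event}, applied separately for each round $r \in [\underline{r}]$ and then union-bounded. The key preliminary observation is that because $\underline{r}$ is the first index with $\hat{s}_{\underline{r}} \geq s_1$, we have $s_1 \leq \hat{s}_{\underline{r}} \leq 2 s_1$ and $\hat{s}_r \geq s_1$ for every $r \leq \underline{r}$. Combined with the standing assumption $s_1 \geq C_2 \sqrt{n \log n}/\delta$, this gives $\gamma_r \leq \gamma_{\underline{r}} \leq 1/C_2 < 1$, so each $\gamma_r$ is a legitimate sampling probability, and $\gamma_r n \geq C_2 \log n/\delta^2$ for all $r \in [\underline{r}]$. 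Note also that $\underline{r} \leq \lceil \log_2 n \rceil$, so all union bounds over rounds cost only a logarithmic factor.

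For Part 1, I would apply the multiplicative Chernoff bound to $|T_r| = \sum_i I_i$ with $I_i \sim \mathrm{Bernoulli}(\gamma_r)$: the probability that $\bigl||T_r| - \gamma_r n\bigr| \geq \frac{1}{8}\gamma_r n$ is at most $\exp(-\gamma_r n/192)$, which is $O(n^{-5})$ since $\gamma_r n \geq C_2 \log n/\delta^2$ and $C_2$ can be taken large. A union bound over the at most $\log_2 n$ rounds gives the stated simultaneous concentration with failure probability $O(n^{-4})$.

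For Part 2, fix $r$ and a cluster $k$ with $s_k \leq s_1/C_3$. The target threshold satisfies
\[
M_r := \frac{C \sqrt{|T_r|\log n}}{3\delta} \geq \frac{C}{6\delta}\sqrt{\gamma_r n \log n} = \frac{C\sqrt{C_2}}{6\delta^2}\cdot \frac{n \log n}{\hat{s}_r}
\]
on the Part 1 event, while $\mathbb{E}|T_r \cap V_k| = \gamma_r s_k \leq \gamma_r s_1/C_3$. The ratio $M_r/(\gamma_r s_k)$ is at least $\frac{C}{6\sqrt{C_2}} \cdot \frac{\hat{s}_r}{s_k} \geq \frac{C\cdot C_3}{6\sqrt{C_2}}$, so choosing $C_3 = 36\sqrt{C_2}/C$ (for instance) ensures $M_r \geq 6\gamma_r s_k$. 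Applying the upper-tail Chernoff inequality with $\delta' = M_r/(\gamma_r s_k) - 1 \geq 5$ yields $\P(|T_r \cap V_k| \geq M_r) \leq \exp(-M_r/3)$, and since $M_r \gtrsim \log n/\delta^2$ this is at most $n^{-5}$ for large enough $C_2$. Union-bounding over at most $n$ clusters $k$ and $\log_2 n$ rounds gives failure probability $O(n^{-3})$.

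For Part 3, take $r = \underline{r}$ and note $\mathbb{E}|T_{\underline{r}} \cap V_1| = \gamma_{\underline{r}} s_1 \geq \tfrac{C_2}{2}\cdot \frac{n \log n}{\hat{s}_{\underline{r}} \delta^2}$ because $s_1 \geq \hat{s}_{\underline{r}}/2$. On the Part 1 event, $\frac{C\sqrt{|T_{\underline{r}}|\log n}}{\delta} \leq \frac{2C}{\delta^2} \cdot \frac{n \log n}{\hat{s}_{\underline{r}}}$, so for $C_2$ large enough (e.g., $C_2 \geq 16 C$) we obtain $\gamma_{\underline{r}} s_1 \geq \frac{4C\sqrt{|T_{\underline{r}}|\log n}}{\delta}$. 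The lower-tail Chernoff bound applied with, say, $\delta' = 3/4$ then guarantees $|T_{\underline{r}} \cap V_1| \geq \frac{1}{4}\gamma_{\underline{r}}s_1$ with failure probability $\exp(-\gamma_{\underline{r}} s_1/32) = O(n^{-5})$. This yields part 3.

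The main obstacle is simply picking $C_2$ and $C_3$ consistently so that (i) the Part 2 upper-tail bound leaves a comfortable multiplicative gap between $\gamma_r s_k$ and $M_r$, (ii) the Part 3 lower-tail bound leaves the opposite gap between $\gamma_{\underline{r}} s_1$ and the recovery threshold, and (iii) every concentration inequality has failure probability at most $n^{-5}$ so that the union bound over $O(n \log n)$ events still yields probability $1 - O(n^{-2})$. All three conditions reduce to taking $C_2$ sufficiently large compared to $C$ and setting $C_3 \asymp \sqrt{C_2}/C$.
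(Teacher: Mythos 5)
Your proposal follows essentially the same route as the paper: per-round Chernoff bounds on $|T_r|$ and $|T_r\cap V_k|$, a multiplicative gap between $\gamma_r s_k$ and the threshold $\frac{C\sqrt{\gamma_r n\log n}}{3\delta}$ driven by the ratio $\hat{s}_r/s_k\gtrsim C_3$, and a union bound over rounds and clusters, with $C_3\asymp\sqrt{C_2}/C$ and $C_2$ large — the paper merely routes parts 1 and 3 through its earlier subsampling lemma instead of redoing the Chernoff computations. The only caveat is a constant-factor slip in your preliminary sandwich: since $\underline{r}$ is the first round whose size guess crosses $s_1$, the correct relation is $s_1/2<\hat{s}_{\underline{r}}\leq s_1$ (so $\hat{s}_r>s_1/2$ for $r\leq\underline{r}$), not $s_1\leq\hat{s}_{\underline{r}}\leq 2s_1$; this only perturbs constants and does not affect the argument.
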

    \begin{proof}
        For each fixed $r$, $\frac{7}{8}\gamma_r n \leq |T_r| \leq \frac{9}{8}\gamma_r n$ follows by applying Lemma \ref{lem:T_subsample_good_event}, so then we can obtain the first item by applying a union bound over all $\underline{r} \leq n$ rounds.

        The third item also follows from the third item of Lemma \ref{lem:T_subsample_good_event} (the conditions of which hold for round $\underline{r}$ and for cluster $1$ by the definition of $\underline{r}$). 

        Now we check the second item. First we fix a cluster $k$ such that $s_k \leq \frac{s_1}{C_3}$ and a round $r \leq \underline{r}$. We know from the first item that $|T_r| \geq \frac{7}{8}\gamma_r n$, so it suffices to show that $|T_r \cap V_k| \leq \sqrt{\frac{7}{8}}\frac{C \sqrt{\gamma_r n \log n}}{3 \delta}$. Now for each $i \in V_k$ we let $I_i$ be an indicator for the event that node $i$ is included in $T_r$ and note that the $I_i$ are independent, they have distribution $\text{Bernoulli}(\gamma_r)$, and their sum is $|T_r \cap V_k|$. We calculate that
        \begin{align*}
            \P\left(|T_r \cap V_k| \geq \sqrt{\frac{7}{8}}\frac{C \sqrt{\gamma_r n \log n}}{3 \delta} \right) &= \P\left(|T_r \cap V_k| - \E |T_r \cap V_k| \geq \sqrt{\frac{7}{8}}\frac{C \sqrt{\gamma_r n \log n}}{3 \delta}  - \gamma_r s_k \right) \\
            &= \P\left(|T_r \cap V_k| - \E |T_r \cap V_k| \geq \left(\sqrt{\frac{7}{8}}\frac{C \sqrt{ n \log n}}{3 \delta \sqrt{\gamma_r} s_k}  - 1\right)\gamma_r s_k \right) \\
            &= \P\left(|T_r \cap V_k| - \E |T_r \cap V_k| \geq \left( C'  \frac{\hat{s}_r}{s_k} - 1\right)\gamma_r s_k \right)
        \end{align*}
        where we use the fact that by definition of $\gamma_r$,
        \[
        \sqrt{\frac{7}{8}}\frac{C \sqrt{ n \log n}}{3 \delta \sqrt{\gamma_r} s_k} = \sqrt{\frac{7}{8}}\frac{C \sqrt{ n \log n}}{3 \delta  s_k} \frac{\hat{s}_r \delta}{\sqrt{C_2 n \log n}} = \frac{\sqrt{\frac{7}{8}}C}{3\sqrt{C_2}} \frac{\hat{s}_r}{s_k} = C'  \frac{\hat{s}_r}{s_k}
        \]
        and the last equality is a definition. Therefore by Chernoff's inequality
        \begin{align}
            \P\left(|T_r \cap V_k| \geq \sqrt{\frac{7}{8}}\frac{C \sqrt{\gamma_r n \log n}}{3 \delta} \right) &\leq \exp \left( -\frac{\left( C'  \frac{\hat{s}_r}{s_k} -1\right)^2 \gamma_r s_k}{1 + C'  \frac{\hat{s}_r}{s_k} }\right). \label{eq:adaptive_faulty_oracle_clustering_chernoff}
        \end{align}
        Note that since $r \leq \underline{r}$ and $\frac{s_1}{C_3} \geq s_k$, by definition of $\underline{r}$ we have that
        \[\hat{s}_r \geq \hat{s}_{\underline{r}} = \frac{1}{2}\hat{s}_{\underline{r}-1} > \frac{1}{2}s_1 \geq \frac{C_3}{2}s_k.\]
        Also if $x$ is sufficiently large ($x \geq 5$ works) then we will have
        \begin{align*}
            \frac{(x-1)^2}{1+x} = \frac{x^2 -2x + 1}{1+x} \geq \frac{x}{2}.
        \end{align*}
        Using this in~\eqref{eq:adaptive_faulty_oracle_clustering_chernoff} by requiring $C_3$ to be sufficiently large so that $C'  \frac{\hat{s}_r}{s_k} \geq 5$, we have
        \begin{align*}
            \P\left(|T_r \cap V_k| \geq \sqrt{\frac{7}{8}}\frac{C \sqrt{\gamma_r n \log n}}{3 \delta} \right) &\leq \exp \left( - \frac{C'\frac{\hat{s}_r}{s_k}}{2} \gamma_r s_k\right) \\
            &= \exp \left( - \frac{C'}{2}\hat{s}_r \gamma_r \right) \\
            &= \exp \left( - \frac{C'}{2}C_2\frac{n}{\hat{s}_r} \frac{\log n}{\delta^2} \right) \\
            &\leq \exp \left( - \frac{C'}{2}C_2 \log n\right) \\
            &= \exp \left( - \frac{1}{2}\frac{\sqrt{\frac{7}{8}}C}{3\sqrt{C_2}} C_2 \log n\right) \\
            & \leq O(n^{-5})
        \end{align*}
        where we obtain the final inequality by requiring $C_2$ sufficiently large. Now we can conclude by taking a union bound over all $\leq n$ clusters $k$ such that $\frac{s_1}{C_3} \geq s_k$ and all $\leq n $ rounds $r \leq \underline{r}$.
    \end{proof}

    Now, for each round $r \in [\underline{r}]$, we can apply Lemma \ref{lem:faulty_oracle_recovery_SDP_success} to ensure the success of the recovery SDP when applied to the subsample $T_r$, and we can apply Lemma \ref{lem:majority_voting_success} to ensure the success of the majority voting procedure (if applied to a recovered subcluster). Each of these lemmas has failure probability $O(n^{-3})$ and we apply each $\underline{r} \leq n$ times, giving the claimed failure probability by a union bound. 
    
    The third item of Lemma \ref{lem:T_subsample_good_event_adaptive} combined with Lemma \ref{lem:faulty_oracle_recovery_SDP_success} guarantees that if the algorithm has not terminated by round $\underline{r}$, then a subcluster from cluster $1$ will be recovered on this round (as well as possibly other subclusters) so the algorithm will terminate by round $\underline{r}$. Therefore the maximum query complexity due to querying all pairs within subsamples $T_r$ (up until the first round a subcluster is recovered) is
    \begin{align*}
        O\left( |T_1|^2  + \dots + |T_{\underline{r}}|^2\right) &\leq O\left( \left(\frac{9}{8} \gamma_1 n\right)^2 + \dots + \left(\frac{9}{8} \gamma_{\underline{r}} n\right)^2 \right) \\
        & \leq O\left( \left(\frac{9}{8} \left(2^{1-1}\right)^2 \frac{\log n}{\delta^2}\right)^2 + \dots + \left(\frac{9}{8} \left(2^{\underline{r}-1}\right)^2 \frac{\log n}{\delta^2}\right)^2 \right) \\
        & \leq O\left(\frac{n^4}{s_1^4} \frac{\log^2 n}{\delta^4} \right)
    \end{align*}
    using the facts that $\gamma_r n = \frac{n^2 \log n}{\hat{s}_r^2 \delta^2} = \left(2^{r-1}\right)^2\frac{ \log n}{ \delta^2}$, that $2^{\underline{r}-1} \leq \frac{n}{s_1}$, and that a geometric series is bounded by a multiple of its largest term. Adding in the $O\left(\frac{n \log n}{\delta^2} \right)$ sample complexity for the majority voting procedure (since we only apply it to find one cluster), we obtain the claimed query complexity.

    Lastly, the second item of Lemma \ref{lem:T_subsample_good_event_adaptive} combined with Lemma \ref{lem:faulty_oracle_recovery_SDP_success} guarantees that no subcluster from a cluster $k$ such that $s_k \leq \frac{s_1}{C_3}$ will be recovered at any round up to and including round $\underline{r}$, so we are guaranteed that whenever the first subcluster is recovered, it must be from a cluster $k$ such that $s_k \geq \frac{s_1}{C_3}$. Furthermore from our earlier application of Lemma \ref{lem:majority_voting_success} the entirety of this cluster $k$ will be correctly recovered.
\end{proof}

Finally we present the proof of the guarantees for our algorithm for faulty oracle clustering when $K$ is bounded.

\begin{proof}[Proof of Theorem \ref{thm:faulty_oracle_clustering_improved_complexity_small_k}]
We face the same technical issue as in the proof of Theorem \ref{thm:recursive_clsutering} on recursive clustering, which concisely is that we would like to apply Theorem \ref{thm:main_theorem_recovery_sdp} to guarantee success of the recovery SDP~\eqref{eq:SDP_regularized_original} when given each constructed adjacency matrix $A_r$ as input, for each round $r$. This would simply follow from a union bound over all rounds if we knew which clusters would be present at each round. However, since there are mid-size clusters whose recovery is random, we face a combinatorial explosion of lists of clusters which could still be remaining at each round, and therefore we cannot guarantee the success of the recovery SDP for each option at each round via a union bound. Again, we believe that this could ideally be overcome by very carefully tracking each random event required for the success of the recovery SDP for each possible instance of remaining clusters and noting that the same events should guarantee success of the recovery SDP for many different instances. 

Instead, we devise the following technical workaround which involves modifying Algorithm \ref{alg:clustering_with_faulty_oracle_small_K}: Each round $r$, for any subcluster recovered by the recovery SDP, the majority voting procedure will (w.h.p.) correctly identify all nodes in the cluster, making the size of the cluster known. If the size of the cluster is at least $\frac{n_r}{K_r}$, we remove it as usual. However, if we recover a cluster smaller than $\frac{n_r}{K_r}$, we do not remove its nodes, but we remember the set $S$ of nodes. This way, we are guaranteed that each round $r$, only the clusters of size at least $\frac{n_r}{K_r}$ are removed, making the list of clusters which remain at each round deterministic (under the good event). If, during a later round $r'$, we again recover a subcluster of $S$, we do not apply the majority voting procedure again, and we will remove the nodes $S$ if and only if we now have $|S| \geq \frac{n_{r'}}{K_{r'}}$. This ensures that each cluster will only have the majority voting procedure applied to it once, ensuring that our workaround does not increase the query complexity.

Now we analyze this modified algorithm. First, similarly to the proof of Theorem \ref{thm:recursive_clsutering}, we let $F_1 = [K]$ and given $F_r$, we recursively construct $F_{r+1} = \{k \in F_r : s_k < \frac{\sum_{i \in F_r} s_i }{|F_r|}\}$.  This choice is so that (under the good event) $F_r$ will represent all clusters which have not been removed at the start of round $r$, and then $F_{r+1}$ is obtained by removing all clusters which have size at least $(\text{\# nodes remaining in $F_r$})/(\text{\# clusters remaining in $F_r$})$.

Now we define a good event for the subsamples $T_1, \dots, T_K$ (which may not all be reached by the algorithm). First, for convenience, we let $\hat{n}_r = \sum_{i \in F_r} s_i $ denote the number of nodes remaining in the clusters represented by $F_r$ and we let $\hat{K}_r = |F_r|$ denote the number of clusters represented by $F_r$. For each $r$, we sample $\hat{T}_r$ from the nodes which are members of the clusters included in $F_r$.
We differentiate these notationally from the $n_r, K_r, T_r$ which are defined over the course of the algorithm, but under the good event below, $\hat{n}_r,\hat{K}_r$ will be identical to $n_r, K_r$ and $\hat{T}_r$ will have the same distribution as $T_r$.
\begin{lem}
\label{lem:T_subsample_good_event_small_k}
    There exists an absolute constant $C_2$ such that, taking $\gamma_r = C_2 \frac{\hat{K}_r^2 \log n}{\hat{n}_r \delta^2}$, with probability at least $1- O(n^{-2})$:
    \begin{enumerate}
        \item For each $r \in [K]$, we have $\frac{7}{8}\gamma_r \hat{n}_r \leq \hat{T}_r \leq \frac{9}{8}\gamma_r \hat{n}_r$.
        \item For each cluster $k$, letting $r$ be the first round such that $s_k \geq \frac{\hat{n}_r}{\hat{K}_r}$ (if such a round exists), we have $\frac{7}{8}\gamma_r s_k \leq |\hat{T}_r \cap V_k| \leq \frac{9}{8}\gamma_r s_k$.
        \item For each cluster $k$, letting $r$ be the first round such that $s_k \geq \frac{\hat{n}_r}{\hat{K}_r}$ (if such a round exists), we have $|\hat{T}_r \cap V_k| \geq \frac{C \sqrt{|\hat{T}_r|\log n}}{\delta}$.
    \end{enumerate}
\end{lem}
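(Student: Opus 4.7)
The approach is to apply Chernoff's inequality separately for each of the three items, exactly mirroring the structure of the proof of Lemma \ref{lem:T_subsample_good_event}, and then union bound over the $\le K \le n$ rounds and $\le K \le n$ clusters. The key observation that makes all three bounds work is that the choice $\gamma_r = C_2 \frac{\hat{K}_r^2 \log n}{\hat{n}_r \delta^2}$ ensures $\gamma_r \hat{n}_r = C_2 \hat{K}_r^2 \log n / \delta^2$, so even when $\hat{K}_r = 1$ the relevant expected-sample-size parameter scales like $\log n / \delta^2$, which is exactly what is needed for Chernoff to produce $O(n^{-c})$-type tails with a sufficiently large absolute constant $C_2$.

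For item 1, I would write $|\hat{T}_r| = \sum_{i \in \bigcup_{k \in F_r} V_k} I_i$ with each $I_i \sim \mathrm{Bernoulli}(\gamma_r)$ independent, so $\mathbb{E}|\hat{T}_r| = \gamma_r \hat{n}_r$. Applying Chernoff with $\delta_{\mathrm{Ch}} = 1/8$ gives a failure probability of $\exp(-\gamma_r \hat{n}_r / (8^2 \cdot 3))$. Since $\gamma_r \hat{n}_r \geq C_2 \log n / \delta^2 \geq C_2 \log n$, taking $C_2$ large enough makes this $O(n^{-5})$. For item 2, for fixed cluster $k$ and fixed round $r$ (satisfying $s_k \ge \hat{n}_r / \hat{K}_r$), $|\hat{T}_r \cap V_k|$ is a sum of $s_k$ independent $\mathrm{Bernoulli}(\gamma_r)$ random variables. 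The same Chernoff argument gives failure probability $\exp(-\gamma_r s_k/(8^2 \cdot 3))$, and here I use $\gamma_r s_k \geq \gamma_r \hat{n}_r / \hat{K}_r = C_2 \hat{K}_r \log n / \delta^2 \geq C_2 \log n$ to again get $O(n^{-5})$.

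Item 3 is purely a deterministic consequence of items 1 and 2, as in Lemma \ref{lem:T_subsample_good_event}. Under the good event, $|\hat{T}_r \cap V_k| \geq \tfrac{7}{8}\gamma_r s_k$ and $|\hat{T}_r| \leq \tfrac{9}{8}\gamma_r \hat{n}_r$, so it suffices to check
\[
\tfrac{7}{8}\gamma_r s_k \;\geq\; \frac{C}{\delta}\sqrt{\tfrac{9}{8}\gamma_r \hat{n}_r \log n},
\]
which after squaring rearranges to $C_2 \geq \tfrac{(9/8)C^2}{(7/8)^2}\cdot \tfrac{\hat{n}_r^2}{\hat{K}_r^2 s_k^2}$. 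Since $s_k \geq \hat{n}_r/\hat{K}_r$ by the defining property of the round $r$ chosen for cluster $k$, the right-hand side is bounded by an absolute constant, so this holds whenever $C_2$ is large enough.

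Finally, I union-bound items 1 and 2 over all $\le K \le n$ rounds and all $\le K \le n$ clusters (item 3 involves no new randomness). Each individual event has failure probability $O(n^{-5})$, giving a total failure probability of $O(n^{-2})$ as claimed. I do not anticipate any serious obstacle here; the only subtlety is verifying the algebra that the deterministic step 3 closes with the chosen scaling of $\gamma_r$, which is exactly why $\gamma_r$ was defined with $\hat K_r^2/\hat n_r$ rather than $\hat K_r/\hat n_r$ in the first place.
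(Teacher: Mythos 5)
Your proposal is correct and follows essentially the same route as the paper: Chernoff bounds with relative deviation $1/8$ for items 1 and 2 (using $\gamma_r\hat{n}_r \gtrsim C_2\log n$ and $\gamma_r s_k \geq \gamma_r\hat{n}_r/\hat{K}_r \gtrsim C_2\log n$ respectively), a union bound over the at most $n$ rounds and $n$ clusters, and item 3 as a deterministic consequence of the first two via the scaling of $\gamma_r$. The only cosmetic difference is that you verify item 3 by squaring and isolating $C_2$, while the paper chains the inequalities directly; both reduce to the same use of $s_k \geq \hat{n}_r/\hat{K}_r$.
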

\begin{proof}
    The first item is very similar to the first item of Lemma \ref{lem:T_subsample_good_event} except we use a different choice subsampling ratio $\gamma_r$. Fix a round $r$. For each node $i$ which is in a cluster $k \in F_r$, let $I_i$ be an indicator for the event that node $i$ is included in $\hat{T}_r$. Then $I_i$ is distributed as $\text{Bernoulli}(\gamma_r)$ and $|\hat{T}_r| = \sum_{i=1}^{\hat{n}_r} I_i$. Then by Chernoff's inequality,
        \begin{align*}
            \P \left( \left| |\hat{T}_r|-\gamma_r \hat{n}_r \right| \geq  \left(1+\frac{1}{8}\right) \gamma_r \hat{n}_r \right) & \leq \exp \left( -\frac{\gamma_r \hat{n}_r}{8^2 \cdot 3} \right) \\
            & = \exp \left( -\frac{C_2}{8^2 \cdot 3}  \frac{\hat{K}_r^2 \log n}{ \delta^2}\right) \\
            & \leq \exp \left( -\frac{C_2}{8^2 \cdot 3}  \log n \right) \\
            & \leq O(n^{-3})
        \end{align*}
        where the final inequality is for sufficiently large $C_2$. Now unfixing $r$ and applying a union bound over all $\leq K \leq n$ rounds, we obtain the first item.

        For the second item, fix a cluster $k$ and let $r$ be the first round such that $s_k \geq \frac{\hat{n}_r}{\hat{K}_r}$. By an identical argument to the previous step we have
        \begin{align*}
            \P \left( \left| |\hat{T}_r \cap V_k|-\gamma_r s_k \right| \geq  \left(1+\frac{1}{8}\right) \gamma_r s_k \right) & \leq \exp \left( -\frac{\gamma_r s_k}{8^2 \cdot 3} \right) \\
            & = \exp \left( -\frac{C_2}{8^2 \cdot 3}  \frac{\hat{K}_r^2 \log n}{ \hat{n}_r \delta^2}s_k\right) \\
            & \leq \exp \left( -\frac{C_2}{8^2 \cdot 3}  \frac{\hat{K}_r \log n}{ \delta^2}\right) \\
            & \leq \exp \left( -\frac{C_2}{8^2 \cdot 3}  \log n \right) \\
            & \leq O(n^{-3})
        \end{align*}
        using the fact that $s_k \geq \frac{\hat{n}_r}{\hat{K}_r}$. Now we obtain the second item by taking a union bound over all $K \leq n$ clusters.

        The third item follows from the previous two. For each cluster $k$, letting $r$ be the first round such that $s_k \geq \frac{\hat{n}_r}{\hat{K}_r}$ (if such a round exists), we have
        \begin{align*}
            |\hat{T}_r \cap V_k| &\geq \frac{7}{8}\gamma_r s_k \geq \frac{7 \sqrt{C_2}}{8} \frac{\hat{K}_r \sqrt{\log n}}{\delta \sqrt{\hat{n}_r}} \sqrt{\gamma_r} s_k \geq \frac{7 \sqrt{C_2}}{8} \frac{ \sqrt{\hat{n}_r \log n}}{\delta} \sqrt{\gamma_r} \\
            &\geq \frac{C \sqrt{\frac{9}{8}} \sqrt{\gamma_r \hat{n}_r \log n}}{\delta} \geq \frac{C \sqrt{|\hat{T}_r|\log n}}{\delta}
        \end{align*}
        where the second line is for sufficiently large $C_2$.
\end{proof}

Now, for each $r$, we apply Lemma \ref{lem:faulty_oracle_recovery_SDP_success} to guarantee the success of the recovery SDP when applied to the adjacency matrix obtained from the subsample $\hat{T}_r$ with probability at least $1-O(n^{-3})$, and combining this with the third item of the previous Lemma \ref{lem:T_subsample_good_event_small_k}, this guarantees that when the nodes are those belonging to the clusters of $F_r$, we will recover subclusters from each cluster $k$ such that $s_k \geq \frac{\hat{n}_r}{\hat{K}_r}$. Now applying Lemma \ref{lem:majority_voting_success} we are guaranteed that the majority voting procedure will correctly recover the remaining nodes from each such subcluster, because Lemma \ref{lem:T_subsample_good_event_small_k} guarantees that we will have
\[|\hat{T}_r \cap V_k| \geq \frac{7}{8}\gamma_r s_k = \frac{7}{8} C_2 \frac{\hat{K}_r^2 \log n}{\hat{n}_r \delta^2}s_k \geq \frac{7}{8}C_2 \frac{\hat{K}_r \log n}{\delta^2} \geq \frac{C_1 \log n}{\delta^2}.\]
The (modified) algorithm therefore ensures that all nodes belonging to clusters such that $s_k \geq \frac{\hat{n}_r}{\hat{K}_r}$ would be recovered and removed.

Now since this holds for any round $r$, and we start with $\hat{n}_1 = n_1 = n$ and $F_1 = [K]$, this guarantees that $T_1$ has the same distribution as $\hat{T}_1$, and then the above argument shows that with probability at least $1-O(n^{-2})$, for each round $r$, we have $n_r = \hat{n}_r$, $K_r = \hat{K}_r$, and the clusters $k$ such that $s_k \geq \frac{n_r}{K_r} = \frac{\hat{n}_r}{\hat{K}_r}$ are recovered and removed (which is at least one cluster), leaving exactly the clusters remaining in $F_{r+1}$.

Now it remains to check the query complexity and establish the recovery threshold. The majority voting procedure requires $O(\frac{n \log n}{\delta^2})$ queries per subcluster and gets applied at most $K$ times, giving a term of $O(\frac{nK \log n}{\delta^2})$. Under the aforementioned events, the subsampling procedure for the $r$th round makes
\[O(|T_r|^2) = O(\gamma_r^2 n_r^2) = O\left( \frac{K_r^4 \log^2 n}{\delta^4}\right) \leq O\left( \frac{K^4 \log^2 n}{\delta^4}\right)\]
queries, and there are obviously at most $K$ rounds, leading to a total query complexity of \[O\left(\frac{nK \log n}{\delta^2}  + \frac{K^5\log^2 n}{\delta^4}\right).\] To establish the recovery threshold, note that the algorithm terminates at the first round $r$ such that $n_r \leq C_2 \frac{K_r^2 \log n}{ \delta^2}$, implying that each remaining cluster is smaller than this amount and thus also smaller than $C_2 \frac{K^2 \log n}{ \delta^2}$.
\end{proof}

\section{Semirandom Recovery Proof}
\label{sec:semirandom_recovery_proofs}

\begin{proof}[Proof of Theorem \ref{thm:semirandom_recovery}]
    Recall that $A'$ is the adjacency matrix sampled from the standard SBM, and the observed $A$ is $A'$ subject to the semirandom perturbations. For purposes of analysis, we can apply our Theorem \ref{thm:main_theorem_recovery_sdp} to the SBM problem with adjacency matrix $A'$. Let $E$ be the event that the conclusions of Theorem \ref{thm:main_theorem_recovery_sdp} hold for the solution $\Yhat'$ of the analogous recovery SDP~\eqref{eq:SDP_regularized_original} but with $A'$ in place of $A$ (it is shown in Theorem \ref{thm:main_theorem_recovery_sdp} that $\P(E) \geq 1 - O(n^3)$). Under the event $E$, we have that $\Yhat'$ is the unique solution to the recovery SDP, meaning for all feasible $Y \neq \Yhat'$ we have
    \begin{align}
        &\left\langle \Yhat', A' - \frac{p+q}{2}J \right\rangle - \lambda \tr(\Yhat') > \left\langle Y, A' - \frac{p+q}{2}J \right\rangle - \lambda \tr(Y). \label{eq:semirandom_proof_step1}
    \end{align}
    Next we argue that also for all feasible $Y$, we have
    \begin{align}
        \left\langle \Yhat', A - A'  \right\rangle & \geq \left\langle Y, A - A'  \right\rangle.\label{eq:semirandom_proof_step2}
    \end{align}
    We show this in an elementwise fashion by using the form of $\Yhat'$ and the definition of the large cluster semirandom model. If $A_{ij} > A'_{ij}$, then $i,j$ must be two nodes within a cluster $k$ such that $\frac{p-q}{2}s_k \geq \frac{3}{2}B \sqrt{pn \log n} \geq \left(1 + \frac{1}{\overline{B}} \right)\sqrt{pn \log n}$, using the fact that $\kappa = B$ and $\overline{B} > 2$ so $\left(1 + \frac{1}{\overline{B}} \right) \leq \frac{3}{2}$. Therefore the conclusion of Theorem \ref{thm:main_theorem_recovery_sdp}, which hold under the event $E$, guarantees that $\Yhat'_{ij} = 1$. By feasibility of $Y$, we have $Y_{ij} \leq 1$, and therefore $\Yhat'_{ij}( A_{ij} - A'_{ij}) \geq Y_{ij}( A_{ij} - A'_{ij})$. If $A_{ij} < A'_{ij}$, then $i,j$ belong to two different clusters, and under the event $E$ we have $\Yhat'_{ij} = 0$. Since $Y_{ij} \geq 0$ by feasibility of $Y$, we again have that $\Yhat'_{ij}( A_{ij} - A'_{ij}) \geq Y_{ij}( A_{ij} - A'_{ij})$. Finally, this same inequality obviously holds if $A_{ij} = A'_{ij}$. Summing over $i,j$ we obtain the inequality~\eqref{eq:semirandom_proof_step2}.

    Finally, by adding inequalities~\eqref{eq:semirandom_proof_step1} and~\eqref{eq:semirandom_proof_step2} and cancelling $A'$, we obtain that
    \begin{align*}
        &\left\langle \Yhat', A - \frac{p+q}{2}J \right\rangle - \lambda \tr(\Yhat') > \left\langle Y, A - \frac{p+q}{2}J \right\rangle - \lambda \tr(Y).
    \end{align*}
    Therefore we have shown that $\Yhat'$ is also the unique optimal solution to the recovery SDP~\eqref{eq:SDP_regularized_original} (which uses $A$, not $A'$), meaning that $\Yhat = \Yhat'$ under the event $E$. Therefore we have shown that all conclusions of Theorem \ref{thm:main_theorem_recovery_sdp} hold under the event $E$, and thus Theorem \ref{thm:main_theorem_recovery_sdp} still holds under the large cluster semirandom model, as desired.
\end{proof}

\section{Eigenvalue Perturbation Bound Proofs}
\label{sec:eigenvalue_perturbation_bound_proofs}

\begin{proof}[Proof of Theorem \ref{thm:general_eval_pert_bound}]
We begin in the same way as the argument from \cite[Theorem 6]{eldridge2018unperturbed}.
Define $S_{t:T}$ as the set of all vectors in $\col(U_{t:T})$ with norm $1$, and likewise for $S_{T+1:n}$ and $S_{t:n}$.
Then by the Courant-Fischer min-max principle we have that
\begin{align*}
    \lambda_t\left(M+H\right) \leq \max_{x \in S_{t:n}} x^\top \left(M+H\right) x.
\end{align*}
Now we can write any $x \in S_{t:n}$ as $x = \alpha u + \beta u_\perp$, where $u \in S_{t:T}$, $u_\perp \in S_{T+1:n}$ and $\alpha^2 + \beta^2 = 1$.
Therefore the above is equivalent to
\begin{align*}
    \lambda_t(M+H) \leq  &\max_{\alpha, \beta, \alpha^2 + \beta^2 = 1} \max_{u \in S_{t:T}} \max_{u_\perp \in S_{T+1:n}} \bigg \{ \\
    &\alpha^2 u^\top M u + \alpha^2 u^\top H u + 2 \alpha \beta u^\top M u_\perp + 2 \alpha \beta u^\top H u_\perp + \beta^2 u_\perp ^\top M u_\perp + \beta^2 u_\perp^\top H u_\perp \bigg \}.
\end{align*}
Now we begin to deviate slightly from \cite[Theorem 6]{eldridge2018unperturbed}. We have $u^\top M u \leq \lambda_t$, $u^\top H u \leq h$ by assumption, $u^\top M u_\perp = 0$, $u^\top H u_\perp \leq \opnorm{H U_{t:T}}$, $u_\perp ^\top M u_\perp \leq \lambda_{T+1}$, and $u_\perp^\top H u_\perp \leq \opnorm{H}$. Thus
\begin{align*}
    \lambda_t(M+H) & \leq \max_{\alpha, \beta, \alpha^2 + \beta^2 = 1} \alpha^2 \lambda_t + \alpha^2 h + \alpha \beta  \opnorm{H U_{t:T}} + \beta^2 \lambda_{T+1} + \beta^2 \opnorm{H} \\
    & =\max_{\beta \in [0,1]} (1-\beta^2) (\lambda_t + h) + 2(1-\beta)\beta \opnorm{H U_{t:T}} + \beta^2 (\lambda_{T+1}+\opnorm{H}) \\
    &= \max_{\beta \in [0,1]} \lambda_t + h + \beta^2 (\lambda_{T+1} - \lambda_t + \opnorm{H} - h - 2\opnorm{H U_{t:T}}) + 2 \beta \opnorm{H U_{t:T}} .
\end{align*}
Now by our assumption the leading coefficient $(\lambda_{T+1} - \lambda_t + \opnorm{H} - h - 2\opnorm{H U_{t:T}})$ is $< 0$, so the quadratic has unique maximum over $\mathbb{R}$.

Using the elementary formula for the maximum of a quadratic, we have that 
\begin{align*}
    \lambda_t(M+H) &\leq \lambda_t + h + \frac{-4 \opnorm{H U_{t:T}}^2}{4(\lambda_{T+1} - \lambda_t + \opnorm{H} - h - 2\opnorm{H U_{t:T}})} \\
    &= \lambda_t + h + \frac{\opnorm{H U_{t:T}}^2}{\lambda_t(M) - \lambda_{T+1}(M) - \opnorm{H} + h + 2 \opnorm{HU_{t:T}}}.
\end{align*}
\end{proof}

\end{document}